\title[Settling the Sample Complexity of SSPs]{Reaching Goals is Hard: Settling the Sample Complexity of the  Stochastic Shortest Path}
\newcommand{\ale}[1]{\todo[inline,color=blue!10]{\textbf{Ale: }#1}}
\newcommand{\lc}[1]{\todo[inline,color=yellow!10]{\textbf{Liyu: }#1}}
\definecolor{Green}{rgb}{0.13, 0.65, 0.3}
\definecolor{Amber}{rgb}{0.3, 0.5, 1.0}
\definecolor{azure(colorwheel)}{rgb}{0.0, 0.5, 1.0}
\newcommand{\sinit}{s_{\text{init}}}
\newcommand{\calA}{{\mathcal{A}}}
\newcommand{\calX}{{\mathcal{X}}}
\newcommand{\calS}{{\mathcal{S}}}
\newcommand{\calF}{{\mathcal{F}}}
\newcommand{\calI}{{\mathcal{I}}}
\newcommand{\calE}{{\mathcal{E}}}
\newcommand{\calT}{{\mathcal{T}}}
\newcommand{\calM}{{\mathcal{M}}}
\newcommand{\calN}{{\mathcal{N}}}
\DeclareMathOperator*{\argmin}{argmin}
\newcommand{\eat}[1]{}
\newcommand{\rbr}[1]{\left(#1\right)}
\newcommand{\sbr}[1]{\left[#1\right]}
\newcommand{\cbr}[1]{\left\{#1\right\}}
\newcommand{\abr}[1]{\left|#1\right|}
\newcommand{\tilO}[1]{\otil\left( #1 \right)}
\newcommand{\lowO}[1]{\lorder\left( #1 \right)}
\newcommand{\bigo}[1]{\order( #1 )}
\newcommand{\tilo}[1]{\otil( #1 )}
\newcommand{\lowo}[1]{\lorder( #1 )}
\DeclarePairedDelimiter\floor{\lfloor}{\rfloor}
\newcommand{\tilC}{\widetilde{C}}
\newcommand{\sumhm}{\sum_{h=1}^{H_m}}
\newcommand{\sumhmp}{\sum_{h=1}^{H_m+1}}
\newcommand{\summp}{\sum_{m=1}^{M'}}
\newcommand{\frA}{\mathfrak{A}}
\newcommand{\jstar}{j^{\star}}
\newcommand{\hatn}{\widehat{n}}
\newcommand{\T}{\ensuremath{T_\star}}
\newcommand{\B}{B_\star}
\newcommand{\cmin}{\ensuremath{c_{\min}}}
\newcommand{\dev}{\textsc{Dev}}
\newcommand{\var}{\textsc{Var}}
\newcommand{\SA}{\calS\times\calA}
\renewcommand{\P}{\bar{P}}
\newcommand{\istar}{i^{\star}}
\newcommand{\Np}{\N^+}
\newcommand{\optV}{V^{\star}}
\newcommand{\optQ}{Q^{\star}}
\newcommand{\hatQ}{\widehat{Q}}
\newcommand{\hatV}{\widehat{V}}
\newcommand{\tilV}{\widetilde{V}}
\newcommand{\opttilV}{\widetilde{V}^{\star}}
\newcommand{\sumh}{\sum_{h=1}^H}
\newcommand{\sumk}{\sum_{k=1}^K}
\newcommand{\optpi}{\pi^\star}
\newcommand{\hatT}{\widehat{T}}
\newcommand{\tils}{\widetilde{s}}
\newcommand{\tila}{\widetilde{a}}
\newcommand{\tilP}{\widetilde{P}}
\newcommand{\N}{\mathbf{N}} 
\newcommand{\hatv}{\widehat{v}}
\newcommand{\Tc}{T_{\ddagger}}
\newcommand{\hatN}{\widehat{N}}
\newcommand{\da}{a_{\dagger}}
\newcommand{\hattau}{\widehat{\tau}}
\newcommand{\LCBVI}{\text{LCBVI}}
\newcommand{\uT}{\overline{T}}
\newcommand{\sstar}{s^{\star}}
\newcommand{\hatpi}{\widehat{\pi}}
\newcommand{\tstar}{t^{\star}}
\newcommand{\bars}{\bar{s}}
\newcommand{\barh}{\bar{h}}
\newcommand{\field}[1]{\mathbb{#1}}
\newcommand{\fR}{\field{R}}
\newcommand{\fN}{\field{N}}
\newcommand{\E}{\field{E}}
\newcommand{\fV}{\field{V}}
\newcommand{\Ind}{\field{I}}
\newcommand{\norm}[1]{\left\|{#1}\right\|}
\newtheorem{assumption}{Assumption}
\newcommand{\order}{\ensuremath{\mathcal{O}}}
\newcommand{\lorder}{\ensuremath{\Omega}}
\newcommand{\otil}{\ensuremath{\tilde{\mathcal{O}}}}
\newcommand{\pref}[1]{\prettyref{#1}}
\newcommand{\pfref}[1]{Proof of \prettyref{#1}}
\newcommand{\savehyperref}[2]{\texorpdfstring{\hyperref[#1]{#2}}{#2}}
\thanks{Research conducted when the author was an intern at Meta.} \Email{liyuc@usc.edu}\\
\begin{document}

\maketitle


\doparttoc 
\faketableofcontents 

\begin{abstract}
We study the sample complexity of learning an $\epsilon$-optimal policy in the Stochastic Shortest Path (SSP) problem. We first derive sample complexity bounds when the learner has access to a generative model. We show that there exists a worst-case SSP instance with $S$ states, $A$ actions, minimum cost $c_{\min}$, and maximum expected cost of the optimal policy over all states $B_{\star}$, where any algorithm requires at least $\Omega(SAB_{\star}^3/(c_{\min}\epsilon^2))$ samples to return an $\epsilon$-optimal policy with high probability. Surprisingly, this implies that whenever $\cmin=0$ an SSP problem may not be learnable, thus revealing that learning in SSPs is strictly harder than in the finite-horizon and discounted settings. We complement this result with lower bounds when prior knowledge of the hitting time of the optimal policy is available and when we restrict optimality by competing against policies with bounded hitting time. Finally, we design an algorithm with matching upper bounds in these cases. This settles the sample complexity of learning $\epsilon$-optimal polices in SSP with generative models.

We also initiate the study of learning $\epsilon$-optimal policies without access to a generative model (i.e., the so-called best-policy identification problem), and show that sample-efficient learning is impossible in general. On the other hand, efficient learning can be made possible if we assume the agent can directly reach the goal state from any state by paying a fixed cost. We then establish the first upper and lower bounds under this assumption.

Finally, using similar analytic tools, we prove that horizon-free regret is impossible in SSPs under general costs, resolving an open problem in \citep{tarbouriech2021stochastic}.
\end{abstract}


\tolerance 1414
\hbadness 1414 
\emergencystretch 1.5em
\hfuzz 0.3pt
\widowpenalty=10000
\vfuzz \hfuzz
\raggedbottom

\section{Introduction}

The Stochastic Shortest Path (SSP) formalizes the problem of finding a policy that reaches a designated goal state while minimizing the cost accumulated over time. This setting subsumes many important application scenarios, such as indoor and car navigation, trade execution, and robotic manipulation. The SSP is strictly more general than the popular finite-horizon and discounted settings~\citep[see e.g.,][]{bertsekas2012dynamicvol2,tarbouriech2020no} and it poses specific challenges due to the fact that no horizon is explicitly prescribed in the definition of the problem. In fact, different policies may have varying hitting times to the goal, e.g., the optimal policy may not be the policy with smallest hitting time whereas some policies may not even reach the goal.

While planning in SSPs is a widely studied and well-understood  topic~\citep{bertsekas1991analysis,bertsekas2013stochastic}, the problem of online learning in SSP, often referred to as \emph{goal-oriented reinforcement learning (GRL)}, only recently became an active venue of research~\citep{tarbouriech2020no,tarbouriech2021sample,tarbouriech2021stochastic,rosenberg2020adversarial,cohen2020near,cohen2021minimax,chen2021implicit,chen2021improved,chen2021minimax,chen2022policy,chen2021finding,chen2022near,jafarnia2021online,vial2021regret,min2021learning,zhao2022dynamic}.
Most of the literature focuses on the regret minimization objective\footnote{We refer the reader to \pref{app:related} for a detailed summary of prior works in related settings.}, for which learning algorithms with minimax-optimal performance are available even when no prior knowledge about the optimal policy is provided (e.g., its hitting time or the range of its value function).
On the other hand, the \emph{probably approximately correct} (PAC) objective, i.e., to learn an $\epsilon$-optimal policy with high probability with as few samples as possible, has received little attention so far. One reason is that, as it is shown in~\citep{tarbouriech2021sample}, in SSP it is not possible to convert regret into sample complexity bounds through an online-to-batch conversion~\citep{jin2018q} and PAC guarantees can only be derived by developing specific algorithmic and theoretical tools. Assuming access to a generative model, \citet{tarbouriech2021sample} derived the first PAC algorithm for SSP with sample complexity upper bounded as  $\tilo{\frac{\Tc\B^2\Gamma SA}{\epsilon^2}}$, where $S$ is the number of states, $A$ is the number of actions, $\Gamma$ is the largest support of the transition distribution, $\B$ is the maximum expected cost of an optimal policy over all states, $\Tc=\B/\cmin$, where $\cmin$ is the minimum cost over all state-action pairs, and $\epsilon$ is the desired accuracy.
The most intriguing aspect of this bound is the dependency on $\Tc$, which represents a worst-case bound on the hitting time $\T$ of the optimal policy (i.e., the horizon of the SSP) and it depends on the inverse of the minimum cost. While some dependency on the horizon may be unavoidable, as conjectured in~\citep{tarbouriech2021sample}, we may expect the horizon to be independent of the cost function\footnote{Notice that in general $\T \ll \B/\cmin$.}, as in finite-horizon and discounted problems. Moreover, in regret minimization, there are algorithms whose regret bound only scales with $\T$, with no dependency on $\cmin$, even when $\cmin=0$ and no prior knowledge is available. It is thus reasonable to conjecture that the sample complexity should also scale with $\T$ instead of $\Tc$. This leads us to the first question addressed in this paper:

\begin{center}
    {\textit{Question 1: Is the dependency on $\Tc=\B/\cmin$ in the sample complexity of learning with a generative model unavoidable?}}
\end{center}

Surprisingly, we derive a lower bound providing an affirmative answer to the question. In particular, we show that $\lowo{\frac{\Tc\B^2SA}{\epsilon^2}}$ samples are needed to learn an $\epsilon$-optimal policy, showing that a dependency on $\Tc$ is indeed unavoidable and that it is not possible to adapt to the optimal policy hitting time $\T$\footnote{In our proof, we construct SSP instances where $\T < \Tc$.}. This result also implies that there exist SSP instances with $c_{\min}=0$ (i.e., $\Tc = \infty$) that are \textit{not learnable}. This shows for the first time that not only SSP is a strict generalization of the finite-horizon and discounted settings, but it is also strictly harder to learn. We then derive lower bounds when  prior knowledge of the form $\uT \geq \T$ is provided or when an optimality criterion restricted to policies with bounded hitting time is defined. Finally, we propose a simple algorithm based on a finite-horizon reduction argument and we prove upper bounds for its sample complexity matching the lower bound in each of the cases considered above; see Table~\ref{tab:generative}.

When no access to a generative model is provided, the learner needs to directly execute a policy to collect samples to improve their estimate of the optimal policy. No result is currently available for this setting, often referred to as the \textit{best-policy identification} (BPI) problem, and in this paper we address the following question:

\begin{center}
    {\textit{Question 2: Is it possible to efficiently learn a near-optimal SSP policy when no access to a generative model is provided?}}
\end{center}

In this setting, we first derive a lower bound showing that in general sample efficient BPI is impossible.  To resolve this negative result, we introduce an extra assumption requiring that the learner can reach the goal by paying a fixed cost $J$ from any state. We then establish a $\lowo{\min\{\Tc, \uT\}\frac{\B^2SA}{\epsilon^2} + \frac{J}{\epsilon}}$ lower bound under this assumption. We also develop a finite-horizon reduction based algorithm with sample complexity $\tilo{\frac{\Tc\B^2SA}{\epsilon^2} + \frac{\B J^4S^2A^2}{\cmin^3\epsilon}}$, whose dominating term is minimax-optimal when $\cmin>0$ and prior knowledge $\uT$ is unavailable. This result is summarized in Table~\ref{tab:generative}.

Finally, we show how similar technical tools derived for our lower bounds can be adapted to resolve an open question in~\cite{tarbouriech2021stochastic} by showing that in regret minimization, a worst-case dependency on the hitting time of the optimal policy is indeed unavoidable without any prior knowledge (see Appendix \ref{app:hf}).

\begin{table}[t]
	\centering
	\resizebox{.8\columnwidth}{!}{
		\renewcommand{\arraystretch}{2}
        \begin{tabular}{|c|c|c|c|}
        \hline
        \makecell{Performance\\(gen model)} & Lower Bound  & Upper Bound & \cite{tarbouriech2021sample} \\
        \hhline{|====|}
    	\makecell{$(\epsilon,\delta)$\\\pref{def:correct}} & $\min\cbr{\Tc, \uT}\frac{\B^2SA}{\epsilon^2}$ & $\min\cbr{\Tc, \uT}\frac{\B^2SA}{\epsilon^2}$ & $\frac{\Tc\B^2\Gamma SA}{\epsilon^2}$ \\
    	\hline
        \makecell{$(\epsilon,\delta,T)$\\\pref{def:generative.T}} & $\frac{TB_{\star,T}^2SA}{\epsilon^2}$ when $\min\{\Tc,\uT\}=\infty$ & $\min\cbr{\Tc, T}\frac{B_{\star,T}^2SA}{\epsilon^2}$ & $\frac{TB_{\star,T}^3\Gamma SA}{\epsilon^3}$ \\
        \hline
        \multicolumn{1}{c}{}&\multicolumn{1}{c}{}&\multicolumn{1}{c}{}&\multicolumn{1}{c}{}\\
        \hline
         \makecell{Performance\\(BPI)} &Assumption & Lower Bound & Upper Bound \\
         \hhline{|====|} 
         \multirow{2}{*}{\makecell{$(\epsilon,\delta)$\\\pref{def:correct}}} & None & $\frac{A^{\lowo{\min\{\B,S\}}}}{\epsilon}$ & -\\
         \hhline{~|-|-|-|}
         & \pref{assum:terminal} & $\min\cbr{\Tc, \uT}\frac{\B^2SA}{\epsilon^2} + \frac{J}{\epsilon}$ & $\frac{\Tc\B^2SA}{\epsilon^2} + \frac{\B J^4S^2A^2}{\cmin^3\epsilon}$ \\
         \hline
    \end{tabular}
    }
 
    \caption{\small Result summary \emph{with (upper table) and without (lower table) a generative model}.
    Here, $\uT$ is a known upper bound on the hitting time of the optimal policy ($\uT=\infty$ when such a bound is unknown), $\Tc=\frac{\B}{\cmin}$, $B_{\star,T}$ is the maximum expected cost over all starting states of the restricted optimal policy with hitting time bounded by $T$, and $J$ is the cost to directly reach the goal from any state (\pref{assum:terminal}).
    Operators $\tilo{\cdot}$ and $\lowo{\cdot}$ are hidden for simplicity.
    }
    \label{tab:generative}
\end{table}

\section{Preliminaries}\label{sec:preliminaries}

An SSP instance is denoted by a tuple $\calM=(\calS, \calA, g, c, P)$, where $\calS$ with $S=|\calS|$ is the state space, $\calA$ with $A=|\calA|$ is the action space, $g\notin\calS$ is the goal state, $c\in[\cmin,1]^{\calS_+\times \calA}$ with $\cmin\in[0,1]$, $c(g,a)=0$ for all $a$, and $\calS_+=\calS\cup\{g\}$ is the cost function, and $P=\{P_{s,a}\}_{(s,a)\in \calS_+\times\calA}$ with $P_{s,a}\in\Delta_{\calS_+}$ and $P(g|g,a)=1$ for all $a$ is the transition function, where $\Delta_{\calS_+}$ is the simplex over $\calS_+$.
All of these elements are known to the learner except the transition function.



A stationary policy $\pi$ assigns an action distribution $\pi(\cdot|s)\in\Delta_{\calA}$ to each state $s\in\calS$.
A policy is \textit{deterministic} if $\pi(\cdot|s)$ concentrates on a single action (denoted by $\pi(s)$) for all $s$. Denote by $T^{\pi}(s)$ the expected number of steps it takes to reach $g$ starting from state $s$ and following $\pi$.
A policy is \textit{proper} if starting from any state it reaches the goal state with probability $1$ (i.e., $T^\pi(s) < \infty$ for all $s\in\calS$), and it is \textit{improper} otherwise (i.e., there exists $s\in\calS$ such that $T^\pi(s)=\infty$). 
We denote by $\Pi$ the set of stationary policies, and $\Pi_{\infty}$ the set of stationary proper policies.

Given a cost function $c$ and policy $\pi$, the value function of $\pi$, $V^{\pi}\in[0,\infty]^{\calS_+}$ is defined as  $V^{\pi}(s)=\E_{\pi}[\sum_{i=1}^{\infty}c(s_i, a_i)|s_1=s]$, where the randomness is w.r.t.\ $a_i\sim \pi(\cdot|s_i)$ and $s_{i+1}\sim P_{s_i,a_i}$.
We define the optimal proper policy $\optpi=\argmin_{\pi\in\Pi_{\infty}}V^{\pi}(s)$ for all $s\in\calS$, and we write $V^{\optpi}$ as $\optV$. 
It is known that $\optpi$ is stationary and deterministic.

We introduce a number of quantities that play a major role in characterizing the learning complexity in SSP: $\B=\max_s\optV(s)$, the maximum expected cost of the optimal policy starting from any state, $\T=\max_sT^{\optpi}(s)$,
and $D=\max_s\min_{\pi\in\Pi}T^{\pi}(s)$, the diameter of the SSP instance. Then, we have that
\[
\B\leq D\leq\T\leq\frac{\B}{\cmin} =: \Tc.
\]
Note that these inequalities may be strict and the gap arbitrarily large. Furthermore, this shows that the knowledge of $\B$ (or an upper bound) does not only provide an information about the range of the value function but also a worst-case bound $\Tc$ on the horizon $\T$.
%
We assume $\B\geq 1$, a commonly made assumption in previous work of SSP~\citep{tarbouriech2021stochastic,chen2021implicit}.\footnote{
Note that in regret minimization, the lower bounds for $\B\geq1$ and $\B<1$ are different~\citep{cohen2021minimax}.}

\paragraph{Learning objective}
The goal of the learner is to identify a near-optimal policy of desired accuracy with high probability, with or without a generative model.
We formalize each component below.

\subparagraph{Sample Collection} \textit{With a generative model (PAC-SSP)}, the learner directly selects a state-action pair $(s, a)\in\SA$ and collects a sample of the next state $s'$ drawn from $P_{s, a}$.
\textit{Without a generative model (i.e., Best Policy Identification (BPI))}, the learner directly interacts with the environment through episodes starting from an initial state $\sinit$ and sequentially taking actions until $g$ is reached. 

\subparagraph{$\epsilon$-Optimality} With a generative model, we say a policy $\pi$ is $\epsilon$-optimal if $V^{\pi}(s)-\optV(s)\leq\epsilon$ for all $s\in\calS$.
Without a generative model, a policy $\pi$ is $\epsilon$-optimal if $V^{\pi}(\sinit)-\optV(\sinit)\leq\epsilon$.

\ale{Alternative version?}
\begin{definition}[$(\epsilon,\delta)$-Correctness]
    \label{def:correct}
    Let $\calT$ be the random stopping time by when an algorithm terminates its interaction with the environment and returns a policy $\hatpi$. We say that an algorithm is $(\epsilon,\delta)$-correct with sample complexity $n(\calM)$ if $\mathbb{P}_{\calM}(\calT\leq n(\calM), {\hatpi}\text{ is $\epsilon$-optimal in $\calM$})\geq1-\delta$ for any SSP instance $\calM$, where $n(\calM)$ is a deterministic function of the characteristic parameters of the problem (e.g., number of states and actions, inverse of the accuracy $\epsilon$).
\end{definition}

\paragraph{Other notation}
We denote by $\uT$ an upper bound of $\T$ known to the learner, and let $\uT=\infty$ if such knowledge is unavailable.
The $\tilo{\cdot}$ operator hides all logarithmic dependency including $\ln\frac{1}{\delta}$ for some confidence level $\delta\in(0, 1)$. 
For simplicity, we often write $a=\tilo{b}$ as $a\lesssim b$.
Define $(x)_+=\max\{0, x\}$.
For $n\in\fN_+$, define $[n]=\{1,\ldots,n\}$.

\section{Lower Bounds with a Generative Model}\label{sec:lower.bounds}


\begin{figure}
    \centering
    \tikz[font=\scriptsize, scale=0.9]{

        \begin{scope}
            \node[draw, circle] (g) at (0,0) {$g$};
            \node[draw, circle] (s0) at (0,-2) {$s_0$};
            \node[circle, fill, minimum size=4pt,inner sep=0pt, outer sep=0pt] (da1) at (-0.4,-1.2) {};
            \node[circle, fill, minimum size=4pt,inner sep=0pt, outer sep=0pt] (da2) at (0.4,-1.2) {};
            \draw[double,->] (s0) -- (da1);
            \draw[double,->] (s0) -- (da2);
            \path[->] (da1) edge[bend left] node[left,anchor=east] {$p_0 \simeq \frac{1+\epsilon/B_\star}{T_\star}$} (g);
            \path[->] (da2) edge[bend right] node[right, yshift=-5pt, text width=2cm] {$p_i\simeq\frac{1}{\Tc}$} (g); 
            \path[->] (da1) edge[bend right, out=-90, in=-80] (s0);
            \path[->] (da2) edge[bend left, out=90, in=80] (s0);
            \node at (0,-1) {$\ldots$};
            \node[anchor=west] at (-2.5,0) {Action $0$};
            \node[anchor=west] at (-2.2,-1.25) {$c_0 \simeq \frac{\B}{\T}$};
            \node[anchor=west] at (0.5,0) {Action $i\geq 1$};
            \node[anchor=west] at (0.6,-1.5) {$c_i \simeq \frac{\B}{\Tc}$};
        \end{scope}

        \begin{scope}[shift={(.23\textwidth,0)}]
            
            \node[draw, circle, inner sep=2pt, minimum size=12pt, font=\tiny] (s021) at (0,-1.5) {$s_0$};
            \node[draw, circle, inner sep=2pt, minimum size=12pt, font=\tiny] (g21) at (0,0) {$g$};
            \node[draw, circle, inner sep=2pt, minimum size=12pt, font=\tiny] (s121) at (1,0) {$s_1$};
            \path[->] (s021) edge[] node[text width=1cm] {$a_g$\\$c=\frac{1}{2}$} (g21);
            \path[->] (s121) edge[] node[above, yshift=5pt] {$a_0,a_g, c=1$} (g21);
            \path[->] (s021) edge[loop right] node[above, xshift=4pt] {$a_0, c=0$} (s021);

            \node[draw, circle, inner sep=2pt, minimum size=12pt, font=\tiny] (s022) at (3,-1.5) {$s_0$};
            \node[draw, circle, inner sep=2pt, minimum size=12pt, font=\tiny] (g22) at (3,0) {$g$};
            \node[draw, circle, inner sep=2pt, minimum size=12pt, font=\tiny] (s122) at (4,0) {$s_1$};
            \path[->] (s022) edge[] node[text width=1cm] {$a_g$\\$c=\frac{1}{2}$} (g22);
            \path[->] (s122) edge[] node[above, yshift=5pt] {$a_0,a_g, c=1$} (g22);
            \node[circle, fill, minimum size=4pt,inner sep=0pt, outer sep=0pt] (da122) at (4.8,-2) {};
            \draw[double,->] (s022) -- (da122);
            \path[->] (da122) edge[bend left, out=90] node[below] {$a_0, c=0$} (s022);
            \path[->,dashed] (da122) edge[bend right] node[xshift=5pt] {in $\calM_+$} (s122);
            \path[->,dotted] (da122) edge[bend left] node[xshift=15pt] {in $\calM_-$} (g22);

        \end{scope}

        \begin{scope}[shift={(.1\textwidth,0)}]
            \node[draw, circle] (s03) at (8.5,-2) {$s_0$};
            \node[draw, circle] (g3) at (11.5,-1.8) {$g$};
            \node[draw, circle] (s13) at (8.5,-0.3) {$s_1$};
            \node[draw, circle] (s23) at (10,-.3) {$s_2$};
            \node[draw, circle] (sn3) at (11.5,-.3) {$s_N$};
            \node[circle, fill, minimum size=4pt,inner sep=0pt, outer sep=0pt] (da13) at (9,-1.3) {};
            \node at (10.7, -0.3) {$\ldots$};
            \path[->] (da13) edge node[anchor=west, right] {$p= \frac{\epsilon}{A^N}$} (s13);
            \path[->] (s13) edge node[above] {$\bar{a}$} (s23);
            \path[->] (sn3) edge (g3);
            \path[->] (da13) edge node[below]{$1-p$} (g3);
            \draw[double,->] (s03) --node[right, yshift=-5pt] {$\{a_i\}$} (da13);
            \path[->] (s13) edge[loop above] node[right]{$\{a_i\} \setminus \bar{a}$} (s13);
        \end{scope}

        \node[font=\normalsize] at (0.05\textwidth, -2.9) {(a)};
        \node[font=\normalsize] at (0.4\textwidth, -2.9) {(b)};
        \node[font=\normalsize] at (0.75\textwidth, -2.9) {(c)};
    }
    \caption{(a) hard instance (simplified for proof sketch) in \pref{thm:lower.bound.eps} when $\cmin>0$.
    (b) hard instance in \pref{thm:lower.bound.eps} when $\cmin=0$.
    (c) hard instance in \pref{thm:lower.bound.BPI}.
    Here, $c$ represents the cost of an action, while $p$ represents the transition probability.
    }
    \label{fig:fig}
\end{figure}



In this section, we derive lower bounds on PAC-SSP in various cases.

\subsection{Lower Bound for $\epsilon$-optimality}\label{ssec:lower.bound.general}


We first establish the sample complexity lower bound of any $(\epsilon,\delta)$-correct learning algorithm when no prior knowledge is available.

\begin{theorem}
\label{thm:lower.bound.eps}
    For any $S\geq 3$, $A\geq 3$, $\cmin > 0$, $B\geq 2$, $T_0\geq\max\{B,\log_AS+1\}$, $\epsilon\in(0, \frac{1}{32})$, and $\delta\in(0, \frac{1}{2e^4})$ such that $T_0\leq B/\cmin$, there exists an MDP with $S$ states, $A$ actions, minimum cost $\cmin$, $\B=\Theta(B)$, and $\T=\Theta(T_0)$, such that any $(\epsilon, \delta)$-correct algorithm has sample complexity $\lowO{\frac{\Tc\B^2SA}{\epsilon^2}\ln\frac{1}{\delta}}$.\footnote{Formally, for any $n\geq 0$, we say that an algorithm has sample complexity $\lowO{n}$ on an SSP instance $\calM$ if $\mathbb{P}_{\calM}(\calT \leq n, {\hatpi}\text{ is $\epsilon$-optimal in $\calM$}) \geq 1-\delta$.}
    There also exists an MDP with $\cmin=0$, $\T=1$, $\uT=\infty$, and $\B=1$ in which every $(\epsilon, \delta)$-correct algorithm with $\epsilon\in(0, \frac{1}{2})$ and $\delta\in(0, \frac{1}{16})$ has infinite sample complexity.
\end{theorem}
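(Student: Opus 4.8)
The plan is to prove the two claims of \pref{thm:lower.bound.eps} via two separate constructions: a quantitative $\lowo{\tfrac{\Tc\B^2SA}{\epsilon^2}\ln\tfrac1\delta}$ lower bound for the $\cmin>0$ case, using the gadget of \pref{fig:fig}(a) replicated across states, and a qualitative impossibility for the $\cmin=0$ case, using the pair $\calM_\pm$ of \pref{fig:fig}(b). For the $\cmin>0$ bound I would start from a single ``Bernoulli'' gadget: a non-goal state $s$ with $A$ actions, where action $a$ jumps to $g$ with probability $p_a$ and loops back to $s$ otherwise at per-step cost $c_a$, so that the value of committing to $a$ is $c_a/p_a$. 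I would calibrate a reference action $0$ with $p_0\asymp(1+\epsilon/\B)/\T$, $c_0\asymp\B/\T$ (value $\asymp\B-\epsilon$) and $A-1$ ``slow'' actions with $p_a\asymp 1/\Tc$, $c_a\asymp\B/\Tc$ (value $\asymp\B$); note $c_a\asymp\B/\Tc=\cmin$, which is exactly why the minimum cost surfaces in the bound. In one of two hypotheses I would additionally boost the success probability of a single slow action $a^\star$ by a factor $1+\Theta(\epsilon/\B)$, making its value $\asymp\B-2\epsilon$ and hence the unique $\epsilon$-optimal choice (the reference action, value $\asymp\B-\epsilon$, is then \emph{not} $\epsilon$-optimal, so an $\epsilon$-optimal policy is forced to pin down $a^\star$). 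I would take $\Theta(S)$ copies of this gadget at the leaves of a complete $A$-ary tree of depth $\lceil\log_A S\rceil$ (which is why we need $T_0\gtrsim\log_A S$, and whose depth is negligible next to the gadget hitting times). A direct calculation should give $\B=\Theta(B)$, $\T=\Theta(T_0)$, and minimum cost $\Theta(\cmin)$, with $T_0\le\B/\cmin$ being precisely the condition that keeps all probabilities in $[0,1]$; moreover the optimal action at a gadget succeeds with probability slightly above $1/\Tc$, so $\T\asymp\Tc/(1+\Theta(\epsilon/\B))<\Tc$, witnessing that no algorithm can adapt to the (smaller) optimal hitting time.

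Since with a generative model the observations are i.i.d.\ draws from the queried $P_{s,a}$, the information-theoretic step is standard: for each of the $\Theta(SA)$ slow state-action pairs, deciding whether it was boosted requires distinguishing $\mathrm{Ber}(p_a)$ from $\mathrm{Ber}(p_a(1+\Delta))$ with $\Delta=\Theta(\epsilon/\B)$, and $\KL\big(\mathrm{Ber}(p_a),\mathrm{Ber}(p_a(1+\Delta))\big)\asymp p_a\Delta^2\asymp\epsilon^2/(\Tc\B^2)$. A change-of-measure / Le Cam-type argument (or a Fano-type bound over the $\Theta(SA)$ essentially independent sub-problems) then forces $\lowo{\tfrac{\Tc\B^2}{\epsilon^2}\ln\tfrac1\delta}$ queries on each relevant pair for $(\epsilon,\delta)$-correctness, hence $\lowo{\tfrac{\Tc\B^2SA}{\epsilon^2}\ln\tfrac1\delta}$ queries in total; because $T_0\le\Tc$, this already exceeds $T_0\B^2SA/\epsilon^2$.

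For the $\cmin=0$ impossibility I would use, for each $q\in[0,1)$, the two instances over states $\{s_0,s_1,g\}$ and actions $\{a_0,a_g\}$ of \pref{fig:fig}(b): $a_g$ at $s_0$ reaches $g$ at cost $1/2$; $a_0$ at $s_0$ has cost $0$, looping to $s_0$ with probability $q$ and otherwise going to $s_1$ (in $\calM_+^{(q)}$) or straight to $g$ (in $\calM_-^{(q)}$); from $s_1$ every action reaches $g$ at cost $1$. Every $\calM_+^{(q)}$ has $\cmin=0$, $\Tc=\infty$, $\B=1$, $\T=1$ — the same characteristic parameters independent of $q$ — and in it $\optV(s_0)=\tfrac12$ (via $a_g$) while committing to $a_0$ costs $1$, so an $\epsilon$-optimal policy must play $a_g$ at $s_0$; whereas in $\calM_-^{(q)}$ one has $\optV(s_0)=0$ (via $a_0$) while $a_g$ costs $\tfrac12$, so there an $\epsilon$-optimal policy must play $a_0$ at $s_0$. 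Suppose some algorithm were $(\epsilon,\delta)$-correct with finite sample complexity $N$ on these (common) parameters. Pick $q$ with $N(1-q)<\delta$ and couple the generative-model answers across $\calM_\pm^{(q)}$ (the only randomized transition being the $(s_0,a_0)$ exit): the two transcripts coincide unless one of the first $N$ queries of $(s_0,a_0)$ observes the exit, an event of probability $\le N(1-q)<\delta$; intersecting with the probability $\ge 1-\delta$ event that the algorithm halts within $N$ samples with an $\epsilon$-optimal policy in $\calM_+^{(q)}$ yields an event of probability $\ge 1-2\delta$ on which the algorithm returns the same policy in $\calM_-^{(q)}$, which plays $a_g$ at $s_0$ and is therefore not $\epsilon$-optimal there. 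Hence $\mathbb{P}_{\calM_-^{(q)}}(\hatpi\text{ not }\epsilon\text{-optimal})\ge 1-2\delta>\delta$, contradicting $(\epsilon,\delta)$-correctness on $\calM_-^{(q)}$; since $N$ was arbitrary, the sample complexity on $\calM_+$ is infinite. (It suffices to consider deterministic output policies, which also avoids small-constant issues.)

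The hard part for the $\cmin>0$ bound is the constructive bookkeeping: realizing $\B=\Theta(B)$, $\T=\Theta(T_0)$, and minimum cost $\Theta(\cmin)$ simultaneously with all transition probabilities legal (this is what pins down the assumption $T_0\le B/\cmin$ and the depth-$\log_A S$ tree), while keeping the $\Theta(\epsilon)$ value gap tied to a $1+\Theta(\epsilon/\B)$ perturbation of a $\Theta(1/\Tc)$-Bernoulli so the per-sample information is exactly $\Theta(\epsilon^2/(\Tc\B^2))$. For the $\cmin=0$ part there is essentially no computation; the only subtlety is the order of quantifiers (the bad $q$ is chosen \emph{after} the purported finite budget $N$), together with the conceptual point that an apparently trivial instance ($S,A=O(1)$, $\B=\T=1$, $\cmin=0$) is unlearnable purely because of an observationally indistinguishable sibling whose optimal hitting time is unbounded.
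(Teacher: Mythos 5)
Your proposal matches the paper's proof in both parts: the $\cmin>0$ bound comes from the same Mannor--Tsitsiklis-style bandit gadget (a reference arm at rate $\approx(1+\Theta(\epsilon/B))/T_0$ versus $\Theta(SA)$ slow arms at rate $1/\Tc$ with per-step cost $\cmin$, one of which is secretly boosted by a factor $1+\Theta(\epsilon/B)$) placed at the leaves of an $A$-ary tree of depth $O(\log_A S)$, with the same per-arm change-of-measure budget $\Theta(\epsilon^2/(\Tc\B^2))$; and the $\cmin=0$ impossibility uses the same two-state pair $\calM_\pm$. The only cosmetic difference is in the second part, where the paper anchors its likelihood-ratio argument at the degenerate instance $\calM_0$ (pure self-loop at $(s_0,a_0)$) and case-splits on the output action, whereas you couple $\calM_+^{(q)}$ and $\calM_-^{(q)}$ directly -- which is valid because, as you correctly flag, all $\calM_+^{(q)}$ share the same characteristic parameters, so the purported finite sample complexity $N$ is uniform in $q$ and the bad $q$ may be chosen after $N$.
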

Details are deferred to \pref{app:lower.bound.eps.T}.
We first remark that the lower bound qualitatively matches known PAC bounds for the discounted and finite-horizon settings in terms of its dependency on the size of the state-action space and on the inverse of the squared accuracy $\epsilon$. As for the dependency on $\B$ and $\cmin$, it can be conveniently split in two terms: \textbf{1)} a term $\B^2$ and \textbf{2)} a factor $\Tc=\B/c_{\min}$. 

\paragraph{Dependency on $\B^2$} This term is connected to the range of the optimal value function $V^{\star}$.
Interestingly, in finite-horizon and discounted settings $H$ and $1/(1-\gamma)$ bound the range of the value function of \textit{any} policy, whereas in SSP a more refined analysis is required to avoid dependencies on, e.g., $\max_\pi V^\pi$, which can be unbounded whenever an improper policy exists.

\paragraph{Dependency on $\Tc$}
While $\Tc$ is an upper bound on the hitting time of the optimal policy, in the construction of the lower bound $\T$ is strictly smaller than $\Tc$.
For the case $c_{\min}>0$, this shows that the algorithm proposed by~\citet{tarbouriech2021sample} has an optimal dependency in $\B$ and $\Tc$.
On the other hand, this reveals that in certain SSP instances no algorithm can return an $\epsilon$-optimal policy after collecting a finite number of samples. \emph{This is the first evidence that learning in SSPs is strictly harder than the finite-horizon and discounted settings, where the sample complexity is always bounded.}
This is also in striking contrast with results in regret minimization in SSP, where the regret is bounded even for $\cmin=0$ and no prior knowledge about $\B$ or $\T$ is provided.
This is due to the fact that the regret measures performance in the cost dimension and the algorithm is allowed to change policies within and across episodes.
On the other hand, in learning with a generative model the performance is evaluated in terms of the number of samples needed to confidently commit to a policy with performance $\epsilon$-close to the optimal policy.
This requires to distinguish between proper and improper policies, which can become arbitrarily hard in certain SSPs where $\cmin=0$.

\paragraph{Proof Sketch} In order to provide more insights about our result, here we present the main idea of our hard instances construction.
We consider two cases separately: 1) $\cmin>0$ and 2) $\cmin=0$.
When $\cmin>0$, our construction is a variant of that in \citep[Theorem 1]{mannor2004sample}; see an illustration in \pref{fig:fig} (a).
Let's consider an MDP $\calM$ with a multi-arm bandit structure: it has a single state $s_0$ and $N+1$ actions $\calA=\{0,1,\ldots,N\}$ (in the general case this corresponds to $N+1$ state-action pairs).
Taking action $0$ incurs a cost $\frac{B}{T_0}$ and transits to the goal state with probability $\frac{1+\overline{\epsilon}/2}{T_0}$ (stays in $s_0$ otherwise), where $\overline{\epsilon}=\frac{32\epsilon}{B}$.
For each $i \in [N]$, taking action $i$ incurs a cost $\frac{B}{T_1}$ and transits to the goal state with probability $\frac{1}{T_1}$.
Note that in $\calM$ the optimal action (deterministic policy) is $0$, with $\B=\Theta(B)$, $\T=\Theta(T_0)$, $\Tc=\Theta(T_1)$, whereas all other actions are more than $\epsilon$ suboptimal. 
Also note that it takes $\lowo{\frac{T_1\B^2}{\epsilon^2}}$ samples to estimate the expected cost of action $i\in[N]$ with accuracy $\epsilon$.
If an algorithm $\frA$ spends $o(\frac{T_1\B^2}{\epsilon^2})$ samples on some action $i'$, then we can consider an alternative MDP $\calM'$, whose only difference compared to $\calM$ is that taking action $i'$ transits to the goal state with probability $\frac{1+\overline{\epsilon}}{T_1}$.
Note that in $\calM'$ the only $\epsilon$-optimal action is $i'$.
However, algorithm $\frA$ cannot distinguish between $\calM$ and $\calM'$ since it does not have enough samples on action $i'$, and thus has a high probability on outputting the wrong action in either $\calM$ or $\calM'$.
Applying this argument to each arm $i\in[N]$, we conclude that an $(\epsilon,\delta)$-correct algorithm needs at least $\lowo{\frac{NT_1\B^2}{\epsilon^2}}=\lowo{\frac{\Tc\B^2SA}{\epsilon^2}}$ samples.

We emphasize that in our construction, $\T$ (whose proxy is $T_0$) can be arbitrarily smaller than $\Tc$ (whose proxy is $T_1$) in $\calM$.
However, the learner still needs $\lowo{\frac{\Tc\B^2}{\epsilon^2}}$ samples to exclude the alternative $\calM'$ in which $\T=\Tc$.
A natural question to ask is what if we have prior knowledge on $\T$, which could potentially reduce the space of alternative MDPs.
We answer this in \pref{sec:lower.bound.prior}.

When $\cmin=0$, we consider a much simpler MDP $\calM$ with two states $\{s_0, s_1\}$ and two actions $\{a_0, a_g\}$; see an illustration in \pref{fig:fig} (b).
At $s_0$, taking $a_0$ transits to $s_0$ with cost $0$ and taking $a_g$ transits to $g$ with cost $\frac{1}{2}$.
At $s_1$, taking both actions transits to $g$ with cost $1$.
Clearly $\cmin=0$, $\B=\T=1$, $\optV(s_0)=\frac{1}{2}$, and both actions in $s_0$ are $\epsilon$-optimal in $\calM$.
Now consider any algorithm $\frA$ with sample complexity $n<\infty$ on $\calM$, and without loss of generality, assume that $\frA$ outputs a deterministic policy $\hatpi$.
We consider two cases: 1) $\hatpi(s_0)=a_0$ and 2) $\hatpi(s_0)=a_g$.
In the first case, consider an alternative MDP $\calM_+$, whose only difference compared to $\calM$ is that taking $a_0$ at $s_0$ transits to $s_1$ with probability $\frac{1}{n}$, and to $s_0$ otherwise.
Note that the optimal action at $s_0$ is $a_g$ in $\calM_+$.
Since $\frA$ uses at most $n$ samples, with high probability it never observes transition $(s_0, a_0, s_1)$ and is unable to distinguish between $\calM$ and $\calM_+$.
Thus, it still outputs $\hatpi$ with $\hatpi(s_0)=a_0$ in $\calM_+$.
This gives $V^{\hatpi}(s_0)-\optV(s_0)=1-\frac{1}{2}$ and $\hatpi$ is not $\epsilon$-optimal for any $\epsilon\in(0,\frac{1}{2})$.
In the second case, consider another alternative MDP $\calM_-$, whose only difference compared to $\calM$ is that taking $a_0$ at $s_0$ transits to $g$ with probability $\frac{1}{n}$, and to $s_0$ otherwise.
The optimal action at $s_0$ is $a_0$ in $\calM_-$.
Again, algorithm $\frA$ cannot distinguish between $\calM$ and $\calM_-$ and still output $\hatpi$ with $\hatpi(s_0)=a_g$ in $\calM_-$.
This gives $V^{\hatpi}(s_0)-\optV(s_0)=\frac{1}{2}$ and $\hatpi$ is not $\epsilon$-optimal for any $\epsilon\in(0,\frac{1}{2})$.
Combining these two cases, we have that any $(\epsilon,\delta)$-correct algorithm with $\epsilon\in(0,\frac{1}{2})$ cannot have finite sample complexity.

\paragraph{Remark} Our construction reveals that the potentially infinite horizon in SSP does bring hardness into learning when $\cmin=0$.
Indeed, we can treat $\calM$ as an infinite-horizon MDP due to the presence of the self-loop at $s_0$.
Any algorithm that uses finite number of samples cannot identify all proper policies in $\calM$, that is, it can never be sure whether $(s_0, a_0)$ has non-zero probability of reaching states other than $s_0$.


\subsection{Lower Bound for $\epsilon$-optimality with Prior Knowledge on $\T$}\label{sec:lower.bound.prior}


Now we consider the case where the learning algorithm has some prior knowledge $\uT \geq \T$ on the hitting time of the optimal proper policy. Intuitively, we expect the algorithm to exploit the knowledge of parameter $\uT$ to focus on the set of policies $\{\pi:\norm{T^\pi}_{\infty}\leq \uT\}$ with bounded hitting time.\footnote{Notice that $\{\pi:\norm{T^\pi}_{\infty}\leq \uT\}$ includes the optimal policy by definition since $\uT \geq \T$.}

\begin{theorem}
    \label{thm:lower.bound.eps.T}
    For any $S\geq 3$, $A\geq 3$, $\cmin\geq 0$, $B\geq 2$, $T_0\geq \max\{B,\log_AS+1\}$, $\uT\geq 0$, $\epsilon\in(0, \frac{1}{32})$, and $\delta\in(0, \frac{1}{2e^4})$ such that $T_0 \leq \min\{\uT/2, B/\cmin\} < \infty$, there exist an MDP with $S$ states, $A$ actions, minimum cost $\cmin$, $\B=\Theta(B)$, and $\T=\Theta(T_0)\leq \uT$, such that any $(\epsilon, \delta)$-correct algorithm has sample complexity $\lowO{\min\cbr{\Tc, \uT}\frac{\B^2SA}{\epsilon^2}\ln\frac{1}{\delta}}$.
\end{theorem}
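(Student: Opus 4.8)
The construction is the one used for \pref{thm:lower.bound.eps} in the case $\cmin>0$ (\pref{fig:fig}(a)), with a single change: the ``speed'' of the suboptimal arms. Set $T_1 \defeq \min\{\Tc,\ \uT/2\}$ (instead of $T_1 \defeq \Tc$). Embed into an SSP with $S$ states and $A$ actions a multi‑armed‑bandit instance $\calM$ with $N+1=\Theta(SA)$ ``arms'' (state‑action pairs), routed exactly as in \citep[Theorem~1]{mannor2004sample} through a deterministic addressing tree of depth $\order(\log_A S)$ (this is why the hypothesis requires $T_0\geq \log_A S+1$): arm $0$ incurs cost $\tfrac{B}{T_0}$ and reaches $g$ with probability $\tfrac{1+\bar\epsilon/2}{T_0}$ (looping back otherwise), where $\bar\epsilon=\tfrac{32\epsilon}{B}$; each arm $i\in[N]$ incurs cost $\tfrac{B}{T_1}$ and reaches $g$ with probability $\tfrac{1}{T_1}$. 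From the hypotheses $T_0\geq B$ and $T_0\leq\min\{\uT/2,\Tc\}$ one gets $B\leq T_0\leq T_1\leq\Tc$, so all costs lie in $[\cmin,1]$ and all probabilities in $[0,1]$ (this also covers $\cmin=0$, where $\Tc=\infty$ and $T_1=\uT/2$); moreover $\B=\optV(s_0)=\tfrac{B}{1+\bar\epsilon/2}=\Theta(B)$, arm $0$ is the \emph{unique} $\epsilon$‑optimal arm (every arm $i\in[N]$ has value $B$, hence gap $\gtrsim 16\epsilon>\epsilon$), and the optimal hitting time is $\T=\Theta(T_0)\leq \uT/2\leq\uT$. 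Thus $\calM$ is a legitimate instance for a learner endowed with prior knowledge $\uT$.

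For each $i'\in[N]$ let $\calM'_{i'}$ be obtained from $\calM$ by changing \emph{only} arm $i'$'s goal probability to $\tfrac{1+\bar\epsilon}{T_1}$. Then arm $i'$ becomes the unique $\epsilon$‑optimal arm in $\calM'_{i'}$: its value drops to $\tfrac{B}{1+\bar\epsilon}$, which beats arm $0$'s value $\tfrac{B}{1+\bar\epsilon/2}$ by $\gtrsim 16\epsilon$, while all other arms are untouched and hence $\gtrsim\epsilon$‑suboptimal. Crucially, the optimal policy in $\calM'_{i'}$ has hitting time $\tfrac{T_1}{1+\bar\epsilon}<T_1\leq \uT/2\leq\uT$, so $\calM'_{i'}$ is \emph{also} a legitimate instance for the same learner. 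This is exactly the place where capping $T_1$ at $\uT/2$ (rather than taking $T_1=\Tc$ as in \pref{thm:lower.bound.eps}) is forced, and it is what converts the $\Tc$ factor into $\min\{\Tc,\uT\}$; note $T_1=\Theta(\min\{\Tc,\uT\})$ in all regimes.

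Given the family $\{\calM\}\cup\{\calM'_{i'}\}_{i'\in[N]}$, run the change‑of‑measure argument already carried out for \pref{thm:lower.bound.eps}. Let $\tau_{i'}$ be the number of samples an $(\epsilon,\delta)$‑correct algorithm $\frA$ draws from arm $i'$. Since $\calM$ and $\calM'_{i'}$ differ only at arm $i'$, the divergence decomposition gives $\KL(\mathbb{P}^{\frA}_{\calM}\|\mathbb{P}^{\frA}_{\calM'_{i'}})=\E_{\calM}[\tau_{i'}]\cdot\KL(\bernoulli(\tfrac{1}{T_1})\|\bernoulli(\tfrac{1+\bar\epsilon}{T_1}))\lesssim \E_{\calM}[\tau_{i'}]\cdot\tfrac{\bar\epsilon^2}{T_1}$, using $\KL(\bernoulli(p)\|\bernoulli(q))\leq\tfrac{(p-q)^2}{q(1-q)}$ with $p=\tfrac{1}{T_1}$, $q=\tfrac{1+\bar\epsilon}{T_1}$. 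Since $\frA$ must output arm $i'$ with probability $\geq1-\delta$ on $\calM'_{i'}$ and $\leq\delta$ on $\calM$, Bretagnolle--Huber forces $\E_{\calM}[\tau_{i'}]\gtrsim\tfrac{T_1}{\bar\epsilon^2}\ln\tfrac{1}{\delta}=\Theta\!\big(\tfrac{T_1B^2}{\epsilon^2}\ln\tfrac{1}{\delta}\big)$ for every $i'\in[N]$. Summing over the $N=\Theta(SA)$ arms and using $T_1=\Theta(\min\{\Tc,\uT\})$ and $\B=\Theta(B)$ yields $\E_\calM[\calT]=\lowO{\min\{\Tc,\uT\}\tfrac{\B^2SA}{\epsilon^2}\ln\tfrac1\delta}$, and the high‑probability form in the definition of sample complexity follows by the same truncation argument as in \pref{thm:lower.bound.eps}.

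\textbf{Main obstacle.} The information‑theoretic computation is routine and essentially identical to \pref{thm:lower.bound.eps}; the delicate part is checking that \emph{every} alternative $\calM'_{i'}$ simultaneously (i) is a valid SSP with costs in $[\cmin,1]$, probabilities in $[0,1]$, and $\B,\T$ of the prescribed order, and (ii) respects the prior‑knowledge constraint $\T\leq\uT$. Requirement (ii) is what pins $T_1$ at $\Theta(\min\{\Tc,\uT\})$ and hence dictates the final bound; carrying the parameter inequalities ($B\leq T_0\leq T_1\leq\Tc$, $\T\leq\uT$, and the $\log_A S$ overhead of the $S$‑state/$A$‑action embedding) through under the theorem's hypotheses is the bulk of the formal work, deferred to \pref{app:lower.bound.eps.T}.
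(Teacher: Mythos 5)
Your proposal is correct and takes essentially the same route as the paper: the identical hard instance (an $A$-ary addressing tree feeding $N=\Theta(SA)$ bandit-style arms, with the suboptimal arms' speed capped at $T_1=\min\{\Tc,\uT/2\}$ precisely so that every alternative instance $\calM'_{i'}$ still satisfies $\T\leq\uT$ and remains admissible for a learner with prior knowledge $\uT$), the same relative perturbation $\bar{\epsilon}=32\epsilon/B$ of the goal-reaching probability, and the same per-arm indistinguishability conclusion summed over $N$ arms. The only cosmetic difference is the information-theoretic tool: you invoke the divergence decomposition plus Bretagnolle--Huber, whereas the paper bounds the likelihood ratio by hand on the event that arm $i'$ is under-sampled (using Kolmogorov's maximal inequality); both give $\E_{\calM}[\tau_{i'}]\gtrsim T_1B^2\epsilon^{-2}\ln(1/\delta)$, and the paper's Markov-inequality counting step supplies the high-probability form you defer to.
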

Details are deferred to \pref{app:lower.bound.eps.T}.
The main idea of proving \pref{thm:lower.bound.eps.T} still follows from that of \pref{thm:lower.bound.eps}.
Also note that the bound in \pref{thm:lower.bound.eps.T} subsumes that of \pref{thm:lower.bound.eps} since we let $\uT=\infty$ when such knowledge is unavailable.

\paragraph{Dependency on $\min\{\Tc, \uT\}$}
We distinguish two regimes: \textbf{1)} When $\uT \leq \Tc$ the bound reduces to $\lowo{\frac{\uT \B^2SA}{\epsilon^2}}$ with no dependency on $c_{\min}$.
In this case, an algorithm may benefit from its prior knowledge to effectively prune any policy with hitting time larger than $T$, thus reducing the sample complexity of the problem and avoiding infinite sample complexity when $\cmin=0$.
\textbf{2)} When $\uT > \Tc$, we recover the bound $\lowo{\frac{\Tc\B^2SA}{\epsilon^2}}$.
In this case, an algorithm does not pay the price of a loose upper bound on $\T$.
%
Again, in our construction it is possible that $\T< \min\{\Tc, \uT\}$.
This concludes that it is impossible to adapt to $\T$ for computing $\epsilon$-optimal policies in SSPs.

\subsection{Lower Bound for $(\epsilon,T)$-optimality}
Knowing that we cannot solve for an $\epsilon$-optimal policy when $\min\{\Tc,\uT\}=\infty$, that is, $\cmin=0$ and $\uT=\infty$, we now consider a restricted optimality criterion where we only seek $\epsilon$-optimality w.r.t.\ a set of proper policies.

\begin{definition}[Restricted $(\epsilon,T)$-Optimality] \label{def:restricted.optimality}
    For any $T\in [1,\uT]$, we define the set $\Pi_T=\{\pi\in\Pi: \norm{T^{\pi}}_{\infty}\leq T\}$.
    Also define $\optpi_{T,s}=\argmin_{\pi\in\Pi_T}V^{\pi}(s)$, $V^{\star,T}(s)=V^{\optpi_{T,s}}(s)$, and $B_{\star,T}=\max_sV^{\star,T}(s)$.
    We say that a policy $\pi$ is $(\epsilon,T)$-optimal if $V^{\pi}(s)-V^{\star,T}(s)\leq\epsilon$ for all $s\in\calS$. We define $V^{\star,T}(s)=\infty$ for all $s$ when $\Pi_T=\emptyset$.\footnote{\cite{tarbouriech2021sample} consider a slightly different notion of restricted optimality, where they let $T=\theta D$ with $\theta\in[1,\infty)$ as input to the algorithm, and $D$ is unknown.}
\end{definition}
When $T\geq\T$, we have $\optpi_{T,s}=\optpi$ for all $s$.
When $D\leq T<\T$, the policy $\optpi_{T,s}$ exists and may vary for different starting state $s$ due to the hitting time constraint.
It can even be stochastic from the literature of constrained MDPs~\citep{altman1999constrained}.
\footnote{Consider an MDP with one state and two actions. Taking action one suffers cost 1 and directly transits to the goal state. Taking action 2 suffers cost 0 and transits to the goal state with probability $1/3$. Now consider $T=2$. Then the optimal constrained policy should take action 2 with probability $3/4$.}
When $T<D$, we have $\Pi_T=\varnothing$, and $V^{\star,T}(s)=\infty$ for all $s$.
Clearly, $V^{\star,T}(s)\geq \optV(s)$ for any $s$ and $T$.
%
\begin{definition}[$(\epsilon,\delta,T)$-Correctness]
    \label{def:generative.T}
    Let $\calT$ be the random stopping time by when an algorithm terminates its interaction with the environment and returns a policy $\hatpi$. We say that an algorithm is $(\epsilon,\delta,T)$-correct with sample complexity $n(\calM)$ if $\mathbb{P}_{\calM}(\calT\leq n(\calM), {\hatpi}\text{ is $(\epsilon,T)$-optimal in $\calM$})\geq1-\delta$ for any SSP instance $\calM$, where $n(\calM)$ is a deterministic function of the characteristic parameters of the problem (e.g., number of states and actions, inverse of the accuracy $\epsilon$).
\end{definition}

\ale{This may be seen a bit as ``cheating'' since you may exploit the fact that you ``go out'' of the constraint to recover $(\epsilon, T)$ optimality.}
Note that $\pi$ being $(\epsilon,T)$-optimal does not require $\pi\in\Pi_T$. For example, $\optpi$ is $(\epsilon,T)$-optimal for any $T\geq 1$. Similarly, policy output by an $(\epsilon, \delta, T)$-correct algorithm is not required to be in $\Pi_T$, and it is allowed return a better cost-oriented policy.
Now we establish a sample complexity lower bound of any $(\epsilon, \delta, T)$-correct algorithm when $\min\{\Tc,\uT\}=\infty$ (see \pref{app:lower.bound.T} for details).

\begin{theorem}
    \label{thm:lower.bound.T}
    For any $S\geq 6$, $A\geq 8$, $\B\geq 2$, $T\geq 6(\log_{A-1}(S/2)+1)$, $B_T\geq 2$, $\epsilon\in(0, \frac{1}{32})$, and $\delta\in(0, \frac{1}{8e^4})$ such that $\B \leq B_T \leq \B(A-1)^{S/2-1}/4$ and $B_T\leq T/6$, and for any $(\epsilon,\delta, T)$-correct algorithm, there exist an MDP with $B_{\star,T} = \Theta(B_T)$, $\cmin=0$, and parameters $S$, $A$, $\B$, such that with a generative model, the algorithm has sample complexity $\lowO{\frac{TB_{\star,T}^2SA}{\epsilon^2}\ln\frac{1}{\delta}}$.
\end{theorem}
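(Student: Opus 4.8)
The plan is to construct a family of SSP instances $\{\calM_i\}_{i=0}^{N}$, $N=\Theta(SA)$, assembled from four ingredients. \textbf{(i)} A \emph{navigation tree} of depth $\Theta(\log_{A-1}(S/2))$ rooted at $\sinit$ with $\Theta(S)$ leaves; this is where the admissibility condition $T\geq 6(\log_{A-1}(S/2)+1)$ enters, so that every relevant policy reaches a leaf within a small fraction of the budget $T$. \textbf{(ii)} A \emph{bandit gadget} at each leaf: besides a ``reference'' action $a_0$ there are $\Theta(A)$ ``arms'', arm $i$ costing $c\simeq B_T/T$ and reaching $g$ with probability $p_i\simeq1/T$ (self-loop otherwise), so committing to arm $i$ has value $\simeq B_T$ and hitting time $\lesssim T$; the reference $a_0$ realizes the restricted optimum $V^{\star,T}$ at the leaf, with value $\Theta(B_T)$. \textbf{(iii)} A \emph{hidden cheap chain} of $\Theta(S)$ states, accessible through an action $a_{\rm slow}$, built of cost-$0$ transitions with a bottleneck that advances only with a tiny probability $\eta$; walking it down yields the \emph{unrestricted} optimal policy $\optpi$, which is proper, has value $\Theta(\B)$, but hitting time $\gg T$, so $\optpi\notin\Pi_T$ (the values along this chain contract geometrically, which is how the ratio $B_{\star,T}/\B$ can reach $(A-1)^{S/2-1}$ within the state budget). \textbf{(iv)} The cost-$0$ transitions of (iii) give $\cmin=0$. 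The instances differ only in the arms: in $\calM_0$ every arm is $2\eps$-suboptimal to $a_0$, whereas $\calM_i$ bumps arm $i$'s goal probability by $\Delta\simeq\eps/(TB_T)$, making it $2\eps$-\emph{better} than $a_0$ and thus the unique arm/reference policy that is $(\eps,T)$-optimal in $\calM_i$ (cf.\ \pref{def:restricted.optimality}). From the admissibility constraints ($B_T\leq T/6$, $\B\leq B_T\leq\B(A-1)^{S/2-1}/4$, the tree-depth bound) one checks that all costs lie in $[0,1]$, all probabilities are valid, $\cmin=0$, $\B=\Theta(B)$, $B_{\star,T}=\Theta(B_T)$, and $\Pi_T\neq\varnothing$ in every $\calM_i$.

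With the family fixed, the lower bound combines two arguments. \emph{Change of measure on the bandit.} Let $\frA$ be $(\eps,\delta,T)$-correct; run it on $\calM_0$ and let $n_i$ denote the expected number of generative queries it makes to arm $i$. Only queries to arm $i$ discriminate $\calM_0$ from $\calM_i$, each carrying $\KL(\bernoulli(p)\,\|\,\bernoulli(p+\Delta))\simeq\Delta^2/p\simeq\eps^2/(TB_T^2)$; hence if $n_i\ll TB_T^2\eps^{-2}\ln\tfrac1\delta$, a log-likelihood decomposition and the Bretagnolle--Huber inequality show the output law of $\frA$ barely differs on $\calM_0$ and $\calM_i$, so on $\calM_i$ it returns a policy committing to arm $i$ with probability $<1-\delta$, and therefore errs on $\calM_i$ with probability $>\delta$, provided the only other $(\eps,T)$-optimal option in $\calM_i$ is foreclosed. \emph{Closing the escape hatch.} That remaining option is $\optpi$, which is $(\eps,T)$-optimal in \emph{every} $\calM_i$ (since $V^\star\leq V^{\star,T}$). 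To use it the learner must identify $a_{\rm slow}$ and certify that the bottleneck is actually crossable --- an un-crossable bottleneck would make the candidate improper, hence arbitrarily suboptimal --- which requires observing the rare advancing transition, i.e.\ $\lowo{1/\eta}$ queries; choosing $\eta$ small enough that $1/\eta\geq c'\,TB_{\star,T}^2SA\eps^{-2}$ forces any algorithm returning an $\optpi$-type policy with probability $\geq1-\delta$ to already pay the claimed cost. This mechanism generalizes the $\cmin=0$ hard instance of \pref{thm:lower.bound.eps}; here $\cmin=0$ is again essential, since shrinking $\eta$ only lengthens cost-$0$ waiting and changes neither $\B$ nor $B_{\star,T}$.

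Combining the two: on $\calM_0$ a correct $\frA$ cannot cheaply return an $\optpi$-type policy, so it returns an arm/reference policy; by the first argument this forces $n_i=\lowo{TB_T^2\eps^{-2}\ln\tfrac1\delta}$ for every arm $i$ (else $\frA$ errs on $\calM_i$), and summing over the $N=\Theta(SA)$ arms yields sample complexity $\lowo{SAT B_{\star,T}^2\eps^{-2}\ln\tfrac1\delta}$ on $\calM_0$; a standard pigeonhole transfers this lower bound to one of the instances, as stated.

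I expect the main obstacle to be item \textbf{(iii)} --- designing the hidden cheap chain so it simultaneously (1) drags $\B$ down to $\Theta\big(B_T(A-1)^{-(S/2-1)}\big)$ using only $\Theta(S)$ cost-$0$ states, (2) is information-theoretically invisible, i.e.\ not certifiable in fewer than $\lowo{SAT B_{\star,T}^2\eps^{-2}}$ queries, so the learner cannot short-circuit $(\eps,T)$-optimality through the unrestricted optimum, and yet (3) keeps $\optpi$ proper while having hitting time $>T$ and does not depress $B_{\star,T}$ below $\Theta(B_T)$ nor empty $\Pi_T$. Reconciling (1)--(3) with the exact admissible ranges of $S,A,\B,B_T,T,\eps,\delta$ --- in particular realizing $\cmin$ \emph{exactly} $0$ while keeping every arm cost in $(0,1]$ and every transition probability valid --- is the delicate part; the information-theoretic core (change of measure, the per-arm $\eps^{-2}$ dependence, the union over $\Theta(SA)$ arms) is then routine.
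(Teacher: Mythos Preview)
Your bandit-on-a-tree component (items (i)--(ii)) and the change-of-measure calculation are fine and match the paper. The gap is entirely in item (iii), and it is a real one, not just a matter of checking parameter ranges.

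With a \emph{single} hidden bottleneck, your argument that the learner ``must certify that the bottleneck is actually crossable'' does not go through. The learner is not required to certify anything; it only has to \emph{output} a policy that is $(\epsilon,T)$-optimal in the \emph{actual} instance. If the chain structure (which states are in it, which action advances at each state) is the same across every $\calM_i$ in your family, then the chain policy $\optpi$ is one fixed policy, it is proper with value $\Theta(\B)\leq V^{\star,T}$ in every $\calM_i$, and hence $(\epsilon,T)$-optimal in every $\calM_i$. The learner can output it with \emph{zero} samples. Making $\eta$ tiny lengthens the hitting time of $\optpi$ but creates no information-theoretic obstacle to outputting it. The only way to block this is to include an alternative in which the chain policy is improper or has value $>V^{\star,T}$; but then that alternative no longer has the prescribed $\B$, so you cannot use it as the hard instance, and since its presence does not change the fact that $\optpi$ works on every $\calM_i$ with the right $\B$, it does not help either. (Your appeal to the $\cmin=0$ trick of \pref{thm:lower.bound.eps} does not transfer: there the learner must be correct on \emph{both} alternatives, and both are legitimate instances for the claimed parameters; here only the crossable version has the target $\B$.)

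The paper closes this escape hatch differently: it \emph{double}-indexes the family as $\calM_{i,j}$ with $j\in[A-1]^N$, $N=|\calS_\star|-1$, and makes the chain a combinatorial lock. At each chain level $s^\star_n$ every action has cost $0$, action $j_n$ advances to $s^\star_{n+1}$ with a tiny probability $p$, and every other action in $[A-1]$ resets to $s^\star_0$ with the same probability $p$. With $p$ chosen so that $(1-p)^{C(\epsilon,4\delta)}\geq 1-\delta/2$, a learner with budget $C(\epsilon,4\delta)$ never sees a non-self-loop transition in $\calS_\star$, hence its output distribution is the same for every $j$. A short calculation then shows that any fixed $\hatpi$ can have $V^{\hatpi}_{\calM_{i,j}}(s^\star_0)<2B_T$ for at most $2B_T/\B$ values of $j$; this is precisely where the admissibility constraint $B_T\leq \B(A-1)^{S/2-1}/4$ enters (it guarantees $2B_T/\B\leq (A-1)^N/2$, so at least half the $j$'s make the chain route cost $\geq 2B_T$). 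One then first picks $i^\star$ to defeat the learner on the tree sub-MDP, and \emph{afterwards} picks $j^\star$ so that the learner's output also fails on $\calS_\star$, yielding failure probability $>\delta$ on $\calM_{i^\star,j^\star}$. So the bound $(A-1)^{S/2-1}$ comes from the number of action sequences in the lock, not from any geometric contraction of values along the chain.
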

Note that when $T\geq\T$, the lower bound reduces to $\frac{T\B^2SA}{\epsilon^2}$, which coincides with that of \pref{thm:lower.bound.eps.T}.
On the other hand, the sample complexity lower bound for computing $(\epsilon, T)$-optimal policy when $\min\{\Tc,\uT\}<\infty$ is still unknown and it is an interesting open problem.

\paragraph{Proof Sketch} We consider an MDP $\calM$ with state space $\calS=\calS_T\cup\calS_{\star}$.
The learner can reach the goal state either through states in $\calS_T$ or $\calS_{\star}$, where in the first case the learner aims at learning an $(\epsilon,T)$-optimal policy, and in the second case the learner aims at learning an $\epsilon$-optimal policy.
In $\calS_T$, we follow the construction in \pref{thm:lower.bound.eps} so that learning an $\epsilon$-optimal policy on the sub-MDP restricted on $\calS_T$ takes $\lowo{\frac{TB_{\star,T}^2SA}{\epsilon^2}}$ samples.
In $\calS_{\star}$, we consider a sub-MDP that forms a chain similar to \citep[Figure 1]{strens2000bayesian}, where the optimal policy suffer $\B$ cost but a bad policy could suffer $\lowo{\B A^S}$ cost.
For each state $s$ in $\calS_{\star}$, we make the probability of transiting back to $s$ by taking any action large enough, so that learner with sample complexity of order $\tilo{\frac{TB_{\star,T}^2SA}{\epsilon^2}}$ hardly receive any learning signals in $\calS_{\star}$.
Therefore, any algorithm with $\tilo{\frac{TB_{\star,T}^2SA}{\epsilon^2}}$ sample complexity should focus on learning the sub-MDP restricted on $\calS_T$. 
This proves the statement.


\section{Algorithm with a Generative Model}
\label{sec:alg.generative}
\DontPrintSemicolon
\setcounter{AlgoLine}{0}
\begin{algorithm}[H]
    \caption{Search Horizon}
    \small
    \label{alg:sh}
    \textbf{Input:} hitting time bound $T$ ($T=\uT$ with prior knowledge), accuracy $\epsilon\in(0,1)$, and probability $\delta\in(0,1)$.

    \textbf{Initialize:} $i\leftarrow 1$. 
 
    \nl Let $B_i=2^i$, $H_i=4\min\{B_i/\cmin, T\}\ln(48B_i/\epsilon)$, $c_{f,i}(s)=0.6B_i\Ind\{s\neq g\}$, $\delta_i=\delta/(40i^2)$, \label{line:define BH}
    $N^{\star}_i= N^{\star}(B_i,H_i,\frac{\epsilon}{2},\delta_i)$ and $N_i=\hatN(B_i,H_i,0.1B_i,\delta_i)$, where $N^{\star}$, $\widehat{N}$ are defined in \pref{lem:fine bound} and \pref{lem:coarse bound} respectively.
	
	{\color{azure(colorwheel)}\small\textsc{// Estimate $B_{\star,T}$}}
	
	\While{True}{
            Reset counter $\N$, and then draw $N_i$ samples for each $(s, a)$ to update $\N$.
 
		\nl $\pi^i, V^i = \text{LCBVI}(H_i, \N, B_i, c_{f,i}, \delta_i)$ (refer to \pref{alg:LCBVI}).\label{line:estimate Vi}
		
		\nl \lIf{$\norm{V^i_1}_{\infty}\leq 0.1B_i$ \textbf{and} $\max_{h\in[H+1]}\norm{V^i_h}_{\infty}\leq 0.7B_i$}{\textbf{break}.}\label{line:stop Vi}

            $i\leftarrow i+1$.

            \nl\lIf{$B_i>40T$}{\textbf{output} $\hatpi=\varnothing$. {\color{azure(colorwheel)}\small\textsc{\hspace{.8in}// i.e., $T<D$ (every policy is $(\epsilon,T)$-optimal)}} }\label{line:T<D}
	}
	{\color{azure(colorwheel)}\small\textsc{// Compute $\epsilon$-optimal policy}}
	
        Reset counter $\N$, and then draw $N^{\star}_i$ samples for each $(s, a)$ to update $\N$.

	\nl $\hatpi, \hatV =\text{LCBVI}(H_i, \N, B_i, c_{f,i}, \delta_i)$ (refer to \pref{alg:LCBVI}).\label{line:compute hatpi}

	\textbf{Output:} policy $\hatpi$ extended to infinite horizon.
\end{algorithm}

In this section, we present an algorithm whose sample complexity matches all the lower bounds introduced in \pref{sec:lower.bounds}.
We notice that the horizon (or hitting time) of the optimal policy plays an important role in the lower bounds.
Thus, a natural algorithmic idea is to explicitly determine and control the horizon of the output policy.
This leads us to the idea of finite-horizon reduction, which is frequently applied in the previous works on SSP~\citep[e.g.,][]{chen2021improved,chen2021minimax,cohen2020near}.

Now we formally describe the finite-horizon reduction scheme.
Given an SSP $\calM$, let $\calM_{H,c_f}$ be a time-homogeneous finite-horizon MDP with horizon $H$ and terminal cost $c_f\in [0,\infty)^{\calS_+}$, which has the same state space, action space, cost function, and transition function as $\calM$.
When interacting with $\calM_{H,c_f}$, the learner starts in some initial state and stage $h$, it observes state $s_h$, takes action $a_h$, incurs cost $c(s_h, a_h)$, and transits to the next state $s_{h+1}\sim P_{s_h,a_h}$.
It also suffers cost $c_f(s_{H+1})$ before ending the interaction.
When the finite-horizon MDP $\calM_{H,c_f}$ is clear from the context, we define $V^{\pi}_h(s)$ as the expected cost of following policy $\pi$ starting from state $s$ and stage $h$ in $\calM_{H,c_f}$.

Although the finite-horizon reduction has become a common technique in regret minimization for SSP, it is not straightforward to apply it in our setting.
Indeed, even if we solve a near-optimal policy in the finite-horizon MDP, it is unclear how to apply the finite-horizon policy in SSP, where any trajectory may be much longer than $H$.
Our key result is a lemma that resolve this.
It turns out that when the terminal cost in the finite-horizon MDP is large enough, all we need for applying the finite-horizon policy to SSP is to repeat it periodically.
Specifically, given a finite-horizon policy $\pi\in(\Delta_{\calA})^{\calS\times[H]}$, we abuse the notation and define $\pi\in(\Delta_{\calA})^{\calS\times\fN_+}$ as an infinite-horizon non-stationary policy, such that $\pi(a|s,h+iH)=\pi(a|s,h), \forall i\in \mathbb{N}_+$.
The following lemma relates the performance of $\pi$ in $\calM$ to its performance in $\calM_{H,c_f}$ (see \pref{app:extend} for details).

\begin{lemma}
    \label{lem:extend}
    For any SSP $\calM$, horizon $H$, and terminal cost function $c_f$, suppose $\pi$ is a policy in $\calM_{H,c_f}$ and $V^{\pi}_1(s)\leq c_f(s)$ for all $s\in\calS_+$.
    Then $V^{\pi}(s)\leq V^{\pi}_1(s)$.
\end{lemma}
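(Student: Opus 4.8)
The plan is to recast the hypothesis $V^\pi_1\le c_f$ as the statement that $c_f$ is an \emph{excessive function} for a one-block evaluation operator, and then run a monotone-convergence argument. First I would make the periodic extension explicit: group the infinite horizon into blocks of $H$ consecutive stages, so that inside block $k$ (global stages $(k-1)H+1,\dots,kH$) the policy $\pi$ behaves exactly as at stages $1,\dots,H$ of $\calM_{H,c_f}$. Let $M$ be the matrix indexed by $\calS_+$ whose $(s,s')$ entry is the probability, under $\pi$ in $\calM_{H,\cdot}$, of being at $s'$ at stage $H+1$ given $s_1=s$ (row-stochastic, since each $P_{s,a}\in\Delta_{\calS_+}$), and let $\rho(s)=\E_\pi[\sum_{h=1}^H c(s_h,a_h)\mid s_1=s]\ge 0$ be the expected cost incurred in one block. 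By the finite-horizon Bellman recursion, for any terminal function $f\ge 0$ the stage-$1$ value of $\pi$ in $\calM_{H,f}$ equals $\mathcal{L}f := \rho + Mf$; in particular the hypothesis reads $\mathcal{L}c_f = V^\pi_1 \le c_f$. (Stochastic $\pi$ causes no trouble: take expectations over $a\sim\pi(\cdot\mid s,h)$ throughout.)

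Next I would exploit that $\mathcal{L}$ is monotone (because $M\ge 0$) and affine. Applying $\mathcal{L}$ repeatedly to $\mathcal{L}c_f\le c_f$ gives $\mathcal{L}^{n+1}c_f\le \mathcal{L}^n c_f$ for all $n$, so the sequence $(\mathcal{L}^n c_f)_{n\ge 0}$ is nonincreasing; in particular $\mathcal{L}^n c_f \le \mathcal{L}c_f = V^\pi_1$ for every $n\ge 1$. On the other hand, unrolling the affine recursion yields $\mathcal{L}^n c_f = M^n c_f + \sum_{k=0}^{n-1} M^k\rho$, and since $M^n c_f\ge 0$ we obtain $\sum_{k=0}^{n-1}M^k\rho \le V^\pi_1$. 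The key identification is that $\sum_{k=0}^{n-1}M^k\rho$ is precisely the expected cost accumulated by $\pi$ over the first $n$ blocks of the SSP trajectory, i.e.\ $\E_\pi[\sum_{i=1}^{nH} c(s_i,a_i)\mid s_1=\cdot]$ — this is because, by the Markov property and the periodicity of $\pi$, $M^k\rho(s)$ is the expected cost incurred in block $k+1$ given that the trajectory started in $s$.

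Finally, since all costs are nonnegative, monotone convergence gives $V^\pi(s)=\E_\pi[\sum_{i=1}^\infty c(s_i,a_i)\mid s_1=s]=\lim_{n\to\infty}\E_\pi[\sum_{i=1}^{nH}c(s_i,a_i)\mid s_1=s]\le V^\pi_1(s)$, which is the claim (and in particular shows $V^\pi(s)$ is finite). I expect the main obstacle to be the bookkeeping that makes the ``one block of the SSP $=$ stages $1,\dots,H$ of $\calM_{H,c_f}$'' identification rigorous — namely verifying that $M^k\rho(s)$ equals the expected block-$(k+1)$ cost via the Markov property and periodicity of the extended policy — rather than any delicate estimate. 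Phrasing the whole argument through the truncations $\sum_{i=1}^{nH}$ (instead of manipulating $V^\pi$ directly) is what lets us sidestep the possibility that $\pi$ is improper, i.e.\ that $V^\pi$ is a priori infinite.
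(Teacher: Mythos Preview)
Your proposal is correct and takes essentially the same approach as the paper. The paper writes the argument directly in terms of $V^{\pi}_{1,nH}$ (the stage-$1$ value of $\pi$ in $\calM_{nH,c_f}$), showing $V^{\pi}_{1,(n+1)H}\le V^{\pi}_{1,nH}$ via the hypothesis and then concluding $V^{\pi}\le \lim_n V^{\pi}_{1,nH}\le V^{\pi}_{1,H}$; your $\mathcal{L}^n c_f$ is exactly their $V^{\pi}_{1,nH}$, and your decomposition $\mathcal{L}^n c_f = M^n c_f + \sum_{k<n} M^k\rho$ together with $M^n c_f\ge 0$ is just an explicit unpacking of their inequality $V^{\pi}\le \lim_n V^{\pi}_{1,nH}$.
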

Thanks to the lemma above, for a given horizon $T$, we can first learn an $\epsilon$-optimal policy $\hatpi$ in $\calM_{H,c_f}$ with $H=\tilo{T}$ and $c_f(s)=\bigo{B_{\star,T}\Ind\{s\neq g\}}$, and then extend it to an SSP policy with performance $V^{\hatpi}(s)\leq V^{\hatpi}_1(s) \approx \optV_1(s)\approx \optV(s)$, where $\optV_1$ is the optimal value function of stage $1$ in $\calM_{H,c_f}$, and the last step is by the fact that $H$ is sufficiently large compared to $T$. As a result, $\hatpi$ would then be $(\epsilon,T)$-optimal policy in the original SSP problem.
\pref{alg:sh} builds on this idea. It takes a hitting time upper bound $T$ as input, and aims at computing an $(\epsilon, T)$-optimal policy.
The main idea is to search the range of $B_{\star,T}$ and $\min\{\Tc, T\}$ via a doubling trick on estimators $B_i$ and $H_i$ (\pref{line:define BH}).
\footnote{Note that $\norm{T^{\optpi_{T,s}}}_{\infty}\leq\Tc$ for any $T\geq D$ and state $s$ since $\optpi_{T,s}=\optpi$ when $T\geq\Tc\geq\T$.}
For each possible value of $B_i$ and $H_i$, we compute an optimal value function estimate with $0.1B_i$ accuracy using $SAN_i\lesssim S^2AH_i$ samples (\pref{line:estimate Vi}), and stop if $B_i$ becomes a proper upper bound on the estimated value function (\pref{line:stop Vi}).
Here we need different conditions bounding $V^i_1$ and $V^i_h$ as the terminal cost $c_f$ should be negligible starting from stage $1$ but not for any stage.
Once we determine their range, we compute an $\epsilon$-optimal finite-horizon policy with final values of $B_i$ and $H_i$ using $SAN^{\star}_i\lesssim \frac{H_iB_i^2SA}{\epsilon^2}$ samples (\pref{line:compute hatpi}).
On the other hand, if $B_i$ becomes unreasonably large, then the algorithm claims that $T<D$ (\pref{line:T<D}), in which case $V^{\star,T}(s)=\infty$ for any $s$ (see \pref{def:restricted.optimality}), and any policy is $(\epsilon,T)$-optimal by definition.

\ale{People may wonder why using an optimistic estimate instead of just the ``raw'' estimate.\lc{It could work for PAC-SSP, but for BPI we do need optimistic algorithm. Here we share the estimate algorithm for both settings.}} 
In the procedure described above, we need to repeatedly compute a near-optimal policy with various accuracy and horizon.
We use a simple variant of the UCBVI algorithm~\citep{azar2017minimax,zhang2020reinforcement} to achieve this (see \pref{alg:LCBVI} in \pref{app:LCBVI}).
The main idea is to compute an optimistic value function estimate by incorporating a Bernstein-style bonus (\pref{line:update rule}).


We state the guarantee of \pref{alg:sh} in the following theorem (see \pref{app:bound.algo.generative} for details).
\begin{theorem}\label{thm:bound.algo.generative}
For any given $T\geq 1$, $\epsilon\in(0,1)$, and $\delta\in(0,1)$, with probability at least $1-\delta$, \pref{alg:sh} either uses $\tilO{S^2AT}$ samples to confirm that $T<D$,
or uses $\tilO{\min\left\{ \Tc,T\right\}\frac{B_{\star,T}^2SA}{\epsilon^2}}$ samples to output an $(\epsilon, T)$-optimal policy (ignoring lower order terms).
\end{theorem}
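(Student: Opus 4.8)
I would run the entire argument on the event $\calE$ that every invocation of $\LCBVI$ in \pref{alg:sh} behaves as guaranteed by \pref{lem:coarse bound} and \pref{lem:fine bound}: writing $V^{\star}_h$ and $V^{\pi}_h$ for values in the finite-horizon MDP $\calM_{H_i,c_{f,i}}$ associated to round $i$, the coarse call returns $(\pi^i,V^i)$ with $V^i_h(s)\le V^{\star}_h(s)\le V^i_h(s)+0.1B_i$ for all $s,h$, and the final fine call returns $(\hatpi,\hatV)$ with $\hatV_h\le V^{\star}_h$ and $V^{\hatpi}_h-\hatV_h\le\epsilon/2$. Since $\delta_i=\delta/(40i^2)$ and $\sum_i\delta_i<\delta$, a union bound over rounds gives $\Pr(\calE)\ge1-\delta$, and everything below is deterministic on $\calE$. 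There are three things to establish: (a) whenever the algorithm returns a policy $\hatpi$, it is $(\epsilon,T)$-optimal; (b) the loop always terminates, and if $\Pi_T\neq\varnothing$ it stops at a round $i^{\star}$ with $B_{i^{\star}}=\Theta(B_{\star,T})$, before the ``$T<D$'' branch can fire; (c) the sample count is as claimed. The workhorse for (a) is the periodic finite-horizon reduction of \pref{lem:extend}, and the workhorse for (b)--(c) is a two-sided bound on the geometric search index $i^{\star}$.

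\textbf{Correctness.} If round $i^{\star}$ passes the stopping test, then $\|V^{i^{\star}}_1\|_\infty\le0.1B_{i^{\star}}$, so on $\calE$ we get $\|V^{\star}_1\|_\infty\le0.2B_{i^{\star}}$ in $\calM_{H_{i^{\star}},c_{f,i^{\star}}}$, hence $V^{\hatpi}_1(s)\le\hatV_1(s)+\epsilon/2\le V^{\star}_1(s)+\epsilon/2\le0.2B_{i^{\star}}+\epsilon/2\le0.6B_{i^{\star}}=c_{f,i^{\star}}(s)$ for all $s$ (using $B_{i^{\star}}\ge2$ and $\epsilon<1$). \pref{lem:extend} then gives $V^{\hatpi}(s)\le V^{\hatpi}_1(s)\le V^{\star}_1(s)+\epsilon/2$ in the original SSP. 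It remains to bound $V^{\star}_1(s)\le V^{\star,T}(s)+\epsilon/2$, which I would prove by evaluating the restricted optimal policy $\optpi_{T,s}$ inside $\calM_{H_{i^{\star}},c_{f,i^{\star}}}$: it has SSP value $V^{\star,T}(s)$ and hitting time at most $\min\{\Tc,T\}$ (it equals $\optpi$ whenever $T\ge\T$), so, using the lower bound $B_{i^{\star}}\ge5\B$ shown below to certify $H_{i^{\star}}\ge4\min\{\Tc,T\}\ln(48B_{i^{\star}}/\epsilon)$, a Markov/restarting estimate bounds the probability of not reaching $g$ within $H_{i^{\star}}$ steps by $\epsilon/(48B_{i^{\star}})$, so the terminal-cost leakage is $\le0.6B_{i^{\star}}\cdot\epsilon/(48B_{i^{\star}})\le\epsilon/2$. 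Combining, $V^{\hatpi}(s)-V^{\star,T}(s)\le\epsilon$ for every $s$; when $\Pi_T=\varnothing$ this holds vacuously because $V^{\star,T}\equiv\infty$.

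\textbf{Two-sided control of $i^{\star}$.} For the lower bound, pick $s^{\dagger}\in\argmax_s\optV(s)$. Comparing any finite-horizon policy in $\calM_{H_i,c_{f,i}}$ against the one that mimics it for $H_i$ steps and then switches to $\optpi$ (proper, with value $\le\B$) shows $V^{\star}_1(s^{\dagger})\ge\min\{\B,0.6B_i\}$; passing the stopping test thus forces $\min\{\B,0.6B_i\}-0.1B_i\le\|V^i_1\|_\infty\le0.1B_i$, which (as $\B\ge1$) is impossible unless $B_i\ge5\B$, so $B_{i^{\star}}\ge5\B$ and $B_{i^{\star}}/\cmin\ge\Tc$. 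For the upper bound, assume $\Pi_T\neq\varnothing$, so $D\le T$ and $B_{\star,T}\le T$; I claim both stopping conditions hold as soon as $B_i\ge16B_{\star,T}$. The $\max_h$ condition follows because evaluating $\optpi$ in $\calM_{H_i,c_{f,i}}$ gives $V^{\star}_h(s)\le\B+0.6B_i\le0.7B_i$ once $B_i\ge10\B$; the stage-$1$ condition follows from the leakage argument above ($B_i\ge16B_{\star,T}$ makes $H_i$ large enough), yielding $V^{\star}_1(s)\le B_{\star,T}+\epsilon/80\le0.1B_i$; since $V^i\le V^{\star}$ on $\calE$, both tests pass. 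Hence $i^{\star}$ is reached with $B_{i^{\star}}\le32B_{\star,T}\le32T<40T$, so the ``$T<D$'' branch is never taken when $\Pi_T\neq\varnothing$. (The loop terminates either way, since $B_i=2^i$ eventually exceeds $40T$.)

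\textbf{Sample complexity, and the main obstacle.} If the algorithm outputs $\varnothing$ then $B_{i^{\star}}\le80T$, every $H_i\lesssim T$, and there are $\tilO{1}$ rounds, so $SAN_i\lesssim S^2AH_i$ gives a total of $\tilO{S^2AT}$: the first alternative. Otherwise $B_{i^{\star}}=\Theta(B_{\star,T})$ and, crucially, $\min\{B_{i^{\star}}/\cmin,T\}=\Theta(\min\{\Tc,T\})$ — when $T<\Tc$ the left-hand side is just $T=\min\{\Tc,T\}$, and when $T\ge\Tc$ we also have $T\ge\T$, so $B_{\star,T}=\B$ and $B_{i^{\star}}/\cmin\le32\Tc$. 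Thus $H_{i^{\star}}=\tilO{\min\{\Tc,T\}}$; since $\{H_i\}$ grows geometrically, $\sum_{i\le i^{\star}}SAN_i\lesssim S^2AH_{i^{\star}}=\tilO{\min\{\Tc,T\}S^2A}$ (lower order), while the final call costs $SAN^{\star}_{i^{\star}}\lesssim H_{i^{\star}}B_{i^{\star}}^2SA/\epsilon^2=\tilO{\min\{\Tc,T\}B_{\star,T}^2SA/\epsilon^2}$, the dominant term — the second alternative. (When $\Pi_T=\varnothing$ but the algorithm nevertheless outputs a policy, it is vacuously $(\epsilon,T)$-optimal and the second bound is vacuous since $B_{\star,T}=\infty$.) I expect the delicate part of the proof to be precisely this two-sided squeeze on $i^{\star}$: ruling out a premature stop at a $B_i$ so small that $H_i\asymp\min\{B_i/\cmin,T\}\ln(B_i/\epsilon)$ has not yet grown past $\Omega(\min\{\Tc,T\}\log(1/\epsilon))$ — handled by the $\min\{\B,0.6B_i\}$ lower bound at $s^{\dagger}$ — and simultaneously showing a stop does happen by $B_i=\Theta(B_{\star,T})$ without letting a spurious $B_{\star,T}/\B$ factor creep into the horizon (handled by $B_{\star,T}=\B$ whenever $T\ge\T$).
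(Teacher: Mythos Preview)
Your proposal is correct and follows essentially the same approach as the paper: union bound over rounds, two-sided sandwich on $B_{i^\star}$, leakage via \pref{lem:hitting}, and \pref{lem:extend} to transfer the finite-horizon guarantee back to the SSP.

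One point of care: your description of $\calE$ asserts $V^i_h\le V^{\star}_h$ at \emph{every} round, but \pref{lem:opt} (optimism) and \pref{lem:fine bound} both carry the precondition $B_i\ge B^{\star}_{H_i}$, and \pref{lem:coarse bound} requires $\|V^i_\cdot\|_\infty\le B_i$. None of these is free for small $i$. Your actual uses are fine --- at $i^\star$ the stopping test gives $\|V^{i^\star}_\cdot\|_\infty\le 0.7B_{i^\star}$ so the coarse bound applies, and at any round with $B_i\ge 16B_{\star,T}$ one has $B^{\star}_{H_i}\le\B+0.6B_i<B_i$ so optimism applies --- but you should make these precondition checks explicit. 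The paper does exactly this: it first uses the coarse bound at $i^\star$ to certify $\|V^{\star}_{\cdot,i^\star}\|_\infty\le 0.8B_{i^\star}$, and only then invokes \pref{lem:fine bound}. A minor related slip: \pref{lem:fine bound} as stated gives $\|V^{\hatpi}_\cdot-V^{\star}_\cdot\|_\infty\le\epsilon/2$, not $V^{\hatpi}_h-\hatV_h\le\epsilon/2$; the latter is true but lives inside the proof of that lemma.

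Your lower-bound step $V^{\star}_1(s^\dagger)\ge\min\{\B,0.6B_i\}$ (by extending any finite-horizon policy with $\optpi$) is a nice alternative to the paper's route, which instead applies the coarse bound and \pref{lem:extend} to $\pi^{i^\star}$ to get $\B\le V^{\pi^{i^\star}}(s)\le V^{i^\star}_1(s)+0.1B_{i^\star}$. Both yield $B_{i^\star}\ge 5\B$; yours avoids an extra appeal to \pref{lem:extend}, while the paper's reuses a step it needs anyway for correctness.
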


When prior knowledge $\infty>\uT\geq\T$ is available, we simply set $T=\uT$. 
In this case, we have that $(\epsilon,T)$-optimality is equivalent to $\epsilon$-optimality, $B_{\star,\uT}=\B$  and \pref{alg:sh} matches the lower bound in \pref{thm:lower.bound.eps.T}.
When $\min\{\Tc, \uT\}=\infty$, \pref{alg:sh} computes an $(\epsilon,T)$-optimal policy with $\tilo{\frac{TB_{\star,T}^2SA}{\epsilon^2}}$ samples, which matches the lower bound in \pref{thm:lower.bound.T}.
Thus, our algorithm is minimax optimal in all cases considered in \pref{sec:lower.bounds}.




When comparing with the results in~\citep{tarbouriech2021sample}, in terms of computing an $\epsilon$-optimal policy, we remove a $\Gamma$ factor and improve the dependency of $\Tc$ to $\min\{\Tc,\uT\}$, that is, our algorithm is able to leverage a given bound on $\T$ to improve sample efficiency while theirs cannot.
In terms of computing an $(\epsilon, T)$-optimal policy, we greatly improve over their result by removing a $\frac{B_{\star,T}\Gamma}{\epsilon}$ factor and improving the dependency of $T$ to $\min\{\Tc, T\}$, that is, our algorithm automatically adapts to a smaller hitting time upper bound of the optimal policy.


Finally, it is interesting to notice that even though the $(\epsilon, T)$-optimal policy is possibly stochastic, the policy output by \pref{alg:sh} is always deterministic, and it does not necessarily have hitting time bounded by $T$. In fact, \pref{alg:sh} puts no constraint on the hitting time of the output policy, except that the horizon for the reduction is $\tilO{T}$.
Nevertheless, as shown in \pref{thm:bound.algo.generative}, we can still prove that the policy is $(\epsilon,T)$-optimal since the requirement only evaluates the expected cost and not the constraint on the hitting time.
\ale{We could mention as a future work to derive an algorithm for the constrained case.}

\section{Lower Bounds without a Generative Model}
\label{sec:lower.bounds.BPI}
In this section, we consider the best policy identification problem in SSP, where the learner collects samples by interacting with the environment.
This is a more challenging setting compared to having access to a generative model since the learner cannot ``teleport'' to any arbitrary state-action pair but it only observes trajectories obtained by playing policies from some initial state.
Yet, this setting is more practical, and it naturally generalizes beyond tabular MDPs.
Surprisingly, we find that BPI with polynomial number of samples is impossible in general.

\begin{theorem}
    \label{thm:lower.bound.BPI}
    For any $S\geq 4$, $A\geq 4$, $\B\geq 1$, $\cmin\geq 0$, $\epsilon\in(0, \frac{1}{4})$, and $\delta\in(0, \frac{1}{16})$, and any $(\epsilon, \delta)$-correct algorithm, there exists an MDP with parameters $S$, $A$, $\B$, and $\cmin$, such that without a generative model, the sample complexity of the algorithm is $\lowO{\frac{A^{\min\{\floor{\B},S-3\}}}{\epsilon}}$.
\end{theorem}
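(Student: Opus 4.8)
The plan is to construct a hard family of SSP instances with a ``combinatorial lock'' structure, similar in spirit to panel (c) of Figure~\ref{fig:fig}, in which finding the near-optimal policy requires the learner to guess a hidden sequence of actions of length $N = \min\{\floor{\B}, S-3\}$. The high-level idea: there is an initial state $s_0$ from which a ``double'' action leads (with tiny probability $p = \Theta(\epsilon/A^N)$) to a chain $s_1 \to s_2 \to \cdots \to s_N \to g$, and otherwise leads straight to $g$ at low cost. In the chain, each state $s_j$ has a unique ``correct'' action $\bar a_j$ that advances to $s_{j+1}$; every other action self-loops at $s_j$ (at zero cost, so the trajectory stays alive but stalls). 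Only a learner that, upon the rare event of reaching $s_1$, already ``knows'' the full correct sequence $(\bar a_1,\dots,\bar a_N)$ can traverse the chain and collect the informative signal that distinguishes the instances; a learner that plays a wrong action at some $s_j$ gets stuck in a self-loop and learns nothing new about the remaining chain. The cost structure is chosen so that to be $\epsilon$-optimal at $\sinit = s_0$ the learner must identify (enough of) the hidden sequence, and the chain length $N$ can be made $\Theta(\min\{\floor{\B},S-3\})$ because the worst-case cost of a badly-behaved policy on the chain scales like $\B$ and we have $S-3$ internal chain states available.

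The key steps, in order, are: (1) \emph{Define the base instance $\calM$} with state space $\{s_0,s_1,\dots,s_N,g\}$ (plus a couple of auxiliary states to fit the $S\geq 4$ bound and to realize the ``teleport to $g$'' option), cost function with the prescribed $\cmin$, and transitions as above, verifying $\B = \Theta(\B)$ and that $\sinit=s_0$. (2) \emph{Define a family $\{\calM_{\sigma}\}$} indexed by the hidden sequence $\sigma \in \calA^N$, where in $\calM_\sigma$ the correct action at $s_j$ is $\sigma_j$, and a single distinguished ``reward'' (here: a cost reduction) is placed only after the full chain is traversed in instance $\calM_\sigma$ but not in a reference instance $\calM_0$. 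Make the gap exactly large enough that being $\epsilon$-optimal forces committing to the policy that plays $\sigma$ along the chain. (3) \emph{Bound the information gain}: show that under any policy, the probability that a single episode ever reaches $s_1$ is at most $p$, and conditioned on reaching $s_1$, the probability that the episode reaches $s_N$ (the only place a useful observation occurs) is at most $A^{-(N-1)}$ for any learner that has not yet identified $\sigma$, since at each $s_j$ an uninformed learner picks the correct action with probability at most $1/A$ (a martingale / union-bound argument over the learner's — possibly adaptive — action choices, formalized by a change-of-measure between $\calM_\sigma$ and $\calM_0$ à la Lemma-style KL / likelihood-ratio argument as in \citep{mannor2004sample,jaksch2010near}). (4) Conclude that the expected number of episodes — hence samples, since each episode contributes at least one sample — before the learner gathers enough evidence to be $(\epsilon,\delta)$-correct on a worst-case $\calM_\sigma$ (chosen adversarially, e.g.\ by a Fano/averaging argument over uniform $\sigma$) is $\lowO{1/(p \cdot A^{-(N-1)})} = \lowO{A^N/\epsilon} = \lowO{A^{\min\{\floor{\B},S-3\}}/\epsilon}$.

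The main obstacle I expect is step~(3)--(4): making the adaptivity of the learner rigorous. The learner may change its chain-traversal strategy across episodes based on all past observations, so one cannot naively say ``the learner guesses $\sigma_j$ with probability $1/A$ independently.'' The clean way is a change-of-measure argument: fix the learner, and compare the law of the entire interaction transcript under $\calM_\sigma$ (averaged over $\sigma$ uniform) versus under the reference instance $\calM_0$ where the chain carries no signal; since the transcripts differ only on the event that some episode reaches $s_N$ in a way that reveals the hidden cost — an event of probability $\le p\cdot A^{-(N-1)}$ per episode under $\calM_0$ regardless of the (adaptive) policy, because reaching $s_N$ at all requires $N-1$ ``correct'' action choices whose correctness is independent of everything the learner has seen so far under $\calM_0$ — a standard Bretagnolle--Huber / Pinsker argument bounds the total variation between the two transcript laws by $n \cdot p \cdot A^{-(N-1)}$ after $n$ episodes, and for $n = o(A^N/\epsilon)$ this is $o(1)$, so no $(\epsilon,\delta)$-correct algorithm can succeed on all $\calM_\sigma$. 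Care is also needed to verify the parameter constraints ($S\ge4$, $A\ge4$, $\epsilon<\tfrac14$, $\delta<\tfrac1{16}$) are exactly what the construction needs — in particular that $p\le 1$ requires $\epsilon \lesssim A^N$, which is why the $1/\epsilon$ (not $1/\epsilon^2$) rate appears and why the accuracy regime must be bounded away from trivial values.
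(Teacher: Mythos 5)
Your high-level picture (a combinatorial lock of length $N=\min\{\floor{\B},S-3\}$, entered from $s_0$ with probability $p=\Theta(\epsilon/A^N)$, yielding an $A^N/\epsilon$ rate) matches the paper's Figure~\ref{fig:fig}(c), but two of your key design/argument choices would make the proof fail. First, the construction: you make wrong actions at $s_j$ self-loop \emph{at zero cost}, so a policy that plays a wrong action accrues no further cost and its value at $\sinit$ is essentially the same as the optimal one --- there is no gap to exploit, and no identification of $\sigma$ is forced. The paper instead puts cost $1$ on \emph{every} state-action pair in the chain and has wrong actions send the agent back to $s_1$; this gives $V^{\pi}_j(s_1)\geq N+1/x^{\pi}_j-1$ where $x^{\pi}_j=\prod_i\pi(j_i|s_i)$, so $\epsilon$-optimality at $s_0$ forces $x^{\pi}_j\geq p/(2\epsilon)$, which is the quantitative lever of the whole proof. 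Second, your hardness mechanism in steps (3)--(4) is the wrong one: you claim an uninformed learner reaches $s_N$ with probability at most $A^{-(N-1)}$ per entry into the chain, but an adaptive learner that reaches $s_1$ observes each transition and can learn the sequence by trial and error in $O(NA)$ steps, so that bound does not hold; moreover your own arithmetic $1/(p\cdot A^{-(N-1)})$ with $p=\Theta(\epsilon/A^N)$ gives $A^{2N-1}/\epsilon$, not the claimed $A^N/\epsilon$.

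The paper's actual argument avoids both issues and is simpler than your change-of-measure plan. It conditions on the event $\calE$ that the learner uses at most $1/p$ samples and never reaches $s_1$ at all (this has probability at least $1/8$ uniformly over instances, since $(1-p)^{1/p}\geq 1/4$); on $\calE$ the transcript law is \emph{identical} across all $\calM_j$, so the output policy $\hatpi$ has an instance-independent distribution. Then a counting/averaging argument closes the proof: since $\sum_j x^{\pi}_j=1$ and $\epsilon$-optimality in $\calM_j$ requires $x^{\pi}_j\geq p/(2\epsilon)$, any fixed policy is $\epsilon$-optimal in at most $2\epsilon/p=A^N/2$ of the $A^N$ instances, so some $\jstar$ is covered with conditional probability at most $1/2$, giving overall failure probability at least $1/16$. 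If you repair your construction (positive cost on the chain, wrong actions resetting to $s_1$) and replace the information-gain bound by this covering argument over output policies, your proof goes through.
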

Details are deferred to \pref{app:lower.bound.BPI}.
The exponential dependency $A^{\lowo{\min\{\B,S\}}}$ implies that sample efficient BPI is impossible.
The intuition of our construction is that if there are $N$ unvisited states where the learner has no samples, then the learner may suffer $\lowo{A^N}$ cost on visiting these states.
Therefore, the learner needs to spend $\lowo{A^N/\epsilon}$ samples on estimating the transition distribution to guarantee $\epsilon$-optimality when there are $N$ hardly reachable states; see an illustration in \pref{fig:fig} (c).

To enable sample efficient BPI, we need to avoid the extreme event described above.
One natural idea is to allow the learner to get out of ``unfamiliar'' states by paying a fixed cost. 
This assumption also appears in \citep[Section I.2]{tarbouriech2020no} in the context of non-communicating SSPs.
\begin{assumption}
    \label{assum:terminal}
    There is an action $\da\in\calA$ with $c(s,\da)=J$ and $P(g|s,\da)=1$ for all $s\in\calS$.
\end{assumption}
%
This assumption guarantees that there is a proper policy $\pi$ with $\|V^\pi\|_{\infty} = J$ and $\|T^\pi\|_{\infty} = 1$.
We show in \pref{sec:alg.BPI} that under \pref{assum:terminal}, efficient BPI is indeed possible.
To better understand the difficulty of BPI under this assumption, we also establish the following lower bound.

\begin{theorem}
    \label{thm:lower.bound.terminal}
    Under \pref{assum:terminal}, for any $S\geq 8$, $A\geq 5$, $\cmin\geq 0$, $B\geq \max\{2, (\log_{A-1}S+1)\cmin\}$, $\uT\geq 2\max\{B,\log_{A-1}S+1\}$, $J\geq 3B$, $\epsilon\in(0, \frac{1}{32})$, and $\delta\in(0, \frac{1}{2e^4})$ such that $\min\{\uT/2, B/\cmin\} < \infty$ and $J\leq\frac{1}{2}(A-1)^N$ with $N=\min\{\floor{B}, S-3\}$, and for any $(\epsilon, \delta)$-correct algorithm,
    there exist an MDP with parameters $S$, $A$, $\cmin$, $\uT$, and $\B=\Theta(B)$, such that the algorithm has sample complexity $\lowO{\min\cbr{\Tc, \uT}\frac{\B^2SA}{\epsilon^2}\ln\frac{1}{\delta} + \frac{J}{\epsilon}}$.
\end{theorem}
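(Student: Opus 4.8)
The bound is a sum of two terms, and the plan is to realize both inside a single hard SSP instance (equivalently, since $a+b\le 2\max\{a,b\}$, to exhibit two sub-instances with identical parameters $S,A,\cmin,\uT,\B$ and, given any $(\epsilon,\delta)$-correct algorithm, invoke whichever one forces more samples). The term $\min\{\Tc,\uT\}\frac{\B^2SA}{\epsilon^2}\ln\frac1\delta$ is inherited essentially verbatim from \pref{thm:lower.bound.eps.T}: its construction is a bandit-type gadget whose hard state--action pairs are all reachable from the initial state within hitting time $\Theta(T_0)\le\uT$, so it is a valid BPI instance once we add the reset action $\da$ of \pref{assum:terminal}; because $J\ge 3B$ this action is never used by $\optpi$ and hence leaves $\B,\T,\optV$ unchanged. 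The transfer to BPI is legitimate because (i) a generative-model learner can simulate episodic interaction sample-for-sample, so a generative lower bound implies a BPI lower bound, and (ii) the hardness in that gadget is localized at the initial state, so the weaker ``$\epsilon$-optimal at $\sinit$'' criterion of BPI does not help. This already yields the first term.

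For the second term $J/\epsilon$, I would attach a ``truncated chain'' gadget in the spirit of \pref{thm:lower.bound.BPI} but capped by the reset action. From an initial decision there is a designated action that, with probability $p=\Theta(\epsilon/J)$, enters a chain $z_1\to\cdots\to z_N$ of $N=\min\{\floor{B},S-3\}$ trap states and with probability $1-p$ proceeds to the goal cheaply; at each $z_i$ exactly one ``correct'' action advances (to $g$ from $z_N$) at cost $0$, every other non-$\da$ action self-loops at cost $0$, and $\da$ escapes to $g$ at cost $J$. In the reference instance the correct path is a fixed string $\rho$, so $\optV(z_i)=\Theta(B)$ and $\B=\Theta(B)$; for any $\rho'\ne\rho$ the re-routed instance $\calM_{\rho'}$ is statistically indistinguishable from the reference one until $z_1$ is visited. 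A learner using $n=o(1/p)=o(J/\epsilon)$ episodes visits the chain with probability $o(1)$ (it cannot force the probability-$p$ branch), so with high probability it must commit to a chain behavior with no information about $\rho$: if that behavior ever plays $\da$ inside the chain it pays an extra $\Omega(pJ)>\epsilon$ already at $\sinit$ in the reference instance; otherwise it follows some guessed string $\widehat\rho$, and taking the true instance to be $\calM_{\rho'}$ with $\rho'\ne\widehat\rho$ makes $\hatpi$ self-loop forever at the first disagreement, hence $V^{\hatpi}(\sinit)=\infty$. Either way $\hatpi$ fails $\epsilon$-optimality on some instance consistent with the observations, contradicting $(\epsilon,\delta)$-correctness unless $n=\Omega(J/\epsilon)$; here $\delta<\tfrac1{2e^4}$ gives enough slack to make the ``no visit'' probability a constant, so no $\ln\frac1\delta$ factor appears. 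The hypothesis $J\le\tfrac12(A-1)^N$ is exactly what guarantees that the unassisted expected cost of escaping the chain by random guessing, which is $\Theta((A-1)^N)$, dominates $J$, so that $\da$ is genuinely the option the learner is driven to and the penalty is truly $\Theta(J)$ rather than capped below it.

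Finally I would glue the two gadgets into one MDP sharing the initial state, padding with dummy states/actions to meet the prescribed $S,A$ and tuning the (only) costs $0$, $\Theta(B)$, $J$ so that $\cmin$, $\B=\Theta(B)$, and $\T=\Theta(T_0)\le\uT$ hold simultaneously (the case $\cmin>0$ is obtained by lifting the zero costs to $\cmin$, which is consistent since $B\ge(\log_{A-1}S+1)\cmin$); $\epsilon$-optimality at $\sinit$ then forces, up to constants, $\epsilon$-optimality in each gadget because the per-gadget suboptimalities add. The main obstacle, and the place requiring care, is the second step: one must keep $\B=\Theta(B)$ while the $J$-cost penalty still bites, which is precisely why the trap must be a length-$N$ chain — so that $\optV$ at every trap state stays $\Theta(B)$ via the correct path — rather than a single dead-end state whose optimal value would be forced up to $J$; and one must verify that ``failing to visit the chain'' genuinely has probability $1-o(1)$ over $n=o(J/\epsilon)$ episodes uniformly over the learner's adaptive choices, together with the change-of-measure step certifying that the learner's returned policy is (nearly) the same on the reference instance and on the chosen $\calM_{\rho'}$.
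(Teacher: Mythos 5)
Your overall strategy is the same as the paper's: split into two cases according to which term dominates, get the first term by reusing the tree construction of \pref{thm:lower.bound.eps.T} with the extra action $\da$ rendered irrelevant by $J\geq 3B$ (and by the fact that the hardness there is already localized at the root $s_0=\sinit$), and get the $J/\epsilon$ term from a chain of $N=\min\{\floor{B},S-3\}$ trap states entered with probability $p=\Theta(\epsilon/J)$, indexed by one of $(A-1)^N$ correct strings, with a counting/averaging argument showing that under the high-probability event that the chain is never visited, the returned policy fails on some string. The role you assign to $J\le\frac12(A-1)^N$ also matches the paper's.

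However, your second gadget is broken as specified. You set \emph{all} costs inside the trap chain to $0$ (correct action, wrong self-looping actions, and the transition costs), and then claim that on a mismatched instance $\calM_{\rho'}$ the returned policy ``self-loops forever at the first disagreement, hence $V^{\hatpi}(\sinit)=\infty$.'' With zero costs this is false: $V^{\pi}(s)=\E[\sum_i c(s_i,a_i)]$, so a policy that loops forever at cost $0$ has value $0$ at every trap state, and $(\epsilon,\delta)$-correctness in this paper is defined purely through $V^{\hatpi}(\sinit)-\optV(\sinit)\leq\epsilon$ with no properness requirement on $\hatpi$. Consequently a learner that never plays $\da$ and acts arbitrarily in the chain is $\epsilon$-optimal in \emph{every} instance of your family, and no $\Omega(J/\epsilon)$ lower bound follows (this matters because the theorem must cover $\cmin=0$; your ``lift the zero costs to $\cmin$'' patch does not help there). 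It also contradicts your own claim that $\optV(z_i)=\Theta(B)$. The paper avoids this precisely by charging cost $1$ for every non-$\da$ action in the chain and routing wrong actions back to $s_1$: then a policy whose probability $x^{\pi}_j=\prod_i\pi(j_i|s_i)$ of traversing the correct string is small has $V^{\pi}_j(s_1)\geq N+1/x^{\pi}_j-1$, so it must either pay $\approx J$ via $\da$ or have $x^{\pi}_j\gtrsim 1/J$, and summing $x^{\pi}_j\leq 1$ over strings shows one policy can be $\epsilon$-optimal in at most $J-N$ of the $(A-1)^N$ instances. That inequality (the paper's bookkeeping in terms of $x^{\pi}_j$ and $y^{\pi}=\pi(\da|s_1)$) also repairs a secondary looseness in your argument, namely that the dichotomy ``plays $\da$'' versus ``follows a guessed string'' must be made quantitative for stochastic output policies rather than argued only for deterministic ones.
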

Details are deferred to \pref{app:lower.bound.terminal}.
Compared to the lower bound in \pref{thm:lower.bound.eps.T}, it has an extra $\frac{J}{\epsilon}$ term, showing that BPI is harder even with the extra assumption.
The first term in the lower bound implies that within easily reachable states, the sample complexity of BPI is similar to having access to a generative model.
Compared to the lower bound in \pref{thm:lower.bound.BPI}, we replace the exponential factor $A^S$ by $J$, which reflects the worst case cost of encountering ``unfamiliar'' states.
Whether the dependency on $J$ is minimax optimal remains an important future direction.

\section{Algorithm without a Generative Model}
\label{sec:alg.BPI}

\setcounter{AlgoLine}{0}
\begin{algorithm}[t]
    \caption{BPI-SSP}
    \label{alg:BPI-SSP}
    \small
    \textbf{Define:} $\calN=\{2^j\}_{j\geq 0}$, $H=\frac{32J}{\cmin}\ln\frac{8J}{\epsilon}$, $c_f(s)=J\Ind\{s\neq g\}$.
    
    \textbf{Initialize:} $B\leftarrow 1$, $m\leftarrow 1$,
%
    $\N(s, a)\leftarrow 0$ and $\N(s, a, s')\leftarrow 0$ for any $(s, a, s')\in\SA\times\calS_+$.

    \nl\For{$r=1,\ldots$}{\label{line:round}

        \While{True}{
            \nl $\pi^r, V^r \leftarrow \LCBVI(H,\N,2J,c_f,\frac{\delta}{2B})$.\label{line:hat compute}
    
            \lIf{$B\geq\max_{h\leq H/2+1,s}V^r_h(s)$}{
            \textbf{break}.}
            
            \nl $B\leftarrow 2\cdot\max_{h\leq H/2+1,s}V^r_h(s)$.\label{line:B}
        }

        \nl $\hattau\leftarrow0$, $\lambda\leftarrow N_{\dev}(B ,\frac{\epsilon}{4}, \frac{\delta}{2r^2})$ (defined in \pref{lem:N dev}).\label{line:lambda}
        
        \For{$m'=1,\ldots,\lambda$}{
            \For{$h=1,\ldots,H$}{
                Observe $s^m_h$, take action $a^m_h=\pi^r(s^m_h, h)$, and transit to $s^m_{h+1}$.

                \nl $\N(s^m_h, a^m_h)\overset{+}{\leftarrow}1$, $\N(s^m_h, a^m_h, s^m_{h+1})\overset{+}{\leftarrow} 1$, $\hattau\overset{+}{\leftarrow}c(s^m_h, a^m_h)/\lambda$. \label{line:behavior}

                \nl \If{$\N(s^m_h, a^m_h) \in \calN$}{\label{line:update}
                    \lIf{$s^m_{h+1}\neq g$}{take action $\da$ to reach $g$, and suffer cost $J$.}
                    
                    Return to \pref{line:round} (skip round).
                }

                \lIf{$s^m_{h+1}=g$}{\textbf{break}.}
                
                \lIf{$h=H$}{take action $\da$ to reach $g$, and suffer cost $J$.}
            }

            \nl \lIf{$\hattau> V^r(\sinit) + \frac{\epsilon}{2}$}{return to \pref{line:round} (failure round).}\label{line:failure}

            $m\overset{+}{\leftarrow}1$.
        }

        \nl \textbf{Return} policy $\hatpi=\pi^r$ (success round).\label{line:success}
    }
    
\end{algorithm}

In this section, we develop an efficient algorithm for BPI under \pref{assum:terminal}.
It is actually unclear how to design such an algorithm at first glance.
The main difficulty lies in deciding when to invoke the action $\da$.
In fact, if we simply apply the commonly used optimism based algorithm, then $\da$ may never be selected since $J\geq\B$.
Intuitively, we want to involve $\da$ for states with large uncertainty, which conflicts with the principle of optimism in the face of uncertainty.
Therefore, we need a more carefully designed scheme to balance exploration and exploitation.

It turns out that we can obtain a naive sample complexity bound by reducing BPI in SSP to BPI in a specific finite-horizon MDP.
Consider a finite-horizon MDP $\calM_{H,c_f}$ with $H=\tilo{\frac{J}{\cmin}}$ and $c_f(s)=J\Ind\{s\neq g\}$.
Executing policy $\pi$ in $\calM_{H,c_f}$ corresponds to following policy $\pi$ in $\calM$ for $H$ steps and then taking action $\da$ if the goal state is not reached.
Thus, any policy $\pi$ in $\calM_{H,c_f}$ can directly extend to $\calM$ by defining $\pi(s,H+1)=\da$, and we have $V^{\pi}=V^{\pi}_1$.
Moreover, by the choice of $H$ and $c_f$, the optimal policy in $\calM_{H,c_f}$ is also near-optimal in $\calM$.
Applying any minimax-optimal finite-horizon BPI algorithm (for example, a variant of \citep[Algorithm 1]{tarbouriech2022adaptive}), we can solve an $\epsilon$-optimal policy with $\tilo{\frac{H\norm{\optV_{\cdot}}_{\infty}^2SA}{\epsilon^2}}=\tilo{\frac{J^3SA}{\cmin\epsilon^2}}$ samples, where $\optV_h$ is the optimal value function in $\calM_{H,c_f}$ and $\norm{\optV_{\cdot}}_{\infty}=\max_{h\in[H+1]}\norm{\optV_h}_{\infty}$.



The sample complexity bound above is undesirable as $J$ appears in the dominating term, which is not the case in our established lower bound (\pref{thm:lower.bound.terminal}).
The main issue is that in $\calM_{H,c_f}$, the range of optimal value function $\norm{\optV_{\cdot}}_{\infty}=\bigo{J}$.
Thus, simply applying finite-horizon BPI algorithm with minimax rate is insufficient to adapt to $\norm{\optV}_{\infty}=\B$.
Now we present a BPI algorithm that resolves this issue and achieves minimax optimal sample complexity in the dominating term when there is no prior knowledge.
The pseudo-code is shown in \pref{alg:BPI-SSP}.

The main structure of \pref{alg:BPI-SSP} follows that of \citep{lim2012autonomous,cai2022near} for autonomous exploration.
The learning procedure is divided into rounds (\pref{line:round}) of three types: skip round, failure round, and success round.
In each round, the learner follows a behavior policy for at most $\lambda$ episodes to collect samples and estimate its empirical performance (\pref{line:behavior}).
If in the current round the number of visits to some state-action pair is doubled, then the current round is classified as a skip round (\pref{line:update}).
If the empirical performance of the current behavior policy is not close enough to its estimated performance, then the current round is classified as a failure round (\pref{line:failure}).
In both cases, we start a new round and the behavior policy is updated.
Otherwise, the current round is a success round, and the algorithm returns an $\epsilon$-optimal policy (\pref{line:success}).

To adapt to $\B$ instead of $J$, we maintain an estimator $B$ that is an upper bound of the estimated value function in the first $H/2+1$ steps (\pref{line:B}).
Intuitively, $\norm{\optV_h}_{\infty}\approx\norm{\optV}_{\infty}$ when $h\leq \frac{H}{2}+1$.
The estimator $B$ is used to determine the number of episodes needed to obtain an accurate empirical performance estimate of current behavior policy (\pref{line:lambda}), where $\lambda\lesssim \frac{B^2}{\epsilon^2}$ (see \pref{lem:N dev}).
The sample complexity of \pref{alg:BPI-SSP} is summarized in the following theorem.

\begin{theorem}
    \label{thm:algo.BPI} 
    Under \pref{assum:terminal} and assuming $\cmin>0$, for any $\epsilon\in(0,1)$ and $\delta\in(0, 1)$, \pref{alg:BPI-SSP} is $(\epsilon, \delta)$-correct with sample complexity $\tilo{\frac{\Tc\B^2SA}{\epsilon^2} + \frac{\B J^4S^2A^2}{\cmin^3\epsilon}}$.
\end{theorem}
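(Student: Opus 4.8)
The plan is to condition throughout on a high-probability ``good event'' $\calE$ and reduce everything else to deterministic reasoning. On $\calE$ we ask that (i) every call to $\LCBVI$ returns an optimistic (i.e.\ lower) estimate $V^r$ of the optimal value of the finite-horizon surrogate $\calM_{H,c_f}$, together with the standard decomposition that the true value of its greedy policy $\pi^r$ exceeds $V^r$ by at most the accumulated exploration bonuses along $\pi^r$, and (ii) in every round the empirical cost $\hattau$ of $\pi^r$ (extended by playing $\da$ at step $H+1$) lies within $\epsilon/4$ of its true SSP value $V^{\pi^r}(\sinit)=V^{\pi^r}_1(\sinit)$. Since $\cmin>0$ makes $H$ finite and bounds the number of steps of any run by its cumulative cost divided by $\cmin$, statement (ii) follows from \pref{lem:N dev} once $\lambda=N_{\dev}(B,\epsilon/4,\cdot)$, together with the fact that an episode that lasts the full $H$ steps is exponentially unlikely because $H\gg B/\cmin$, so episode costs have an exponentially thin tail beyond $O(B)$. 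A union bound over rounds and $\LCBVI$ invocations, using the confidence levels $\delta/(2B)$ and $\delta/(2r^2)$, gives $\mathbb P(\calE)\geq 1-\delta$.

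For correctness, first observe that because $J\geq\B$ and $H=\tfrac{32J}{\cmin}\ln\tfrac{8J}{\epsilon}$ is far larger than $\Tc=\B/\cmin\geq\T$, a Markov-type argument on $\optpi$ (every $\Theta(\Tc)$ steps it reaches $g$ with constant probability) shows that the finite-horizon optimum of $\calM_{H,c_f}$ satisfies $V^{\star}_1(\sinit)\leq\optV(\sinit)+\epsilon/4$ and $\max_{h\leq H/2+1,s}V^{\star}_h(s)\leq\B+\epsilon/4$; combined with the $\LCBVI$ lower bound $V^r_h\leq V^{\star}_h$ this forces the running estimate $B$ to stabilise at $\Theta(\B)$ (never exceeding $4\B$) after its first $\tilo{1}$ doublings. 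Now suppose the algorithm returns $\hatpi=\pi^r$ in a success round. Since this round is not a failure round, $\hattau\leq V^r(\sinit)+\epsilon/2$; on $\calE$, $V^{\pi^r}(\sinit)\leq\hattau+\epsilon/4$; and $V^r(\sinit)=V^r_1(\sinit)\leq V^{\star}_1(\sinit)\leq\optV(\sinit)+\epsilon/4$. Chaining these three inequalities and using that the extended policy is proper with hitting time $\leq H+1$ so $V^{\hatpi}(\sinit)=V^{\pi^r}_1(\sinit)$, we get $V^{\hatpi}(\sinit)-\optV(\sinit)\leq\epsilon$, i.e.\ $(\epsilon,\delta)$-correctness; termination will follow from the round count below.

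For the sample complexity, classify every round as skip, failure, or success (there is at most one success round, after which the algorithm halts). A skip round aborts the instant some count $\N(s,a)$ crosses a power of two, so it uses at most $H+1$ samples; since each $(s,a)$ crosses a power of two $\tilo{1}$ times before the counts reach the $\tilo{\B^2/\epsilon^2}$ level at which learning completes, there are $\tilo{SA}$ skip rounds, contributing the lower-order $\tilo{SAH}$ term, and the inner doubling of $B$ occurs $\tilo{\log\B}$ times and consumes no samples. For a failure round, the $\LCBVI$ decomposition on $\calE$ shows $\pi^r$ must place enough occupancy on pairs whose bonus is still $\gtrsim\epsilon$ (otherwise the chain above would already certify $\hattau\leq V^r(\sinit)+\epsilon/2$), and its $\lambda=\tilo{\B^2/\epsilon^2}$ episodes collect a definite amount of fresh data on those pairs; since a failure round at most doubles any count, a potential/pigeonhole argument over the $SA$ pairs bounds the number of non-success rounds by $\tilo{\mathrm{poly}(S,A)}$. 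The dominant term then comes from rounds that collect useful data: at most $\tilo{SA}$ of them, each running $\tilo{\B^2/\epsilon^2}$ episodes, each episode of a well-behaved $\pi^r$ reaching $g$ in $\tilo{\Tc}$ expected steps, giving $\tilo{\Tc\B^2SA/\epsilon^2}$. The rare episodes in which $\pi^r$ wanders for the full $H$ steps and resorts to $\da$ (cost $\tilo{H}=\tilo{J/\cmin}$ samples each) are curtailed by the failure check $\hattau>V^r(\sinit)+\epsilon/2$, which fires after only $\tilo{\B^2\cmin/(J\epsilon)}$ such episodes in a round; carrying this through the round count and the worst-case per-episode cost $\tilo{J/\cmin}$ produces the stated lower-order term $\tilo{\B J^4S^2A^2/(\cmin^3\epsilon)}$, and summing yields the claim.

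The main obstacle is reconciling the finite-horizon reduction — whose horizon $H=\tilo{J/\cmin}$ is dictated by the possibly enormous backup cost $J$ — with the requirement that the dominant term scale with $\Tc=\B/\cmin$ and not with $J/\cmin$. This demands a two-speed analysis: proving that a ``non-failing'' $\pi^r$ reaches the goal in $\tilo{\Tc}$ expected steps, so that the bulk of the samples amortise at the $\Tc$ rate, while confining the full-$H$ wandering episodes (which also trigger the expensive $\da$) to a strictly lower-order, $J$-dependent term; and tightly coupling the skip/failure/success bookkeeping with the adaptive estimator $B$, which must be shown to settle at $\Theta(\B)$ rather than $\Theta(J)$ even though the surrogate MDP has value range up to $J$. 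A secondary difficulty is that individual episode costs are bounded only by $H+J$, so the concentration of $\hattau$ must exploit that this heavy tail is exponentially improbable — precisely what is captured by the choice of $H$ and by \pref{lem:N dev}.
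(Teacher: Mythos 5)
Your correctness argument (good event, optimism of $V^r$, the $\epsilon/4+\epsilon/4+\epsilon/2$ chain through $\hattau$ and $V^r(\sinit)$, and the extension of $\pi^r$ by $\da$ at step $H+1$) matches the paper's, and your observation that samples are bounded by cumulative cost divided by $\cmin$ is exactly the mechanism the paper uses to get the $\Tc$ factor. However, the sample-complexity half has a genuine gap at its central step: the bound on the number of failure rounds. You assert that a failure round forces $\pi^r$ to place occupancy on state-action pairs with bonus $\gtrsim\epsilon$ and then invoke a pigeonhole over the $SA$ pairs. The failure test only detects that the \emph{empirical average cost} of $\pi^r$ exceeds its optimistic estimate by $\epsilon/2$; this does not localize to any particular under-sampled pair, and a round can fail repeatedly on the same pairs without their counts doubling (doubling triggers a \emph{skip} round, not a failure round), so the pigeonhole as stated does not close. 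The paper instead runs a regret argument: each failure round contributes at least $\lambda_R\epsilon/2$ to the regret $R_M=\sum_m(C^m-V^m_1(s^m_1))$, skip/success rounds contribute at least $-\lowo{\lambda_R\epsilon}$ by the deviation bound, and the LCBVI analysis (\pref{lem:reg BPI} combined with the refined variance bound \pref{lem:fine bound BPI}) gives $R_M\lesssim B_M\sqrt{SAM}+J^2H^{1.25}S^2A$ with $M\lesssim(SA+R_f)\lambda_R$; solving the resulting quadratic yields $R_f\lesssim SA$. Without this (or an equivalent) quantitative upper bound on the total regret, the number of rounds — and hence $M$ and the sample count — is not controlled.

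A second, related gap is the origin of the $\tilo{\B J^4S^2A^2/(\cmin^3\epsilon)}$ term. You attribute it to the failure check curtailing full-$H$ episodes after $\tilo{\B^2\cmin/(J\epsilon)}$ of them per round, but the check compares a running \emph{average} over $\lambda$ episodes to $V^r(\sinit)+\epsilon/2$, so a sub-$\epsilon\lambda/2$ mass of expensive episodes never triggers it; moreover this reasoning would not feed into a variance bound usable in the concentration of $\hattau$. The paper instead proves the refined bound $\sum_{m}\sum_h\fV(P^m_h,V^m_{h+1})\lesssim B_{M'}^2M'+J^4H^{2.5}S^2A$ by separately bounding $\sum_m\Ind\{s^m_{H/2+1}\neq g\}$ through a comparison of $V^{\pi_m}_1$ with the optimal value of the truncated MDP $\calM_{H/4,\mathbf{0}}$; the $J^4$ term then propagates through $\lambda_R$, $M$, and $C_M/\cmin$. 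Your two-speed intuition is directionally right, but as written neither the failure-round count nor the lower-order term is actually established.
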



Details are deferred to \pref{app:alg.BPI}.
The achieved sample complexity has no $J$ dependency in the dominating term, and the dominating term is minimax-optimal when there is no prior knowledge, that is, $\uT=\infty$.
On the other hand, the lower order term might be sub-optimal and the algorithm only works for strictly positive costs.
Resolving these two issues is an important open question.



\section{Conclusion}
In this work, we study the sample complexity of the SSP problem. 
We provide an almost complete characterization of the minimax sample complexity with a generative model, and initiate the study of BPI in SSP.
We derived two important negative results: 1) an $\epsilon$-optimal policy may not be learnable in SSP even with a generative model; 2) best policy identification in SSP requires an exponential number of samples in general. We complemented the study of sample complexity with lower bounds for learnable settings with and without a generative model, and matching upper bounds.
Many interesting problems remain open, such as the minimax optimal sample complexity of computing an $(\epsilon,T)$-optimal policy when $\min\{\Tc, \uT\}<\infty$, and the minimax optimal sample complexity of BPI under \pref{assum:terminal}. Furthermore, an important direction is to study BPI under weaker conditions than \pref{assum:terminal} (e.g., communicating SSP). We believe that similar results can be obtained, with a more complicated analysis, when a reset action to the initial state is available in every state, a common assumption in the literature.



\bibliography{ref}

\appendix

\renewcommand\ptctitle{Contents of Appendix}
\part{}
\parttoc
\newpage



\section{Related Work}\label{app:related}
Sample complexity (with or without a generative model) is a well-studied topic in finite-horizon MDPs~\citep{dann2017unifying,sidford2018near,sidford2018variance,dann2019policy} and discounted MDPs~\citep{kearns1998finite,sidford2018near,sidford2018variance,wang2017randomized,li2020breaking}.
Apart from computing $\epsilon$-optimal policy for a given cost function, researchers also study obtaining $\epsilon$-optimal policies for an arbitrary sequence of cost functions after interacting with the environment, known as reward-free exploration~\citep{jin2020reward,menard2021fast,zhang2020nearly}.

Instead of reaching a single goal state, another line of research considers exploration problems of discovering reachable states~\citep{lim2012autonomous,tarbouriech2020improved,tarbouriech2021provably,tarbouriech2022adaptive,cai2022near}.
Sample complexity of SSP is a building block for solving these problems, and existing results only consider strictly positive costs, that is, $\cmin>0$.

Another active research area is learning $\epsilon$-optimal policy purely from an offline dataset, known as offline reinforcement learning~\citep{ren2021nearly,rashidinejad2021bridging,xie2021policy,yin2021towards,yin2021near,yin2022offline}.
Offline SSP has been recently studied by~\citet{yin2022offline} where they provide a minimax optimal offline algorithm. While both offline RL and our setting aim to recover an $\epsilon$-optimal policy, there are important differences. In offline SSP~\citep{yin2022offline} , the samples are collected by a behavior policy with bounded coverage (i.e., maximum ratio between the state-action distribution of the optimal and behavior policy) while in sample complexity the algorithm is responsible of deciding the sample collection strategy. Furthermore, the analysis in~\citep{yin2022offline} is limited to positive costs (i.e., $\cmin>0$) and their sample complexity is measured in terms of number of trajectories and coverage bound. These terms hide both the dependence in terms of action space and, most importantly, the horizon. Our analysis provides a much more comprehensive understanding of sample complexity in SSP.

\section{Preliminaries}\label{app:pre}

\paragraph{Extra Notations} For any distribution $P\in\Delta_{\calS_+}$ and function $V\in\fR^{\calS_+}$, define $PV=\E_{S\sim P}[V(S)]$ and $\fV(P, V)=\var_{S\sim P}[V(S)]$ as the expectation and the variance of $V(S)$ with $S$ sampled from $P$ respectively.


\section{Omitted Details in \pref{sec:lower.bounds}}

In this section we provide omitted proofs and discussions in \pref{sec:lower.bounds}.

\subsection{\pfref{thm:lower.bound.eps} and \pref{thm:lower.bound.eps.T}}
\label{app:lower.bound.eps.T}

It suffices to prove \pref{thm:lower.bound.eps.T} and the second statement in \pref{thm:lower.bound.eps}, since \pref{thm:lower.bound.eps.T} subsumes the first statement of \pref{thm:lower.bound.eps}.
We decompose the proof into two cases: 1) $\min\{\Tc,\uT\}<\infty$, and 2) $\min\{\Tc,\uT\}=\infty$, and we prove each case in a separate theorem.

\subsubsection{Lower Bound for $\min\cbr{\Tc, \uT} < \infty$}

In case there is a finite upper bound on the hitting time of optimal policy, we construct a hard instance adapted from \citep{mannor2004sample}.



\begin{proof}[of \pref{thm:lower.bound.eps.T}]
Without loss of generality, we assume $S=\frac{A^l-1}{A-1}$ for some $l \geq 0$.
It is clear that $l\leq \log_AS +1$.
We construct an MDP $\calM_0$ of full $A$-ary tree structure: the root node is $s_0$, each action at a non-leaf node transits to one of its children with cost $\cmin$, and we denote the set of leaf nodes by $\calS'$.
Since $A\geq 3$, we have $|\calS'|\geq \frac{S}{2}$.
The action space is $\calA=[A]$, and we partition the state-action pairs in $\calS'$ into two parts: $\Lambda_0=\calS'\times[1]$ and $\Lambda=\calS'\times\{2,\ldots,A\}=\{(s_1,a_1),\ldots,(s_N,a_N)\}$, where $N=|\calS'|(A-1)$ (note that here we index state-action pair instead of state, so $s_i$, $s_j$ with $i\neq j$ may refer to the same state in $\calS'$).
Now define $T_1=\min\{\uT/2, B/\cmin\}$.
The cost function satisfies $c(s, 1)=\frac{B}{T_0}$ for $s\in\calS'$, and $c(s_i, a_i)=\frac{B}{T_1}$ for $i\in[N]$.
The transition function satisfies $P(g|s, 1)=\frac{1}{T_0}+\frac{T_1\alpha}{2T_0}$, $P(s|s, 1)=1-P(g|s, 1)$ for $s\in\calS'$, and $P(g|s_i, a_i)=\frac{1}{T_1}$, $P(s_i|s_i, a_i)=1-P(g|s_i, a_i)$ for $i\in[N]$, where $\alpha=\frac{32\epsilon}{T_1B}$.

Now we consider a class of alternative MDPs $\{\calM_i\}_{i=1}^N$.
The only difference between $\calM_0$ and $\calM_i$ is that the transition of $\calM_i$ at $(s_i, a_i)$ satisfies $P(g|s_i, a_i)=\frac{1}{T_1}+\alpha$ and $P(s_i|s_i, a_i)=1-P(g|s_i, a_i)$, that is, $(s_i, a_i)$ is a ``good'' state-action pair at $\calM_i$.
Denote by $\B^i$ and $\T^i$ the value of $\B$ and $\T$ in $\calM_i$ respectively.
For $i\in\{0,\ldots,N\}$, we have $\frac{B}{2} \leq \B^i \leq \cmin\cdot l + B \leq 2B$ by $\alpha\leq \frac{1}{2T_1}$ and $B\geq 2$; $\frac{T_0}{2} \leq \T^i \leq l + T_1 \leq \uT$; and $\cmin$ is indeed the minimum cost by $\cmin\leq \frac{B}{T_1}$.
It is not hard to see that at $s_0$, the optimal behavior in $\calM_0$ is to reach any leaf node and then take action $1$ until $g$ is reached; while in $\calM_i$ for $i\in\{1,\ldots,N\}$, the optimal behavior is to reach $s_i$ and then take $(s_i, a_i)$ until $g$ is reached.
Thus, $\T^0=\Theta(T_0)$ and $\T^i=\Theta(T_1)$ for $i\in[N]$.

Without loss of generality, we consider learning algorithms that output a deterministic policy, which can be represented by $\hatv\in\calS'\times\calA$ the unique state-action pair in $\calS'$ reachable by following the output policy starting from $s_0$.
Define event $\calE_1=\{ \hatv \in\Lambda_0 \}$.
Below we fix a $z\in[N]$.
Let $\hatT$ be the number of times the learner samples $(s_z, a_z)$, and $K_t$ be the number of times the agent observes $(s_z, a_z, g)$ among the first $t$ samples of $(s_z, a_z)$.
We introduce event
$$\calE_2=\cbr{ \max_{1\leq t\leq \tstar}|pt - K_t| \leq \varepsilon },$$ where $\varepsilon=\sqrt{2p(1-p)\tstar\ln\frac{d}{\theta}}$, $p=\frac{1}{T_1}$, $d=e^4$, $\theta=\exp(-d'\alpha^2\tstar/(p(1-p)))<1$ for some $\tstar>0$ to be specified later, and $d'=128$.
Also define events $\calE_3=\{\hatT \leq \tstar\}$ and $\calE=\calE_1\cap\calE_2\cap\calE_3$.
For each $i\in\{0,\ldots,N\}$, we denote by $P_i$ and $\E_i$ the probability and expectation w.r.t $\calM_i$ respectively.

Below we introduce two lemmas that characterize the behavior of the learner if it gathers insufficient samples on $(s_z, a_z)$.

\begin{lemma}
    \label{lem:e23}
    If $P_0(\calE_3)\geq\frac{7}{8}$, then $P_0(\calE_2\cap\calE_3)\geq\frac{3}{4}$.
\end{lemma}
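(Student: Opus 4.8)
The plan is to reduce the claim to a one‑sided concentration statement about a simple random walk and then combine it with the hypothesis by an elementary union bound. Under $P_0$ the successive outcomes of samples of the fixed leaf pair $(s_z,a_z)$ are i.i.d.\ $\bernoulli(p)$ with $p=\tfrac1{T_1}$, and this sequence is independent of all the other randomness the algorithm uses; it is convenient to view it as a pre‑drawn infinite tape from which the algorithm reads a (random) prefix, so that $K_t$ — and hence the event $\calE_2$ — is well defined even when the algorithm samples $(s_z,a_z)$ fewer than $\tstar$ times. With this in hand, $M_t:=K_t-pt$ is a mean‑zero martingale with increments bounded by $1$ and per‑step conditional variance $p(1-p)$, and $\calE_2^c\subseteq\{\max_{1\le t\le\tstar}|M_t|>\varepsilon\}$.

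First I would apply Doob's $L^2$ (Kolmogorov) maximal inequality to $M$ on $\{1,\dots,\tstar\}$, giving $P_0(\max_{1\le t\le\tstar}|M_t|>\varepsilon)\le \E_0[M_{\tstar}^2]/\varepsilon^2 = p(1-p)\tstar/\varepsilon^2$ (if one prefers, one may instead stop $M$ at the visit count $\hatT$, which is a stopping time, so optional stopping only helps). Substituting the chosen radius $\varepsilon^2=2p(1-p)\tstar\ln(d/\theta)$ collapses the bound to $\tfrac1{2\ln(d/\theta)}$; since $\theta<1$ we have $\ln(d/\theta)\ge\ln d=4$ (recall $d=e^4$), so $P_0(\calE_2^c)\le\tfrac18$, i.e.\ $P_0(\calE_2)\ge\tfrac78$.

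Finally I would conclude by the elementary bound $P_0(\calE_2\cap\calE_3)\ge P_0(\calE_2)+P_0(\calE_3)-1\ge(1-\tfrac18)+\tfrac78-1=\tfrac34$, which is the only place the hypothesis $P_0(\calE_3)\ge\tfrac78$ enters.

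I do not expect a serious obstacle: the one delicate point is the opening reduction — justifying that, despite the algorithm being adaptive, the outcomes at $(s_z,a_z)$ still form an i.i.d.\ sequence under $P_0$, so that $M_t$ is a genuine martingale and $\hatT$ a stopping time (the sampling‑tape device handles the edge case of an early stop). The rest is bookkeeping; note in particular that $d=e^4$ is tuned precisely so that $\tfrac1{2\ln d}=\tfrac18$ exactly bridges the gap between the hypothesis $\tfrac78$ and the target $\tfrac34$. One could instead invoke a variance‑aware (Freedman‑type) maximal inequality to obtain an exponential tail — which would be needed for a sub‑Gaussian statement, since $p=1/T_1$ may be tiny — but the second‑moment inequality above already suffices and keeps the constants transparent.
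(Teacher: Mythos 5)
Your proof is correct and follows essentially the same route as the paper: Kolmogorov's (Doob $L^2$) maximal inequality on the centered count $K_t - pt$ over $t\le\tstar$, the choice $\varepsilon^2=2p(1-p)\tstar\ln(d/\theta)$ with $d=e^4$ yielding $P_0(\calE_2)\ge 7/8$, and then inclusion--exclusion to get $7/8+7/8-1=3/4$. Your extra care with the sampling-tape device to handle adaptivity is a welcome clarification of a point the paper leaves implicit, but it does not change the argument.
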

\begin{proof}
    Note that in $\calM_0$, the probability of observing $(s_z, a_z, g)$ is $p$ for each sample of $(s_z, a_z)$.
    Thus, $pt-K_t$ is a sum of i.i.d random variables, and the variance of $pt-K_t$ for $t=\tstar$ is $\tstar p(1-p)$.
    By Kolmogorov's inequality, we have
    \begin{align*}
        P_0(\calE_2) = P_0\rbr{\max_{1\leq t\leq \tstar}|pt-K_t| \leq \varepsilon } \geq 1 - \frac{\tstar p(1-p)}{2p(1-p)\tstar\ln\frac{d}{\theta}} = 1 - \frac{1}{2\ln\frac{d}{\theta}} \geq \frac{7}{8}.
    \end{align*}
    Thus, $P_0(\calE_2\cap\calE_3)= P_0(\calE_2) + P_0(\calE_3) - P_0(\calE_2\cup\calE_3)\geq \frac{3}{4}$.
\end{proof}

\begin{lemma}
    \label{lem:prob of wrong}
     If $P_0(\calE_3)\geq\frac{7}{8}$ and $P_0(\calE_1)\geq 1 - \frac{\theta}{2d}$, then $P_z(\calE_1)\geq\frac{\theta}{2d}$.
\end{lemma}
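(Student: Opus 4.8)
The plan is to prove \pref{lem:prob of wrong} via a change-of-measure (likelihood ratio) argument, which is the standard tool for these sample-complexity lower bounds. The intuition is that on the event $\calE = \calE_1 \cap \calE_2 \cap \calE_3$, where the learner has collected at most $\tstar$ samples of $(s_z,a_z)$ and the empirical count of transitions to $g$ is close to its expectation under $\calM_0$, the trajectory observed by the learner is almost as likely under $\calM_z$ as under $\calM_0$; hence the learner cannot tell the two apart and (wrongly) outputs $\hatv \in \Lambda_0$ with comparable probability under $\calM_z$ too.

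First I would write $P_z(\calE_1) \geq P_z(\calE)$ and then express $P_z(\calE)$ as an expectation of the likelihood ratio under $P_0$:
\begin{align*}
P_z(\calE) = \E_0\!\left[ \frac{dP_z}{dP_0} \, \one\{\calE\} \right].
\end{align*}
Since $\calM_0$ and $\calM_z$ differ only in the transition distribution at $(s_z,a_z)$, the likelihood ratio factorizes over the samples of $(s_z,a_z)$: writing $\hatT$ for the number of such samples and $K_{\hatT}$ for the number that landed in $g$, we get
\begin{align*}
\frac{dP_z}{dP_0} = \left(\frac{1/T_1 + \alpha}{1/T_1}\right)^{\!K_{\hatT}} \left(\frac{1 - 1/T_1 - \alpha}{1 - 1/T_1}\right)^{\!\hatT - K_{\hatT}} = \left(1 + \frac{\alpha}{p}\right)^{\!K_{\hatT}} \left(1 - \frac{\alpha}{1-p}\right)^{\!\hatT - K_{\hatT}},
\end{align*}
with $p = 1/T_1$. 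Now I would lower bound this ratio on $\calE$: on $\calE_3$ we have $\hatT \leq \tstar$, and on $\calE_2$ we have $|p\hatT - K_{\hatT}| \leq \varepsilon = \sqrt{2p(1-p)\tstar \ln(d/\theta)}$. Taking logarithms and using $\ln(1+x) \geq x - x^2$ (valid since $\alpha/p$ and $\alpha/(1-p)$ are small, which follows from $\alpha \leq 1/(2T_1)$ and the other parameter constraints), one obtains
\begin{align*}
\ln \frac{dP_z}{dP_0} \geq K_{\hatT}\left(\frac{\alpha}{p} - \frac{\alpha^2}{p^2}\right) - (\hatT - K_{\hatT})\left(\frac{\alpha}{1-p} + \frac{\alpha^2}{(1-p)^2}\right).
\end{align*}
Substituting $K_{\hatT} = p\hatT + (K_{\hatT} - p\hatT)$ and $\hatT - K_{\hatT} = (1-p)\hatT - (K_{\hatT} - p\hatT)$, the leading $\alpha$-terms combine so that the $\hatT$-proportional part cancels up to the $\alpha^2$ correction, leaving a bound of the form $-c_1 \alpha^2 \hatT / (p(1-p)) - c_2 \varepsilon \alpha / (p(1-p))$ for absolute constants $c_1, c_2$. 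Using $\hatT \leq \tstar$ and plugging in $\varepsilon$, both terms are of order $-\alpha^2 \tstar / (p(1-p))$ up to logarithmic factors in $d/\theta$; with the definition $\theta = \exp(-d'\alpha^2\tstar/(p(1-p)))$ and $d' = 128$ chosen appropriately, this gives $\ln \frac{dP_z}{dP_0} \geq \ln(\theta/d)$ on $\calE$, i.e. $\frac{dP_z}{dP_0} \geq \theta/d$ on $\calE$. The main obstacle — and the step requiring the most care — is exactly this constant-chasing: verifying that the chosen value of $d'=128$ (and the cross term bound via $\varepsilon$) is large enough that the sum of the two error contributions is dominated by $\ln(\theta/d)$, which is where the specific numerical constants in the hypotheses ($\delta < 1/(2e^4)$, $\epsilon < 1/32$, $d = e^4$) get used.

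Finally, I would combine the pieces: on $\calE$, $\frac{dP_z}{dP_0} \geq \theta/d$, so
\begin{align*}
P_z(\calE_1) \geq P_z(\calE) = \E_0\!\left[\frac{dP_z}{dP_0}\one\{\calE\}\right] \geq \frac{\theta}{d}\, P_0(\calE).
\end{align*}
By \pref{lem:e23}, $P_0(\calE_2 \cap \calE_3) \geq 3/4$, and by hypothesis $P_0(\calE_1) \geq 1 - \theta/(2d) \geq 1/2$ (using $\theta/d < 1$), so a union bound gives $P_0(\calE) = P_0(\calE_1 \cap \calE_2 \cap \calE_3) \geq 3/4 + (1 - \theta/(2d)) - 1 \geq 3/4 - \theta/(2d) \geq 1/2$. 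Hence $P_z(\calE_1) \geq \frac{\theta}{2d}$, as claimed. I would present the likelihood-ratio lower bound as a short self-contained computation and keep the union-bound bookkeeping explicit, since the whole point of the lemma is to feed a clean "the learner errs on $\calM_z$ with probability $\geq \theta/(2d)$" statement into the subsequent aggregation over $z \in [N]$.
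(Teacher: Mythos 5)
Your proposal is correct and follows essentially the same route as the paper: lower-bound $P_z(\calE_1)$ by $P_z(\calE)$, change measure to $P_0$, show the likelihood ratio restricted to the samples of $(s_z,a_z)$ is at least $\theta/d$ on $\calE$ using $\hatT\le\tstar$ and $|p\hatT-K_{\hatT}|\le\varepsilon$, and combine with $P_0(\calE)\ge 1/2$ from \pref{lem:e23} and the hypothesis on $P_0(\calE_1)$. The only cosmetic difference is that you bound the log-likelihood ratio via $\ln(1+x)\ge x-x^2$ while the paper manipulates the product directly via $1-u\ge e^{-u-u^2}$; the constant-chasing you flag as the delicate step is exactly what the paper carries out, yielding the exponent $2(1/d'+3\sqrt{2/d'})\le 1$ with $d'=128$.
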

\begin{proof}
    The range of $\epsilon$ ensures that $\alpha\leq\frac{p}{2}\leq\frac{1-p}{2}$.
    By the assumptions of this lemma and \pref{lem:e23}, we have $P_0(\calE_1)\geq 1 - \frac{1}{2d}\geq\frac{7}{8}$ by $d\leq\frac{1}{16}$, and thus $P_0(\calE)\geq \frac{1}{2}$.
    Let $W$ be the interaction history of the learner and the generative model, and $L_j(w)=P_j(W=w)$ for $j\in\{0,\ldots,N\}$.
    Note that the next-state distribution is identical in $\calM_0$ and $\calM_z$ unless $(s_z, a_z)$ is sampled.
    Define $K=K_{\hatT}$.
    We have
    \begin{align*}
        \frac{L_z(W)}{L_0(W)} &= \frac{(p+\alpha)^K(1-p-\alpha)^{\hatT-K}}{p^K(1-p)^{\hatT-K}} = \rbr{1 + \frac{\alpha}{p}}^K\rbr{1 - \frac{\alpha}{1-p}}^{\hatT-K}\\
        &= \rbr{1 + \frac{\alpha}{p}}^K\rbr{1 - \frac{\alpha}{1-p}}^{K(\frac{1}{p}-1)}\rbr{1 - \frac{\alpha}{1-p}}^{\hatT - \frac{K}{p}}.
    \end{align*}
    By $1-u\geq e^{-u-u^2}$ for $u\in[0,\frac{1}{2}]$, $e^{-u}\geq 1-u$, and $\alpha\leq\frac{1-p}{2}$, we have
    \begin{align*}
        \rbr{1 - \frac{\alpha}{1-p}}^{\frac{1}{p}-1} &\geq \exp\rbr{ \frac{1-p}{p}\rbr{ -\frac{\alpha}{1-p} - \rbr{\frac{\alpha}{1-p}}^2 } }=\exp\rbr{-\frac{\alpha}{p}}\exp\rbr{-\frac{\alpha^2}{p(1-p)}}\\
        &\geq \rbr{1 - \frac{\alpha}{p}}\rbr{1 - \frac{\alpha^2}{p(1-p)}}.
    \end{align*}
    Therefore, conditioned on $\calE$, we have
    \begin{align*}
        \frac{L_z(W)}{L_0(W)}\Ind_{\calE} &\geq \rbr{1 - \frac{\alpha^2}{p^2}}^K\rbr{1 - \frac{\alpha^2}{p(1-p)}}^K\rbr{1 - \frac{\alpha}{1-p}}^{\hatT - \frac{K}{p}}\Ind_{\calE}\\
        &\geq \rbr{1 - \frac{\alpha^2}{p^2}}^{p\hatT + \varepsilon}\rbr{1 - \frac{\alpha^2}{p(1-p)}}^{p\hatT + \varepsilon}\rbr{1-\frac{\alpha}{1-p}}^{\frac{\varepsilon}{p}}\Ind_{\calE},
    \end{align*}
    where in the last inequality we apply $|p\hatT - K|\leq \varepsilon$ by $\calE_2$ and $\calE_3$.
    Then by $1-u\geq e^{-2u}$ for $u\in[0,\frac{1}{2}]$ and $\alpha\leq\frac{p}{2}\leq\frac{1-p}{2}$, we have
    \begin{align*}
        \frac{L_z(W)}{L_0(W)}\Ind_{\calE} &\geq \exp\rbr{-2\rbr{\frac{\alpha^2}{p^2}(p\hatT+\varepsilon) + \frac{\alpha^2}{p(1-p)}(p\hatT+\varepsilon) + \frac{\alpha}{1-p}\frac{\varepsilon}{p} }} \\
        &\geq \exp\rbr{-2\rbr{\frac{\hatT\alpha^2}{p} + \frac{\hatT\alpha^2}{1-p} + \frac{\alpha^2\varepsilon}{p^2} + \frac{\alpha^2\varepsilon}{p(1-p)} + \frac{\alpha\varepsilon}{p(1-p)} } }\Ind_{\calE}\\
        &\geq \exp\rbr{-2\rbr{ \frac{1}{d'}\ln\frac{1}{\theta} + \frac{3\alpha\varepsilon}{p(1-p)} } }\Ind_{\calE} \tag{$\hatT \leq \tstar$, $\alpha^2\tstar=\frac{p(1-p)}{d'}\ln\frac{1}{\theta}$, and $\alpha<p$}\\
        &\overset{\text{(i)}}{\geq} \exp\rbr{-2\rbr{\frac{1}{d'}\ln\frac{1}{\theta} + 3\sqrt{\frac{2}{d'}}\ln\frac{d}{\theta}}}\Ind_{\calE} \geq \rbr{\frac{\theta}{d}}^{2\rbr{1/d'+3\sqrt{2/d'}}}\Ind_{\calE}\geq \frac{\theta}{d}\Ind_{\calE},
    \end{align*}
    where in (i) we apply the definition of $\varepsilon$ and $\alpha^2\tstar=\frac{p(1-p)}{d'}\ln\frac{1}{\theta}$ to have
    \begin{align*}
    		\frac{\alpha\varepsilon}{p(1-p)} = \sqrt{\frac{2\tstar\alpha^2}{p(1-p)}\ln\frac{d}{\theta}} = \sqrt{\frac{2}{d'}\ln\frac{1}{\theta}\ln\frac{d}{\theta}} \leq \sqrt{\frac{2}{d'}}\ln\frac{d}{\theta}.
    \end{align*}
    Then we have
    \begin{align*}
        P_z(\calE_1) \geq P_z(\calE) = \E_z[\Ind_{\calE}(W)] = \E_0\sbr{\frac{L_z(W)}{L_0(W)}\Ind_{\calE}(W)} \geq \frac{\theta}{d}P_0(\calE)\geq \frac{\theta}{2d}.
    \end{align*}
    This completes the proof.
\end{proof}
Now for any $\delta\in(0, \frac{1}{2d})$, let $\tstar=\frac{B^2(T_1-1)}{32^2d'\epsilon^2}\ln\frac{1}{2d\delta}$.
This gives $\frac{\theta}{2d}=\delta$.

\begin{lemma}
    \label{lem:bound t}
    An $(\epsilon, \delta)$-correct algorithm with $\epsilon\in(0,\frac{1}{32})$ and $\delta\in(0,\frac{1}{2e^4})$ must have $P_0(\calE_3)<\frac{7}{8}$.
\end{lemma}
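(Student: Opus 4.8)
The plan is to argue by contradiction: suppose some $(\epsilon,\delta)$-correct algorithm has $P_0(\calE_3)\geq\frac{7}{8}$ on $\calM_0$, where recall $\calE_3=\{\hatT\leq\tstar\}$ and $\hatT$ is the number of samples the learner draws at the fixed pair $(s_z,a_z)$. First I would observe that since the algorithm is $(\epsilon,\delta)$-correct on $\calM_0$ and the unique $\epsilon$-optimal action at each leaf of $\calM_0$ is action $1$ (all pairs in $\Lambda$ are more than $\epsilon$-suboptimal because switching to a pair of type $(s_i,a_i)$ costs $\Theta(\B(T_1-T_0)/T_1)\gg\epsilon$ in value — this is exactly why $\T^0\ll\Tc$), the output must land in $\Lambda_0$ with high probability, i.e.\ $P_0(\calE_1)\geq 1-\delta\geq 1-\frac{\theta}{2d}$ by the choice $\tstar=\frac{B^2(T_1-1)}{32^2 d'\epsilon^2}\ln\frac{1}{2d\delta}$, which gives $\frac{\theta}{2d}=\delta$. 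Then \pref{lem:prob of wrong} applies (its two hypotheses $P_0(\calE_3)\geq\frac78$ and $P_0(\calE_1)\geq 1-\frac{\theta}{2d}$ are met), yielding $P_z(\calE_1)\geq\frac{\theta}{2d}=\delta$.

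Next I would derive the contradiction on $\calM_z$. In $\calM_z$ the pair $(s_z,a_z)$ becomes "good" — its goal probability is raised to $\frac1{T_1}+\alpha$ — and with the chosen $\alpha=\frac{32\epsilon}{T_1\B}$ this makes $(s_z,a_z)$ the unique $\epsilon$-optimal choice at leaf $s_z$, while action $1$ there becomes more than $\epsilon$-suboptimal (one needs to check the value gap: following action $1$ from $s_z$ costs $\approx\B$ whereas routing through $(s_z,a_z)$ costs $\approx \B/(1+\Theta(\epsilon/\B))\approx\B-\Theta(\epsilon)$, arranged so the gap exceeds $\epsilon$; this is the standard bandit gap computation and I would keep the constants aligned with the $32$ and $d'=128$ already fixed). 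Hence on $\calM_z$, the event $\calE_1=\{\hatv\in\Lambda_0\}$ means the algorithm outputs a non-$\epsilon$-optimal policy, so $(\epsilon,\delta)$-correctness forces $P_z(\calE_1)\leq\delta$. Combined with $P_z(\calE_1)\geq\delta$ from the previous paragraph we get equality at best, but a careful bookkeeping of the strict inequalities (the range $\delta<\frac1{2e^4}$, $\epsilon<\frac1{32}$ gives strict slack, e.g.\ $P_0(\calE)\geq\frac12$ strictly and the exponent $2(1/d'+3\sqrt{2/d'})<1$ strictly) yields $P_z(\calE_1)>\delta$, contradicting correctness. Therefore $P_0(\calE_3)<\frac78$.

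The main obstacle I anticipate is not the likelihood-ratio change of measure — that is already done in \pref{lem:prob of wrong} — but rather pinning down the value-function gaps in both $\calM_0$ and $\calM_z$ precisely enough that "the algorithm's output is $\epsilon$-optimal" translates cleanly into the combinatorial events $\calE_1$ on each instance, simultaneously and with the same threshold $\delta$. Concretely I must verify: (i) in $\calM_0$, every pair in $\Lambda$ is $>\epsilon$-suboptimal at its leaf, so correctness $\Rightarrow$ $P_0(\calE_1^c)\leq\delta$; and (ii) in $\calM_z$, action $1$ at $s_z$ is $>\epsilon$-suboptimal, so correctness $\Rightarrow$ $P_z(\calE_1)\leq\delta$; both hinge on the same computation $V^{\pi\text{ via }(s_i,a_i)}(s_0)-V^{\pi\text{ via action }1}(s_0)=\Theta(\epsilon)$ controlled by $\alpha$. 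Once those two gap inequalities are in hand, the contradiction $\delta\leq P_z(\calE_1)$ (from \pref{lem:prob of wrong}) versus $P_z(\calE_1)\leq\delta$ (from correctness on $\calM_z$), sharpened to a strict inequality using the slack in the constants, closes the argument and hence establishes \pref{lem:bound t}.

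With \pref{lem:bound t} proved, the sample-complexity lower bound follows in the usual way: on $\calM_0$, $P_0(\calE_3)<\frac78$ means $\E_0[\hatT]\gtrsim\tstar$ for the arbitrary fixed $z$, and summing this over all $N=|\calS'|(A-1)=\Theta(SA)$ pairs in $\Lambda$ gives a total expected sample count of $\Omega(N\tstar)=\Omega\!\big(SA\cdot\frac{\B^2 T_1}{\epsilon^2}\ln\frac1\delta\big)=\Omega\!\big(\min\{\Tc,\uT\}\frac{\B^2 SA}{\epsilon^2}\ln\frac1\delta\big)$, recalling $T_1=\min\{\uT/2,\B/\cmin\}=\Theta(\min\{\uT,\Tc\})$. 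This matches the statement of \pref{thm:lower.bound.eps.T}; I would present that final aggregation step separately after \pref{lem:bound t}.
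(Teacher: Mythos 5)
Your proposal is correct and follows the paper's proof essentially verbatim: correctness on $\calM_0$ forces $P_0(\calE_1)\geq 1-\delta$, the assumed $P_0(\calE_3)\geq\frac78$ lets \pref{lem:prob of wrong} transfer this to $P_z(\calE_1)\geq\frac{\theta}{2d}=\delta$, and correctness on $\calM_z$ (where only $(s_z,a_z)$ is $\epsilon$-optimal) forces $P_z(\calE_1)\leq\delta$, a contradiction once the slack in $P_0(\calE)\geq\frac12$ is used. One slip to fix: the suboptimality of pairs in $\Lambda$ within $\calM_0$ is \emph{not} $\Theta(\B(T_1-T_0)/T_1)$ --- the per-step costs $B/T_0$ and $B/T_1$ are calibrated so both routes have value $\approx B$ and only their hitting times differ by $T_1/T_0$; the actual gap is $B-\frac{B}{1+T_1\alpha/2}=\Theta(\epsilon)$, exactly the computation you state correctly in your final paragraph.
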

\begin{proof}
    Denote by $\pi_0$ a deterministic policy such that when following $\pi_0$ starting from $s_0$, it reaches some state in $\calS'$ and then takes action $1$ until reaching the goal state $g$.
    For any $j\in[N]$, denote by $\pi_j$ a deterministic policy such that when following $\pi_j$ starting from $s_0$, it reaches $s_j$ and then takes action $a_j$ until reaching the goal state $g$.
    It is not hard to see that $\pi_j$ is an optimal policy in $\calM_j$ for $j\in\{0,\ldots,N\}$ starting from $s_0$.
    Denote by $V^j_i$ the value function of $\pi_j$ in $\calM_i$ and $\optV_i$ the optimal value function of $\calM_i$.
    By the choice of $\alpha$, we have $V^j_0(s_0) - \optV_0(s_0) = B - \frac{B}{1 + T_1\alpha/2} > \epsilon$ for any $j\in[N]$.
    Thus, all $\epsilon$-optimal deterministic policies in $\calM_0$ have the same behavior as $\pi_0$ starting from $s_0$.
    On the other hand, $V^0_z(s_0)-\optV_z(s_0) = B(\frac{1}{1+T_1\alpha/2} - \frac{1}{1+T_1\alpha}) > \epsilon$.
    Thus, all $\epsilon$-optimal deterministic policies in $\calM_z$ have the same behavior as $\pi_z$ starting from $s_0$.
    Therefore, an $(\epsilon, \delta)$-correct algorithm should guarantee $P_0(\calE_1)\geq 1-\delta$ and $P_z(\calE_1)<\delta$.
    When $P_0(\calE_3)\geq\frac{7}{8}$,
    this leads to a contradiction by \pref{lem:prob of wrong} and the choice of $\tstar$.
    Thus, $P_0(\calE_3)<\frac{7}{8}$.
\end{proof}

We are now ready to prove the main statement of \pref{thm:lower.bound.eps.T}.
Note that in $\calM_0$, an $(\epsilon, \delta)$-correct algorithm should guarantee $P_0(\hatT_z\leq \tstar)<\frac{7}{8}$ for any $z\in[N]$ by \pref{lem:bound t}, where $\hatT_z$ is the number of times the learner samples $(s_z, a_z)$.
Define $\calN=\sum_z\Ind\{\hatT_z\leq\tstar\}$.
Clearly, we have $\calN\leq N$ and $\E_0[\calN] < \frac{7N}{8}$.
Moreover, $P_0(\calN\geq \frac{8N}{9}) \leq \frac{63}{64}$ by Markov's inequality.
This implies that with probability at least $\frac{1}{64}>\frac{1}{2e^4}$, we have $|\{z\in[N]: \hatT_z>\tstar\}|>\frac{N}{9}$ and thus the total number of samples used in $\calM_0$ is at least $\frac{N\tstar}{9}$.
To conclude, there is no $(\epsilon,\delta)$-correct algorithm with $\epsilon\in(0,\frac{1}{32})$, $\delta\in(0,\frac{1}{2e^4})$, and sample complexity $\frac{N\tstar}{9}=\lowo{\frac{NB^2T_1}{\epsilon^2}\ln\frac{1}{\delta}}=\lowo{ \min\{\frac{\B}{\cmin}, \uT\}\frac{\B^2SA}{\epsilon^2}\ln\frac{1}{\delta} }$ on $\calM_0$ by $\B=\Theta(B)$ and the definition of $T_1$.
This completes the proof.

\end{proof}

\begin{remark}
    Note that $\uT$ is both a parameter of the environment and the knowledge given to the learner.
    In fact, $\uT$ constrains the hitting time of the optimal policy in all the possible alternative MDPs $\{\calM_j\}_{j\in[N]}$, which affects the final lower bound.
    Also note that the lower bound holds even if the learner has access to an upper bound of $\B$ (which is $2B$ in the proof above).
\end{remark}

\paragraph{Why a faster rate is impossible with $\uT\geq\T$?} This result may seem unintuitive because when we have knowledge of $\uT \geq \T$, a finite-horizon reduction with horizon $\tilo{\uT}$ ensures that the estimation error shrinks at rate $\B\sqrt{\T/n}$~\citep[Figure 1]{yin2021towards}, where $n$ is the number of samples for each state-action pair.
Then it seems that it might be possible to obtain a sample complexity of order $\frac{\T\B^2SA}{\epsilon^2}$.
However, our lower bound indicates that the sample complexity should scale with $\uT$ instead of $\T$.
An intuitive explanation is that even if the estimation error shrinks with rate $\T$ in hindsight, since the learner doesn't know the exact value of $\T$, it can only set $n$ w.r.t $\uT$ so that the output policy is $\epsilon$-optimal even in the worst case of $\uT=\T$.


\subsubsection{Lower Bound for $\min\cbr{\Tc, \uT} = \infty$}
Now we show that when there is no finite upper bound on $\T$, it really takes infinite number of samples to learn in the worst scenario.

\begin{theorem}[Second statemnt of \pref{thm:lower.bound.eps}]
    \label{thm:lower.bound.infinity}
    There exist an SSP instance with $\cmin=0$, $\T=1$, and $\B=1$ in which every $(\epsilon, \delta)$-correct algorithm with $\epsilon\in(0, \frac{1}{2})$ and $\delta\in(0, \frac{1}{16})$ has infinite sample complexity.
\end{theorem}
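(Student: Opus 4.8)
The plan is to exhibit a two-state SSP on which \emph{certifying that a policy ever reaches the goal} provably needs unboundedly many samples, because a zero-cost self-loop can hide an escape transition for arbitrarily long. Take $\calS=\{s_0,s_1\}$, $\calA=\{a_0,a_g\}$ (cf.\ \pref{fig:fig}(b)): at $s_0$, action $a_0$ has cost $0$ and self-loops, while $a_g$ has cost $\tfrac12$ and goes to $g$; at $s_1$, both actions have cost $1$ and go to $g$. Then $\cmin=0$, $\optV(s_0)=\tfrac12$, $\optV(s_1)=1$, so $\B=1$ and the optimal proper policy has $\T=1$; and $\uT=\infty$ since the learner is given no bound on the hitting time. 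The key feature is that \emph{every} policy is $\epsilon$-optimal in $\calM$: a policy that plays $a_0$ at $s_0$ never incurs cost there, hence has value $0\le\optV(s_0)$, whereas any policy that plays $a_g$ with positive probability at $s_0$ has value exactly $\tfrac12=\optV(s_0)$. So on $\calM$ alone the learner's output is entirely unconstrained; the hardness comes from the fact that $\calM$ is indistinguishable, from few samples, from instances on which the answer \emph{is} forced.

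Suppose for contradiction that some $(\epsilon,\delta)$-correct algorithm $\frA$ has a finite sample-complexity function, and set $n:=n(\calM)<\infty$. The two perturbations below keep the characteristic parameters $S=2$, $A=2$, $\cmin=0$, $\B=1$, $\uT=\infty$, so $\frA$ must also be $\epsilon$-correct within $n$ samples on each of them. First suppose $\frA$ returns a deterministic $\hatpi$ (randomized outputs are handled below). On $\calM$, with probability at least $1-\delta$ the algorithm halts within $n$ samples and outputs some $\hatpi$ with $\hatpi(s_0)\in\{a_0,a_g\}$; splitting on this value, at least one of $\mathbb P_{\calM}(\calT\le n,\hatpi(s_0)=a_0)$ and $\mathbb P_{\calM}(\calT\le n,\hatpi(s_0)=a_g)$ is $\ge\tfrac{1-\delta}{2}$. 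In the first case let $\calM_+$ be $\calM$ with the single change that $a_0$ at $s_0$ moves to $s_1$ with probability $1/n$ (self-loop otherwise): there the policy playing $a_0$ at $s_0$ is proper with value $1$ while $\optV(s_0)=\tfrac12$, a gap $>\epsilon$ for $\epsilon<\tfrac12$. In the second case let $\calM_-$ be $\calM$ with the single change that $a_0$ at $s_0$ moves to $g$ with probability $1/n$: there playing $a_0$ at $s_0$ is proper with value $0=\optV(s_0)$, so playing $a_g$ (value $\tfrac12$) has a gap $>\epsilon$.

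The crux is a change-of-measure estimate. Under $\calM$ the perturbing transition ($(s_0,a_0)\to s_1$ for $\calM_+$, $(s_0,a_0)\to g$ for $\calM_-$) has probability $0$, so $\frA$ never observes it; restricted to runs of at most $n$ samples on which that transition is \emph{not} observed, the trajectory law on the perturbed instance dominates that on $\calM$ by a factor $(1-1/n)^{k}$, where $k\le n$ is the number of draws of $(s_0,a_0)$, hence by at least $(1-1/n)^n\ge\tfrac14$ (for $n\ge 2$; tiny budgets $n\le1$ are disposed of directly). Hence on the perturbed instance $\frA$ still halts within $n$ samples and outputs the same action at $s_0$ with probability $\ge\tfrac14\cdot\tfrac{1-\delta}{2}>\delta$ (using $\delta<\tfrac1{16}$), and that action is not $\epsilon$-optimal there --- contradicting $(\epsilon,\delta)$-correctness on that instance. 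Since both branches are impossible, no $(\epsilon,\delta)$-correct algorithm can have a finite sample-complexity function, i.e.\ every such algorithm has infinite sample complexity on $\calM$. For randomized (possibly stochastic) outputs I would replace the single perturbation $1/n$ by the family $\calM^{(p)}_{\pm}$ with $p\in(0,c/n)$ for a small absolute constant $c$: writing $q=\hatpi(a_0\mid s_0)$, $\epsilon$-correctness on $\calM^{(p)}_-$ forces $1-q$ to be at most a constant times $p$ while $\epsilon$-correctness on $\calM^{(p)}_+$ forces $1-q$ to be at least a constant times $p$, so the output distribution of $1-q$ would have to put more than $\tfrac12$ of its mass in a window comparable to $p$, for \emph{every} small $p$ --- which two well-separated choices of $p$ render impossible.

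I expect the main difficulty to be the quantitative bookkeeping in the change-of-measure step: with perturbation probability $1/n$, the event ``never observe the escape transition'' has only constant probability (about $1/e$), not $1-o(1)$, so one cannot simply identify the two executions and must carry the multiplicative slack $(1-1/n)^n$ through --- and separately treat the degenerate small-$n$ regime. The other delicate point is the passage from arbitrary randomized output policies to deterministic ones (equivalently, making the pigeonhole over $\{\calM^{(p)}_{\pm}\}$ work uniformly for all $\epsilon<\tfrac12$), which rests on the fact that no single randomization of $a_0$ versus $a_g$ at $s_0$ can be $\epsilon$-optimal across all the perturbed instances simultaneously.
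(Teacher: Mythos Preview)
Your proposal is correct and follows essentially the same approach as the paper: the identical two-state instance $\calM$ with the zero-cost self-loop, the same pair of perturbations $\calM_\pm$ with escape probability $1/n$, the same likelihood-ratio estimate $(1-1/n)^n\ge 1/4$, and the same case split on $\hatpi(s_0)$ yielding a contradiction for $\delta<1/16$. One small addendum worth noting: the paper simply asserts ``without loss of generality'' that the output is deterministic, whereas you go further and sketch how to handle genuinely stochastic outputs via a family $\{\calM^{(p)}_\pm\}$ of perturbations --- this extra care is not needed for the paper's stated argument but does close a gap the paper leaves implicit.
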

\begin{proof}
    Consider an SSP $\calM_0$ with $\calS=\{s_0, s_1\}$ and $\calA=\{a_0,a_g\}$.
    The cost function satisfies $c(s_0, a_0)=0$, $c(s_0, a_g)=\frac{1}{2}$, and $c(s_1, a)=1$ for all $a$.
    The transition function satisfies $P(g|s_0, a_g)=1$, $P(s_0|s_0,a_0)=1$, and $P(g|s_1,a)=1$ for all $a$; see \pref{fig:fig} (b).
    Clearly $\cmin=0$, $\B=\T=1$, and $\optV(s_0)=\frac{1}{2}$ in $\calM_0$.
    Without loss of generality, we consider learning algorithm that outputs deterministic policy $\hatpi$ and define events $\calE_1=\{\hatpi(s_0)=a_0\}$ and $\calE'_1=\{\hatpi(s_0)=a_g\}$.

    If a learning algorithm is $(\epsilon,\delta)$-correct with $\delta\in(0,\frac{1}{8})$ and has sample complexity $n\in[2, \infty)$ on $\calM_0$, then consider two alternative MDPs $\calM_+$ and $\calM_-$.
    MDP $\calM_+$ is the same as $\calM_0$ except that $P(s_1|s_0, a_0)=\frac{1}{n}$ and $P(s_0|s_0,a_0)=1-\frac{1}{n}$.
    MDP $\calM_-$ is the same as $\calM_0$ except that $P(g|s_0, a_0)=\frac{1}{n}$ and $P(s_0|s_0,a_0)=1-\frac{1}{n}$.    
    Note that in $\calM_+$, the optimal proper policy takes $a_g$ at $s_0$, and $\optV(s_0)=\frac{1}{2}$; while in $\calM_-$, the optimal proper policy takes $a_0$ at $s_0$, and $\optV(s_0)=0$.
    Let $W$ be the interaction history between the learner and the generative model, and define $L_j(w)=P_j(W=w)$ for $j\in\{0,+,-\}$, where $P_j$ is the probability w.r.t $\calM_j$.
    Also let $\hatT$ be the number of times the learner samples $(s_0, a_0)$ before outputting $\hatpi$, and $\gamma(w)=\Ind\{L_0(w)>0\}$.
    Define $\calE_2=\{\hatT\leq n\}$, $\calE=\calE_1\cap\calE_2$ and $\calE'=\calE'_1\cap\calE_2$.
    For any $j\in\{+,-\}$, we have $\frac{L_j(W)}{L_0(W)}\Ind_{\calE}(W)\gamma(W)=(1-\frac{1}{n})^{\hatT}\Ind_{\calE}(W)\gamma(W)\geq(1-\frac{1}{n})^n\Ind_{\calE}(W)\gamma(W)\geq\frac{\Ind_{\calE}(W)\gamma(W)}{4}$.
    Thus,
    \begin{align*}
        P_j(\calE) = \E_j[\Ind_{\calE}(W)] \geq \E_j[\Ind_{\calE}(W)\gamma(W)] = E_0\sbr{\frac{L_j(W)}{L_0(W)}\Ind_{\calE}(W)\gamma(W)} \geq \frac{P_0(\calE)}{4}.
    \end{align*}
    By a similar arguments, we also have $P_j(\calE')\geq P_0(\calE')/4$ for $j\in\{+, -\}$.
    Now note that $P_0(\calE_2)\geq\frac{7}{8}$ by the sample complexity of the learner.
    Since $\calE\cup\calE'=\calE_2$ and $\calE\cap\calE'=\varnothing$, we have $P_0(\calE)\geq\frac{7}{16}$ or $P_0(\calE')\geq\frac{7}{16}$.
    Combining with $P_j(\calE)\geq P_0(\calE)/4$ and $P_j(\calE')\geq P_0(\calE')/4$, we have either $P_j(\calE)\geq\frac{7}{64}$ for $j\in\{+, -\}$, or $P_j(\calE')\geq\frac{7}{64}$ for $j\in\{+, -\}$.
    In the first case, in $\calM_+$, we have $V^{\hatpi}(s_0)-\optV(s_0)=1-\frac{1}{2}=\frac{1}{2}$ with probability at least $\frac{7}{64}$.
    In the second case, in $\calM_-$, we have $V^{\hatpi}(s_0)-\optV(s_0)=\frac{1}{2}-0=\frac{1}{2}$ with probability at least $\frac{7}{64}$.
    Therefore, for any $\epsilon\in(0, \frac{1}{2})$ and $\delta\in(0,\frac{1}{16})$, there is a contradiction in both cases if the learner is $(\epsilon, \delta)$-correct and has finite sample complexity on $\calM_0$.
    This completes the proof.
\end{proof}

\begin{remark}
    Note that although $\T=1$ in $\calM_0$, the key of the analysis is that $\T$ can be arbitrarily large in the alternative MDPs.
    Indeed, if we have a finite upper bound $\uT$ of $\T$, then the learning algorithm only requires finite number of samples as shown in \pref{thm:lower.bound.eps.T}.
\end{remark}

\subsection{\pfref{thm:lower.bound.T}}
\label{app:lower.bound.T}

\begin{figure}
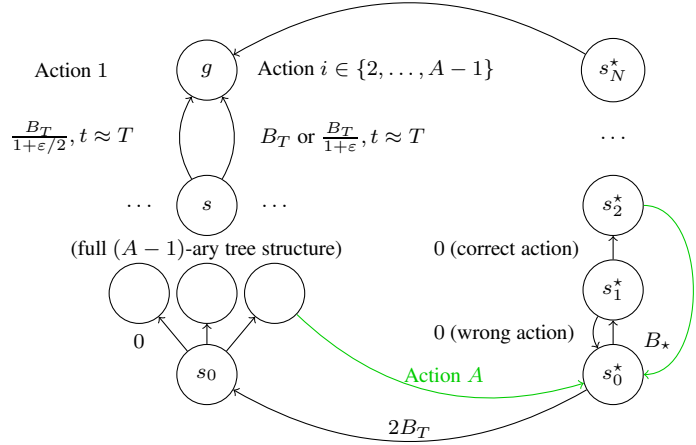

    \centering
    \tikz[font=\scriptsize, scale=0.9]{
        \begin{scope}
            \node[draw, circle, minimum width=0.8cm] (g) at (0,0) {$g$};
            \node[draw, circle, minimum width=0.8cm] (s0) at (0,-4.5) {$s_0$};
            \node[draw, circle, minimum width=0.8cm] (s1) at (-1, -3.3) {};
            \node[draw, circle, minimum width=0.8cm] (s2) at (0, -3.3) {};
            \node[draw, circle, minimum width=0.8cm] (s3) at (1, -3.3) {};
            \node at (-1.0, -4) {$0$};
            \path[->] (s0) edge[] (s1);
            \path[->] (s0) edge[] (s2);
            \path[->] (s0) edge[] (s3);
            \node at (0, -2.65) {(full $(A-1)$-ary tree structure)};
            \node at (-1,-2) {$\ldots$};
            \node[draw, circle, minimum width=0.8cm] (s) at (0,-2) {$s$};
            \node at (1,-2) {$\ldots$};
            \path[->] (s) edge[bend left] (g);
            \path[->] (s) edge[bend right] (g);
            \node at (-2, 0) {Action $1$};
            \node at (2.5, 0) {Action $i\in\{2,\ldots,A-1\}$};
            \node at (-2, -1) {$\frac{B_T}{1+\varepsilon/2}, t\approx T$};
            \node at (2, -1) {$B_T$ or $\frac{B_T}{1+\varepsilon}, t\approx T$};

            \node[draw, circle, minimum width=0.8cm] (ss0) at (6, -4.5) {$\sstar_0$};
            \path[->] (ss0) edge[bend left] (s0);
            \path[->] (s3) edge[bend right, green!80!black] (ss0);
            \node[green!80!black] at (3.5, -4.5) {Action $A$};
            \node at (3, -5.3) {$2B_T$};
            \node[draw, circle, minimum width=0.8cm] (ss1) at (6, -3.25) {$\sstar_1$};
            \path[->] (ss0) edge[] (ss1);
            \node[anchor=west] at (6.3, -4) {$\B$};
            \node[draw, circle, minimum width=0.8cm] (ss2) at (6, -2) {$\sstar_2$};
            \path[->] (ss1) edge[] (ss2);
            \path[->] (ss1) edge[bend right] (ss0);
            \path[->] (ss2) edge[bend left=90, green!80!black] (ss0);
            \node[anchor=west] at (3.2, -2.65) {$0$ (correct action)};
            \node[anchor=west] at (3.2, -3.9) {$0$ (wrong action)};
            \node at (6, -1) {$\ldots$};
            \node[draw, circle, minimum width=0.8cm] (ssn) at (6, 0) {$\sstar_N$};
            \path[->] (ssn) edge[bend right] (g);
        \end{scope}
    }
    \caption{
        Hard instance in \pref{thm:lower.bound.T}.
        Each arrow represents a possible transition of a state-action pair, and the value on the side is the expected cost of taking this state-action pair until the transition happens.
        Value $t$ represents the expected number of steps needed for the transition to happen.
    }
    \label{fig:lb T}
\end{figure}

\begin{proof}
Without loss of generality, assume that $S=\frac{2((A-1)^l-1)}{A-2}$ for some $l\geq 0$.
Consider a family of MDPs $\{\calM_{i,j}\}_{i\in\{0,\ldots,N'\},j\in[A-1]^N}$ with state space $\calS=\calS_T\cup\calS_{\star}$ where $|\calS_T|=|\calS_{\star}|=N+1$, $N'=(A-2)(A-1)^l$, and action space $\calA=[A]$.
States in $\calS_T$ forms a full $(A-1)$-ary tree on action subset $[A-1]$ as in \pref{thm:lower.bound.eps.T} with root $s_0$, $\uT=T/3$, $T_0=T/6$, $B=B_T$, and $\cmin=0$.
It is clear that $N'=|\Lambda|$ (defined in \pref{thm:lower.bound.eps.T}) in the tree formed by $\calS_T$.
The transition of $\calM_{i,j}$ in $\calS_T$ corresponds to $\calM_i$ in \pref{thm:lower.bound.eps.T}.
We denote $\calS_{\star}=\{\sstar_0,\ldots,\sstar_N\}$, and for each state in $\calS_T$, the remaining unspecified action transits to $\sstar_0$ with cost $0$.


Consider another set of MDPs $\{\calM'_i\}_{i\in\{0,\ldots,N'\}}$ with state space $\calS_T$.
The transition and cost functions of $\calM'_i$ is the same as $\calM_i$ in $\calS_T$ except that its action space is restricted to $[A-1]$.
\pref{thm:lower.bound.eps.T} implies that there exists constants $\alpha_1$, $\alpha_2$, such that any $(\epsilon',\delta')$-correct algorithm with $\epsilon'\in(0,\frac{1}{32})$, $\delta'\in(0,\frac{1}{2e^4})$ has sample complexity at least $C(\epsilon',\delta')=\frac{\alpha_1B_T^2TSA}{{\epsilon'}^2}\ln\frac{\alpha_2}{\delta'}$ on $\{\calM'_i\}_i$ (note that in \pref{thm:lower.bound.eps.T} we only show the sample complexity lower bound in $\calM'_0$, but it not hard to show a similar bound for other $\calM'_i$ following similar arguments).
Now we specify the transition and cost functions in $\calS_{\star}$ for each $\calM_{i,j}$ such that learning $\{\calM_{i,j}\}_{i,j}$ is as hard as learning $\{\calM'_i\}_i$.
At $\sstar_0$, taking any action suffers cost $1$; 
taking any action in $[A-1]$ transits to $\sstar_1$ with probability $\frac{1}{\B}$ and stays at $\sstar_0$ otherwise;
taking action $A$ transits to $s_0$ with probability $\frac{1}{2B_T}$ and stays at $\sstar_0$ otherwise.
At $\sstar_n$ for $n\in[N]$, taking any action suffers cost $0$; 
taking action $j_n$ (recall that $j\in[A-1]^N$) transits to $\sstar_{n+1}$ (define $\sstar_{N+1}=g$) with probability $p=\min\{\frac{1}{2T}, \frac{\delta}{4C(\epsilon,4\delta)}\}$ and stays at $\sstar_n$ otherwise; taking any other action in $[A-1]$ transits to $\sstar_0$ with probability $p$ and stays at $\sstar_n$ otherwise; taking action $A$ directly transits to $\sstar_0$;
see illustration in \pref{fig:lb T}.
Note that any $\calM_{i,j}$ has parameters $\B$ (transiting to $\sstar_0$ from any state and then reaching $g$ through $\calS_{\star}$) and satisfies $B_{\star,T}\in[\frac{B_T}{2}, 3B_T]$ (transiting from $\sstar_0$ to $s_0$ and then reaching $g$ through $\calS_T$).
From now on we fix the learner as an $(\epsilon,\delta, T)$-correct algorithm with sample complexity $C(\epsilon,4\delta)-1$ on $\{\calM_{i,j}\}_{i,j}$.
Define $\calE_1$ as the event that the first $C(\epsilon,4\delta)$ samples drawn by the learner from any $(\sstar_n, a)$ with $n\in[N]$ and $a\in[A-1]$ transits to $\sstar_n$, and denote by $P_{i,j}$ the probability distribution w.r.t $\calM_{i,j}$.
By $1+x\geq e^{\frac{x}{1+x}}$ for $x\geq -1$ and $e^x\geq 1+x$, we have for any $i,j$,
\begin{align*}
    P_{i,j}(\calE_1)=(1-p)^{C(\epsilon,4\delta)} \geq e^{\frac{-pC(\epsilon,4\delta)}{1-p}} \geq e^{-2pC(\epsilon,4\delta)} \geq e^{-\frac{\delta}{2}} \geq 1 - \frac{\delta}{2}.
\end{align*}
Also define $\calE_2$ as the event that the learner uses at most $C(\epsilon,4\delta)-1$ samples, and $\calE=\calE_1\cap\calE_2$.
We have $P_{i,j}(\calE_2)\geq 1-\delta$ by the sample complexity of the learner, and thus $P_{i,j}(\calE)\geq 1-\frac{3}{2}\delta$ for any $i,j$.
We first bound the expected cost of the learner in $\calS_{\star}$ conditioned on $\calE$.
Denote by $V^{\pi}_{\calM}$ the value function of policy $\pi$ in $\calM$.

\begin{lemma}
    \label{lem:Sstar}
    Given any policy distribution $\rho$, there exists $\jstar$ such that $\E_{\pi\sim\rho}[\Ind\{V^{\pi}_{\calM_{i,\jstar}}(\sstar_0) \geq 2B_T\}] \geq \frac{1}{2}$ for any $i$.
\end{lemma}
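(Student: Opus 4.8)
The plan is a probabilistic-method argument over the index vector $j$. I would draw $j^\star$ uniformly from $[A-1]^N$ and show that, simultaneously for every $i$, $\E_{j^\star}\bigl[\rho(\{\pi : V^\pi_{\calM_{i,j^\star}}(\sstar_0) < 2B_T\})\bigr] < \tfrac12$; by linearity of expectation this produces a single deterministic $j^\star$ with $\rho(\{\pi : V^\pi_{\calM_{i,j^\star}}(\sstar_0)\ge 2B_T\})\ge\tfrac12$ for all $i$, which is the claim. By Fubini everything reduces to the structural estimate: for \emph{every} policy $\pi$, the set $\calB(\pi):=\{j\in[A-1]^N : V^\pi_{\calM_{i,j}}(\sstar_0)<2B_T \text{ for some }i\}$ satisfies $|\calB(\pi)|<(A-1)^N/2$.

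To bound $|\calB(\pi)|$ I would first neutralize $\calS_T$. From $\sstar_0$ one reaches $s_0$ only through the designated action, which fires with probability $1/(2B_T)$ while every step spent at $\sstar_0$ costs $1$; the only other exit from $\sstar_0$ enters the chain $\sstar_1,\dots,\sstar_N$, from which one can only return to $\sstar_0$ or reach $g$. Hence any trajectory that uses $\calS_T$ already pays expected cost $\ge 2B_T$, so replacing $\calS_T$ by a single cost-$0$ absorbing state reached from $\sstar_0$ with probability $1/(2B_T)$ only decreases $V^\pi_{\calM_{i,j}}(\sstar_0)$, and the resulting lower bound is independent of $i$. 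Working in this reduced chain MDP (and passing to a Markovian policy via the standard occupation-measure reduction, valid since for the relevant $j$ we have $V^\pi_{\calM_{i,j}}(\sstar_0)<2B_T<\infty$, so $\pi$ is absorbed a.s.\ from $\sstar_0$), write $\beta$ for the probability $\pi$ plays the $\calS_T$-action at $\sstar_0$ and $\gamma_n\le\Pr_\pi[\pi\text{ plays }j_n\text{ at }\sstar_n]$ for the pass-through probability of the $n$-th chain state. Solving the Bellman equations along the chain gives $V^\pi(\sstar_n)=(1-\prod_{m=n}^N\gamma_m)V^\pi(\sstar_0)$ and hence
\[
V^\pi_{\calM_{i,j}}(\sstar_0)\;\ge\;\Bigl(\tfrac{\beta}{2B_T}+\tfrac{(1-\beta)\,\Gamma^{(j)}}{\B}\Bigr)^{-1},\qquad \Gamma^{(j)}:=\prod_{n=1}^{N}\gamma_n ,
\]
so $V^\pi_{\calM_{i,j}}(\sstar_0)<2B_T$ forces $\Gamma^{(j)}>\tfrac{\B}{2B_T}$ (the case $\beta=1$ already gives $V^\pi(\sstar_0)\ge 2B_T$).

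The last step is counting. Since $\gamma_n\le\Pr_\pi[\pi\text{ plays }j_n\text{ at }\sstar_n]$ and, for each $n$, these probabilities sum to at most $1$ over $j_n\in[A-1]$,
\[
\sum_{j\in[A-1]^N}\Gamma^{(j)}\;\le\;\prod_{n=1}^{N}\Bigl(\sum_{a\in[A-1]}\Pr_\pi[\pi\text{ plays }a\text{ at }\sstar_n]\Bigr)\;\le\;1 .
\]
The instance hypothesis $B_T\le\B(A-1)^{S/2-1}/4=\B(A-1)^N/4$ (recall $N=S/2-1$) makes the threshold satisfy $\tfrac{\B}{2B_T}\ge\tfrac{2}{(A-1)^N}$, so by Markov's inequality $|\calB(\pi)|\le\bigl|\{j:\Gamma^{(j)}>\tfrac{2}{(A-1)^N}\}\bigr|<\tfrac12(A-1)^N$. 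Plugging into the averaging argument, $\E_{\pi\sim\rho}[|\calB(\pi)|/(A-1)^N]<\tfrac12$, which yields the desired $j^\star$.

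The step I expect to be the real obstacle is making the $i$-independent value lower bound precise for \emph{arbitrary} (history-dependent) policies: one must verify that the occupation-measure reduction can be performed so that the induced Markovian behavior inside $\calS_\star$ — and hence the quantities $\gamma_n$, $\Gamma^{(j)}$ and the displayed bound — can be taken independent of $i$, since otherwise different $i$'s might require different $j^\star$. This is exactly where the construction pays off: because any use of $\calS_T$ costs at least $2B_T$ and the chain never revisits $\calS_T$, the only part of the dynamics distinguishing the $\calM_{i,j}$'s is invisible to any policy that could witness $V^\pi(\sstar_0)<2B_T$; turning this intuition into a rigorous $i$-uniform bound (e.g.\ by arguing directly on occupation measures of the chain rather than via a full reduction) is the main technical point.
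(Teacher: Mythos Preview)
Your proposal is correct and matches the paper's proof almost exactly: both drop the non-negative $V^\pi(s_0)$ term to obtain the $i$-independent lower bound $V^\pi_j(\sstar_0)\ge\bigl(\tfrac{y^\pi}{2B_T}+\tfrac{(1-y^\pi)x^\pi_j}{\B}\bigr)^{-1}$ (your $\beta,\Gamma^{(j)}$ are the paper's $y^\pi=\pi(A|\sstar_0)$ and $x^\pi_j=\prod_n p^\pi(j_n|\sstar_n)$), then deduce $x^\pi_j>\B/(2B_T)$, count via $\sum_j x^\pi_j\le1$, and average over $j$. Your concern about history-dependent policies is unnecessary here---the lemma is applied only to the algorithm's stationary output policy, and for stationary $\pi$ the pass-through probabilities at $\sstar_n$ are manifestly $j$- and $i$-independent (the stay/leave dynamics at $\sstar_n$ are identical for all actions in $[A-1]$), so no occupation-measure reduction is needed.
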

\begin{proof}
    Below we fix an $i\in[N']$.
    For any policy $\pi$ and $j\in[A-1]^N$, define $x^{\pi}_j=\prod_{n=1}^Np^{\pi}(j_n|\sstar_n)$ and $y^{\pi}=\pi(A|\sstar_0)$, where $p^{\pi}(a|\sstar_n)$ is the probability that when following policy $\pi$ starting from $\sstar_n$, the last action taken before leaving $\sstar_n$ is $a$.
    It is not hard to see that in our construction, $p^{\pi}$ is independent of $j$.
    Also denote by $V^{\pi}_j$ the value function of policy $\pi$ in $\calM_{i,j}$.
    Note that
    \begin{align*}
        V^{\pi}_j(\sstar_0) &\geq 1 + \frac{1-y^{\pi}}{\B}V^{\pi}_j(\sstar_1) + \rbr{1-\frac{y^{\pi}}{2B_T} - \frac{1-y^{\pi}}{\B}}V^{\pi}_j(\sstar_0)\\
        &= 1 + \frac{(1-y^{\pi})(1-x^{\pi}_j)}{\B}V^{\pi}_j(\sstar_0) + \rbr{1-\frac{y^{\pi}}{2B_T} - \frac{1-y^{\pi}}{\B}}V^{\pi}_j(\sstar_0)\\
        &= 1 + \rbr{1 - \frac{y^{\pi}}{2B_T} - \frac{(1-y^{\pi})x^{\pi}_j}{\B} }V^{\pi}_j(\sstar_0).
    \end{align*}
    Reorganizing terms gives $V^{\pi}_j(\sstar_0)\geq \frac{1}{y^{\pi}/(2B_T) + (1-y^{\pi})x^{\pi}_j/\B}$.
    Now if $V^{\pi}_j(\sstar_0) < 2B_T$, then we have $y^{\pi} + (1-y^{\pi})\frac{2B_Tx^{\pi}_j}{\B} > 1$, which gives $x^{\pi}_j>\frac{\B}{2B_T}$.
    Let $\calX^{\pi}$ be the set of $j\in[A-1]^N$ such that $V^{\pi}_j(\sstar_0)<2B_T$.
    By $\frac{\B}{2B_T}|\calX^{\pi}|\leq\sum_jx^{\pi}_j\leq 1$, we have $|\calX^{\pi}| \leq \frac{2B_T}{\B}$.
    Define $z^{\pi}(j)=\Ind\{j\in\calX^{\pi}\}$.
    We have $\sum_jz^{\pi}(j)=|\calX^{\pi}|\leq\frac{2B_T}{\B}$ for any $\pi$, and thus $\sum_j\int_{\pi} z^{\pi}(j)\rho(\pi)d\pi \leq \frac{2B_T}{\B}$.
    Therefore, there exists $\jstar$ such that $\int_{\pi}z^{\pi}(\jstar)\rho(\pi)d\pi\leq\frac{2B_T}{\B(A-1)^N}$, which implies that
    \begin{align*}
        \E_{\pi\sim\rho}[\Ind\{V^{\pi}_{\calM_{i,\jstar}}(\sstar_0) \geq 2B_T\}] = 1 - \int_{\pi} z^{\pi}(\jstar)\rho(\pi)d\pi \geq 1 - \frac{2B_T}{\B(A-1)^N} \geq \frac{1}{2}.
    \end{align*}
    The proof is completed by noting that for the picked $\jstar$, the bound above holds for any $i$, since the lower bound on $V^{\pi}_j(\sstar_0)$ we applied above is independent of $i$.
\end{proof}
Now consider another set of MDPs $\{\calM''_i\}_{i\in\{0,\ldots,N'\}}$ with state space $\calS_T$.
The transition and cost functions of $\calM''_i$ is the same as $\calM_{i,j}$ restricted on $\calS_T$ for any $j$, except that taking action $A$ at any state directly transits to $g$ with cost $2B_T$.
We show that any $(\epsilon',\delta')$-correct algorithm with $\epsilon'\in(0,\frac{1}{32})$, $\delta'\in(0,\frac{1}{2e^4})$ has sample complexity at least $C(\epsilon', \delta')$ on $\{\calM''_i\}_i$
Given any policy $\pi$ on $\calM''_i$, define $g_{\pi}$ as a policy on $\calM'_i$ and $\calM''_i$ such that $g_{\pi}(a|s)\propto \pi(a|s)$ and $\sum_{a=1}^{A-1}g_{\pi}(a|s)=1$.
It is straightforward to see that $V^{g_{\pi}}_{\calM'_i}(s)=V^{g_{\pi}}_{\calM''_i}(s)\leq V^{\pi}_{\calM''_i}(s)$ and $\optV_{\calM'_i}(s)=\optV_{\calM''_i}(s)$, where $\optV_{\calM}$ is the optimal value function in $\calM$.
Thus, if there exists an algorithm $\frA$ that is $(\epsilon',\delta')$-correct with sample complexity less than $C(\epsilon',\delta')$ on $\{\calM''_i\}_i$, then we can obtain an $(\epsilon',\delta')$-correct algorithm on $\{\calM'_i\}_i$ with sample complexity less than $C(\epsilon',\delta')$ as follows: executing $\frA$ on $\{\calM''_i\}_i$ to obtain policy $\hatpi$, and then output $g_{\hatpi}$.
This leads to a contradiction to the definition of $C(\cdot,\cdot)$, and thus any $(\epsilon',\delta')$-correct algorithm with $\epsilon'\in(0,\frac{1}{32})$, $\delta'\in(0,\frac{1}{2e^4})$ has sample complexity at least $C(\epsilon', \delta')$ on $\{\calM''_i\}_i$.

Since we assume that the learner has sample complexity less than $C(\epsilon,4\delta)$ on $\calM_{i,j}$, for a fixed $j_0$, there exists $\istar$ such that $P_{\istar,j_0}(\calE_3)>4\delta$, where $\calE_3=\{\exists s: V^{\hatpi}_{\calM''_{\istar}}(s) - \optV_{\calM''_{\istar}}(s) > \epsilon\}$ (note that $\hatpi$ is computed on $\calM_{\istar,j_0}$, but we can apply $\hatpi$ restricted on $\calS_T$ to $\calM''_{\istar}$).
This also implies that $P_{\istar,j}(\calE\cap\calE_3) = P_{\istar,j_0}(\calE\cap\calE_3) \geq\frac{5\delta}{2}$ for any $j$, since the value of $P_{i,j}(\omega)$ is independent of $j$ when $\omega\in\calE$.
Define $\calE_4=\{ \exists s: V^{\hatpi}_{\calM}(s)-V^{\star,T}_{\calM}(s)>\epsilon \}$.
By \pref{lem:Sstar}, there exists $\jstar$ such that
\begin{align*}
    P_{\istar,\jstar}(\calE_4|\calE\cap\calE_3) \geq \E_{\hatpi\sim P_{\istar}(\cdot|\calE\cap\calE_3)}[\Ind\{V^{\hatpi}_{\calM_{\istar,\jstar}}(\sstar_0) \geq 2B_T\}] \geq \frac{1}{2},
\end{align*} 
since the distribution of $\hatpi$ is independent of $j$ under $\calE\cap\calE_3$, $V^{\star,T}_{\calM_{\istar,j}}(s)=\optV_{\calM''_{\istar}}(s)$ for any $j$ and $s\in\calS_T$, and $V^{\hatpi}_{\calM_{\istar,j}}(s) \geq V^{\hatpi}_{\calM''_{\istar}}(s)$ for $s\in\calS_T$ when $V^{\hatpi}_{\calM_{\istar,j}}(\sstar_0)\geq 2B_T$.
Putting everything together, we have $P_{\istar,\jstar}(\calE_4)\geq P_{\istar,\jstar}(\calE_4\cap\calE\cap\calE_3)>\delta$, a contradiction.
Therefore, there is no $(\epsilon,\delta, T)$-correct algorithm with sample complexity less than $C(\epsilon, 4\delta)$ on $\{\calM_{i,j}\}_{i,j}$.
In other words, for any $(\epsilon,\delta, T)$-correct algorithm, there exists $\calM\in\{\calM_{i,j}\}_{i,j}$ such that this algorithm has sample complexity at least $C(\epsilon, 4\delta)$ on $\calM$.
This completes the proof.
\end{proof}

\section{Omitted Details in \pref{sec:alg.generative}}
In this section, we present the omitted proofs of \pref{lem:extend} and \pref{thm:bound.algo.generative}.
To prove \pref{thm:bound.algo.generative}, we first discuss the guarantee of the finite-horizon algorithm in \pref{app:LCBVI}.
Then, we bound the sample complexity of \pref{alg:sh} in \pref{app:bound.algo.generative}.

\subsection{\pfref{lem:extend}}
\label{app:extend}
\begin{proof}
    Let $V^{\pi}_{1,h}$ be the value function $V^{\pi}_1$ in $\calM_{h,c_f}$.
    For any $n \geq 0$, we have
    \begin{align*}
        V^{\pi}_{1,(n+1)H}(s) &= \E_{\pi}\sbr{\left.\sum_{i=1}^{nH}c(s_i, a_i) + V^{\pi}_{1,H}(s_{nH+1}) \right|s_1=s}\\
        &\leq \E_{\pi}\sbr{\left.\sum_{i=1}^{nH}c(s_i, a_i) + c_f(s_{nH+1}) \right|s_1=s} = V^{\pi}_{1,nH}(s).
    \end{align*}
    Therefore, $V^{\pi}(s)\leq\lim_{n\rightarrow\infty}V^{\pi}_{1,nH}(s)\leq V^{\pi}_{1,H}(s)$ and this completes the proof.
    Note that the first inequality may be strict.
    Indeed, $V^{\pi} = \lim_{H\to\infty} V^\pi_{1,H}$ in $\calM_{H,0}$.
    Consider an improper policy $\pi$ behaving in a loop with zero cost. Then, $V^{\pi}=0$ but $\lim_{H\rightarrow\infty}V^{\pi}_{1,H}=c_f$ in $\calM_{H,c_f}$. 
\end{proof}

\subsection{Guarantee of the Finite-Horizon Algorithm \LCBVI}
\label{app:LCBVI}

\setcounter{AlgoLine}{0}
\begin{algorithm}[t]
	\caption{LCBVI ($H,\N, B, c_f, \delta$)}
    \label{alg:LCBVI}
	\textbf{Input:} horizon $H$, counter $\N$, optimal value function upper bound $B$, terminal cost $c_f$, failure probability $\delta$, and cost function $c$,
	
	\textbf{Define:} $\P_{s,a}(s')=\frac{\N(s, a, s')}{\Np(s, a)}$ and $b(s, a, V) = \max\cbr{7\sqrt{\frac{\fV(\P_{s, a}, V)\iota}{\Np(s, a)}}, \frac{49B\iota}{\Np(s, a)} }$, where $\iota=\ln\frac{2SAHn}{\delta}$, $n=\sum_{s,a}\N(s,a)$, and $\Np(s,a)=\max\{1,\N(s,a)\}$.
	
	\textbf{Initialize:} $\hatV_{H+1}=c_f$.
	
	\For{$h=H,\ldots,1$}{
		\nl $\hatQ_h(s, a) = \rbr{ c(s, a) + \P_{s, a}\hatV_{h+1} - b(s, a, \hatV_{h+1})}_+$.\label{line:update rule}
		
		$\hatV_h(s)=\min_a\hatQ_h(s, a)$.
	}
	
	\textbf{Output:} $(\hatpi, \hatV)$ with $\hatpi(s, h)=\argmin_a\hatQ_h(s, a)$.
\end{algorithm}

In this section, we discuss and prove the guarantee of \pref{alg:LCBVI}.


\paragraph{Notations} Within this section, $H$, $\N$, $B$, $c_f$, $\delta$ are inputs of \pref{alg:LCBVI}, and $\hatpi$, $\hatQ$, $\hatV$, $\P_{s, a}$, $\N$, $\Np$, $\iota$, and $b$ are defined in \pref{alg:LCBVI}.
Value function $V^{\pi}_h$ is w.r.t MDP $\calM_{H,c_f}$, and we denote by $\optV_h$, $\optQ_h$ the optimal value function and action-value function, such that $\optV_h(s)=\argmin_{\pi}V^{\pi}_h(s)$ and $\optQ_h(s, a)=c(s,a)+P_{s,a}\optV_{h+1}$ for $(s, a, h)\in\SA\times[H]$.
We also define $V^{\pi}_{H+1}=\optV_{H+1}=c_f$ for any policy $\pi$, and $B^{\star}_H = \max_{h\in [H+1]}\norm{\optV_h}_{\infty}$.
For any $(\bars,\barh)\in\calS\times[H]$ and $(s,a,h)\in\SA\times[H]$, denote by $q_{\pi,(\bars,\barh)}(s, a, h)$ the probability of visiting $(s, a)$ in stage $h$ if the learner starts in state $\bars$ in stage $\barh$ and follows policy $\pi$ afterwards.
For any value function $V\in\fR^{\SA\times[H+1]}$, define $\norm{V_{\cdot}}_{\infty}=\max_{h\in[H+1]}\norm{V_h}_{\infty}$.

We first prove optimism of the estimated value functions.
\begin{lemma}
    \label{lem:opt}
    When $B\geq B^{\star}_H$, we have $\hatQ_h(s, a) \leq \optQ_h(s, a)$ for any $(s, a, h)\in\SA\times[H]$ with probability at least $1-\delta$.
\end{lemma}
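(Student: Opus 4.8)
The plan is to prove \pref{lem:opt} by backward induction on the stage $h$, from $h=H+1$ down to $h=1$, establishing the pointwise estimate $\hatV_h\le\optV_h$ and, simultaneously, the claimed $\hatQ_h\le\optQ_h$. Everything rests on a single favorable event $\calE$ that I would set up first. On $\calE$ we ask that, for every $(s,a,h)\in\SA\times[H]$ with $\N(s,a)\ge 1$,
\[
\abr{(\P_{s,a}-P_{s,a})\optV_{h+1}} \;\le\; \sqrt{\frac{2\,\fV(\P_{s,a},\optV_{h+1})\,\iota}{\Np(s,a)}} + \frac{c\,B\iota}{\Np(s,a)}
\]
for an absolute constant $c$ (say $c=3$). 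This holds with probability at least $1-\delta$ by the empirical Bernstein inequality of Maurer--Pontil applied to the i.i.d.\ next-state samples underlying $\P_{s,a}$, using that under the hypothesis $B\ge B^{\star}_H$ the function $\optV_{h+1}$ takes values in $[0,B^{\star}_H]\subseteq[0,B]$, and a union bound over the $SAH$ triples and over the possible values of the counts (this last union is what creates the $\ln n$ inside $\iota=\ln\frac{2SAHn}{\delta}$). The case $\N(s,a)=0$ is handled deterministically: then $\P_{s,a}$ is the null measure, so $\hatQ_h(s,a)=(c(s,a)-b(s,a,\cdot))_+=0$ since $b(s,a,\cdot)\ge 49B\iota\ge B+1\ge c(s,a)$, while $\optQ_h(s,a)\ge 0$.

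Conditioning on $\calE$, the base case $h=H+1$ is immediate because $\hatV_{H+1}=c_f=\optV_{H+1}$. For the inductive step, assume $\hatV_{h+1}\le\optV_{h+1}$ pointwise and fix $(s,a)$ with $\N(s,a)\ge 1$. If $\hatQ_h(s,a)=0$ we are done since $\optQ_h(s,a)\ge 0$; otherwise $\hatQ_h(s,a)=c(s,a)+\P_{s,a}\hatV_{h+1}-b(s,a,\hatV_{h+1})$, and, writing $u:=\P_{s,a}(\optV_{h+1}-\hatV_{h+1})\ge 0$,
\[
\optQ_h(s,a)-\hatQ_h(s,a) \;=\; (P_{s,a}-\P_{s,a})\optV_{h+1} + u + b(s,a,\hatV_{h+1}),
\]
so it suffices to prove $b(s,a,\hatV_{h+1})+u\ge(\P_{s,a}-P_{s,a})\optV_{h+1}$. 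The key step is the variance transfer: since $0\le\optV_{h+1}-\hatV_{h+1}\le\optV_{h+1}\le B$, we have $\fV(\P_{s,a},\optV_{h+1}-\hatV_{h+1})\le Bu$, hence $\fV(\P_{s,a},\optV_{h+1})\le 2\,\fV(\P_{s,a},\hatV_{h+1})+2Bu$; combining this with $\sqrt{a+b}\le\sqrt a+\sqrt b$ and the AM--GM bound $2\sqrt{Bu\iota/\Np(s,a)}\le u+B\iota/\Np(s,a)$ gives, on $\calE$,
\[
(\P_{s,a}-P_{s,a})\optV_{h+1} \;\le\; 2\sqrt{\frac{\fV(\P_{s,a},\hatV_{h+1})\,\iota}{\Np(s,a)}} + u + \frac{(c+1)B\iota}{\Np(s,a)}.
\]
Since $b(s,a,\hatV_{h+1})=\max\cbr{7\sqrt{\fV(\P_{s,a},\hatV_{h+1})\iota/\Np(s,a)},\,49B\iota/\Np(s,a)}$, it exceeds $\tfrac27$ of its first argument plus $\tfrac{c+1}{49}$ of its second, i.e.\ $b(s,a,\hatV_{h+1})\ge 2\sqrt{\fV(\P_{s,a},\hatV_{h+1})\iota/\Np(s,a)}+(c+1)B\iota/\Np(s,a)$, which is exactly the right-hand side above minus $u$. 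Therefore $\hatQ_h(s,a)\le\optQ_h(s,a)$; taking $\min_a$ yields $\hatV_h\le\optV_h$ and closes the induction.

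The main obstacle is the very first step. Because $\hatV_{h+1}$ is data-dependent, one cannot plug it directly into a concentration inequality, so the argument must concentrate around the \emph{fixed} function $\optV_{h+1}$ and then transfer the resulting empirical variance $\fV(\P_{s,a},\optV_{h+1})$ over to $\fV(\P_{s,a},\hatV_{h+1})$, which is what the bonus actually uses. This transfer works out cleanly precisely because the inductive hypothesis makes the error $\optV_{h+1}-\hatV_{h+1}$ non-negative with range at most $B$, so its variance is bounded by $B$ times its empirical mean $u$ — and $u$ already appears with a favorable sign in $\optQ_h-\hatQ_h$, letting it be absorbed rather than controlled. Once this is in place, all that remains is matching the absolute constants ($7$ and $49$ in $b$ against the Maurer--Pontil constants), and the generous slack built into $b$ makes that routine.
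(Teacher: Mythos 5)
Your proof is correct, and while it shares the overall skeleton with the paper's argument (backward induction on $h$, concentrating only around the fixed function $\optV_{h+1}$ and never around the data-dependent $\hatV_{h+1}$), the key step is handled by a genuinely different mechanism. The paper invokes \pref{lem:mvp}: the map $v\mapsto \P_{s,a}v-\max\bigl\{c_1\sqrt{\fV(\P_{s,a},v)\iota/\Np},\,c_2B\iota/\Np\bigr\}$ is monotone in $v$ whenever $c_1^2\le c_2$ (which is exactly why the bonus uses the pair $7$ and $49=7^2$), so the induction hypothesis $\hatV_{h+1}\le\optV_{h+1}$ immediately yields $\P_{s,a}\hatV_{h+1}-b(s,a,\hatV_{h+1})\le\P_{s,a}\optV_{h+1}-b(s,a,\optV_{h+1})$, after which a single application of \pref{lem:anytime bernstein} to the fixed function finishes the job. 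You instead keep $\hatV_{h+1}$ inside the bonus and transfer the empirical variance, bounding $\fV(\P_{s,a},\optV_{h+1})\le 2\fV(\P_{s,a},\hatV_{h+1})+2B\,\P_{s,a}(\optV_{h+1}-\hatV_{h+1})$ and absorbing the cross term $u=\P_{s,a}(\optV_{h+1}-\hatV_{h+1})\ge0$ into the favorable sign it carries in $\optQ_h-\hatQ_h$. The monotonicity route is shorter and explains the structural relation between the two bonus constants; your route is more elementary and does not rely on that specific algebraic identity, at the price of needing enough constant slack to survive the factor of $2$ from the variance split and the AM--GM term (which, as you note, the bonus comfortably provides: $\max\{7x,49y\}\ge 2x+4y$ since $\tfrac27+\tfrac4{49}\le1$). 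Both arguments rest on the same induction hypothesis and the same union-bounded Bernstein event, so the probability accounting is identical.
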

\begin{proof}
	We prove this by induction.
	The case of $h=H+1$ is clearly true.
	For $h \leq H$, note that
	\begin{align*}
		&c(s, a) + \P_{s, a}\hatV_{h+1} - b(s, a, \hatV_{h+1}) \leq c(s, a) + \P_{s, a}\optV_{h+1} - b(s, a, \optV_{h+1}) \tag{\pref{lem:mvp}}\\
        &=c(s, a) + P_{s,a}\optV_{h+1} + (\P_{s,a}-P_{s,a})\optV_{h+1} - \max\cbr{7\sqrt{\frac{\fV(\P_{s, a},\optV_{h+1})\iota}{\Np(s, a)}}, \frac{49B\iota}{\Np(s, a)}}\\
		&\leq c(s, a) + P_{s, a}\optV_{h+1} + (2\sqrt{2}-3)\sqrt{\frac{\fV(\P_{s,a}, \optV_{h+1})\iota}{\Np(s, a)}} + (19 - 24)\frac{B\iota}{\Np(s, a)} \tag{\pref{lem:anytime bernstein} and $\max\{a, b\} \geq \frac{a+b}{2}$}\\
        &\leq c(s, a) + P_{s, a}\optV_{h+1} = \optQ_h(s, a).
	\end{align*}
	The proof is then completed by the definition of $\hatQ$.
\end{proof}

\begin{lemma}
    \label{lem:d hatV}
    For any state $\bars\in\calS$ and $\barh\in[H]$, we have
    \begin{align*}
        \abr{V^{\hatpi}_{\barh}(\bars) - \hatV_{\barh}(\bars)} \leq \sum_{s, a, h}q_{\hatpi,(\bars,\barh)}(s, a, h)\rbr{\abr{(P_{s, a} - \P_{s, a})\hatV_{h+1}} + b(s, a, \hatV_{h+1})}.
    \end{align*}
\end{lemma}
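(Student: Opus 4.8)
The plan is to prove the bound by backward induction on the stage index, establishing the two-sided claim that $\abr{V^{\hatpi}_h(s) - \hatV_h(s)} \le e_h(s)$ for every $s\in\calS$ and $h\in[H+1]$, where $e_h(s) := \sum_{s',a',h'} q_{\hatpi,(s,h)}(s',a',h')\big(\abr{(P_{s',a'}-\P_{s',a'})\hatV_{h'+1}} + b(s',a',\hatV_{h'+1})\big)$ is the claimed right-hand side with $(\bars,\barh)$ replaced by the generic pair $(s,h)$, and with the convention $e_{H+1}\equiv 0$. The base case $h=H+1$ is immediate since $\hatV_{H+1}=c_f=V^{\hatpi}_{H+1}$. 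The structural fact driving the induction is the one-step unrolling of the occupancy measure of the deterministic policy $\hatpi$: writing $a=\hatpi(s,h)$, one has $q_{\hatpi,(s,h)}(s',a',h)=\Ind\{(s',a')=(s,a)\}$ and $q_{\hatpi,(s,h)}(s',a',h')=\sum_{s''}P(s''\mid s,a)\,q_{\hatpi,(s'',h+1)}(s',a',h')$ for $h'>h$, so that $e_h(s)=\abr{(P_{s,a}-\P_{s,a})\hatV_{h+1}} + b(s,a,\hatV_{h+1}) + P_{s,a}e_{h+1}$, where $P_{s,a}e_{h+1}:=\sum_{s''}P(s''\mid s,a)\,e_{h+1}(s'')$.

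For the inductive step I would fix $s$, set $a=\hatpi(s,h)$, abbreviate $\Delta_{h+1}(s'):=V^{\hatpi}_{h+1}(s')-\hatV_{h+1}(s')$ (so that $\abr{\Delta_{h+1}}\le e_{h+1}$ pointwise by the induction hypothesis), and bound the two sides of the absolute value separately. The upper side uses only $(x)_+\ge x$ in \pref{line:update rule}, namely $\hatV_h(s)=\hatQ_h(s,a)\ge c(s,a)+\P_{s,a}\hatV_{h+1}-b(s,a,\hatV_{h+1})$, which combined with $V^{\hatpi}_h(s)=c(s,a)+P_{s,a}V^{\hatpi}_{h+1}$ gives $V^{\hatpi}_h(s)-\hatV_h(s)\le P_{s,a}\Delta_{h+1}+(P_{s,a}-\P_{s,a})\hatV_{h+1}+b(s,a,\hatV_{h+1})$; then $P_{s,a}\Delta_{h+1}\le P_{s,a}e_{h+1}$ and $(P_{s,a}-\P_{s,a})\hatV_{h+1}\le\abr{(P_{s,a}-\P_{s,a})\hatV_{h+1}}$, and the recursion for $e_h$ yields $V^{\hatpi}_h(s)-\hatV_h(s)\le e_h(s)$.

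For the lower side I would split on whether the clip is active. If $\hatV_h(s)=c(s,a)+\P_{s,a}\hatV_{h+1}-b(s,a,\hatV_{h+1})$, then $\hatV_h(s)-V^{\hatpi}_h(s) = -(P_{s,a}-\P_{s,a})\hatV_{h+1} - P_{s,a}\Delta_{h+1} - b(s,a,\hatV_{h+1})$, and bounding $-(P_{s,a}-\P_{s,a})\hatV_{h+1}\le\abr{(P_{s,a}-\P_{s,a})\hatV_{h+1}}$, $-P_{s,a}\Delta_{h+1}\le P_{s,a}e_{h+1}$, and $-b(s,a,\hatV_{h+1})\le b(s,a,\hatV_{h+1})$ again yields $\hatV_h(s)-V^{\hatpi}_h(s)\le e_h(s)$. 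If instead $\hatV_h(s)=0$, then $\hatV_h(s)-V^{\hatpi}_h(s)=-V^{\hatpi}_h(s)\le 0\le e_h(s)$, since all costs (including $c_f$) are nonnegative and $e_h(s)$ is a sum of nonnegative terms. Combining the two sides closes the induction, and specializing to $(s,h)=(\bars,\barh)$ gives the statement.

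I expect no serious obstacle: the only mild subtlety is the bookkeeping around the clipping operator $(\cdot)_+$, which is what forces the two-sided argument and the case split --- but in the clip-active branch the inequality is trivial because $\hatV_h(s)$ vanishes, and in the clip-inactive branch everything telescopes cleanly through the occupancy-measure recursion. Note also that, unlike \pref{lem:opt}, this lemma is a deterministic (pathwise) inequality: it holds for every realization of the data and does not invoke any high-probability event.
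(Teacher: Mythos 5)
Your proof is correct and takes essentially the same route as the paper: both split the absolute value into two one-sided bounds, use $V^{\hatpi}_h(s)=c(s,a)+P_{s,a}V^{\hatpi}_{h+1}$ against the (clipped) Bellman update defining $\hatQ_h$, and unroll along the occupancy measure of $\hatpi$. The only cosmetic difference is that you handle the clipping in the lower direction by an explicit case split on whether $\hatV_h(s)=0$, whereas the paper compresses this into the inequalities $(a)_+-(b)_+\leq(a-b)_+$ and $(a+b)_+\leq(a)_++(b)_+$; the content is identical.
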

\begin{proof}
    First note that
    \begin{align*}
        V^{\hatpi}_{\barh}(\bars) - \hatV_{\barh}(\bars) &\leq P_{\bars,\hatpi(\bars,\barh)}V^{\hatpi}_{\barh+1} - \P_{\bars,\hatpi(\bars,\barh)}\hatV_{\barh+1} + b(\bars,\hatpi(\bars,\barh), \hatV_{\barh+1}) \tag{definition of $\hatpi$ and $\hatV$}\\
        &= P_{\bars,\hatpi(\bars,\barh)}(V^{\hatpi}_{\barh+1} - \hatV_{\barh+1}) + (P_{\bars,\hatpi(\bars,\barh)}-\P_{\bars,\hatpi(\bars,\barh)})\hatV_{\barh+1} + b(\bars,\hatpi(\bars,\barh), \hatV_{\barh+1})\\
        &= \sum_{s, a, h}q_{\hatpi,(\bars,\barh)}(s, a, h)\rbr{(P_{s, a} - \P_{s, a})\hatV_{h+1} + b(s, a, \hatV_{h+1})}. \tag{expand $P_{\bars,\hatpi(\bars,\barh)}(V^{\hatpi}_{\barh+1} - \hatV_{\barh+1})$ recursively and $V^{\hatpi}_{H+1}=\hatV_{H+1}$}
    \end{align*}
    For the other direction,
    \begin{align*}
        &(\hatV_{\barh}(\bars) - V^{\hatpi}_{\barh}(\bars))_+ \leq \rbr{\P_{\bars,\hatpi(\bars,\barh)}\hatV_{\barh+1} - P_{\bars,\hatpi(\bars,\barh)}V^{\hatpi}_{\barh+1} }_+ \tag{$(a)_+ - (b)_+ \leq (a-b)_+$}\\
        &\leq P_{\bars,\hatpi(\bars,\barh)}(\hatV_{\barh+1} - V^{\hatpi}_{\barh+1})_+ + \abr{(P_{\bars,\hatpi(\bars,\barh)}-\P_{\bars,\hatpi(\bars,\barh)})\hatV_{\barh+1}} \tag{$(a+b)_+\leq (a)_+ + (b)_+$}\\
        &\leq \sum_{s, a, h}q_{\hatpi,(\bars,\barh)}(s, a, h)\abr{(P_{s, a} - \P_{s, a})\hatV_{h+1}}. \tag{expand $P_{\bars,\hatpi(\bars,\barh)}(\hatV_{\barh+1} - V^{\hatpi}_{\barh+1})_+$ recursively}
    \end{align*}
    Combining both directions completes the proof.
\end{proof}
\begin{remark}
    Note that the inequality in \pref{lem:d hatV} holds even if optimism (\pref{lem:opt}) does not hold, which is very important for estimating $\B$.
\end{remark}

\begin{lemma}
    \label{lem:fine bound}
    There exists a function $N^{\star}(B',H',\epsilon',\delta') \lesssim \frac{{B'}^2H'}{{\epsilon'}^2} + \frac{SB'H'}{\epsilon'} + S{H'}^2$ such that when $B\geq B^{\star}_H$ and $\N(s, a)=N\geq N^{\star}(B,H,\epsilon,\delta)$ for all $s, a$ for some integer $N$, we have $\norm{V^{\hatpi}_{\cdot} - \optV_{\cdot}}_{\infty}\leq\epsilon$ with probability at least $1-\delta$.
\end{lemma}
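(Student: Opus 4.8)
The plan is to prove the standard generative-model guarantee for optimistic value iteration with Bernstein-type bonuses, in the spirit of \citep{azar2017minimax,zhang2020reinforcement}, but for the time-homogeneous finite-horizon surrogate $\calM_{H,c_f}$ and with the optimal value range $B$ playing the role usually played by the horizon. The first step is to fix the good event: the intersection of the optimism event of \pref{lem:opt} with the concentration events underlying the anytime Bernstein inequality. A union bound over the $\SA$ pairs keeps the total failure probability below $\delta$; the dependence on the $H$ stages and on the data-dependent value iterates is already absorbed into $\iota$ and into the uniform form of the inequality (this is exactly why the bonus uses the empirical variance $\fV(\P_{s,a},\cdot)$). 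On this event, optimism gives $\hatV_h\le\optV_h$ pointwise, so
\[
0\;\le\;V^{\hatpi}_{\barh}(\bars)-\optV_{\barh}(\bars)\;\le\;V^{\hatpi}_{\barh}(\bars)-\hatV_{\barh}(\bars)\;=:\;\Delta_{\barh}(\bars),
\]
and it suffices to bound $\Delta:=\max_{\bars,\barh}\Delta_{\barh}(\bars)$ by $\epsilon$.

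Three steps then do the work. (i) Plug \pref{lem:d hatV} into $\Delta_{\barh}(\bars)$ and bound each summand, using the anytime Bernstein inequality and the definition of the bonus, by $O\!\big(\sqrt{\fV(\P_{s,a},\hatV_{h+1})\iota/N}+B\iota/N\big)$, so that $\Delta_{\barh}(\bars)\lesssim\sum_{s,a,h}q_{\hatpi,(\bars,\barh)}(s,a,h)\big(\sqrt{\fV(\P_{s,a},\hatV_{h+1})\iota/N}+B\iota/N\big)$. (ii) Transfer the empirical variance of the optimistic iterate to the true variance of $V^{\hatpi}$: writing $\hatV_{h+1}=\optV_{h+1}-e_{h+1}$ with $0\le e_{h+1}$, using $\fV(Q,X-Y)^{1/2}\le\fV(Q,X)^{1/2}+\|Y\|_\infty$, the Bernstein concentration of $\P_{s,a}$ around $P_{s,a}$ on the fixed functions $\optV_{h+1}$ and $\optV_{h+1}^2$, the $\ell_1$ deviation $\|\P_{s,a}-P_{s,a}\|_1\lesssim\sqrt{S\iota/N}$ (used to control $|(\P_{s,a}-P_{s,a})e_{h+1}|\le\|e_{h+1}\|_\infty\sqrt{S\iota/N}\le\Delta\sqrt{S\iota/N}$, which is where the $S$ in the lower-order terms originates), and $\fV(P_{s,a},\optV_{h+1})\le 2\fV(P_{s,a},V^{\hatpi}_{h+1})+2\|V^{\hatpi}_{h+1}-\optV_{h+1}\|_\infty^2$; this is legitimized by a preliminary crude pass that takes $\fV\le B^2$ in \pref{lem:d hatV} and certifies $\Delta\le B$ and $\|V^{\hatpi}_\cdot\|_\infty=O(B)$ once $N\gtrsim SH^2\iota$. (iii) Sum the leading variance term against the occupancy of $\hatpi$, apply Cauchy--Schwarz using $\sum_{s,a}q_{\hatpi,(\bars,\barh)}(s,a,h)\le1$ (so the outer sum costs only a factor $H$), and invoke the law of total variance along trajectories of $\hatpi$ in $\calM_{H,c_f}$: the one-step variances sum to the variance of the (truncated) return, which is $O(B^2)$ by the usual recursive second-moment estimate together with $\|V^{\hatpi}_\cdot\|_\infty=O(B)$ and $V^{\hatpi}_1=O(B)$. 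Collecting the three steps yields an inequality of the form $\Delta\lesssim\sqrt{B^2H\iota/N}+H\Delta\sqrt{S\iota/N}+(\text{$S$-dependent lower-order terms})$; for $N$ above a suitable constant multiple of $SH^2\iota$ the self-referential middle term is absorbed into $\Delta/2$, and asking each surviving term to be $O(\epsilon)$ pins $N^{\star}$ down at the claimed order $\frac{B^2H}{\epsilon^2}+\frac{SBH}{\epsilon}+SH^2$ up to logarithmic factors; the function $N^{\star}$ exists because it is just the maximum of these finitely many explicit thresholds.

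The main obstacle is the mismatch between what the bonus controls and what the total-variance telescoping exploits. The per-step error is naturally governed by $\fV(\P_{s,a},\hatV_{h+1})$, the empirical variance of the optimistic, data-dependent iterate, whereas the only way to avoid an extra factor of $H$ (the difference between the target $B^2H/\epsilon^2$ and the naive $B^2H^2/\epsilon^2$ one would get from bounding every variance by $B^2$) is to telescope $\fV(P_{s,a},V^{\hatpi}_{h+1})$, the true variances of the value of the \emph{executed} policy. Bridging these while keeping all corrections strictly lower order is the crux, and it is precisely what forces the two-pass structure: a crude bound first, to obtain a priori $O(B)$ control on $\|V^{\hatpi}_\cdot\|_\infty$ and on $\Delta$, and only then the refined variance transfer and the law-of-total-variance argument. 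Two further technical points to watch are the handling of the data-dependence of $\hatV_{h+1}$ inside the Bernstein step --- taken care of by the uniform/anytime form of the inequality already used in \pref{lem:opt}, at the price of the $\|\P_{s,a}-P_{s,a}\|_1$ term --- and the cancellation of the terminal cost in \pref{lem:d hatV} (the recursion starts from $V^{\hatpi}_{H+1}=\hatV_{H+1}=c_f$), which is why $c_f$ contributes nothing to the error despite its possibly large magnitude.
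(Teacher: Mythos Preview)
Your proposal is correct and takes the same route as the paper: decompose via \pref{lem:d hatV}, split $(P-\P)\hatV_{h+1}$ into the fixed piece $(P-\P)\optV_{h+1}$ (Bernstein) and the data-dependent remainder (costing $\sqrt{S}$ via the $\ell_1$-type bound, as in \pref{lem:dPV}), telescope the one-step variances to $O(B^2)$, and absorb the self-referential $\Delta$-term once $N\gtrsim SH^2$. The only difference is organizational: the paper telescopes $\fV(P_{s,a},\optV_{h+1})$ directly under the trajectory of $\hatpi$, using $\optV_h(s_h)\le\optQ_h(s_h,a_h)$ to obtain $\sum_{s,a,h}q_{\hatpi}\fV(P_{s,a},\optV_{h+1})\le B^2+3BV^{\hatpi}_{\barh}(\bars)$ and then closing with a quadratic inequality in $V^{\hatpi}_{\barh}-\hatV_{\barh}$ (so no separate crude pass is needed), whereas you first transfer to $\fV(P_{s,a},V^{\hatpi}_{h+1})$, which is precisely what forces the preliminary bound $\|V^{\hatpi}_{\cdot}\|_{\infty}=O(B)$ you flag.
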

\begin{proof}
    Below we assume that $B\geq B^{\star}_H$.
    Fix any state $\bars\in\calS$ and $\barh\in[H]$, we write $q_{\hatpi,(\bars,\barh)}$ as $q_{\hatpi}$ for simplicity.
    We have with probability at least $1-4\delta$,
	\begin{align*}
		&V^{\hatpi}_{\barh}(\bars) - \hatV_{\barh}(\bars)\\
        &\leq \sum_{s, a, h}q_{\hatpi}(s, a, h)\rbr{ \abr{(P_{s, a} - \P_{s, a})\optV_{h+1}} + \abr{(P_{s, a} - \P_{s, a})(\hatV_{h+1}-\optV_{h+1})} + b(s, a, \hatV_{h+1})} \tag{$|a+b|\leq |a| + |b|$ and \pref{lem:d hatV}}\\
        &\lesssim \sum_{s,a,h}q_{\hatpi}(s, a, h)\rbr{ \sqrt{\frac{\fV(P_{s, a}, \optV_{h+1})}{N}} + \frac{SB}{N} + \sqrt{\frac{S\fV(P_{s, a}, \hatV_{h+1}-\optV_{h+1})}{N}} + \sqrt{\frac{\fV(\P_{s, a}, \hatV_{h+1})}{N}} }  \tag{\pref{lem:dPV} and $\max\{a,b\}\leq (a)_+ + (b)_+$}\\
		&\lesssim \sqrt{\frac{H}{N}\sum_{s, a, h}q_{\hatpi}(s, a, h)\fV(P_{s, a}, \optV_{h+1})} + \sqrt{\frac{SH}{N}\sum_{s, a, h}q_{\hatpi}(s, a, h)\fV(P_{s, a}, \hatV_{h+1} - \optV_{h+1}) } + \frac{SBH}{N},
	\end{align*}
	where in the last inequality we apply $\var[X+Y]\leq 2(\var[X] + \var[Y])$, Cauchy-Schwarz inequality, \pref{lem:barPV to PV}, and $\sum_{s,a,h}q_{\hatpi}(s,a,h)\leq H$.
	Now note that:
	\begin{align*}
		&\sum_{s, a, h}q_{\hatpi}(s, a, h)\fV(P_{s, a}, \optV_{h+1}) = \E_{\hatpi}\sbr{\left.\sum_{h=\barh}^H\fV(P_{s_h, a_h}, \optV_{h+1}) \right|s_{\barh}=\bars }\\
        &= \E_{\hatpi}\sbr{\left.\sum_{h=\barh}^H\rbr{P_{s_h,a_h}(\optV_{h+1})^2 - (P_{s_h,a_h}\optV_{h+1})^2} \right|s_{\barh}=\bars } \\
		&= \E_{\hatpi}\sbr{\left. \sum_{h=\barh}^H\rbr{\optV_{h+1}(s_{h+1})^2 - \optV_h(s_h)^2} + \sum_{h=\barh}^H\rbr{\optV_h(s_h)^2 - (P_{s_h, a_h}\optV_{h+1})^2}\right| s_{\barh}=\bars }\\
		&\leq B^2 + 3B\E_{\hatpi}\sbr{\sum_{h=\barh}^H\rbr{\optQ_h(s_h, a_h) - P_{s_h, a_h}\optV_{h+1}}_+ } \tag{$a^2-b^2\leq(a+b)(a-b)_+$ for $a,b>0$ and $\optV(s_h)\leq\optQ_h(s_h, a_h)$}\\
        &= B^2 + 3B\E_{\hatpi}\sbr{\left.\sum_{h=\barh}^Hc(s_h, a_h)\right|s_{\barh}=\bars} = B^2 + 3BV^{\hatpi}_{\barh}(\bars).
	\end{align*}
	Plugging this back and by $\fV(P_{s,a}, \hatV_{h+1}-\optV_{h+1})\leq\norm{\hatV_{h+1}-\optV_{h+1}}_{\infty}^2$, we have with probability at least $1-\delta$,
	\begin{align*}
		0 \leq V^{\hatpi}_{\barh}(\bars) - \hatV_{\barh}(\bars) &\lesssim B\sqrt{\frac{H}{N}} + \sqrt{\frac{BHV^{\hatpi}_{\barh}(\bars)}{N}} + \sqrt{\frac{SH^2}{N}}\norm{\hatV_{\cdot} - \optV_{\cdot}}_{\infty} + \frac{SBH}{N} \tag{\pref{lem:opt}}\\
        &\lesssim B\sqrt{\frac{H}{N}} + \sqrt{\frac{BH(V^{\hatpi}_{\barh}(\bars)-\hatV_{\barh}(\bars))}{N}} + \sqrt{\frac{SH^2}{N}}\norm{\hatV_{\cdot} - V^{\hatpi}_{\cdot}}_{\infty} + \frac{SBH}{N},
	\end{align*}
        where in the last step we apply $\hatV_{\barh}(\bars)\leq B$ and $\norm{\hatV_{\cdot}-\optV_{\cdot}}_{\infty}\leq\norm{\hatV_{\cdot}-V^{\hatpi}_{\cdot}}_{\infty}$ since $\hatV(s)\leq\optV(s)\leq V^{\hatpi}(s)$ for all $s\in\calS$ by \pref{lem:opt}.
    Solving a quadratic inequality w.r.t $V^{\hatpi}_{\barh}(\bars)-\hatV_{\barh}(\bars)$, we have
    \begin{align*}
        V^{\hatpi}_{\barh}(\bars)-\hatV_{\barh}(\bars) \lesssim B\sqrt{\frac{H}{N}} + \sqrt{\frac{SH^2}{N}}\norm{\hatV_{\cdot} - V^{\hatpi}_{\cdot}}_{\infty} + \frac{SBH}{N}.
    \end{align*}
    The inequality above implies that there exist quantity $\overline{N}^{\star}\lesssim SH^2$, such that when $N\geq \overline{N}^{\star}$, we have
    \begin{align*}
        V^{\hatpi}_{\barh}(\bars) - \hatV_{\barh}(\bars) \lesssim B\sqrt{\frac{H}{N}} + \frac{1}{2}\norm{\hatV_{\cdot} - V^{\hatpi}_{\cdot}}_{\infty} + \frac{SBH}{N},
    \end{align*}
    for any $(\bars, \barh)$.
    Taking maximum of the left-hand-side over $(\bars,\barh)$, reorganizing terms and by \pref{lem:opt}, we obtain
    \begin{equation}
        \label{eq:Nstar}
        \norm{V^{\hatpi}_{\cdot} - \optV_{\cdot}}_{\infty}\leq\norm{V^{\hatpi}_{\cdot}-\hatV_{\cdot}}_{\infty}\lesssim B\sqrt{\frac{H}{N}} + \frac{SBH}{N}.
    \end{equation}
    Now define $n^{\star}=\overline{N}^{\star} + \inf_n\{\text{right-hand-side of \pref{eq:Nstar} }\leq \epsilon\text{ when }N=n\}$.
    We have $n^{\star}\lesssim \frac{B^2H}{\epsilon^2} + \frac{SBH}{\epsilon} + SH^2$.
    This implies that when $B\geq B^{\star}_H$ and $\N(s, a)=N\geq n^{\star}$ for all $s, a$, we have $\norm{V^{\hatpi}_{\cdot}-\optV_{\cdot}}_{\infty}\leq\epsilon$ with probability at least $1-5\delta$.
    The proof is then completed by treating $n^{\star}$ as a function with input $B$, $H$, $\epsilon$, $\delta$ and replace $\delta$ by $\delta/5$ in the arguments above.
\end{proof}

\begin{lemma}
    \label{lem:coarse bound}
    There exists functions $\hatN(B', H', \epsilon', \delta') \lesssim \frac{{B'}^2SH'}{{\epsilon'}^2} + \frac{SB'H'}{\epsilon'}$ such that when $\N(s, a)=N\geq \hatN(B, H, \epsilon, \delta)$ for all $s, a$ for some $N$ and $\norm{\hatV_{\cdot}}_{\infty} \leq B$, we have $\norm{V^{\hatpi}_{\cdot} - \hatV_{\cdot}}_{\infty}\leq\epsilon$ with probability at least $1-\delta$.
\end{lemma}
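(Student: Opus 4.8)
The plan is to follow the same route as the proof of \pref{lem:fine bound}, the crucial difference being that here we may no longer invoke optimism: \pref{lem:opt} requires $B\geq B^{\star}_H$, which is \emph{not} assumed, so $\hatV$ cannot be sandwiched between $\optV$ and $V^{\hatpi}$. Consequently, in place of the optimism-based control of $\abr{(P_{s,a}-\P_{s,a})\hatV_{h+1}}$ by the Bernstein bonus, I would use the weaker deviation inequality $\abr{(P_{s,a}-\P_{s,a})\hatV_{h+1}}\lesssim\sqrt{S\fV(\P_{s,a},\hatV_{h+1})\iota/N}+SB\iota/N$, obtained by writing $(P_{s,a}-\P_{s,a})\hatV_{h+1}=\sum_{s'}(P(s'|s,a)-\P(s'|s,a))(\hatV_{h+1}(s')-\P_{s,a}\hatV_{h+1})$, applying the per-coordinate empirical Bernstein bound on $\abr{P(s'|s,a)-\P(s'|s,a)}$, and then Cauchy--Schwarz over the (at most) $S$ successor states together with $\norm{\hatV_{h+1}}_\infty\leq B$; this inequality is valid uniformly, even though $\hatV_{h+1}$ depends on the samples, because the empirical $\chi^2$-type quantity it relies on does not involve $\hatV_{h+1}$. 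It is precisely this $\sqrt{S}$ that produces the extra $S$ factor on the leading term relative to \pref{lem:fine bound}. Throughout I would work on a single high-probability event (after rescaling $\delta$ by an absolute constant) carrying this deviation bound and the variance-transfer bound $\fV(\P_{s,a},V)\leq 2\fV(P_{s,a},V)+O(B^2\iota/N)$ of \pref{lem:barPV to PV}.

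First I would start from \pref{lem:d hatV} (which, as noted in its remark, holds \emph{without} optimism), bound $b(s,a,\hatV_{h+1})\lesssim\sqrt{\fV(\P_{s,a},\hatV_{h+1})\iota/N}+B\iota/N$ by $\max\{x,y\}\leq x+y$, substitute the deviation bound above, and apply Cauchy--Schwarz over $(s,a,h)$ using $\sum_{s,a,h}q_{\hatpi,(\bars,\barh)}(s,a,h)\leq H$ to obtain, for every $(\bars,\barh)$,
\begin{align*}
\abr{V^{\hatpi}_{\barh}(\bars)-\hatV_{\barh}(\bars)}\;\lesssim\;\sqrt{\frac{SH\iota}{N}\sum_{s,a,h}q_{\hatpi,(\bars,\barh)}(s,a,h)\,\fV(\P_{s,a},\hatV_{h+1})}\;+\;\frac{SBH\iota}{N}.
\end{align*}
The main work is then to show $\sum_{s,a,h}q_{\hatpi,(\bars,\barh)}(s,a,h)\,\fV(\P_{s,a},\hatV_{h+1})\lesssim B^2+B\,V^{\hatpi}_{\barh}(\bars)$ up to lower-order terms, by a law-of-total-variance argument carried out with the estimate $\hatV$ in place of $\optV$. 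I would first pass from $\P$ to $P$ via \pref{lem:barPV to PV}, then telescope: $\E_{\hatpi}[\sum_h\fV(P_{s_h,a_h},\hatV_{h+1})\mid s_{\barh}=\bars]=\E_{\hatpi}[\sum_h(\hatV_{h+1}(s_{h+1})^2-\hatV_h(s_h)^2)]+\E_{\hatpi}[\sum_h(\hatV_h(s_h)^2-(P_{s_h,a_h}\hatV_{h+1})^2)]$. The first expectation telescopes to $\E_{\hatpi}[c_f(s_{H+1})^2]-\hatV_{\barh}(\bars)^2\leq B^2$ (using $\norm{\hatV_\cdot}_\infty\leq B$, which also gives $\norm{c_f}_\infty\leq B$ since $c_f=\hatV_{H+1}$), and each summand of the second is at most $2B(\hatV_h(s_h)-P_{s_h,a_h}\hatV_{h+1})_+$ by $a^2-b^2\leq(a+b)(a-b)_+$ and $\hatV,P\hatV\in[0,B]$.

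The key replacement for the Bellman identity $\optV_h(s)\le\optQ_h(s,a)=c(s,a)+P_{s,a}\optV_{h+1}$ is the \emph{one-sided} inequality $\hatV_h(s)\leq\hatQ_h(s,a)\leq c(s,a)+\P_{s,a}\hatV_{h+1}\leq c(s,a)+P_{s,a}\hatV_{h+1}+\abr{(P_{s,a}-\P_{s,a})\hatV_{h+1}}$, valid for all $a$ (it uses only $(x)_+\geq x$, $(x)_+\geq 0$, and $c,\P_{s,a}\hatV_{h+1}\geq 0$). Hence the second expectation is $\lesssim B\,V^{\hatpi}_{\barh}(\bars)+B\sum_{s,a,h}q_{\hatpi,(\bars,\barh)}(s,a,h)\abr{(P_{s,a}-\P_{s,a})\hatV_{h+1}}$, and the trailing term is of lower order by the deviation bound and Cauchy--Schwarz, which closes the estimate of the variance sum.

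Plugging the variance bound back, using $V^{\hatpi}_{\barh}(\bars)\leq\hatV_{\barh}(\bars)+\norm{V^{\hatpi}_\cdot-\hatV_\cdot}_\infty\leq B+\norm{V^{\hatpi}_\cdot-\hatV_\cdot}_\infty$, taking the maximum over $(\bars,\barh)$ and solving the resulting quadratic inequality in $\norm{V^{\hatpi}_\cdot-\hatV_\cdot}_\infty$ yields $\norm{V^{\hatpi}_\cdot-\hatV_\cdot}_\infty\lesssim B\sqrt{SH\iota/N}+SBH\iota/N$; taking $\hatN(B,H,\epsilon,\delta)$ to be the smallest $N$ for which the right-hand side is at most $\epsilon$ then gives $\hatN(B,H,\epsilon,\delta)=\tilO{\frac{B^2SH}{\epsilon^2}+\frac{SBH}{\epsilon}}$, as claimed (after replacing $\delta$ by a constant fraction of $\delta$ to absorb the union bounds). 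The hard part will be the law-of-total-variance step: with only the one-sided Bellman inequality (with a deviation correction) available for $\hatV$, one must verify that the correction terms and the self-referential dependence on $V^{\hatpi}$ are genuinely lower order so the quadratic closes, and one must keep all concentration inequalities uniform over the data-dependent $\hatV_{h+1}$.
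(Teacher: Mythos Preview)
Your proposal is correct and follows essentially the same route as the paper's proof. The paper likewise starts from \pref{lem:d hatV}, uses the $\sqrt{S}$-inflated deviation bound (via \pref{lem:dPV}, which is exactly your per-coordinate Bernstein plus Cauchy--Schwarz argument, just centered at $P_{s,a}V$ so that the true variance appears directly rather than after invoking \pref{lem:barPV to PV}), runs the same law-of-total-variance telescoping with $\hatV$ in place of $\optV$, controls $(\hatV_h-P\hatV_{h+1})_+$ by the same one-sided Bellman inequality $\hatQ_h\leq c+\P\hatV_{h+1}$, solves the resulting quadratic in the variance sum, and then closes with $V^{\hatpi}_{\barh}(\bars)\leq B+|V^{\hatpi}_{\barh}(\bars)-\hatV_{\barh}(\bars)|$ and a final quadratic in $\norm{V^{\hatpi}_\cdot-\hatV_\cdot}_\infty$; the only differences are cosmetic (order in which $\fV(\P,\cdot)$ is converted to $\fV(P,\cdot)$, and which form of Bernstein is invoked first).
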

\begin{proof}
    Below we assume that $\norm{\hatV_{\cdot}}_{\infty} \leq B$.
    For any state fixed $\bars\in\calS$ and $\barh\in[H]$, we write $q_{\hatpi,(\bars,\barh)}$ as $q_{\hatpi}$ for simplicity.
    Note that with probability at least $1-2\delta$,
	\begin{align*}
		&\abr{V^{\hatpi}_{\barh}(\bars) - \hatV_{\barh}(\bars)} \leq \sum_{s, a, h}q_{\hatpi}(s, a, h)\rbr{\abr{(P_{s, a} - \P_{s, a})\hatV_{h+1}} + b(s, a, \hatV_{h+1})} \tag{\pref{lem:d hatV}}\\
		&\lesssim \sum_{s, a, h}q_{\hatpi}(s, a, h)\rbr{\sqrt{\frac{S\fV(P_{s, a}, \hatV_{h+1})}{N}} + \sqrt{\frac{\fV(\P_{s, a}, \hatV_{h+1})}{N}} + \frac{SB}{N}} \tag{\pref{lem:dPV} and $\max\{a, b\}\leq (a)_+ + (b)_+$}\\
		&\lesssim \sum_{s, a, h}q_{\hatpi}(s, a, h)\rbr{\sqrt{\frac{S\fV(P_{s, a}, \hatV_{h+1})}{N}} + \frac{SB}{N}} \tag{\pref{lem:barPV to PV}}\\
		&\lesssim \sqrt{\frac{SH}{N}\sum_{s, a, h}q_{\hatpi}(s, a, h)\fV(P_{s, a}, \hatV_{h+1})} + \frac{SBH}{N}. \tag{Cauchy-Schwarz inequality and $\sum_{s,a,h}q_{\hatpi}(s,a,h)\leq H$}
	\end{align*}
	Now note that with probability at least $1-\delta$,
	\begin{align*}
		&\sum_{s, a, h}q_{\hatpi}(s, a, h)\fV(P_{s, a}, \hatV_{h+1}) = \E_{\hatpi}\sbr{\left.\sum_{h=\barh}^H\fV(P_{s_h, a_h}, \hatV_{h+1})\right|s_{\barh}=\bars }\\
		&= \E_{\hatpi}\sbr{\left. \sum_{h=\barh}^H\rbr{\hatV_{h+1}(s_{h+1})^2 - \hatV_h(s_h)^2} + \sum_{h=\barh}^H\rbr{\hatV_h(s_h)^2 - (P_{s_h, a_h}\hatV_{h+1})^2}\right|s_{\barh}=\bars }\\
		&\leq B^2 + 3B\E_{\hatpi}\sbr{\left.\sum_{h=\barh}^H\rbr{\hatQ_h(s_h, a_h) - P_{s_h, a_h}\hatV_{h+1}}_+\right| s_{\barh}=\bars } \tag{$a^2-b^2\leq(a+b)(a-b)_+$ for $a,b>0$ and $\hatV_h(s_h)=\hatQ_h(s_h,a_h)$}\\
		&\leq B^2 + 3B\E_{\hatpi}\sbr{\left. \sum_{h=\barh}^H\rbr{ c(s_h, a_h) + (\P_{s_h, a_h} - P_{s_h, a_h})\hatV_{h+1} }_+ \right|s_{\barh}=\bars}\tag{definition of $\hatQ_h$ and $(a)_+-(b)_+\leq(a-b)_+$}\\
		&\lesssim B^2 + BV^{\hatpi}_{\barh}(\bars) + B\sqrt{\frac{SH}{N}\sum_{s, a, h}q_{\hatpi}(s, a, h)\fV(P_{s, a}, \hatV_{h+1})} + \frac{SB^2H}{N},
	\end{align*}
    where the last step is by $(a+b)_+\leq(a)_+ + (b)_+$, the definition of $V^{\hatpi}_{\barh}(\bars)$ , and
    \begin{align*}
        &\E_{\hatpi}\sbr{\left. \sum_{h=\barh}^H\rbr{ (\P_{s_h, a_h} - P_{s_h, a_h})\hatV_{h+1} }_+ \right|s_{\barh}=\bars}\\
        &\lesssim \E_{\hatpi}\sbr{\left. \sum_{h=\barh}^H\rbr{\sqrt{\frac{S\fV(P_{s_h,a_h}, \hatV_{h+1})}{N}} + \frac{SB}{N}} \right|s_{\barh}=\bars} \tag{\pref{lem:dPV} }\\
        &=\sum_{s,a,h}q_{\hatpi}(s,a,h)\rbr{\sqrt{\frac{S\fV(P_{s, a}, \hatV_{h+1})}{N}} + \frac{SB}{N}} \leq \sqrt{\frac{SH}{N}\sum_{s,a,h}q_{\hatpi}(s,a,h)\fV(P_{s,a}, \hatV_{h+1})} + \frac{SBH}{N}. \tag{Cauchy-Schwarz inequality and $\sum_{s,a,h}q_{\hatpi}(s,a,h)\leq H$}
    \end{align*}
	Solving a quadratic inequality w.r.t $\sum_{s, a, h}q_{\hatpi}(s, a, h)\fV(P_{s, a}, \hatV_{h+1})$, we have
	\begin{align*}
		\sum_{s, a, h}q_{\hatpi}(s, a, h)\fV(P_{s, a}, \hatV_{h+1}) \lesssim B^2 + BV^{\hatpi}_{\barh}(\bars) + \frac{SB^2H}{N}.
	\end{align*}
	Plugging this back, we have
	\begin{align*}
		\abr{V^{\hatpi}_{\barh}(\bars) - \hatV_{\barh}(\bars)} &\lesssim B\sqrt{\frac{SH}{N}} + \sqrt{\frac{BSHV^{\hatpi}_{\barh}(\bars)}{N}} + \frac{SBH}{N}\\
        &\lesssim B\sqrt{\frac{SH}{N}} + \sqrt{\frac{BSH|V^{\hatpi}_{\barh}(\bars) - \hatV_{\barh}(\bars)|}{N}} + \frac{SBH}{N}. \tag{$\hatV_{\barh}(\bars)\leq B$}
	\end{align*}
	Again solving a quadratic inequality w.r.t $|V^{\hatpi}_{\barh}(\bars) - \hatV_{\barh}(\bars)|$ and taking maximum over $(\bars, \barh)$ on the left-hand-side, we have
    \begin{equation}
        \label{eq:hatN}
        \norm{V^{\hatpi}_{\cdot} - \hatV_{\cdot}}_{\infty} \lesssim B\sqrt{\frac{SH}{N}} + \frac{SBH}{N}.
    \end{equation}
    Now define $\widehat{n}= \inf_n\{\text{right-hand-side of \pref{eq:hatN} }\leq \epsilon\text{ when }N=n\}$.
    We have $\widehat{n}\lesssim \frac{B^2SH}{\epsilon^2} + \frac{SBH}{\epsilon}$.
    This implies that when $\N(s, a)=N\geq \hatn$ for all $s, a$ and $\norm{\hatV_{\cdot}}_{\infty}\leq B$, we have $\norm{V^{\hatpi}_{\cdot}-\hatV_{\cdot}}_{\infty}\leq\epsilon$ with probability at least $1-4\delta$.
    The proof is then completed by treating $\hatn$ as a function with input $B$, $H$, $\epsilon$, $\delta$ and replace $\delta$ by $\delta/4$ in the arguments above.
\end{proof}

\begin{lemma}
    \label{lem:barPV to PV}
    For any $(s, a)\in\SA$ and $V\in [-B,B]^{\calS_+}$ for some $B>0$, with probability at least $1-\delta$, we have $\fV(\P_{s, a}, V) \lesssim \fV(P_{s, a}, V) + \frac{SB^2}{\Np(s, a)}$ for all $(s, a)$, where $\Np(s, a)=\max\{1,\N(s, a)\}$.
\end{lemma}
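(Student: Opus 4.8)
The plan is to reduce the empirical variance $\fV(\P_{s,a},V)$ to the true variance $\fV(P_{s,a},V)$ plus a deviation term that is controlled by coordinate-wise concentration of the empirical kernel. Fix $(s,a)$, write $\mu := P_{s,a}V$ and $W(s') := (V(s')-\mu)^2$; since $|V(s')|,|\mu|\le B$ we have $0\le W(s')\le 4B^2$. Using the variational characterization of the variance, $\fV(\P_{s,a},V)=\min_{c\in\fR}\E_{S\sim\P_{s,a}}[(V(S)-c)^2]\le \E_{S\sim\P_{s,a}}[(V(S)-\mu)^2]=\P_{s,a}W$, and adding and subtracting $P_{s,a}W=\fV(P_{s,a},V)$ gives $\fV(\P_{s,a},V)\le \fV(P_{s,a},V)+\abr{(\P_{s,a}-P_{s,a})W}$. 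So it suffices to show $\abr{(\P_{s,a}-P_{s,a})W}\lesssim \fV(P_{s,a},V)+\frac{SB^2}{\Np(s,a)}$.

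For the deviation term I would first dispose of the degenerate regime $\N(s,a)\le 4S$: there $\fV(\P_{s,a},V)\le \P_{s,a}[V^2]\le B^2$ (the total mass of $\P_{s,a}$ is at most $1$), and $\frac{4SB^2}{\Np(s,a)}\ge B^2$, so the claim is immediate. For $\N(s,a)>4S$, note that each count $\N(s,a,s')$ is a sum of $\N(s,a)$ i.i.d.\ Bernoulli$(P_{s,a}(s'))$ variables, so Bernstein's inequality together with a union bound over all $(s,a,s')\in\SA\times\calS_+$ (the logarithmic factor being absorbed into $\iota=\ln\tfrac{2SAHn}{\delta}$) yields, with probability at least $1-\delta$, $\abr{\P_{s,a}(s')-P_{s,a}(s')}\lesssim \sqrt{\tfrac{P_{s,a}(s')\,\iota}{\Np(s,a)}}+\tfrac{\iota}{\Np(s,a)}$ for all $(s,a,s')$ simultaneously (alternatively one may appeal to the vector Bernstein bound of \pref{lem:dPV}). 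Then $\abr{(\P_{s,a}-P_{s,a})W}\le \sum_{s'}\abr{\P_{s,a}(s')-P_{s,a}(s')}W(s')\lesssim \sqrt{\tfrac{\iota}{\Np(s,a)}}\sum_{s'}\sqrt{P_{s,a}(s')}\,W(s')+\tfrac{\iota}{\Np(s,a)}\sum_{s'}W(s')$.

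It remains to bound the two sums. For the second, $\sum_{s'}W(s')\le (S+1)\cdot 4B^2\lesssim SB^2$, so $\tfrac{\iota}{\Np(s,a)}\sum_{s'}W(s')\lesssim \tfrac{SB^2\iota}{\Np(s,a)}$. For the first, Cauchy--Schwarz gives $\sum_{s'}\sqrt{P_{s,a}(s')}\,W(s')=\sum_{s'}\sqrt{P_{s,a}(s')W(s')}\sqrt{W(s')}\le \big(\sum_{s'}P_{s,a}(s')W(s')\big)^{1/2}\big(\sum_{s'}W(s')\big)^{1/2}\lesssim \sqrt{\fV(P_{s,a},V)}\cdot B\sqrt{S}$, using $\sum_{s'}P_{s,a}(s')W(s')=P_{s,a}W=\fV(P_{s,a},V)$. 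Hence the first term is $\lesssim \sqrt{\tfrac{SB^2\iota}{\Np(s,a)}\cdot \fV(P_{s,a},V)}$, and by the AM--GM inequality $2\sqrt{xy}\le x+y$ this is at most $\fV(P_{s,a},V)+\tfrac{SB^2\iota}{\Np(s,a)}$. Combining, $\abr{(\P_{s,a}-P_{s,a})W}\lesssim \fV(P_{s,a},V)+\tfrac{SB^2\iota}{\Np(s,a)}$, and since $\lesssim$ absorbs the factor $\iota$, plugging this into the reduction of the first paragraph finishes the proof. The argument is essentially routine; the only points requiring care are the bookkeeping of the union-bound probability inside $\iota$ and the separate treatment of the small-count regime, where the stated bound is trivially true.
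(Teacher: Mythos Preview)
Your proof is correct and follows essentially the same route as the paper: both start from the variational inequality $\fV(\P_{s,a},V)\le \P_{s,a}(V-P_{s,a}V)^2=\fV(P_{s,a},V)+(\P_{s,a}-P_{s,a})W$ with $W=(V-\mu)^2$, then control the deviation term via Bernstein-type concentration plus Cauchy--Schwarz and finish with AM--GM. The only cosmetic differences are that the paper invokes \pref{lem:dPV} directly on $W$ (implicitly using $\fV(P_{s,a},W)\lesssim B^2\fV(P_{s,a},V)$, cf.\ \pref{lem:X2}) rather than unpacking the coordinate-wise bound, and it does not split off a small-count regime---your trivial handling of $\N(s,a)\le 4S$ is harmless but unnecessary, since the concentration bound already covers it.
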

\begin{proof}
    Note that
    \begin{align*}
        \fV(\P_{s, a}, V) &\leq \P_{s, a}(V - P_{s, a}V)^2 \tag{$\frac{\sum_ip_ix_i}{\sum_ip_i}=\argmin_z\sum_ip_i(x_i-z)^2$}\\
        &= \fV(P_{s, a}, V) + (\P_{s,a}-P_{s,a})(V - P_{s,a}V)^2\\
        &\lesssim \fV(P_{s, a}, V) +  B\sqrt{\frac{S\fV(P_{s, a}, V)}{\Np(s, a)}} + \frac{SB^2}{\Np(s, a)} \tag{\pref{lem:dPV}}\\
        &\lesssim \fV(P_{s, a}, V) + \frac{SB^2}{\Np(s, a)}. \tag{AM-GM inequality}
    \end{align*}
    This completes the proof.
\end{proof}

\begin{lemma}
    \label{lem:dPV}
    Given any value function $V\in[-B,B]^{\calS_+}$, with probability at least $1-\delta$, $|(P_{s,a}-\P_{s,a})V|\lesssim \sqrt{\frac{S\fV(P_{s,a},V)}{\Np(s, a)}} + \frac{SB}{\Np(s, a)}$ for any $(s, a)\in\SA$, where $\Np(s, a)=\max\{1,\N(s, a)\}$.
\end{lemma}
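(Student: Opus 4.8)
The plan is to reduce the statement to a scalar concentration inequality for each pair $(s,a)$, establish it for a fixed value function via Bernstein's inequality, and then upgrade it to a bound that holds simultaneously over all $V\in[-B,B]^{\calS_+}$ by a covering argument — this last step being what introduces the factor $S$. First I would dispose of the degenerate case $\N(s,a)=0$: then $\Np(s,a)=1$, $\P_{s,a}\equiv 0$, so $(P_{s,a}-\P_{s,a})V=P_{s,a}V$ with $\abr{P_{s,a}V}\le B\le \frac{SB}{\Np(s,a)}$, and the bound holds trivially. Henceforth fix $(s,a)$ with $n:=\N(s,a)\ge 1$ and let $s'_1,\dots,s'_n$ be the i.i.d.\ samples drawn from $P_{s,a}$ that define $\P_{s,a}$.

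For a fixed $V\in[-B,B]^{\calS_+}$ one has $(P_{s,a}-\P_{s,a})V=\frac1n\sum_{i=1}^n(P_{s,a}V-V(s'_i))$, a normalized sum of i.i.d.\ mean-zero terms of absolute value at most $2B$ and variance exactly $\fV(P_{s,a},V)$, so Bernstein's inequality gives $\abr{(P_{s,a}-\P_{s,a})V}\lesssim\sqrt{\fV(P_{s,a},V)\ln(1/\delta')/n}+B\ln(1/\delta')/n$ with probability at least $1-\delta'$. (If $n=\N(s,a)$ is itself data-dependent, as in \pref{alg:sh} where the number of drawn samples depends on the loop counter, one replaces this by the uniform-in-$n$ Bernstein bound obtained by peeling over dyadic values of $n$, at the cost of an extra $\ln n$ inside the logarithm.) To make this uniform over the data-dependent argument on which the lemma is actually invoked — the estimate $\hatV_{h+1}$ in \pref{alg:LCBVI} — I would take an $\eta$-net $\calV_\eta$ of $[-B,B]^{\calS_+}$ in $\norm{\cdot}_\infty$, for which $\ln\abr{\calV_\eta}\lesssim S\ln(B/\eta)$, and apply the displayed bound with $\delta'=\delta/(SA\abr{\calV_\eta})$, union-bounding over all $(s,a)\in\SA$ and all $v\in\calV_\eta$. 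Choosing $\eta\asymp B/n$ makes $\ln(1/\delta')\lesssim S\iota$ with $\iota\lesssim\ln(SAHn/\delta)$, yielding, on an event of probability at least $1-\delta$, the estimate $\abr{(P_{s,a}-\P_{s,a})v}\lesssim\sqrt{S\iota\,\fV(P_{s,a},v)/\Np(s,a)}+SB\iota/\Np(s,a)$ simultaneously for all $(s,a)$ and all net points $v$.

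It then remains to transfer this to an arbitrary $V\in[-B,B]^{\calS_+}$: picking $v\in\calV_\eta$ with $\norm{V-v}_\infty\le\eta$ gives $\abr{(P_{s,a}-\P_{s,a})(V-v)}\le 2\eta$, and since $X\mapsto\sqrt{\fV(P_{s,a},X)}$ is a seminorm with $\sqrt{\fV(P_{s,a},Y)}\le\norm{Y}_\infty$, one gets $\fV(P_{s,a},v)\le 2\fV(P_{s,a},V)+2\eta^2$; substituting these and noting that with $\eta\asymp B/n$ every $\eta$-dependent term is dominated by $SB/\Np(s,a)$ produces the claimed bound once $\iota$ is absorbed into $\lesssim$. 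The main obstacle is exactly this uniformization over value functions — the lemma is used on $\hatV_{h+1}$, which is built from the same samples defining $\P_{s,a}$, so a fixed-$V$ Bernstein bound does not apply directly, and one must run the net argument while checking that the $S\ln(\cdot)$ overhead collapses to a linear $S$ under $\lesssim$ and that the discretization error is genuinely lower order. The scalar Bernstein step and the $\N(s,a)=0$ boundary case are routine.
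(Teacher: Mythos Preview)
Your approach is correct but takes a genuinely different route from the paper. The paper does \emph{not} use a covering argument at all: instead it decomposes coordinate-wise via $\sum_{s'}(P_{s,a}(s')-\P_{s,a}(s'))=0$ to write $|(P_{s,a}-\P_{s,a})V|\le\sum_{s'}|P_{s,a}(s')-\P_{s,a}(s')|\,|V(s')-P_{s,a}V|$, applies \pref{lem:anytime bernstein} to each scalar $|P_{s,a}(s')-\P_{s,a}(s')|$ (these are concentrations of fixed Bernoulli means, independent of $V$), and then applies Cauchy--Schwarz over $s'$ to recover $\sqrt{S\fV(P_{s,a},V)/\Np(s,a)}$. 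The factor $S$ therefore arises from Cauchy--Schwarz across next states, not from a log-covering number.

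The payoff of the paper's route is cleanliness: it yields a single high-probability event on which $|P_{s,a}(s')-\P_{s,a}(s')|$ is controlled for all $(s,a,s')$, after which the stated bound holds \emph{deterministically} for every $V\in[-B,B]^{\calS_+}$, so the data-dependence of $\hatV_{h+1}$ is a non-issue. Your covering argument also handles the data-dependence correctly and yields the same bound up to logarithms, but it is heavier machinery for the tabular setting; its advantage would only show if one wanted to generalize beyond finite $\calS_+$ or avoid an $\ell_1$-type decomposition.
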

\begin{proof}
    For any $(s,a)\in\SA$, by \pref{lem:anytime bernstein}, with probability at least $1-\frac{\delta}{SA}$, we have
    \begin{align*}
        &|(P_{s,a}-\P_{s,a})V| \leq \sum_{s'}|P_{s,a}(s')-\P_{s,a}(s')||V(s')-P_{s,a}V| \tag{$\sum_{s'}(P_{s,a}(s')-\P_{s,a}(s'))=0$}\\
        &\lesssim \sum_{s'}\rbr{\sqrt{\frac{P_{s,a}(s')}{\Np(s,a)}} + \frac{1}{\Np(s,a)}}\abr{ V(s')-P_{s,a}V } \lesssim \sqrt{\frac{S\fV(P_{s,a}, V)}{\Np(s,a)}} + \frac{SB}{\Np(s, a)},
    \end{align*}
    where the last step is by Cauchy-Schwarz inequality.
    Taking a union bound over $(s, a)$ completes the proof.
\end{proof}

\subsection{\pfref{thm:bound.algo.generative}}
\label{app:bound.algo.generative}
\begin{proof}
    For each index $i$, define finite-horizon MDP $\calM_i=\calM_{H_i,c_{f,i}}$.
    Also define $V^{\pi}_{h,i}$ and $\optV_{h,i}$ as value function $V^{\pi}_h$ and optimal value function $\optV_h$ in $\calM_i$ respectively.
    We first assume that $T\geq D$ such that $B_{\star,T}<\infty$.
    In this case, we have $T^{\optpi_{T,s}}(s) \leq \min\{\Tc,T\}$ for any $s$ by $\optpi_{T,s}=\optpi$ when $T\geq\Tc\geq\T$.
    Note that when $B_i\in [20B_{\star,T}, 40B_{\star,T}]$, by $T^{\optpi_{T,s}}(s) \leq \min\{\Tc,T\}$ for any $s$, definition of $H_i$, and \pref{lem:hitting}, we have $\optV_{1,i}(s) \leq V^{\optpi_{T,s}}_{1,i}(s) \leq V^{\star,T}(s) + 0.6B_i\cdot\frac{\epsilon}{24B_i} \leq 0.1B_i$ and $\optV_{h,i}(s)\leq V^{\optpi_{T,s}}_{h,i}(s) \leq V^{\star,T}(s) + 0.6B_i \leq 0.7B_i$ for any $s\in\calS$ and $h\in[H]$, where applying stationary policy $\optpi_{T,s}$ in $\calM_i$ means executing $\optpi_{T,s}$ in each step $h\in[H]$.
    This implies $B_i \geq \norm{\optV_{\cdot,i}}_{\infty}$.
    Then according to \pref{line:estimate Vi} and by \pref{lem:opt}, with probability at least $1-\delta_i$, we have $\norm{V^i_1}_{\infty} \leq \norm{\optV_{1,i}}_{\infty}\leq 0.1B_i$, $\norm{V^i_{\cdot}}_{\infty} \leq \norm{\optV_{\cdot,i}}_{\infty}\leq 0.7B_i$, and the while loop should break (\pref{line:stop Vi}).
    Let $\istar$ be the value of $i$ when the while loop breaks, we thus have $B_{\istar}\leq 40B_{\star, T}$.
    Moreover, by \pref{lem:coarse bound} and the definition of $N_i$, with probability at least $1-\delta_{\istar}$, we have $V^{\pi^{\istar}}_{1,\istar}(s) \leq (V^{\istar}_1(s) + 0.1B_{\istar})\Ind\{s\neq g\} \leq c_{f,\istar}(s)$ for any $s\in\calS_+$.
    Thus by \pref{lem:extend}, we have $\optV(s) \leq V^{\pi^{\istar}}(s) \leq V^{\pi^{\istar}}_{1,\istar}(s) \leq V^{\istar}_1(s) + 0.1B_{\istar} \leq B_{\istar}$ for any $s\in\calS$.
    This gives $B_{\istar}\geq\B$.
    If $T<\Tc\leq\frac{B_{\istar}}{\cmin}$, then $H_{\istar}\lesssim\min\{\frac{B_{\istar}}{\cmin},T\}= \min\{\Tc,T\}$.
    Otherwise, $T\geq\Tc$, $B_{\star,T}=\B$, and $H_{\istar}\lesssim\min\{\frac{B_{\istar}}{\cmin},T\}\lesssim \min\{\Tc, T\}$ by $B_{\istar}\lesssim B_{\star,T}$.
    Therefore, $H_{\istar}\lesssim\min\{\Tc,T\}$.
    By \pref{lem:coarse bound}, $\norm{V^{\istar}_1}_{\infty}\leq 0.1B_{\istar}$, $\norm{V^{\istar}_{\cdot}}_{\infty}\leq 0.7B_{\istar}$ (breaking condition of the while loop), and the definition of $N_i$, we have with probability at least $1-\delta_{\istar}$, $\norm{\optV_{1,\istar}}_{\infty}\leq\norm{V^{\pi^{\istar}}_{1,\istar}}_{\infty}\leq \norm{V^{\istar}_1}_{\infty} + \norm{V^{\pi^{\istar}}_{1,\istar} - V^{\istar}_1}_{\infty} \leq 0.2B_{\istar}$ and $\norm{\optV_{\cdot,\istar}}_{\infty}\leq \norm{V^{\pi^{\istar}}_{\cdot,\istar}}_{\infty} \leq \norm{V^{\istar}_{\cdot}}_{\infty} + \norm{ V^{\pi^{\istar}}_{\cdot,\istar} - V^{\istar}_{\cdot} }_{\infty} \leq 0.8B_{\istar}$.
    Therefore, by \pref{lem:fine bound} and the definition of $N^{\star}_{\istar}$, we have with probability at least $1-\delta_{\istar}$, $\norm{V^{\hatpi}_{\cdot,\istar} - \optV_{\cdot,\istar}}_{\infty} \leq \frac{\epsilon}{2}$.
    Moreover, by \pref{lem:extend} and $V^{\hatpi}_{1,\istar}(s) \leq (\optV_{1,\istar}(s) + \frac{\epsilon}{2})\Ind\{s\neq g\} \leq (0.2B_{\istar} + \frac{1}{2})\Ind\{s\neq g\} \leq c_{f,\istar}(s)$ for all $s\in\calS_+$ since $B_{\istar}\geq 2$, we have $V^{\hatpi}(s)\leq V^{\hatpi}_{1,\istar}(s)$ for all $s$.
    Thus, $V^{\hatpi}(s) \leq V^{\hatpi}_{1,\istar}(s) \leq \optV_{1,\istar}(s) + \frac{\epsilon}{2} \leq V^{\star,T}(s) + 0.6B_{\istar}\frac{\epsilon}{24B_{\istar}} + \frac{\epsilon}{2}\leq V^{\star,T}(s) + \epsilon$ by the definition of $H_{\istar}$ and \pref{lem:hitting} for any $s\in\calS$.
    Finally, by the definition of $N_i$ and $N^{\star}_i$, the total number of samples spent is of order
    \begin{align*}
        &\tilO{SA(N_{\istar} + N^{\star}_{\istar})} = \tilO{\frac{H_{\istar}B_{\istar}^2SA}{\epsilon^2} + \frac{H_{\istar}B_{\istar}S^2A}{\epsilon} + H_{\istar}^2S^2A }\\ 
        &= \tilO{ \min\cbr{\Tc, T }\frac{B_{\star,T}^2SA}{\epsilon^2} + \min\cbr{\Tc, T}\frac{B_{\star,T}S^2A}{\epsilon} + \min\cbr{\Tc, T}^2S^2A }.
    \end{align*}
    Moreover, the bound above holds with probability at least $1-\delta$ since $20\sum_i\delta_i \leq \delta$.
    Now we consider the case $T<D$.
    From the arguments above we know that $B_i\leq 40B_{\star,T}\leq 40T$ for all $i\leq \istar$ if $T\geq D$.
    Thus, we can conclude that $T<D$ if $B_i>40T$ for some $i$ still in the while loop, and the total number samples used is of order $\tilo{SAN_i}=\tilo{S^2AT}$ by the definition of $N_i$.
    This completes the proof.
\end{proof}

\section{Omitted Details in \pref{sec:lower.bounds.BPI}}

\subsection{\pfref{thm:lower.bound.BPI}}
\label{app:lower.bound.BPI}
\begin{proof}
Let $N=\min\{\floor{\B}, S-3\}$.
Consider a family of MDPs $\{\calM_j\}_{j \in [A]^N}$ with $\calS=\calS_N\cup\calS'$, $\calA=[A]$, and $\sinit=s_0$, where $\calS_N=\{s_0,s_1,\ldots,s_N\}$ and $\calS'=\{s_b,s_c,\ldots\}$.
Clearly, $|\calS_N|=N+1$ and $|\calS'|=S-N-1$.
For each $\calM_j$, the cost is $1$ for every state-action pair in $\calS_N$;
at $s_0$, taking any action transits to $g$ with probability $1-p$, and transits to state $s_1$ with probability $p$, where $p=\frac{4\epsilon}{A^N}$;
at $s_i$ for $i\in[N]$, taking action $j_i$ transits to $s_{i+1}$ (define $s_{N+1}=g$), while taking any other actions transits to $s_1$;
at $s_b$, taking any action suffers cost $1$ and transits to $g$ with probability $1/\B$ (stays at $s_b$ otherwise);
at $s_c$, taking any action suffers cost $\cmin$ and directly transits to $g$;
at any of the rest of states in $\calS'$, taking any action suffers cost $1$ and directly transits to $g$.
Note that all of these MDPs have parameters $S$, $A$, $\B$, $\cmin$, and all these parameters are known to the learner.
Also note that states in $\calS'$ are unreachable and does not affect the learner.
We include them simply to show that we can obtain a hard instance for any values of $S$, $A$, $\B$, and $\cmin$ using dummy states.

Consider a learner that is $(\epsilon,\delta)$-correct with $\epsilon\in(0,\frac{1}{4})$, $\delta\in(0,\frac{1}{16})$, and sample complexity $\frac{1}{p}$ on $\{\calM_j\}_j$. 
Denote by $\calE_1$ the event that the first $\frac{1}{p}$ steps from $s_0$ all transit to $g$, $\calE_2$ the event that the learner uses at most $\frac{1}{p}$ samples, and define $\calE=\calE_1\cap\calE_2$.
Also denote by $P_j$ the distribution w.r.t $\calM_j$.
Note that event $\calE$ is agnostic to $j$, that is, for any interaction history (including the randomness of the learner) $\omega\in\calE$, we have $P_j(\omega)=P(\omega)$ for all $j$. Also note that for any $j$, we have $P_j(\calE_1)=(1-p)^{1/p}\geq \frac{1}{4}$ and $P_j(\calE_2)\geq 1 - \delta \geq \frac{7}{8}$. 
Thus, $P(\calE) = P_j(\calE) \geq \frac{1}{8}$.
Now we show that the failure probability of such a learner is large.
Note that when $\calE$ is true, the learner outputs $\hatpi$ before visiting $s_1$.
Moreover, the distribution of $\hatpi$ under $\calE$ is identical for all $\{\calM_j\}_j$, that is, $P_j(\hatpi|\calE)=P(\hatpi|\calE)$.
This is because $P_j(\omega)=P(\omega)$ for any interaction history $\omega\in\calE$, and $\hatpi$ is a function of $\omega$.
Denote by $\calE'$ the bad event that $\hatpi$ is not $\epsilon$-optimal.
We show that there exists $j$ such that $P_j(\calE'|\calE)$ is sufficiently large.

First, for any given $j$ and any policy $\pi$, define $x^{\pi}_j=\prod_{i=1}^N\pi(j_i|s_i)$ and $V^{\pi}_j$ as the value function of $\pi$ in $\calM_j$.
Since the learner transits to $s_1$ if it does not follow the ``correct'' action sequence, we have $V^{\pi}_j(s_1)\geq Nx^{\pi}_j + (1-x^{\pi}_j)(1 + V^{\pi}_j(s_1))$, which gives $V^{\pi}_j(s_1)\geq N + \frac{1}{x^{\pi}_j} -1$.
Moreover, if $\pi$ is $\epsilon$-optimal in $\calM_j$, then we have $V^{\pi}_j(s_0)\leq 1 + pN + \epsilon$.
Combining with $V^{\pi}_j(s_0) = 1 + pV^{\pi}_j(s_1)$ gives $x^{\pi}_j\geq\frac{1}{1+\epsilon/p}\geq\frac{p}{2\epsilon}$ by $\epsilon/p\geq 1$.
Also note that $\sum_jx^{\pi}_j=1$.
Therefore, each policy $\pi$ can be $\epsilon$-optimal for at most $\frac{2\epsilon}{p}$ MDPs in $\{\calM_j\}_j$.

Denote by $y^{\pi}(j)$ the indicator of whether policy $\pi$ is $\epsilon$-optimal in $\calM_j$.
We have $\sum_jy^{\pi}(j)\leq \frac{2\epsilon}{p}$ for any $\pi$.
Therefore, $\sum_j\int_{\hatpi}P(\hatpi|\calE)y^{\hatpi}(j)d\hatpi \leq \frac{2\epsilon}{p}$, which implies that there exist $\jstar$ such that $\int_{\hatpi}P(\hatpi|\calE)y^{\hatpi}(\jstar)d\hatpi \leq \frac{2\epsilon}{pA^N}$.
Therefore,
\begin{align*}
	P_{\jstar}(\calE'|\calE) = 1 - \int_{\hatpi}P(\hatpi|\calE)y^{\hatpi}(\jstar)d\hatpi \geq 1 - \frac{2\epsilon}{pA^N} = \frac{1}{2}.
\end{align*}
The overall failure probability in $\calM_{\jstar}$ is thus $P_{\jstar}(\calE')\geq P(\calE)P_{\jstar}(\calE'|\calE)\geq \frac{1}{16}$, a contradiction.
Therefore, for any $(\epsilon,\delta)$-correct learner, there exists $\calM\in\{\calM_j\}_j$ such that the learner has sample complexity more than $\frac{1}{p}=\lowo{A^N/\epsilon}$ on $\calM$.
The proof is then completed by the definition of $N$.
\end{proof}

\subsection{\pfref{thm:lower.bound.terminal}}
\label{app:lower.bound.terminal}
\begin{proof}
If $\min\cbr{\Tc, \uT}\frac{\B^2SA}{\epsilon^2}\ln\frac{1}{\delta}>\frac{J}{\epsilon}$, then we simply construct a full $(A-1)$-ary tree following that of \pref{thm:lower.bound.eps.T}, with a remaining action $\da$.
By $J\geq 3B$, we can simply ignore action $\da$ and the sample complexity lower bound is $\lowo{\min\cbr{\Tc, \uT}\frac{\B^2SA}{\epsilon^2}\ln\frac{1}{\delta} }=\lowo{\min\cbr{\frac{\B}{\cmin}, \uT}\frac{\B^2SA}{\epsilon^2}\ln\frac{1}{\delta} + \frac{J}{\epsilon} }$.

Otherwise, we have $\min\cbr{\Tc, \uT}\frac{\B^2SA}{\epsilon^2}\ln\frac{1}{\delta}\leq\frac{J}{\epsilon}$, and our construction follows that of \pref{thm:lower.bound.BPI} except that we have an action $\da$ at every state.
Consider a family of MDPs $\{\calM_j\}_{j \in [A-1]^N}$ with $\calS=\calS_N\cup\calS'$, $\calA=[A-1]\cup\{\da\}$, and $\sinit=s_0$, where $\calS_N=\{s_0,s_1,\ldots,s_N\}$ and $\calS'=\{s_b,s_c,\ldots\}$.
For each $\calM_j$, $c(s, a)=1$ for all $(s,a)\in\calS_N\times[A-1]$;
at $s_0$, taking any action in $[A-1]$ transits to $g$ with probability $1-p$, and transits to state $s_1$ with probability $p$, where $p=\frac{4\epsilon}{J}$;
at $s_i$ for $i\in[N]$, taking action $j_i$ transits to $s_{i+1}$ (define $s_{N+1}=g$), while taking any other actions in $[A-1]$ transits to $s_1$;
at $s_b$, taking any action in $[A-1]$ suffers cost $1$ and transits to $g$ with probability $1/B$ (stay at $s_b$ otherwise);
at $s_c$, taking any action in $[A-1]$ suffers cost $\cmin$ and directly transits to $g$;
at any of the rest of states in $\calS'$, taking any action in $[A-1]$ suffers cost $1$ and directly transits to $g$.
Note that all of these MDPs have parameters $S$, $A$, $\cmin$, $\uT$ and satisfy $\B=B$.
Moreover, all these parameters are known to the learner.

Consider a learner that is $(\epsilon,\delta)$-correct with $\epsilon\in(0,\frac{1}{4})$, $\delta\in(0,\frac{1}{16})$, and sample complexity $\frac{1}{p}$ on $\{\calM_j\}_j$. 
Denote by $\calE_1$ the event that the first $\frac{1}{p}$ steps from $(s_0, a)$ for some $a\in[A-1]$ all transit to $g$, $\calE_2$ the event that the learner uses at most $\frac{1}{p}$ samples, and define $\calE=\calE_1\cap\calE_2$.
Also denote by $P_j$ the distribution w.r.t $\calM_j$.
Note that event $\calE$ is agnostic to $j$, that is, for any interaction history (including the randomness of the learner) $\omega\in\calE$, we have $P_j(\omega)=P(\omega)$ for all $j$. Also note that for any $j$, we have $P_j(\calE_1)=(1-p)^{1/p}\geq \frac{1}{4}$ and $P_j(\calE_2)\geq 1 - \delta \geq \frac{7}{8}$. 
Thus, $P(\calE) = P_j(\calE) \geq \frac{1}{8}$.
Now we show that the failure probability of such a learner is large.
Note that when $\calE$ is true, the learner outputs $\hatpi$ before ever visiting $s_1$.
Moreover, the distribution of $\hatpi$ under $\calE$ is identical for all $\{\calM_j\}_j$, that is, $P_j(\hatpi|\calE)=P(\hatpi|\calE)$.
This is because $P_j(\omega)=P(\omega)$ for any interaction history $\omega\in\calE$, and $\hatpi$ is a function of $\omega$.
Denote by $\calE'$ the bad event that $\hatpi$ is not $\epsilon$-optimal.
We show that there exists $j$ such that $P_j(\calE'|\calE)$ is sufficiently large.

First, for any given $j$ and any policy $\pi$, define $x^{\pi}_j=\prod_{i=1}^N\pi(j_i|s_i)$, $V^{\pi}_j$ as the value function of $\pi$ in $\calM_j$, and $y^{\pi}=\pi(\da|s_1)$.
If $\pi$ is $\epsilon$-optimal in $\calM_j$, then we have $V^{\pi}_j(s_0)\leq 1 + pN + \epsilon < J$.
Combining with $V^{\pi}_j(s_0)\geq \min\{J, 1 + pV^{\pi}_j(s_1)\}$, we have $V^{\pi}_j(s_1)\leq N+\frac{\epsilon}{p} = N + \frac{J}{4} < J - 1$.
Moreover, the learner suffers cost $N$ if it follows the ``correct'' action sequence, and suffers at least cost $J > 1 + V^{\pi}_j(s_1)$ if it ever takes action $\da$.
Thus, we have $V^{\pi}_j(s_1)\geq Nx^{\pi}_j + Jy^{\pi} + (1-x^{\pi}_j - y^{\pi})(1 + V^{\pi}_j(s_1))$, which gives $V^{\pi}_j(s_1)\geq \frac{Nx^{\pi}_j+Jy^{\pi}}{x^{\pi}_j+y^{\pi}} + \frac{1}{x^{\pi}_j+y^{\pi}} -1$.
Combining with $V^{\pi}_j(s_1)\leq N+\frac{\epsilon}{p}$, we have
\begin{equation}
    \label{eq:xy}
    (1+\epsilon/p)(x^{\pi}_j+y^{\pi})\geq 1 + (J-N)y^{\pi}.
\end{equation}
Now note that $\sum_jx^{\pi}_j\leq 1-y^{\pi}$ for any $\pi$.
Define $\calX^{\pi}$ as the set of $j\in[A-1]^N$ where $\pi$ is $\epsilon$-optimal in $\calM_j$.
Summing over $j\in\calX^{\pi}$ for \pref{eq:xy}, we have $(1+\epsilon/p)(1-y^{\pi}+|\calX^{\pi}|y^{\pi})\geq |\calX^{\pi}|+(J-N)y^{\pi}|\calX^{\pi}|$. 
Reorganizing terms and assuming $|\calX^{\pi}|\geq 1$ gives
\begin{align*}
	1 - \frac{1+\epsilon/p}{|\calX^{\pi}|} \leq y^{\pi}\rbr{\rbr{1-\frac{1}{|\calX^{\pi}|}}\rbr{1 + \frac{\epsilon}{p}} - (J-N)} \leq y^{\pi}\rbr{1 + \frac{\epsilon}{p} - (J-N)}.
\end{align*}
Note that $1+\frac{\epsilon}{p}\leq J-N$ by $p\geq\frac{\epsilon}{J-N-1}$.
Thus, the right-hand-side $\leq 0$, which gives $|\calX^{\pi}|\leq 1+ \epsilon/p\leq J-N$.
Therefore, each policy can be $\epsilon$-optimal for at most $J-N$ MDPs in $\{\calM_j\}_j$.
Denote by $z^{\pi}(j)$ the indicator of whether $\pi$ is $\epsilon$-optimal in $\calM_j$.
We have $\sum_jz^{\pi}(j)\leq J - N$ for any policy $\pi$.
Therefore, $\sum_j\int_{\hatpi}P(\hatpi|\calE)z^{\hatpi}(j)d\hatpi \leq J-N$, which implies that there exist $\jstar$ such that $\int_{\hatpi}P(\hatpi|\calE)z^{\hatpi}(\jstar)d\hatpi \leq \frac{J-N}{(A-1)^N}$.
Therefore,
\begin{align*}
	P_{\jstar}(\calE'|\calE) = 1 - \int_{\hatpi}P(\hatpi|\calE)z^{\hatpi}(\jstar)d\hatpi \geq 1 - \frac{J-N}{(A-1)^N} \geq \frac{1}{2}.
\end{align*}
The overall failure probability in $\calM_{\jstar}$ is thus $P_{\jstar}(\calE')\geq P(\calE)P_{\jstar}(\calE'|\calE)\geq \frac{1}{16}$, a contradiction.
Therefore, for any $(\epsilon,\delta)$-correct learner, there exists $\calM\in\{\calM_j\}_j$ such that the learner has sample complexity more than $\frac{1}{p}=\lowo{J/\epsilon}=\lowo{\min\cbr{\Tc, \uT}\frac{\B^2SA}{\epsilon^2}\ln\frac{1}{\delta} + J/\epsilon}$ on $\calM$.
This completes the proof.
\end{proof}

\section{Omitted Details in \pref{sec:alg.BPI}}
\label{app:alg.BPI}
In this section, we present the proof of \pref{thm:algo.BPI}.

\paragraph{Notations} Denote by $\optV_{\cdot}$, $\optQ_{\cdot}$ the optimal value function and action-value function of $\calM_{H,c_f}$, where $H=\frac{32J}{\cmin}\ln\frac{8J}{\epsilon}$ and $c_f(s)=J\Ind\{s\neq g\}$.
Clearly, we have $\optV_h(s)=\argmin_a\optQ_h(s, a)$, $\optQ_h(s, a)=c(s, a)+P_{s,a}\optV_{h+1}$ for any $(s,a,h)\in\SA\times[H]$, and $\optV_{H+1}=c_f$.
The whole learning process is divided into episodes indexed by $m$.
Define $H_m=\inf_h\{s^m_{h+1}=g\text{ or }a^m_{h+1}=\da\}$ as the length of episode $m$ and $R_{[m',M']}=\sum_{m=m'}^{M'}(\sumhmp c^m_h - V^m_1(s^m_1))$ the regret w.r.t estimated value functions of episodes in $[m',M']$, where $c^m_h=c(s^m_h, a^m_h)$ and $c^m_{H_m+1}=c_f(s^m_{H_m+1})$.
We also write $R_{[1,M']}$ as $R_{M'}$.
Define $s^m_h=g$ for all $h>H_m+1$.
Denote by $\P^m$, $Q^m$, $V^m$, $b^m_h$, $\N^m_h$ the value of $\P$, $\hatQ$, $\hatV$, $b(s^m_h, a^m_h, V^m_{h+1})$, $\Np(s^m_h, a^m_h)$ from \pref{alg:LCBVI} executed in \pref{line:hat compute} in episode $m$.
For any episode $m$ in round $r$, define $\pi_m=\pi^r$.
Define $P^m_h=P_{s^m_h, a^m_h}$ and $\P^m_h=\P^m_{s^m_h, a^m_h}$.
Define $C^m=\sumhmp c^m_h$ and $C_{M'}=\summp C^m$.
Denote by $\lambda_r$ the value of $\lambda$ in round $r$ (computed in \pref{line:lambda}) and $B_m$ the value of $B$ in episode $m$.

\begin{lemma}
    \label{lem:BPI opt}
    With probability at least $1-\delta$, $Q^m_h(s, a)\leq \optQ_h(s, a)$ for any $m\geq 1$ and $(s, a, h)\in\SA\times[H]$.
\end{lemma}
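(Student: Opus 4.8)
The statement asserts that optimism holds \emph{simultaneously} for every invocation of \pref{alg:LCBVI} made in \pref{line:hat compute} of \pref{alg:BPI-SSP}, across all rounds and episodes, on a single event of probability at least $1-\delta$. There are two things to check. First, the hypothesis of \pref{lem:opt} — that the ``optimal value upper bound'' argument passed to the subroutine dominates $B^{\star}_H=\max_{h\in[H+1]}\norm{\optV_h}_{\infty}$ (the optimal value range of $\calM_{H,c_f}$) — must hold for every such call. Since every call in \pref{line:hat compute} uses the fixed value $2J$ for this argument, it suffices to show $B^{\star}_H\le J$. Second, the various calls use different confidence parameters $\frac{\delta}{2B}$ and are made with increasingly populated counters $\N$, and there may be an a priori unbounded number of them (indeed, every skip or failure round restarts the outer loop, frequently without changing $B$), so a per-call union bound is not available; we instead exploit that the concentration bounds underlying \pref{lem:opt} are \emph{anytime}.

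\textbf{Step 1: $B^{\star}_H\le J$.} In $\calM_{H,c_f}$ with $c_f(s)=J\Ind\{s\ne g\}$, consider the stationary policy that plays $\da$ at every stage. Under \pref{assum:terminal}, $c(s,\da)=J$ and $P(g\mid s,\da)=1$, so this policy has value exactly $J$ from every non-goal state at every stage $h\le H$, while $g$ is absorbing with zero cost and $\optV_{H+1}=c_f$. Hence $\norm{\optV_h}_{\infty}\le J$ for all $h\in[H+1]$, i.e.\ $B^{\star}_H\le J<2J$. Therefore the hypothesis $B\ge B^{\star}_H$ of \pref{lem:opt} is met by every call $\LCBVI(H,\N,2J,c_f,\cdot)$ in \pref{line:hat compute}, regardless of the current counter $\N$ or the confidence parameter used.

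\textbf{Step 2: a single good event.} Let $\calG$ be the event that for every $(s,a)\in\SA$, every $h\in[H]$, and every value $n\ge 1$ of the counter $\N(s,a)$, the concentration inequalities used in the proof of \pref{lem:opt} — the anytime Bernstein bound of \pref{lem:anytime bernstein} applied to the fixed function $\optV_{h+1}$, together with the monotonicity of the optimistic backup from \pref{lem:mvp} — hold with the logarithmic weight $\iota=\ln\frac{2SAHn}{\delta}$. Because \pref{lem:anytime bernstein} is uniform in $n$ and this $\iota$ already carries the $SAH$ factor, a union bound over $(s,a,h)$ gives $\mathbb{P}(\calG)\ge 1-\delta$. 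Now fix any episode $m$: the LCBVI call governing episode $m$ runs with confidence $\frac{\delta}{2B_m}\le\delta$, so its logarithmic weight is at least $\ln\frac{2SAHn}{\delta}$ and hence its bonus $b^m_h$ is at least the bonus used in the definition of $\calG$. On $\calG$, the backward induction in the proof of \pref{lem:opt} therefore applies verbatim — with $\optV$ as the fixed comparator and using $2J\ge B^{\star}_H$ from Step 1 — yielding $Q^m_h(s,a)\le\optQ_h(s,a)$ for all $(s,a,h)\in\SA\times[H]$. Since $m$ was arbitrary and $\calG$ does not depend on $m$, the claim follows with probability at least $1-\delta$.

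\textbf{Main obstacle.} The delicate point is precisely that one cannot afford a separate slice of failure probability for each LCBVI call, since their number is not controlled and many share the same $B$; the fix is to do no per-call accounting at all and instead lean on the anytime nature of \pref{lem:anytime bernstein}, together with the observation that shrinking the confidence parameter from $\delta$ to $\frac{\delta}{2B_m}$ only enlarges the bonus and hence only makes optimism easier. A secondary bookkeeping point is to define $\calG$ with exactly the comparator functions and inequalities invoked inside the proof of \pref{lem:opt}, so that that proof can be replayed unchanged.
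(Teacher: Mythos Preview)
Your proof is correct and follows the paper's one-line argument: check that the value-range input $2J$ dominates $B^{\star}_H$ (you obtain the sharper $B^{\star}_H\le J$ via the $\da$-policy, while the paper uses $\|\optV\|_\infty+\|c_f\|_\infty\le 2J$) and then invoke \pref{lem:opt}. Your Step~2 usefully makes explicit what the paper leaves implicit --- that the underlying anytime Bernstein event from \pref{lem:anytime bernstein} is a single event independent of the particular \LCBVI{} call, and that the per-call confidence $\tfrac{\delta}{2B_m}\le\delta$ only enlarges the bonus --- so uniformity over all $m\ge 1$ comes for free.
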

\begin{proof}
    This is simply by $\max_{h\in[H+1]}\norm{\optV_h}_{\infty} \leq \norm{\optV}_{\infty} + \norm{c_f}_{\infty} \leq 2J$ and \pref{lem:opt}.
\end{proof}
We are now ready to prove \pref{thm:algo.BPI}.
\begin{proof}[of \pref{thm:algo.BPI}]
Denote by $\calI_r$ the set of episodes in round $r$.
First note that in the last round (success round) $R$, we have with probability at least $1-35\delta$,
\begin{align*}
    V^{\hatpi}(\sinit) - \optV(\sinit) &\leq V^{\hatpi}(\sinit) - \optV_1(\sinit) + \frac{\epsilon}{4} \tag{definition of $H$ and \pref{lem:hitting}}\\
    &\leq \frac{1}{\lambda_R}\sum_{m\in\calI_R}\rbr{V^{\pi_m}_1(s^m_1) - C^m} + \frac{1}{\lambda_R}\sum_{m\in\calI_R}\rbr{C^m - V^m_1(s^m_1)} + \frac{\epsilon}{4} \tag{$\pi_m=\hatpi$ for $m\in\calI_R$, $s^m_1=\sinit$, and \pref{lem:BPI opt}}\\
    &\leq \frac{\epsilon}{4} + \frac{\epsilon}{2} + \frac{\epsilon}{4} \leq \epsilon,
\end{align*}
where the last step is by \pref{lem:N dev}, the definition of $\lambda$, the condition of success round, and $V^m_1=V^R$ for $m\in\calI_R$.
Thus, the output policy $\hatpi$ is indeed $\epsilon$-optimal. 
Now we bound the sample complexity of the algorithm.
Note that by \pref{lem:hitting} and the definition of $H$, for any $h\leq H/2+1$ and $s\in\calS$, we have $\optV(s) \leq \optV_h(s) \leq \optV(s) + \frac{\epsilon J}{2J}\leq \frac{3}{2}\B$.
Then by \pref{lem:BPI opt}, we have $B_{M'} \leq 3\B$ for any $M'\geq 1$ with probability at least $1-\delta$.
Note that there are at most $\bigo{SA\log_2(R\lambda_R)}$ skip rounds, and one success round.
Thus, it suffices to bound the number of failure rounds $R_f$.
In each failure round, we have at least $\frac{\lambda_R\epsilon}{2}$ regret by the condition in \pref{line:failure}.
Moreover, in each skip or success round $r$, we have with probability at least $1-35\delta$,
\begin{align*}
    \sum_{m\in\calI_r}\rbr{ C^m - V^m_1(s^m_1) } \geq \sum_{m\in\calI_r}\rbr{ C^m - V^{\pi_m}_1(s^m_1) } \gtrsim -\lambda_R\epsilon.
\end{align*}
where the last step is by \pref{eq:dev bound} and the value of $\lambda_r$.
Thus, the total regret is lower bounded as follows: $R_{M}\gtrsim (R_f-SA)\lambda_R\epsilon$.
Denote by $M$ the total number episodes.
By \pref{lem:reg BPI}, \pref{lem:fine bound BPI} and $M\leq R\lambda_R \lesssim (SA+R_f)\lambda_R$, we have with probability at least $1-38\delta$,
\begin{align*}
    R_M \lesssim B_M\sqrt{SAM} + J^2H^{1.25}S^2A \lesssim B_M\sqrt{SA(SA+R_f)\lambda_R} + J^2H^{1.25}S^2A.
\end{align*}
Solving a quadratic inequality w.r.t $R_f$, we have
\begin{align*}
    R_f \lesssim SA + \frac{B_MSA}{\epsilon\sqrt{\lambda_R}} + \frac{B_M^2SA}{\epsilon^2\lambda_R} + \frac{J^2H^{1.25}S^2A}{\lambda_R\epsilon}  \lesssim SA.
\end{align*}
Therefore, $M\leq R\lambda_R \lesssim \frac{\B^2SA}{\epsilon^2} + \frac{J^2H^2S^2A^2}{\epsilon}$ by $B_M\leq 3\B$.
Moreover, by \pref{eq:bound C} with $M'=M$ and $B_M\leq 3\B$, we know that $C_M\lesssim \B M + JS^2A$ and thus the total number of samples is of order
\begin{align*}
    \tilO{\frac{\B M}{\cmin} + \frac{JS^2A}{\cmin}} = \tilO{\frac{\Tc\B^2SA}{\epsilon^2} + \frac{\B J^4S^2A^2}{\cmin^3\epsilon}}.
\end{align*}
This completes the proof.
\end{proof}


\begin{lemma}
    \label{lem:N dev}
    There exists function $N_{\dev}(B, \epsilon, \delta) \lesssim \frac{B^2}{\epsilon^2} + \frac{(H^{1.5}+J^2H^{1.25})SA}{\epsilon}$, such that for any $m'\geq 1$ and $n\geq N_{\dev}(B, \epsilon, \delta)$, if $B\geq B_{m'+n-1}$, then $\frac{1}{n}\abr{\sum_{m=m'}^{m'+n-1}(V^{\pi_m}_1(s^m_1)-C^m)} \leq \epsilon$ with probability at least $1-35\delta$.
\end{lemma}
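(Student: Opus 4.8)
The plan is to treat $Y_m := C^m - V^{\pi_m}_1(s^m_1)$, for $m=m',\dots,m'+n-1$, as a martingale difference sequence and apply an anytime Bernstein inequality. First I would observe that, conditioned on the history $\mathcal{F}_{m-1}$ at the start of episode $m$, both the behavior policy $\pi_m$ and the initial state $s^m_1$ are determined, and that a completed episode is exactly a trajectory of $\pi_m$ in the finite-horizon MDP $\calM_{H,c_f}$: steps $1,\dots,H$ are played according to $\pi_m$ and, whenever the goal has not been reached by step $H$, the action $\da$ is taken, incurring precisely the terminal cost $c_f$. Hence $\E[C^m\mid\mathcal F_{m-1}]=V^{\pi_m}_1(s^m_1)$, so $\{Y_m\}$ is a martingale difference sequence and it suffices to bound $\bigl|\sum_m Y_m\bigr|$. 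The only technicality here is that episode indices are advanced only for completed episodes, so one must keep track of the rare skipped episodes; since a skip requires a visit count to double, within any round this happens at most $\tilO{SA}$ times, contributing only to lower-order terms.

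Applying \pref{lem:anytime bernstein} to $\{Y_m\}$ yields, up to logarithmic factors $\iota$, a bound $\bigl|\sum_m Y_m\bigr|\lesssim \sqrt{\iota\sum_m \fV_m}+\iota R$, where $\fV_m$ bounds the conditional variance $\mathrm{Var}[C^m\mid\mathcal F_{m-1}]$ and $R$ bounds $|Y_m|$. The crude choices $\fV_m\lesssim J^2$ and $R\lesssim H$ are not good enough, since they put $J$ in the leading term; the crux is to show the conditional variance is in fact $\tilO{B^2}$. To do this I would split each episode at the stage $\tau^m$ equal to the first time the goal is reached, capped at $H/2$, writing $C^m=C^m_{\le}+C^m_{>}$. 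For the head $C^m_{\le}\le H/2$, the law of total variance along the trajectory, via the telescoping identity already used in the proofs of \pref{lem:fine bound} and \pref{lem:coarse bound}, gives $\mathrm{Var}[C^m_{\le}\mid\mathcal F_{m-1}]\lesssim \max_{h\le H/2+1}\norm{V^{\pi_m}_h}_{\infty}\cdot V^{\pi_m}_1(s^m_1)$; invoking the algorithm's invariant $B\ge\max_{h\le H/2+1,s}V^r_h(s)$ in force whenever $\pi^r$ is executed, together with the value-difference bound \pref{lem:d hatV} and the fact that the empirical model is essentially frozen within a round, one gets $\max_{h\le H/2+1}\norm{V^{\pi_m}_h}_{\infty}\lesssim B$ and hence this contribution is $\lesssim B^2$. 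For the tail $C^m_{>}$, the point is that it is nonzero only when the episode survives more than $H/2$ steps; since the head follows a policy of per-stage value $\lesssim B$, each step costs at least $\cmin$, and $H/2=\Theta\!\bigl(\tfrac{J}{\cmin}\ln\tfrac{J}{\epsilon}\bigr)$ exceeds $B/\cmin$ by a factor of order $\ln\tfrac{J}{\epsilon}$, a layer-cake Markov argument over $\Theta\!\bigl(\ln\tfrac{J}{\epsilon}\bigr)$ blocks shows this survival event has probability $\mathrm{poly}(\epsilon/J)$, small enough that the tail's contribution to both the mean and the variance of $\sum_m Y_m$, bounded crudely using $C^m_{>}\le H+J$, is $\tilO{\epsilon}$ per episode. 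Combining the two parts gives $\sum_m\fV_m\lesssim nB^2+(\text{lower order})$, and solving $\tfrac1n\bigl|\sum_m Y_m\bigr|\le\epsilon$ yields $N_{\dev}(B,\epsilon,\delta)\lesssim \tfrac{B^2}{\epsilon^2}+\tfrac{(H^{1.5}+J^2H^{1.25})SA}{\epsilon}$, where the $SA$- and $H$-dependent terms come from the union bounds over state-action pairs in the variance and model-estimation steps above and from the crude range bound on the increments; a union bound over the $O(1)$ high-probability events invoked absorbs the failure probability into $35\delta$.

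I expect the main obstacle to be exactly this variance control, i.e.\ replacing the naive $J^2$ by $B^2$. This requires two ingredients that are not immediate: (i) the bound $\norm{V^{\pi_m}_h}_{\infty}\lesssim B$ on the early stages, which compares the \emph{true} value of the executed policy against the \emph{estimated} value that the algorithm's stopping rule guarantees, and therefore leans on the round structure (no count doubles within a round) and on concentration of the empirical transition model; and (ii) a strong, indeed polynomially small in $\epsilon/J$, bound on the probability that an episode lasts longer than $H/2$ steps, which is precisely where the logarithmically inflated choice $H=\tfrac{32J}{\cmin}\ln\tfrac{8J}{\epsilon}$ is used in an essential way. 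Once these are in place, the remaining steps, namely the martingale decomposition, the anytime Bernstein inequality, and the bookkeeping of the lower-order terms, are routine.
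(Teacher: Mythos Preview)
Your high-level plan—treat $C^m-V^{\pi_m}_1(s^m_1)$ as a martingale difference and replace the naive $J^2$ conditional-variance bound by $\tilo{B^2}$—is correct, and you have rightly flagged (i) and (ii) as the crux. But the mechanism you propose for (i) does not go through. The claim $\max_{h\le H/2+1}\|V^{\pi_m}_h\|_\infty\lesssim B$ cannot be extracted from \pref{lem:d hatV}: that lemma bounds $|V^{\hatpi}_h-\hatV_h|$ by an occupation-weighted sum whose terms scale like $1/\Np(s,a)$, and in early rounds some pairs have $\N(s,a)=0$, so the bound degenerates to order $BH$. More fundamentally, LCBVI is optimistic (\pref{lem:opt}), so the algorithm's invariant $V^r_h\le B$ only \emph{lower}-bounds $V^{\pi_m}_h$; nothing prevents $\pi_m$ from being a disastrous policy with $V^{\pi_m}_h=\Theta(J)$ at every stage when the model is poorly estimated. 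Your layer-cake Markov argument in (ii), which needs $T^{\pi_m}\lesssim B/\cmin$, inherits exactly the same gap.

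The paper never attempts a per-episode bound on $V^{\pi_m}_h$. In \pref{lem:dev BPI} it writes the Freedman variance as $\sum_{m,h}\fV(P^m_h,V^{\pi_m}_{h+1})$ and splits $V^{\pi_m}_{h+1}=V^m_{h+1}+(V^{\pi_m}_{h+1}-V^m_{h+1})$: the first piece involves only the \emph{estimated} values, which genuinely satisfy $\|V^m_h\|_\infty\le B$ for $h\le H/2{+}1$ by the algorithm's break condition, and is bounded cumulatively in \pref{lem:fine bound BPI}; the cross term $\sum_{m,h}\fV(P^m_h,V^{\pi_m}_{h+1}-V^m_{h+1})$ is handled by a separate cumulative telescoping that costs only lower-order terms. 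For the tail, \pref{lem:fine bound BPI} controls $\sum_m\Ind\{s^m_{H/2+1}\neq g\}$ not per-episode but through the cumulative suboptimality $\sum_m(V^{\pi_m}_1-\optV_1)$, which the regret analysis (\pref{lem:reg BPI}) already limits. All the $B$-versus-$J$ savings thus come from \emph{cumulative} control via the estimated values $V^m$, never from a pointwise bound on the true value of the executed policy.
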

\begin{proof}
    By \pref{lem:dev BPI} and \pref{lem:fine bound BPI}, for any $m',n\geq 1$, we have with probability at least $1-35\delta$,
    \begin{align}
        \abr{\frac{1}{n}\sum_{m=m'}^{m'+n-1}\rbr{V^{\pi_m}_1(s^m_1)-C^m}} \lesssim B_{m'+n-1}\sqrt{\frac{1}{n}} + \frac{(H^{1.5}+J^2H^{1.25})SA}{n}.\label{eq:dev bound}
    \end{align}
    Thus, for any given $B$, $\epsilon$, $\delta$, there exists $N\lesssim \frac{B^2}{\epsilon^2} + \frac{(H^{1.5}+J^2H^{1.25})SA}{\epsilon}$ such that if $n\geq N$ and $B\geq B_{m'+n-1}$, then $\frac{1}{n}\abr{\sum_{m=m'}^{m'+n-1}(V^{\pi_m}_1(s^m_1)-C^m)}\leq\epsilon$ with probability at least $1-35\delta$.
    Treating $N$ as function of $(B, \epsilon, \delta)$ completes the proof.
\end{proof}

Below we state the regret guarantee of LCBVI.
The main idea is to bound the regret w.r.t $B$ instead of $\B$, which is useful in deciding the number of episodes needed.

\begin{lemma}
    \label{lem:reg BPI}
    For any $m'\geq 1$, with probability at least $1-9\delta$, we have for all $M'\geq m'$ simultaneously $$R_{[m',M']}\lesssim \sqrt{SA\sum_{m=m'}^{M'}\sumhm\fV(P^m_h, V^m_{h+1})} + JS^2A.$$
\end{lemma}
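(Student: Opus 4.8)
The plan is to run the standard optimistic‑regret analysis for the lower‑confidence value iteration of \pref{alg:LCBVI}. Fix an episode $m$ in some round $r$. Because $a^m_h$ is greedy for $Q^m_h$ and $Q^m_h(s,a)=\rbr{c(s,a)+\P^m_{s,a}V^m_{h+1}-b(s,a,V^m_{h+1})}_+\ge c(s,a)+\P^m_{s,a}V^m_{h+1}-b(s,a,V^m_{h+1})$, we get $V^m_h(s^m_h)\ge c^m_h+\P^m_hV^m_{h+1}-b^m_h$. Substituting this into the telescoping identity $C^m-V^m_1(s^m_1)=\sumhm\rbr{c^m_h-V^m_h(s^m_h)+V^m_{h+1}(s^m_{h+1})}$ — which uses $V^m_{H_m+1}=c_f$, so that the terminal term cancels — and summing over $m\in\{m',\dots,M'\}$ yields $R_{[m',M']}\le \Xi_{\mathrm{mart}}+\Xi_{\mathrm{err}}+\Xi_{\mathrm{bon}}$, where $\Xi_{\mathrm{mart}}=\sum_{m=m'}^{M'}\sumhm\rbr{V^m_{h+1}(s^m_{h+1})-P^m_hV^m_{h+1}}$, $\Xi_{\mathrm{err}}=\sum_{m=m'}^{M'}\sumhm\rbr{P^m_h-\P^m_h}V^m_{h+1}$, and $\Xi_{\mathrm{bon}}=\sum_{m=m'}^{M'}\sumhm b^m_h$.

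Next I would bound the three pieces, all of them against $\sum_{m=m'}^{M'}\sumhm\fV(P^m_h,V^m_{h+1})$. First, by optimism (\pref{lem:BPI opt}) one has $\norm{V^m_\cdot}_\infty\le\norm{\optV_\cdot}_\infty\le 2J$, so every $V^m_{h+1}$ has range $\bigo J$. For $\Xi_{\mathrm{mart}}$, the summands form a martingale‑difference sequence with per‑step conditional variance $\fV(P^m_h,V^m_{h+1})$ and range $\bigo J$, so a time‑uniform Freedman/Bernstein inequality gives $\Xi_{\mathrm{mart}}\lesssim\sqrt{\sum_{m=m'}^{M'}\sumhm\fV(P^m_h,V^m_{h+1})}+J$ for all $M'\ge m'$ at once, with probability $1-\delta$. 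For $\Xi_{\mathrm{err}}$, I would apply the \emph{scalar} Bernstein bound of \pref{lem:anytime bernstein} to each $\rbr{P^m_h-\P^m_h}V^m_{h+1}$ and then \pref{lem:barPV to PV} to replace $\fV(\P^m_h,V^m_{h+1})$ by $\fV(P^m_h,V^m_{h+1})+\bigo{SJ^2/\N^m_h}$, giving (up to logs) $\abr{\rbr{P^m_h-\P^m_h}V^m_{h+1}}\lesssim\sqrt{\fV(P^m_h,V^m_{h+1})/\N^m_h}+SJ/\N^m_h$; $\Xi_{\mathrm{bon}}$ is handled in the same way starting from $b^m_h\lesssim\sqrt{\fV(\P^m_h,V^m_{h+1})/\N^m_h}+J/\N^m_h$. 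Summing over $(m,h)$, Cauchy--Schwarz together with the counting fact $\sum_{m\ge m'}\sumhm 1/\N^m_h\lesssim SA$ turns the leading pieces into $\sqrt{SA\sum\fV(P^m_h,V^m_{h+1})}$ and leaves a remainder of order $JS^2A$ (arising from $\sqrt{SA}\cdot\sqrt{\sum SJ^2/\N^m_h}$ in the variance transfer, and from $J\sum 1/\N^m_h$). Adding the three bounds and hiding logarithmic factors gives the claim, and a union bound over the (at most nine) concentration events produces the stated $1-9\delta$.

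Two points need care. The counting fact $\sum_{m\ge m'}\sumhm 1/\N^m_h\lesssim SA$ relies on the fact that \pref{alg:LCBVI} freezes $\N^m_h$ at the start of round $r$: since a round is aborted as soon as some visitation count reaches a power of two (\pref{line:update}), the frozen count stays within a factor of two of the live count, so $\sum 1/\N^m_h$ is comparable to $\sum_{s,a}\sum_{k\le\N(s,a)}1/k=\bigo{SA\log(\cdot)}$. The main obstacle, I expect, is keeping the leading term at $\sqrt{SA\sum\fV}$ rather than $\sqrt{S^2A\sum\fV}$: this forces $\Xi_{\mathrm{err}}$ to be controlled through the scalar Bernstein inequality of \pref{lem:anytime bernstein} instead of the $\ell_1$‑type bound of \pref{lem:dPV} (which would cost an extra $\sqrt S$), and requires the variance‑transfer corrections from \pref{lem:barPV to PV} to be tracked carefully so that, after Cauchy--Schwarz, they only feed the $JS^2A$ remainder and not the main term; one also has to make sure every concentration inequality invoked is time‑uniform so that the conclusion holds simultaneously for all $M'\ge m'$.
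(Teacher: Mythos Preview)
Your decomposition into $\Xi_{\mathrm{mart}}+\Xi_{\mathrm{err}}+\Xi_{\mathrm{bon}}$ is the same as the paper's, and your treatment of $\Xi_{\mathrm{mart}}$ and $\Xi_{\mathrm{bon}}$ matches theirs. There are, however, one minor and one substantive gap.

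\textbf{Minor.} The telescoping step relies on $V^m_{H_m+1}(s^m_{H_m+1})=c_f(s^m_{H_m+1})$, but this holds only when $H_m=H$ or $s^m_{H_m+1}=g$. In a skip round the episode is cut short with $H_m<H$ and $s^m_{H_m+1}\neq g$, and then $V^m_{H_m+1}(s^m_{H_m+1})\neq c^m_{H_m+1}$ in general. The paper absorbs this discrepancy into an additive $JSA$ (there are $\tilo{SA}$ skip rounds, each contributing at most $J$); you need the same correction.

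\textbf{Substantive.} Your plan for $\Xi_{\mathrm{err}}$ is to apply the scalar Bernstein bound (\pref{lem:anytime bernstein}) directly to $(P^m_h-\P^m_h)V^m_{h+1}$, and then transfer $\fV(\P^m_h,\cdot)$ to $\fV(P^m_h,\cdot)$ via \pref{lem:barPV to PV}. This does not work: \pref{lem:anytime bernstein} applies to i.i.d.\ samples $V(s'_t)$ for a \emph{fixed} function $V$, whereas $V^m_{h+1}$ is computed from the very samples used to build $\P^m$, so the sequence $V^m_{h+1}(s'_t)$ is not i.i.d.\ and the lemma is inapplicable. The paper resolves this by splitting
\[
(P^m_h-\P^m_h)V^m_{h+1}=(P^m_h-\P^m_h)\optV_{h+1}+(P^m_h-\P^m_h)(V^m_{h+1}-\optV_{h+1}),
\]
applying the scalar Bernstein only to the first piece (where $\optV_{h+1}$ is deterministic), and using the $\ell_1$-type bound (\pref{lem:dPV}) on the second piece, which does cost an extra $\sqrt S$. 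The reason this extra $\sqrt S$ does not contaminate the leading term is \pref{lem:sum dV}, which shows $\sum_{m,h}\fV(P^m_h,V^m_{h+1}-\optV_{h+1})\lesssim J\sqrt{SA\sum_{m,h}\fV(P^m_h,V^m_{h+1})}+J^2S^2A$; after Cauchy--Schwarz and AM--GM the $\sqrt{S^2A}$ term collapses into the $JS^2A$ remainder. This ``reference value'' split and the accompanying variance-difference lemma are the missing ingredients in your proposal.
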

\begin{proof}
    Without loss of generality, we assume $m'=1$.
    Note that
    \begin{align*}
        R_{M'} &=\summp\rbr{\sumhmp c^m_h - V^m_1(s^m_1)} \overset{\text{(i)}}{\lesssim} \summp\sumhm\rbr{c^m_h + V^m_{h+1}(s^m_{h+1}) - V^m_h(s^m_h)} + JSA\\
        &\leq \summp\sumhm\rbr{ (\Ind_{s^m_{h+1}} - P^m_h)V^m_{h+1} + (P^m_h-\P^m_h)V^m_{h+1} + b^m_h } + JSA \tag{definition of $V^m_h$},
    \end{align*}
    where (i) is by the fact that $V^m_{H_m+1}(s^m_{H_m+1})\neq c^m_{H_m+1}\leq J$ only if $m$ is the last episode of a skip round, and there are at most $\tilo{SA}$ skip rounds.
    We bound the three sums above separately.
    For the first sum, by \pref{lem:anytime freedman}, we have with probability at least $1-\delta$,
    \begin{align*}
        \summp\sumhm(\Ind_{s^m_{h+1}} - P^m_h)V^m_{h+1} \lesssim \sqrt{\summp\sumhm\fV(P^m_h, V^m_{h+1})} + J.
    \end{align*}
    For the second sum, we have with probability at least $1-7\delta$,
    \begin{align*}
        &\summp\sumhm(P^m_h-\P^m_h)V^m_{h+1} = \summp\sumhm(P^m_h-\P^m_h)\optV_{h+1} + \summp\sumhm(P^m_h-\P^m_h)(V^m_{h+1}-\optV_{h+1})\\
        &\lesssim \summp\sumhm\rbr{\sqrt{\frac{\fV(P^m_h,\optV_{h+1})}{\N^m_h}} + \sqrt{\frac{S\fV(P^m_h, V^m_{h+1}-\optV_{h+1})}{\N^m_h}} + \frac{SJ}{\N^m_h} } \tag{\pref{lem:anytime bernstein} and \pref{lem:dPV}}\\
        &\lesssim \sqrt{SA\summp\sumhm\fV(P^m_h, \optV_{h+1})} + \sqrt{S^2A\summp\sumhm\fV(P^m_h, V^m_{h+1} - \optV_{h+1})} + JS^2A. \tag{Cauchy-Schwarz inequality and \pref{lem:sum N}}\\
        &\lesssim \sqrt{SA\summp\sumhm\fV(P^m_h, V^m_{h+1})} + \sqrt{S^2A\summp\sumhm\fV(P^m_h, V^m_{h+1}-\optV_{h+1})} + JS^2A. \tag{$\var[X+Y]\leq\var[X]+\var[Y]$}\\
        &\lesssim \sqrt{SA\summp\sumhm\fV(P^m_h, V^m_{h+1})} + JS^2A. \tag{\pref{lem:sum dV} and AM-GM inequality}
    \end{align*}
    Plugging these back and applying \pref{lem:sum b} to bound $\summp\sumhm b^m_h$ completes the proof.
\end{proof}

\begin{lemma}
    \label{lem:sum dV}
    For any $m'\geq 1$, with probability at least $1-5\delta$, we have
    $\sum_{m=m'}^{M'}\sumhm\fV(P^m_h, \optV_{h+1} - V^m_{h+1})\lesssim J\sqrt{SA\sum_{m=m'}^{M'}\sumhm \fV(P^m_h, V^m_{h+1})} + J^2S^2A$ for all $M'\geq m'$.
\end{lemma}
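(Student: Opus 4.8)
The plan is to mirror the recursive-telescoping argument used in \pref{lem:reg BPI} and \pref{lem:fine bound}, but applied to the gap value function $g^m_h \defeq \optV_h - V^m_h$ (which is nonnegative by \pref{lem:BPI opt}). First I would note that $\fV(P^m_h, \optV_{h+1}-V^m_{h+1}) = P^m_h(g^m_{h+1})^2 - (P^m_h g^m_{h+1})^2 \leq P^m_h (g^m_{h+1})^2$, so it suffices to control $\sum_{m=m'}^{M'}\sumhm P^m_h (g^m_{h+1})^2$. The standard trick is the identity $P^m_h(g^m_{h+1})^2 = (g^m_h(s^m_h))^2 - \big((g^m_h(s^m_h))^2 - P^m_h(g^m_{h+1})^2\big)$ together with the telescoping $\sumhm\big((g^m_{h+1}(s^m_{h+1}))^2 - (g^m_h(s^m_h))^2\big)$, whose expectation is controlled via a martingale/Freedman argument (\pref{lem:anytime freedman}) at the price of lower-order terms. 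Since $g^m_h \in [0, 2J]$, we get $(a^2-b^2) \leq (a+b)(a-b)_+ \leq 4J(a-b)_+$ for $a=g^m_h(s^m_h)$, $b$ something of the same order, reducing the quadratic sum to $J$ times a \emph{linear} sum of one-step gap increments $\big(g^m_h(s^m_h) - P^m_h g^m_{h+1}\big)_+$.

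Second, I would expand that linear gap increment using the definitions: $g^m_h(s^m_h) = \optV_h(s^m_h) - V^m_h(s^m_h)$, and since $V^m_h(s^m_h) = Q^m_h(s^m_h,a^m_h) = (c^m_h + \P^m_h V^m_{h+1} - b^m_h)_+$ while $\optV_h(s^m_h) \leq \optQ_h(s^m_h,a^m_h) = c^m_h + P^m_h\optV_{h+1}$, one obtains (using $(x)_+ \geq x$ and $(a)_+-(b)_+\le (a-b)_+$)
\begin{align*}
\big(g^m_h(s^m_h) - P^m_h g^m_{h+1}\big)_+ &\lesssim \big|(P^m_h - \P^m_h)V^m_{h+1}\big| + b^m_h.
\end{align*}
Summing over $h$ and $m$, the term $\sum b^m_h$ is bounded by \pref{lem:sum b} (giving a $\sqrt{SA\sum\fV(P^m_h,V^m_{h+1})}$ term plus a $JS^2A$ lower-order term), and $\sum|(P^m_h-\P^m_h)V^m_{h+1}|$ is bounded by \pref{lem:dPV} plus Cauchy–Schwarz and the pigeonhole bound \pref{lem:sum N} on $\sum 1/\N^m_h$, again yielding $\sqrt{SA\sum\fV(P^m_h,V^m_{h+1})} + JS^2A$ (after using \pref{lem:barPV to PV} to pass between $\fV(\P^m_h,\cdot)$ and $\fV(P^m_h,\cdot)$). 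Multiplying the whole linear-sum bound by the $J$ factor picked up in the previous paragraph produces exactly the claimed $J\sqrt{SA\sum\fV(P^m_h,V^m_{h+1})} + J^2 S^2 A$; the ``for all $M'\geq m'$ simultaneously'' uniformity comes from the anytime versions of the concentration inequalities, and the failure probability $5\delta$ is the union over the handful of concentration events invoked (Freedman once, Bernstein once, \pref{lem:dPV} once, plus the \pref{lem:barPV to PV} and \pref{lem:BPI opt} events).

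The main obstacle I anticipate is the self-referential / circular structure: the bound on $\sum \fV(P^m_h, \optV_{h+1}-V^m_{h+1})$ is stated in terms of $\sqrt{SA\sum\fV(P^m_h, V^m_{h+1})}$, but in \pref{lem:reg BPI} that latter quantity is in turn controlled using \pref{lem:sum dV} itself, so one must be careful that the AM–GM split $\sqrt{S^2 A \cdot X} \leq \tfrac12 X/J' + \tfrac12 S^2 A J'$ is arranged so the $X = \sum\fV(P^m_h,\optV_{h+1}-V^m_{h+1})$ term can be absorbed into the left-hand side rather than creating an unsolvable loop. Concretely, I would keep \pref{lem:sum dV} purely in terms of $\sum\fV(P^m_h, V^m_{h+1})$ (an "observed" variance quantity, not the gap), solve the resulting quadratic inequality in $X$ cleanly here, and only later — in \pref{lem:reg BPI} — feed this back in, so that the circularity is broken by the fact that $\sum\fV(P^m_h,V^m_{h+1})$ is ultimately bounded by $\optV$-variances (which telescope to $\approx J V^m_1$) plus the gap variance bounded by this lemma. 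Getting the constants and the order of the two quadratic-inequality resolutions right is the only real subtlety; everything else is the routine Bernstein/Freedman + pigeonhole machinery already deployed in \pref{app:LCBVI}.
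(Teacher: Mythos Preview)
Your skeleton is the same as the paper's: reduce the variance sum via \pref{lem:sum var} (telescope + Freedman on $g^m_h = \optV_h - V^m_h$), use $a^2-b^2 \le (a+b)(a-b)_+$ to pull out a factor of $J$, and then control the one-step gap increment $(g^m_h(s^m_h)-P^m_h g^m_{h+1})_+ \le (c^m_h + P^m_h V^m_{h+1} - V^m_h(s^m_h))_+$. Two points, one cosmetic and one substantive.

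First, the opening move ``$\fV \le P^m_h(g^m_{h+1})^2$, so it suffices to bound $\sum P^m_h(g^m_{h+1})^2$'' does not work as written: once you drop the $(P^m_h g^m_{h+1})^2$ term the subsequent telescope leaves you with $\sum (g^m_h(s^m_h))^2$ on the right, which is the same order as what you started with. You really need the full variance decomposition (i.e.\ \pref{lem:sum var} applied to $g^m_h$), which keeps the $-(P^m_h g^m_{h+1})^2$ term and produces the non-circular $\sum\big((g^m_h(s^m_h))^2-(P^m_h g^m_{h+1})^2\big)$ instead. This is easily repaired.

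Second, and this is the real gap: bounding $\sum|(P^m_h-\P^m_h)V^m_{h+1}|$ directly by \pref{lem:dPV} does \emph{not} give $\sqrt{SA\sum\fV(P^m_h,V^m_{h+1})}$; because $V^m_{h+1}$ is data-dependent you must use the entrywise bound, which carries an extra $\sqrt{S}$, and after Cauchy--Schwarz with \pref{lem:sum N} you get $\sqrt{S^2A\sum\fV(P^m_h,V^m_{h+1})}$. Your route therefore proves only $J\sqrt{S^2A\,\cdot}$, not the stated $J\sqrt{SA\,\cdot}$. The paper recovers the missing $\sqrt{S}$ precisely by embracing the self-reference you tried to avoid: it splits $(P^m_h-\P^m_h)V^m_{h+1}$ into $(P^m_h-\P^m_h)\optV_{h+1}$ (a \emph{fixed} function, so \pref{lem:anytime bernstein} applies without the $\sqrt{S}$) plus $(P^m_h-\P^m_h)(V^m_{h+1}-\optV_{h+1})$ (random, so \pref{lem:dPV} with $\sqrt{S}$), obtaining
\[
J\sqrt{SA\!\sum\fV(P^m_h,V^m_{h+1})}\;+\;J\sqrt{S^2A\!\sum\fV(P^m_h,V^m_{h+1}-\optV_{h+1})}\;+\;J^2S^2A,
\]
and then solves the resulting quadratic inequality in $\sum\fV(P^m_h,\optV_{h+1}-V^m_{h+1})$ to absorb the self-referential middle term. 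So the ``circular structure'' you flagged as the main obstacle is not a bug to be routed around --- it is the mechanism that buys the sharp $\sqrt{SA}$ dependence in the statement.
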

\begin{proof}
    Without loss of generality, we assume $m'=1$.
    First note that with probability at least $1-\delta$,
    \begin{align*}
        &\summp\sumhm\rbr{(\optV_h(s^m_h) - V^m_h(s^m_h))^2 - (P^m_h(\optV_{h+1}-V^m_{h+1}))^2}\\
        &\lesssim J\summp\sumhm(\optV_h(s^m_h) - V^m_h(s^m_h) - P^m_h\optV_{h+1} + P^m_hV^m_{h+1})_+ \tag{\pref{lem:BPI opt} and $a^2-b^2\leq (a+b)(a-b)_+$ for $a,b>0$}\\
        &\lesssim J\summp\sumhm(c^m_h + P^m_hV^m_{h+1} - V^m_h(s^m_h))_+ \tag{$\optV_h(s^m_h)\leq \optQ_h(s^m_h, a^m_h) = c^m_h + P^m_h\optV_{h+1}$}.
    \end{align*}
    By the definition of $V^m_h$ and $(a)_+-(b)_+\leq(a-b)_+$, with probability at least $1-3\delta$, we continue with
    \begin{align*}
        &\lesssim J\summp\sumhm( (P^m_h-\P^m_h)\optV_{h+1} + (P^m_h-\P^m_h)(V^m_{h+1} - \optV_{h+1}) + b^m_h )_+\\
        &\lesssim J\summp\sumhm\rbr{\sqrt{\frac{\fV(P^m_h, \optV_{h+1})}{\N^m_h}} + \sqrt{\frac{S\fV(P^m_h, V^m_{h+1}-\optV_{h+1})}{\N^m_h}} + \frac{SJ}{\N^m_h} + b^m_h} \tag{\pref{lem:anytime bernstein} and Cauchy-Schwarz inequality}\\
        &\lesssim J\rbr{\sqrt{SA\summp\sumhm\fV(P^m_h, V^m_{h+1})} + \sqrt{S^2A\summp\sumhm\fV(P^m_h, V^m_{h+1} - \optV_{h+1})} } + J^2S^2A,
    \end{align*}
    where in the last step we apply $\var[X+Y]\leq\var[X]+\var[Y]$, Cauchy-Schwarz inequality, \pref{lem:sum N} and \pref{lem:sum b}.
    Then applying \pref{lem:sum var} with $\norm{\optV_{h+1}-V^m_h}_{\infty}\leq J$ and solving a quadratic inequality w.r.t $\summp\sumhm\fV(P^m_h, \optV_{h+1}-V^m_{h+1})$, we have with probability at least $1-\delta$,
    \begin{align*}
        &\summp\sumhm\fV(P^m_h, \optV_{h+1} - V^m_{h+1})\\
        &\lesssim \summp (\optV_{H_m+1}(s^m_{H_m+1})-V^m_{H_m+1}(s^m_{H_m+1}))^2 + J\sqrt{SA\summp\sumhm \fV(P^m_h, V^m_{h+1})} + J^2S^2A.
    \end{align*}
    Now note that $\summp(\optV_{H_m+1}(s^m_{H_m+1})-V^m_{H_m+1}(s^m_{H_m+1}))^2 \lesssim J^2SA$ since $\optV_{H_m+1}(s^m_{H_m+1})\neq V^m_{H_m+1}(s^m_{H_m+1})$ only when $m$ is the last episode of a skip round, and the number of skip rounds is of order $\tilo{SA}$.
    Plugging this back completes the proof.
\end{proof}

\begin{lemma}
    \label{lem:dev BPI}
    For any $m'\geq 1$, with probability at least $1-6\delta$, we have for all $M'\geq m'$ simultaneously,
    $$\abr{\sum_{m=m'}^{M'}(C^m - V^{\pi_m}_1(s^m_1))} \lesssim \sqrt{\sum_{m=m'}^{M'}\sumhm\fV(P^m_h, V^m_{h+1})} + H^{1.5}SA.$$
\end{lemma}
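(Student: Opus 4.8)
The plan is to expand $C^m - V^{\pi_m}_1(s^m_1)$ along the realized trajectory of episode $m$ into a martingale-difference sum plus a tiny boundary correction, concentrate the martingale via the anytime Freedman inequality (\pref{lem:anytime freedman}), and then trade the resulting empirical variance $\fV(P^m_h, V^{\pi_m}_{h+1})$ (taken with respect to the \emph{true} value of the behavior policy) for $\fV(P^m_h, V^m_{h+1})$ (with respect to the optimistic estimate), by re-running the variance machinery already developed for \pref{lem:reg BPI}.

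First I would use $V^{\pi_m}_h(s^m_h) = c^m_h + P^m_h V^{\pi_m}_{h+1}$ for $h\le H_m$ (valid because the trajectory follows $\pi_m$ up to step $H_m$) and telescope, obtaining
\begin{align*}
C^m - V^{\pi_m}_1(s^m_1) = \sumhm (\Ind_{s^m_{h+1}} - P^m_h) V^{\pi_m}_{h+1} + \big(c_f(s^m_{H_m+1}) - V^{\pi_m}_{H_m+1}(s^m_{H_m+1})\big).
\end{align*}
The boundary term is $0$ whenever the episode terminates at $g$ (both summands vanish) or runs the full horizon (then $V^{\pi_m}_{H+1}=c_f$ by definition); it is nonzero only for the last episode of a skip round, of which there are $\tilo{SA}$ as in \pref{lem:reg BPI}, and there it is bounded in absolute value by $\norm{V^{\pi_m}_\cdot}_\infty \le H+J \lesssim H$ (using $J\le H$, which holds by the choice of $H$). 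Hence $\sum_{m=m'}^{M'}\abr{c_f(s^m_{H_m+1}) - V^{\pi_m}_{H_m+1}(s^m_{H_m+1})} \lesssim HSA$. The term $\sum_{m=m'}^{M'}\sumhm (\Ind_{s^m_{h+1}} - P^m_h) V^{\pi_m}_{h+1}$ is a martingale-difference sum (each $V^{\pi_m}_{h+1}$ is fixed at the start of the round) with conditional variance $\fV(P^m_h, V^{\pi_m}_{h+1})$ and increments of magnitude $\lesssim H$, so \pref{lem:anytime freedman} gives, with probability at least $1-\delta$ and simultaneously for all $M'\ge m'$,
\begin{align*}
\abr{\sum_{m=m'}^{M'}\sumhm (\Ind_{s^m_{h+1}} - P^m_h) V^{\pi_m}_{h+1}} \lesssim \sqrt{\sum_{m=m'}^{M'}\sumhm \fV(P^m_h, V^{\pi_m}_{h+1})} + H.
\end{align*}

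It remains to pass from $\fV(P^m_h,V^{\pi_m}_{h+1})$ to $\fV(P^m_h,V^m_{h+1})$. Using $\fV(P,X)\le 2\fV(P,Y)+2\fV(P,X-Y)$ with $X=V^{\pi_m}_{h+1}$ and $Y=V^m_{h+1}$, one gets $\sqrt{\sum_{m,h}\fV(P^m_h,V^{\pi_m}_{h+1})} \lesssim \sqrt{\sum_{m,h}\fV(P^m_h,V^m_{h+1})} + \sqrt{\sum_{m,h}\fV(P^m_h,V^{\pi_m}_{h+1}-V^m_{h+1})}$. The last sum is handled exactly as $\sum_{m,h}\fV(P^m_h,\optV_{h+1}-V^m_{h+1})$ is in \pref{lem:sum dV}: indeed $0\le V^{\pi_m}_{h+1}-V^m_{h+1}$ by \pref{lem:BPI opt}, the Bellman equation $V^{\pi_m}_h(s^m_h)=c^m_h+P^m_h V^{\pi_m}_{h+1}$ holds along the trajectory, and $V^m_h(s^m_h)=(c^m_h+\P^m_h V^m_{h+1}-b^m_h)_+$ since $a^m_h$ is the LCBVI-greedy action, so the same expansion (via $a^2-b^2\le(a+b)(a-b)_+$, \pref{lem:anytime bernstein}, \pref{lem:dPV}, \pref{lem:sum N}, \pref{lem:sum b}, \pref{lem:sum var}, and solving a quadratic) yields $\sum_{m,h}\fV(P^m_h,V^{\pi_m}_{h+1}-V^m_{h+1})\lesssim J\sqrt{SA\sum_{m,h}\fV(P^m_h,V^m_{h+1})}+J^2S^2A$, at a cost of $5\delta$ of failure probability. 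Taking square roots and applying AM--GM, the cross term is absorbed into $\tfrac12\sqrt{\sum_{m,h}\fV(P^m_h,V^m_{h+1})}+\tfrac12 J\sqrt{SA}$, while $\sqrt{J^2S^2A}=JS\sqrt A$. Collecting all lower-order contributions --- $HSA$ (boundary), $H$ (Freedman width), and $JS\sqrt A$, $J\sqrt{SA}$ from the conversion --- and invoking $J\le H$ and $H\ge1$, each is $\lesssim H^{1.5}SA$, which is the claimed bound; a union bound over the Freedman event and the $5\delta$ worth of events in the variance step gives the probability $1-6\delta$.

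The main obstacle is the variance-conversion step: the martingale intrinsically produces $\fV(P^m_h,V^{\pi_m}_{h+1})$, but $V^{\pi_m}$ can be $\Theta(H)$, so keeping it would spoil the $B$-dependence that \pref{lem:N dev} needs; one must instead pay a term involving only the cumulative estimation error, which is precisely what \pref{lem:sum dV} bounds self-referentially (the same subtlety appears in the second-sum analysis of \pref{lem:reg BPI}). The only other point requiring care is the bookkeeping of the $\tilo{SA}$ skip-round boundary episodes and checking that every residual term is indeed dominated by $H^{1.5}SA$.
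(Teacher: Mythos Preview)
Your approach is correct and, in the first half, actually cleaner than the paper's. The paper applies Freedman at the \emph{episode} level to $\tilC^m-V^{\pi_m}_1(s^m_1)$, where $\tilC^m$ is the cost of a hypothetical full-horizon trajectory; it then invokes the law-of-total-variance identity $\var_{\pi_m}[\tilC^m]=\E_{\pi_m}[\sum_h\fV(\tilP^m_h,V^{\pi_m}_{h+1})]$, trades the conditional expectation for the realized sum via \pref{lem:e2r}, and finally corrects for the difference between $\tilP^m_h$ and $P^m_h$ on skip rounds. Your step-level telescoping followed by Freedman on $(\Ind_{s^m_{h+1}}-P^m_h)V^{\pi_m}_{h+1}$ reaches the same intermediate bound $\sqrt{\sum_{m,h}\fV(P^m_h,V^{\pi_m}_{h+1})}+H^{1.5}SA$ while bypassing both \pref{lem:e2r} and the hypothetical-trajectory bookkeeping; the random upper limit $H_m$ is harmless because $\Ind\{h\le H_m\}$ is predictable, and $V^{\pi_m}_{h+1}$ is indeed fixed at the start of the round.

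One imprecision in the variance-conversion step: you cannot port the bound of \pref{lem:sum dV} with the prefactor $J$ intact. There the $J$ comes from $\norm{\optV_\cdot-V^m_\cdot}_\infty\lesssim J$, but here $\norm{V^{\pi_m}_\cdot-V^m_\cdot}_\infty\lesssim H$ (since $V^{\pi_m}_h\le H-h+1+J\lesssim H$), so the correct conclusion is $\sum_{m,h}\fV(P^m_h,V^{\pi_m}_{h+1}-V^m_{h+1})\lesssim H\sqrt{S^2A\sum_{m,h}\fV(P^m_h,V^m_{h+1})}+H^2S^2A$, as in the paper's own proof. The argument also simplifies slightly relative to \pref{lem:sum dV}: because $V^{\pi_m}_h(s^m_h)=c^m_h+P^m_hV^{\pi_m}_{h+1}$ holds with \emph{equality} along the realized trajectory, the $V^{\pi_m}$ terms cancel and you only need $(P^m_h-\P^m_h)V^m_{h+1}$ via \pref{lem:dPV}, without the separate $(P^m_h-\P^m_h)\optV_{h+1}$ split. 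After AM--GM the residual becomes $HS\sqrt{A}\le H^{1.5}SA$, so your final bound is unaffected by this slip.
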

\begin{proof}
    Without loss of generality, we assume $m'=1$.
    Denote by $\{(\tils^m_h, \tila^m_h, \tils^m_{h+1})\}_{h=1}^H$ the visited state-action-next-state triplets in episode $m$ if the learner follows $\pi^m$ till the end, that is, it does not stop the current episode immediately if the number of visits to some state-action pair is doubled.
    Note that $(\tils^m_h, \tila^m_h)\neq (s^m_h, a^m_h)$ only if $m$ is the last episode of a skip round and $H_m<h$.
    Also define $\tilC^m=\sumh c(\tils^m_h, \tila^m_h) + c_f(\tils^m_{H+1})$ the corresponding total cost in episode $m$, and $\tilP^m_h=P_{\tils^m_h, \tila^m_h}$.
    By \citep[Lemma 26]{chen2022policy} and the fact that $\pi^m$ is a deterministic policy for any $m$, we have $\var_{\pi_m}[\tilC^m]=\E_{\pi^m}[\sumh\fV(\tilP^m_h, V^{\pi_m}_{h+1})]$.
    Moreover, there are at most $\tilo{SA}$ skip rounds.
    Then with probability at least $1-2\delta$,
    \begin{align*}
        &\abr{\summp(C^m - V^{\pi_m}_1(s^m_1))} \lesssim \abr{\summp(\tilC^m-V^{\pi_m}_1(s^m_1))} + HSA \lesssim \sqrt{\summp \var_{\pi_m}[\tilC^m]} + HSA \tag{\pref{lem:anytime freedman}}\\ 
        &\lesssim \sqrt{\summp\sumh\fV(\tilP^m_h, V^{\pi_m}_{h+1})} + H^{1.5}SA \lesssim \sqrt{\summp\sumhm\fV(P^m_h, V^{\pi_m}_{h+1})} + H^{1.5}SA \tag{\pref{lem:e2r}, and $\sumh\fV(\tilP^m_h, V^{\pi_m}_{h+1}) \leq H^3$}\\
    	&\lesssim \sqrt{\summp\sumhm\fV(P^m_h, V^m_{h+1}) } + \sqrt{\summp\sumhm\fV(P^m_h, V^{\pi_m}_{h+1} - V^m_{h+1})} + H^{1.5}SA. \tag{$\var[X+Y]\leq 2(\var[X]+\var[Y])$}\\
    \end{align*}
    For the second term above, note that with probability at least $1-3\delta$,
    \begin{align*}
        &\summp\sumhm\rbr{(V^{\pi_m}_h(s^m_h) - V^m_h(s^m_h))^2 - (P^m_h(V^{\pi_m}_{h+1}-V^m_{h+1}))^2}\\
        &\lesssim H\summp\sumhm\rbr{V^{\pi_m}_h(s^m_h) - V^m_h(s^m_h) - P^m_hV^{\pi_m}_{h+1} + P^m_hV^m_{h+1}}_+ \tag{\pref{lem:BPI opt} and $a^2-b^2\leq(a+b)(a-b)_+$ for $a,b>0$}\\
    	&\lesssim H\summp\sumhm\rbr{c^m_h + P^m_hV^m_{h+1} - V^m_h(s^m_h) }_+ \tag{$V^{\pi_m}_h(s^m_h) = c(s^m_h, a^m_h) + P^m_hV^{\pi_m}_{h+1}$}\\
    	&\lesssim H\summp\sumhm\rbr{ (P^m_h-\P^m_h)V^m_{h+1} + b^m_h }_+. \tag{definition of $V^m_h$}\\
        &\lesssim H\summp\sumhm\rbr{\sqrt{\frac{S\fV(P^m_h, V^m_{h+1})}{\N^m_h}} + \frac{SJ}{\N^m_h}} + H\sqrt{SA\summp\sumhm\fV(P^m_h, V^m_{h+1})} + HJS^{1.5}A \tag{\pref{lem:anytime bernstein}, Cauchy-Schwarz inequality, and \pref{lem:sum b}}\\
        &\lesssim H\sqrt{S^2A\summp\sumhm\fV(P^m_h, V^m_{h+1})} +  HJS^2A. \tag{Cauchy-Schwarz inequality and \pref{lem:sum N}}
    \end{align*}
    Thus, by \pref{lem:sum var} with $\norm{V^{\pi_m}_h - V^m_h}_{\infty}\leq H$ and $\summp (V^{\pi_m}_{H_m+1}(s^m_{H_m+1})-V^m_{H_m+1}(s^m_{H_m+1}))^2\lesssim H^2SA$ since $V^{\pi_m}_{H_m+1}(s^m_{H_m+1})\neq V^m_{H_m+1}(s^m_{H_m+1})$ only when the number of visits to some state-action pair is doubled, we have with probability at least $1-\delta$,
    \begin{align*}
        \summp\sumhm\fV(P^m_h, V^{\pi_m}_{h+1} - V^m_{h+1}) \lesssim H\sqrt{S^2A\summp\sumhm\fV(P^m_h, V^m_{h+1})} + H^2S^2A.
    \end{align*}
    Plugging this back and applying AM-GM inequality completes the proof.
\end{proof}

\begin{lemma}[Coarse Bound]
    \label{lem:coarse bound BPI}
    For any $m'\geq 1$, with probability at least $1-2\delta$, we have for all $M'\geq m'$, $\sum_{m=m'}^{M'}\sumhm\fV(P^m_h, V^m_{h+1}) \lesssim JC_{M'} + J^2S^2A$.
\end{lemma}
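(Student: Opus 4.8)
Fix $m'=1$ without loss of generality; the general case is identical. The plan is to rewrite each one‑step variance $\fV(P^m_h,V^m_{h+1})$ as a telescoping term plus a piece proportional to the per‑step cost $c^m_h$, and then close the resulting self‑referential inequality in $\summp\sumhm\fV(P^m_h,V^m_{h+1})$. First I would collect the elementary facts: by \pref{lem:BPI opt} we have $0\le V^m_h(s)\le\optV_h(s)\le 2J$ for all $m,h,s$, and $V^m_\cdot(g)=0$ (as $g$ is absorbing with zero cost and $c_f(g)=0$); writing the clipped Bellman update of \pref{alg:LCBVI} as $\P^m_h V^m_{h+1}=\bigl(V^m_h(s^m_h)-c^m_h\bigr)+\delta^m_h$ with $|\delta^m_h|\le b^m_h$, and $\epsilon^m_h:=(P^m_h-\P^m_h)V^m_{h+1}$, so that $P^m_h V^m_{h+1}=\bigl(V^m_h(s^m_h)-c^m_h\bigr)+\delta^m_h+\epsilon^m_h$; and $|V^m_h(s^m_h)-c^m_h|\le 2J$ together with $(V^m_h(s^m_h)-c^m_h)^2\ge(V^m_h(s^m_h))^2-4Jc^m_h$. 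Expanding the square then gives
\[
\fV(P^m_h,V^m_{h+1})=P^m_h(V^m_{h+1})^2-(P^m_h V^m_{h+1})^2\le P^m_h(V^m_{h+1})^2-(V^m_h(s^m_h))^2+4Jc^m_h+4J\bigl(b^m_h+|\epsilon^m_h|\bigr),
\]
and summing over $h\le H_m$, $m\le M'$ the cost piece contributes $4J\summp\sumhm c^m_h\le 4JC_{M'}$.

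Next I would handle the telescoping term $\summp\sumhm\bigl(P^m_h(V^m_{h+1})^2-(V^m_h(s^m_h))^2\bigr)$. Since $P^m_h(V^m_{h+1})^2$ is the conditional mean of $(V^m_{h+1}(s^m_{h+1}))^2$ given the history through $(s^m_h,a^m_h)$, this splits into: (i) the martingale $\summp\sumhm\bigl(P^m_h(V^m_{h+1})^2-(V^m_{h+1}(s^m_{h+1}))^2\bigr)$, whose increments lie in $[-4J^2,4J^2]$ and whose conditional variances are $\lesssim J^2\fV(P^m_h,V^m_{h+1})$ because $x\mapsto x^2$ is $4J$‑Lipschitz on $[0,2J]$, hence by \pref{lem:anytime freedman} bounded uniformly over $M'$ by $\tilO{\sqrt{J^2\summp\sumhm\fV(P^m_h,V^m_{h+1})}+J^2}$; and (ii) the within‑episode telescoping $\sumhm\bigl((V^m_{h+1}(s^m_{h+1}))^2-(V^m_h(s^m_h))^2\bigr)=(V^m_{H_m+1}(s^m_{H_m+1}))^2-(V^m_1(s^m_1))^2$. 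Term (ii) is $\le 0$ when $s^m_{H_m+1}=g$, and at most $4J^2$ otherwise; the ``otherwise'' case occurs only if $m$ is the episode truncated in a skip round -- of which there are $\tilO{SA}$, contributing $\tilO{J^2SA}$ -- or if $m$ runs the full horizon $H=\tfrac{32J}{\cmin}\ln\tfrac{8J}{\epsilon}$ without reaching $g$, in which case $C^m\ge H\cmin\ge 32J$, so the total contribution of those episodes is at most $\tfrac{1}{8}JC_{M'}$.

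Finally I would bound the two correction sums: from the definition of $b^m_h$, \pref{lem:dPV}, \pref{lem:barPV to PV}, \pref{lem:sum N} and Cauchy--Schwarz, $\summp\sumhm b^m_h\lesssim\sqrt{SA\summp\sumhm\fV(P^m_h,V^m_{h+1})}+JSA$ and $\summp\sumhm|\epsilon^m_h|\lesssim\sqrt{S^2A\summp\sumhm\fV(P^m_h,V^m_{h+1})}+JS^2A$. Multiplying by $4J$ and applying AM--GM (in the form $J\sqrt{R\cdot X}\le\tfrac{1}{8}X+\tilO{J^2R}$) converts these -- and the martingale bound from the previous step -- into $\tfrac{1}{2}\summp\sumhm\fV(P^m_h,V^m_{h+1})$ plus a term of order $\tilO{J^2S^2A}$. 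Taking a union bound over the few high‑probability events invoked and collecting all the pieces yields $\summp\sumhm\fV(P^m_h,V^m_{h+1})\le\tfrac{1}{2}\summp\sumhm\fV(P^m_h,V^m_{h+1})+\tilO{JC_{M'}+J^2S^2A}$, which rearranges to the claim. The step I expect to be the main obstacle is the bookkeeping around term (ii): one must argue that the only episodes whose boundary contribution $(V^m_{H_m+1}(s^m_{H_m+1}))^2$ is not automatically nonpositive are the $\tilO{SA}$ skip‑truncated ones and the full‑horizon ones, and use the choice of $H$ to force the cost of any full‑horizon episode to be a constant multiple of $J$, so that this contribution folds into $JC_{M'}$ rather than producing an extra horizon factor.
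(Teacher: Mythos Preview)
Your argument is correct and follows essentially the same route as the paper's proof: expand $\fV(P^m_h,V^m_{h+1})=P^m_h(V^m_{h+1})^2-(P^m_hV^m_{h+1})^2$, relate $P^m_hV^m_{h+1}$ to $V^m_h(s^m_h)-c^m_h$ up to the bonus and estimation error, control the resulting telescoping/martingale sum (the paper packages your step (i)--(ii) into \pref{lem:sum var}), and close via Cauchy--Schwarz, \pref{lem:sum N}, and a quadratic/AM--GM argument.

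The only substantive difference is the handling of the boundary term $\summp (V^m_{H_m+1}(s^m_{H_m+1}))^2$. Your case split into goal-reached / skip-truncated / full-horizon episodes is valid, but the paper uses a cleaner one-line observation that avoids all of it: since $c^m_{H_m+1}=c_f(s^m_{H_m+1})=J\Ind\{s^m_{H_m+1}\neq g\}$ is already part of $C^m$ (recall $C^m=\sum_{h=1}^{H_m+1}c^m_h$), and $V^m_{H_m+1}(s^m_{H_m+1})\le 2J\Ind\{s^m_{H_m+1}\neq g\}=2c^m_{H_m+1}$, one gets directly $(V^m_{H_m+1}(s^m_{H_m+1}))^2\le 4Jc^m_{H_m+1}$ and hence $\summp(V^m_{H_m+1}(s^m_{H_m+1}))^2\le 4JC_{M'}$. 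This sidesteps the appeal to $Hc_{\min}\ge 32J$ and the separate accounting for skip rounds, and is the main simplification you would gain by aligning with the paper's proof.
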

\begin{proof}
    Without loss of generality, we assume $m'=1$.
    By the definition of $V^m_h$ and $(a)_+-(b)_+\leq(a-b)_+$, we have with probability at least $1-\delta$,
    \begin{align}
        &\summp\sumhm(V^m_h(s^m_h) - P^m_hV^m_{h+1})_+ \lesssim \summp\sumhm(c^m_h + (\P^m_h-P^m_h)V^m_{h+1})_+\notag\\ 
        &\lesssim \summp\sumhm c^m_h + \summp\sumhm\rbr{\sqrt{\frac{S\fV(P^m_h, V^m_{h+1})}{\N^m_h}} + \frac{SJ}{\N^m_h}} \tag{\pref{lem:dPV}}\\
        &\lesssim \summp\sumhm c^m_h + \sqrt{S^2A\summp\sumhm\fV(P^m_h, V^m_{h+1})} + JS^2A,\label{eq:sum V-PV}
    \end{align}
    where the last step is by \pref{lem:sum N}.
    Therefore,
    \begin{align*}
        \summp\sumhm(V^m_h(s^m_h)^2 - (P^m_hV^m_{h+1})^2) &\lesssim J\summp\sumhm(V^m_h(s^m_h) - P^m_hV^m_{h+1})_+ \tag{$a^2-b^2\leq(a+b)(a-b)_+$ for $a,b>0$}\\
        &\lesssim J\summp\sumhm c^m_h + J\sqrt{S^2A\summp\sumhm\fV(P^m_h, V^m_{h+1})} + J^2S^2A.
    \end{align*}
    Thus by \pref{lem:sum var}, we have with probability at least $1-\delta$,
    \begin{align*}
        &\sum_{m=m'}^{M'}\sumhm\fV(P^m_h, V^m_{h+1})\\
        &\lesssim \summp\sumhm V^m_{H_m+1}(s^m_{H_m+1})^2 + J\summp\sumhm c^m_h + J\sqrt{S^2A\summp\sumhm\fV(P^m_h, V^m_{h+1})} + J^2S^2A\\
        &\lesssim JC_{M'} + J\sqrt{S^2A\summp\sumhm\fV(P^m_h, V^m_{h+1})} + J^2S^2A. \tag{$V^m_{H_m+1}(s^m_{H_m+1}) \leq 2c^m_{H_m+1}$}
    \end{align*}
    Solving a quadratic inequality w.r.t $\summp\sumhm\fV(P^m_h, V^m_{h+1})$ completes the proof.
\end{proof}

\begin{lemma}[Refined Bound]
    \label{lem:fine bound BPI}
    For any $m'\geq 1$, with probability at least $1-29\delta$, for all $M'\geq m'$, we have $\sum_{m=m'}^{M'}\sumhm\fV(P^m_h, V^m_{h+1}) \lesssim B_{M'}^2(M'-m'+1) + J^4H^{2.5}S^2A$.
\end{lemma}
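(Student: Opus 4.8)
The plan is to follow the proof of \pref{lem:coarse bound BPI}, the point being to replace the crude factor $J$ appearing there by $B_{M'}$ on every term that must scale with $L:=M'-m'+1$, while pushing all genuinely $\order(J)$-sized contributions into the additive $J^4H^{2.5}S^2A$ term. Write $X:=\sum_{m=m'}^{M'}\sumhm\fV(P^m_h,V^m_{h+1})$. First I would split $X$ into a ``true-variance'' part and an ``estimation-variance'' part via $\fV(P,V)\le 2\fV(P,V')+2\fV(P,V'-V)$ with $V=V^m_{h+1}$, $V'=V^{\pi_m}_{h+1}$: $X\le 2\sum_{m=m'}^{M'}\sumhm\fV(P^m_h,V^{\pi_m}_{h+1})+2\sum_{m=m'}^{M'}\sumhm\fV(P^m_h,V^{\pi_m}_{h+1}-V^m_{h+1})$. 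The second sum is precisely the object bounded in passing inside the proofs of \pref{lem:dev BPI} and \pref{lem:sum dV}; reusing that chain (the Bernstein tail \pref{lem:anytime bernstein}, \pref{lem:dPV}, the bonus bound \pref{lem:sum b}, Cauchy--Schwarz, \pref{lem:sum N}) together with \pref{lem:sum var} yields, with high probability, $\sum_{m=m'}^{M'}\sumhm\fV(P^m_h,V^{\pi_m}_{h+1}-V^m_{h+1})\lesssim H\sqrt{S^2A\,X}+H^2S^2A$. Substituting and solving the resulting quadratic in $\sqrt X$ reduces the claim to bounding $\sum_{m=m'}^{M'}\sumhm\fV(P^m_h,V^{\pi_m}_{h+1})$, up to the now-harmless additive $H^2S^2A$.

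Next I would apply the law of total variance. Along the realised trajectories, $\sum_{m=m'}^{M'}\sumhm\fV(P^m_h,V^{\pi_m}_{h+1})$ differs from $\sum_{m=m'}^{M'}\E_{\pi_m}\!\big[\sum_{h=1}^{H}\fV(\tilP^m_h,V^{\pi_m}_{h+1})\big]$ by a martingale with $\order(J^2)$-bounded increments; by \pref{lem:anytime freedman}, the coarse bound \pref{lem:coarse bound BPI} on the predictable variation, and the $\tilo{SA}$ bound on the number of skip rounds, this deviation is lower order. Since each $\pi_m$ is deterministic, \citep[Lemma~26]{chen2022policy} (equivalently \pref{lem:e2r}) gives $\E_{\pi_m}\!\big[\sum_{h=1}^{H}\fV(\tilP^m_h,V^{\pi_m}_{h+1})\big]=\var_{\pi_m}[\tilC^m]$, where $\tilC^m$ is the total cost of the run-to-horizon version of episode $m$ in $\calM_{H,c_f}$. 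It thus remains to show $\sum_{m=m'}^{M'}\var_{\pi_m}[\tilC^m]\lesssim B_{M'}^2L+J^4H^{2.5}S^2A$.

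For this I would cap: $\tilC^m=(\tilC^m\wedge 2B_{M'})+(\tilC^m-2B_{M'})_+$, so $\var_{\pi_m}[\tilC^m]\le 4B_{M'}\,\E_{\pi_m}[\tilC^m\wedge 2B_{M'}]+2\,\E_{\pi_m}\!\big[((\tilC^m-2B_{M'})_+)^2\big]$. Since $\E_{\pi_m}[\tilC^m]=V^{\pi_m}_1(s^m_1)\lesssim B_{M'}$ (by optimism \pref{lem:BPI opt}, the break condition of \pref{alg:BPI-SSP} controlling $B_m$, and a one-step correction), the capped terms sum to $\order(B_{M'}^2)\,L$. For the overflow, use $\tilC^m\le H+J\le 2H$ so that $((\tilC^m-2B_{M'})_+)^2\le 4H^2\,\Ind\{\tilC^m>2B_{M'}\}$; it therefore suffices to bound $\sum_{m=m'}^{M'}H^2\,\Ind\{\tilC^m>2B_{M'}\}$ (after passing to expectations, again lower order). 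Here the round structure is essential: an episode with $\tilC^m>2B_{M'}$ lies either in a skip round --- of which there are only $\tilo{SA}$, each contributing a single episode of length $\le H$ --- or in a failure round, and a failure round is cut off by \pref{line:failure} as soon as its running average cost exceeds $V^r(\sinit)+\epsilon/2\lesssim B_{M'}$, which bounds the \emph{total} cost it accrues (hence the number of $2B_{M'}$-overflow episodes inside it) in terms of $\lambda_r\lesssim B_{M'}^2/\epsilon^2+J^2H^{1.5}SA/\epsilon$. Summing the $\order(H^2)$ per-overflow-episode penalty over all rounds, using $\cmin>0$ to lower bound the cost of any episode that runs past stage $H/2$, and using $H=\order\!\big(J\cmin^{-1}\ln(J/\epsilon)\big)$ to exchange $\cmin^{-1}$ for $H/J$, collapses the overflow contribution into $J^4H^{2.5}S^2A$. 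Finally I would substitute $C_{M'}\lesssim B_{M'}L+R_{M'}$ and $R_{M'}\lesssim\sqrt{SA\,X}+JS^2A$ from \pref{lem:reg BPI} wherever $C_{M'}$ or the deviation terms occur, and solve the concluding quadratic in $\sqrt X$.

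I expect the bookkeeping in the third paragraph to be the hard part. The naive estimate ``an episode running past $H/2$ costs $\ge(H/2)\cmin$, so there are $\order(C_{M'}/(H\cmin))$ of them'' is \emph{not} by itself enough: $C_{M'}$ scales linearly with $L$, so the $\order(H^2)$ penalty would re-insert a spurious factor of $H$ into the dominant term. One really has to use that the behaviour policy stabilises --- so that for all but $\tilo{SA}$-many rounds, either the round is short (skip), or it is cut off quickly (failure), or $\pi^r$ reaches $g$ within $H/2$ steps with overwhelming probability --- and, irritatingly, this must be done without invoking the bound $R_f\lesssim SA$ from the proof of \pref{thm:algo.BPI}, since that proof uses this very lemma; telescoping the per-round cost inequality over all rounds at once is the way around the circularity.
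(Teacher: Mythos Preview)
Your approach diverges from the paper's at the crucial step, and the gap you flag in your third paragraph is real and not resolved by your ``telescoping'' suggestion.

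\textbf{What the paper does instead.} The paper never switches to $V^{\pi_m}$ or to $\var_{\pi_m}[\tilC^m]$. It works directly with $V^m$ and applies \pref{lem:sum var}, which reduces the problem to bounding $\summp V^m_{H_m+1}(s^m_{H_m+1})^2$ and $\summp\sumhm\big(V^m_h(s^m_h)^2-(P^m_hV^m_{h+1})^2\big)$. The key observation---which you do not use---is that the algorithm's while-loop break condition (\pref{line:B}) guarantees $\norm{V^m_h}_\infty\le B_{M'}$ for every $h\le H/2+1$. Hence in the factorisation $a^2-b^2\le(a+b)(a-b)_+$ one writes $a+b=2B_{M'}+(a+b-2B_{M'})$; the second piece vanishes for $h\le H/2$ and is crudely bounded by $J^2\Ind\{s^m_{H/2+1}\neq g\}$ for $h>H/2$. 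The ``rare event'' is therefore $\{s^m_{H/2+1}\neq g\}$, not $\{\tilC^m>2B_{M'}\}$. This event implies at least $(H/4)\cmin\ge 2J$ cost is accrued between steps $H/4$ and $H/2$, so $2J\summp P_m(s^m_{H/2+1}\neq g)$ is dominated by $\summp(V^{\pi_m}_1-\optV_1)(s^m_1)$, which is already controlled by \pref{lem:reg BPI}, \pref{lem:dev BPI}, and the coarse bound \pref{lem:coarse bound BPI}. Dividing by $2J$ (not $2B_{M'}$) is what makes the arithmetic close.

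\textbf{Why your overflow argument does not close.} First, a skip round need not contain ``a single episode'': it contains however many episodes were run before a count doubled, and all of them could in principle have $\tilC^m>2B_{M'}$. Second, your failure-round estimate gives at most $\order(\lambda_r)$ overflow episodes per failure round, hence $\order(R_f\lambda_R)\approx M'$ overflow episodes in total; multiplying by $H^2$ reinserts exactly the spurious factor you warned against. The same happens with the Markov bound $P_{\pi_m}(\tilC^m>2B_{M'})\le V^{\pi_m}_1(s^m_1)/(2B_{M'})$: summing gives $\order(M')$, and the $H^2$ prefactor is fatal. The paper avoids this because its indicator $\Ind\{s^m_{H/2+1}\neq g\}$ is worth $2J$ units of cost, not $2B_{M'}$, so the count is $\order(\text{regret}/J)$ rather than $\order(C_{M'}/B_{M'})$; the extra $J$ in the denominator cancels one of the $J$'s in the $J^2$ prefactor. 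A single-threshold cap on $\tilC^m$ cannot reproduce this cancellation.

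\textbf{A minor point.} Your justification ``$\E_{\pi_m}[\tilC^m]=V^{\pi_m}_1(s^m_1)\lesssim B_{M'}$ by optimism'' has the inequality the wrong way round: optimism (\pref{lem:BPI opt}) gives $V^m_1\le\optV_1\le V^{\pi_m}_1$, not an upper bound on $V^{\pi_m}_1$. The sum $\summp V^{\pi_m}_1(s^m_1)$ is nonetheless $\lesssim B_{M'}M'+\text{lower order}$ via \pref{eq:Vpi - optV} in the paper, so your capped-term estimate is salvageable---but not by the argument you give.
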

\begin{proof}
Without loss of generality, we assume $m'=1$ and write $B_{M'}$ as $B$ for simplicity.
By \pref{lem:reg BPI} and \pref{lem:coarse bound BPI}, we have with probability at least $1-11\delta$,
\begin{align*}
    R_{M'} = C_{M'} - \summp V^m_1(s^m_1) \lesssim \sqrt{JSAC_{M'}} + JS^2A.
\end{align*}
Solving a quadratic inequality w.r.t $C_{M'}$ gives
\begin{equation}
    \label{eq:bound C}
    C_{M'}\lesssim \summp V^m_1(s^m_1) + JS^2A \lesssim BM' + JS^2A.
\end{equation}
Thus, with probability at least $1-16\delta$,
\begin{align}
    &\summp(V^{\pi_m}_1(s^m_1) - \optV_1(s^m_1)) \lesssim \summp(V^{\pi_m}_1(s^m_1) - C^m) + \summp(C^m - \optV_1(s^m_1)) \notag\\
    &\lesssim \sqrt{SA\summp\sumhm\fV(P^m_h, V^m_{h+1})} + H^{1.5}S^2A \tag{\pref{lem:dev BPI}, \pref{lem:reg BPI}, and \pref{lem:BPI opt}}\\
    &\lesssim \sqrt{JSAC_{M'}} + H^{1.5}S^2A \lesssim \sqrt{JBSAM} + H^{1.5}S^2A,\label{eq:Vpi - optV}
\end{align}
where the last two steps are by \pref{lem:coarse bound BPI} and \pref{eq:bound C} respectively.
Now note that
\begin{align}
    &\summp\sumhm(V^m_h(s^m_h)^2 - (P^m_hV^m_{h+1})^2)\notag\\
    &\leq\summp\sumhm(V^m_h(s^m_h) + P^m_hV^m_h)(V^m_h(s^m_h) - P^m_hV^m_h)_+ \tag{$a^2-b^2\leq (a+b)(a-b)_+$ for $a,b>0$}\\
    &= 2B\summp\sumhm(V^m_h(s^m_h) - P^m_hV^m_h)_+ + \summp\sumhm(V^m_h(s^m_h) + P^m_hV^m_h-2B)(V^m_h(s^m_h) - P^m_hV^m_h)_+\notag\\
    &\lesssim BC_{M'} + B\sqrt{S^2A\summp\sumhm\fV(P^m_h, V^m_{h+1})} + JBS^2A + J^2\summp\sum_{h=H/2+1}^{H_m}\Ind\{s^m_{H/2+1}\neq g\},\label{eq:V2-PV2}
\end{align}
where the last step is by \pref{eq:sum V-PV} and $\norm{V^m_h}_{\infty}\leq B$ when $h\leq H/2+1$.
Similarly, we have
\begin{equation}
    \summp V^m_{H_m+1}(s^m_{H_m+1})^2 = B^2M' + \summp(V^m_{H_m+1}(s^m_{H_m+1})^2-B^2) \leq B^2M' + J^2\summp\Ind\{s^m_{H/2+1}\neq g\}.\label{eq:V2 end}
\end{equation}
It suffices to bound $\summp\Ind\{s^m_{H/2+1}\neq g\}$.
Let $\tilV^{\pi}_1$ and $\opttilV_1$ be the value function and optimal value function of $\calM_{H/4, \mathbf{0}}$, where $\mathbf{0}$ represents constant function with value $0$.
By the value of $H$ and \citep[Lemma 1]{chen2021implicit}, we have $\optV(s)\leq \opttilV_1(s) + \frac{\epsilon}{4J}$ for all $s\in\calS_+$.
Moreover, when $s^m_{H/2+1}\neq g$, we have $\sum_{h=H/4+1}^{H/2}c^m_h \geq 2J$.
Denote by $P_m(\cdot)$ the probability distribution conditioned on the events before episode $m$.
We have
\begin{align*}
    &2J\summp P_m(s^m_{H/2+1} \neq g) + \summp(\tilV^{\pi_m}_1(s^m_1) - \opttilV_1(s^m_1))\\
    &\leq \summp(V^{\pi_m}_1(s^m_1) - \optV(s^m_1)) + \frac{M'\epsilon}{J} \lesssim \sqrt{JBSAM} + H^{1.5}S^2A + \frac{M'\epsilon}{J}. \tag{\pref{eq:Vpi - optV}}
\end{align*}
Therefore, $\summp P_m(s^m_{H/2+1}\neq g)\lesssim \sqrt{SAM'} + \frac{H^{1.5}S^2A}{J} + \frac{M'\epsilon}{J^2}$ by $\tilV^{\pi_m}_1(s^m_1)\geq \opttilV_1(s^m_1)$, and so does $\summp \Ind\{s^m_{H/2+1}\neq g\}$ with probability at least $1-\delta$ by \pref{lem:e2r}.
Plugging this back to \pref{eq:V2-PV2}, \pref{eq:V2 end}, and by \pref{lem:sum var}, we have with probability at least $1-\delta$,
\begin{align*}
    &\summp\sumhm\fV(P^m_h, V^m_{h+1})\\
    &\lesssim BC_{M'} + B^2M' + B\sqrt{S^2A\summp\sumhm\fV(P^m_h, V^m_{h+1})} + JH^{2.5}S^2A + J^2H\sqrt{SAM'} + M'H\epsilon.
\end{align*}
Solving a quadratic inequality w.r.t $\summp\sumhm\fV(P^m_h, V^m_{h+1})$, applying AM-GM inequality on $J^2H\sqrt{SAM'}$, and applying \pref{eq:bound C} completes the proof.
\end{proof}

\begin{lemma}\citep[Lemma 9]{chen2022near}
    \label{lem:sum var}
    For any sequence of value functions $\{V^m_h\}_{m,h}$ with $V^m_h\in[0, B]^{\calS_+}$ for some $B>0$, we have $\summp\sumhm\fV(P^m_h,V^m_{h+1})\lesssim \summp V^m_{H_m+1}(s^m_{H_m+1})^2 + \summp\sumhm\rbr{V^m_h(s^m_h)^2 - (P^m_hV^m_{h+1})^2}  + B^2$ for all $M'\geq 1$ with probability at least $1-\delta$.
\end{lemma}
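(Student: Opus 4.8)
The plan is the standard ``law of total variance'' route for episodic problems: rewrite the variance sum by telescoping and then absorb the leftover martingale noise with a self-bounding (Freedman-type) concentration bound (this is essentially Lemma~9 of \citep{chen2022near}, specialized to our $\calM_{H,c_f}$ reduction). Throughout, let $\calF^m_h$ denote the history up to and including the moment action $a^m_h$ is selected in episode $m$, so that $s^m_{h+1}\sim P^m_h$ conditionally on $\calF^m_h$, and recall $\fV(P^m_h, V^m_{h+1}) = P^m_h(V^m_{h+1})^2 - (P^m_hV^m_{h+1})^2$.

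First I would introduce the martingale difference $\xi^m_h := P^m_h(V^m_{h+1})^2 - V^m_{h+1}(s^m_{h+1})^2$, which has $\E[\xi^m_h\mid\calF^m_h]=0$ and $\abr{\xi^m_h}\le B^2$, so that
\begin{align*}
    \summp\sumhm\fV(P^m_h, V^m_{h+1}) = \summp\sumhm\rbr{ V^m_{h+1}(s^m_{h+1})^2 - (P^m_hV^m_{h+1})^2 } + \summp\sumhm\xi^m_h.
\end{align*}
In the first sum I would add and subtract $V^m_h(s^m_h)^2$: the ``diagonal'' piece $\sumhm\rbr{V^m_{h+1}(s^m_{h+1})^2 - V^m_h(s^m_h)^2}$ telescopes along episode $m$ to $V^m_{H_m+1}(s^m_{H_m+1})^2 - V^m_1(s^m_1)^2 \le V^m_{H_m+1}(s^m_{H_m+1})^2$ (using $V^m_1\ge 0$), while the remainder is exactly the term $\summp\sumhm\rbr{V^m_h(s^m_h)^2 - (P^m_hV^m_{h+1})^2}$ appearing in the claim.

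It then remains to control $\summp\sumhm\xi^m_h$ uniformly in $M'$. The key point is that its conditional variance is small: since $\abr{x^2-y^2}\le 2B\abr{x-y}$ for $x,y\in[0,B]$, one gets $\V[\xi^m_h\mid\calF^m_h] = \fV\rbr{P^m_h, (V^m_{h+1})^2} \le 4B^2\,\fV(P^m_h, V^m_{h+1})$. Feeding this into an anytime Freedman inequality (\pref{lem:anytime freedman}) yields, with probability at least $1-\delta$ and simultaneously for all $M'\ge1$,
\begin{align*}
    \summp\sumhm\xi^m_h \lesssim B\sqrt{\summp\sumhm\fV(P^m_h, V^m_{h+1})} + B^2 .
\end{align*}
Writing $X$ for the left-hand side of the claim and $Y$ for the first two terms on its right-hand side, the three displays combine into $X \lesssim Y + B\sqrt{X} + B^2$; solving this self-bounding quadratic inequality in $\sqrt{X}$ gives $X \lesssim Y + B^2$, which is the statement.

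The main obstacle is the bookkeeping forced by the fact that $H_m$ is a trajectory-dependent stopping time: the martingale, the telescope, and the Freedman bound must be set up so that the ``active'' index set $\{h\le H_m\}$ is predictable (it is, since whether episode $m$ has terminated by step $h$ is $\calF^m_h$-measurable), with the conventions $s^m_h=g$ for $h>H_m+1$ and the variance terms beyond $H_m$ vanishing, so that the only residue of the telescope is the boundary term $V^m_{H_m+1}(s^m_{H_m+1})^2$. The other subtle point is obtaining the \emph{anytime} (uniform-in-$M'$) version of the Freedman step while keeping the leading error proportional to $B$ rather than $B^2\sqrt{N}$ — which is precisely what the variance-of-the-square estimate $\fV(P,(V)^2)\le 4B^2\fV(P,V)$ buys — so that the quadratic inequality actually closes to $X\lesssim Y+B^2$.
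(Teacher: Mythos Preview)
Your proposal is correct and follows essentially the same route as the paper: the paper also decomposes $\fV(P^m_h,V^m_{h+1})$ into the martingale piece $P^m_h(V^m_{h+1})^2-V^m_{h+1}(s^m_{h+1})^2$, the telescoping piece $V^m_{h+1}(s^m_{h+1})^2-V^m_h(s^m_h)^2$, and the residual $V^m_h(s^m_h)^2-(P^m_hV^m_{h+1})^2$, bounds the martingale via \pref{lem:anytime freedman} together with the variance-of-square estimate (stated as \pref{lem:X2}), and closes with the same quadratic self-bounding step. Your discussion of the predictability of $\{h\le H_m\}$ and the anytime nature of the Freedman step is accurate bookkeeping that the paper leaves implicit.
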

\begin{proof}
    For any $M'\geq 1$, we decompose the sum as follows:
    \begin{align*}
        &\summp\sumhm\fV(P^m_h,V^m_{h+1}) = \summp\sumhm \rbr{P^m_h(V^m_{h+1})^2 - V^m_{h+1}(s^m_{h+1})^2}\\
        &\qquad + \summp\sumhm\rbr{V^m_{h+1}(s^m_{h+1})^2-V^m_h(s^m_h)^2} + \summp\sumhm\rbr{V^m_h(s^m_h)^2 - (P^m_hV^m_{h+1})^2}.
    \end{align*}
    For the first term, by \pref{lem:anytime freedman} and \pref{lem:X2}, with prbability at least $1-\delta$,
    \begin{align*}
        &\summp\sumhm \rbr{P^m_h(V^m_{h+1})^2 - V^m_{h+1}(s^m_{h+1})^2}\\
        &\lesssim \sqrt{\summp\sumhm\fV(P^m_h, (V^m_{h+1})^2)} + B^2 \lesssim B\sqrt{\summp\sumhm\fV(P^m_h, V^m_{h+1})} + B^2.
    \end{align*}
    The second term is bounded by $\summp V^m_{H_m+1}(s^m_{H_m+1})^2$.
    Plugging these back and solving a quadratic inequality w.r.t $\summp\sumhm\fV(P^m_h, V^m_{h+1})$ completes the proof.
\end{proof}

\begin{lemma}\citep[Lemma 10]{chen2022near}
    \label{lem:sum b}
    With probability at least $1-\delta$, for all $M'\geq 1$,
    $\summp\sumhm b^m_h\lesssim \sqrt{SA\summp\sumhm\fV(P^m_h, V^m_{h+1})} + JS^{1.5}A$.
\end{lemma}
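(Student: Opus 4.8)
The plan is to split the Bernstein bonus into its two branches, bound each separately, and reassemble. Recall from \pref{alg:LCBVI} (invoked here with optimism parameter $B=2J$) that $b^m_h = \max\{\,7\sqrt{\fV(\P^m_h, V^m_{h+1})\iota/\N^m_h},\ 49B\iota/\N^m_h\,\}$ with $\iota=\tilo{1}$, so using $\max\{a,b\}\le a+b$ we get $\summp\sumhm b^m_h \lesssim \summp\sumhm\sqrt{\fV(\P^m_h, V^m_{h+1})/\N^m_h} + \summp\sumhm B/\N^m_h$. First I would dispatch the additive branch: the standard visitation-count bound $\summp\sumhm 1/\N^m_h \lesssim SA$ (\pref{lem:sum N}), together with $B=\bigo{J}$, immediately gives $\summp\sumhm B/\N^m_h \lesssim JSA \le JS^{1.5}A$.

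Next I would convert the empirical variance to the true one inside the first sum. By \pref{lem:BPI opt} (optimism) we have $V^m_{h+1}=\hatV_{h+1}\le\optV_{h+1}$ pointwise, and since $\max_h\|\optV_h\|_\infty\le 2J$ in $\calM_{H,c_f}$, it follows that $V^m_{h+1}\in[0,2J]^{\calS_+}$, so \pref{lem:barPV to PV} applies with $B=2J$ and yields $\fV(\P^m_h, V^m_{h+1}) \lesssim \fV(P^m_h, V^m_{h+1}) + SB^2/\N^m_h$. Using $\sqrt{x+y}\le\sqrt x+\sqrt y$, the first sum splits into $\summp\sumhm\sqrt{\fV(P^m_h, V^m_{h+1})/\N^m_h}$ plus $\sqrt S\,B\summp\sumhm 1/\N^m_h$; the latter is again $\lesssim \sqrt S\,B\,SA = BS^{1.5}A \lesssim JS^{1.5}A$ by \pref{lem:sum N}. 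For the former, a Cauchy--Schwarz split over the index $(m,h)$ gives $\summp\sumhm\sqrt{\fV(P^m_h, V^m_{h+1})/\N^m_h} \le \sqrt{\summp\sumhm\fV(P^m_h, V^m_{h+1})}\cdot\sqrt{\summp\sumhm 1/\N^m_h} \lesssim \sqrt{SA\summp\sumhm\fV(P^m_h, V^m_{h+1})}$, once more invoking \pref{lem:sum N}. Collecting the three contributions produces the claimed bound, the overall failure probability being controlled by a union bound over the events of \pref{lem:BPI opt}, \pref{lem:barPV to PV}, and \pref{lem:sum N}.

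The algebra above is routine; I expect the one genuinely delicate point to be a measurability issue, namely that \pref{lem:barPV to PV} is stated for a \emph{fixed} value function while $V^m_{h+1}$ is data-dependent (and similarly \pref{lem:sum N} must hold uniformly in the rounds). This is handled exactly as in the analogous $\bar P\to P$ replacements already used inside \pref{lem:reg BPI} and \pref{lem:coarse bound BPI}: within a round the relevant estimators are measurable with respect to the counter configuration frozen at the start of that round, and there are only $\tilo{SA}$ distinct such configurations across the run, so a union bound (equivalently a standard covering argument) preserves the high-probability guarantee while inflating only the logarithmic factors already absorbed into $\lesssim$.
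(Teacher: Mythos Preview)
Your argument is correct and matches the paper's proof essentially line for line: split the bonus via $\max\{a,b\}\le a+b$, replace the empirical variance by the true one through \pref{lem:barPV to PV}, and finish with Cauchy--Schwarz plus \pref{lem:sum N}. Your measurability worry is in fact unnecessary here, since \pref{lem:barPV to PV} is established through \pref{lem:dPV}, which applies Bernstein to each coordinate $|P_{s,a}(s')-\P_{s,a}(s')|$ separately and therefore holds simultaneously for \emph{every} $V\in[-2J,2J]^{\calS_+}$ on a single $1-\delta$ event (and \pref{lem:sum N} is deterministic).
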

\begin{proof}
    Note that
    \begin{align*}
        \summp\sumhm b^m_h &\lesssim \summp\sumhm\rbr{\sqrt{\frac{\fV(\P^m_h, V^m_{h+1})}{\N^m_h}} + \frac{J}{\N^m_h}} \tag{$\max\{a,b\}\leq a + b$}\\
        &\lesssim \summp\sumhm\rbr{ \sqrt{\frac{\fV(P^m_h, V^m_{h+1})}{\N^m_h}} + \frac{J\sqrt{S}}{\N^m_h} } \tag{\pref{lem:barPV to PV}}\\
        &\lesssim \sqrt{SA\summp\sumhm\fV(P^m_h, V^m_{h+1})} + JS^{1.5}A. \tag{Cauchy-Schwarz inequality and \pref{lem:sum N}}
    \end{align*}
    This completes the proof.
\end{proof}


\begin{lemma}
    \label{lem:sum N}
    $\summp\sumhm\frac{1}{\N^m_h}\lesssim SA$.
\end{lemma}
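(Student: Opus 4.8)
The plan is to bound the sum one state--action pair at a time by a harmonic series, exploiting the fact that the skip rule caps how often any pair can be visited within a single round. First I would fix $(s,a)\in\SA$ and enumerate, in order of execution, all pairs $(m,h)$ with $m\le M'$, $h\le H_m$, and $(s^m_h,a^m_h)=(s,a)$; call these the \emph{visits} to $(s,a)$ and let $n(s,a)$ denote their number. Writing $\N^{(i)}(s,a)$ for the value of $\N^m_h$ at the $i$-th such visit, we have $\summp\sumhm\tfrac{1}{\N^m_h}=\sum_{(s,a)\in\SA}\sum_{i=1}^{n(s,a)}\tfrac{1}{\N^{(i)}(s,a)}$, so it suffices to prove $\N^{(i)}(s,a)\ge i/2$ for every $i$. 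Granting this, each inner sum is at most $\sum_{i=1}^{n(s,a)}\tfrac{2}{i}\le 2\bigl(1+\ln n(s,a)\bigr)$, and since $n(s,a)$ is at most the total number of steps executed over episodes $1,\dots,M'$, which is polynomially bounded (so its logarithm is absorbed into $\tilO{\cdot}$), summing over the $SA$ pairs gives $\tilO{SA}$.

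The heart of the argument is the claim $\N^{(i)}(s,a)\ge i/2$. Suppose the $i$-th visit occurs in round $r$, and let $N_r$ be the value of the counter $\N(s,a)$ at the start of round $r$ (equivalently, when \pref{line:hat compute} is first executed in round $r$); by the definition of $\Np$ in \pref{alg:LCBVI}, every visit to $(s,a)$ taking place in round $r$ satisfies $\N^m_h=\max\{1,N_r\}$. Since $\N(s,a)$ equals exactly the number of visits to $(s,a)$ preceding round $r$, if the $i$-th visit is the $j$-th visit to $(s,a)$ within round $r$ (with $j\ge1$), then $N_r=i-j$. The key structural observation is that at most $N_r$ visits to $(s,a)$ can occur during round $r$: each visit increments $\N(s,a)$ by one (\pref{line:behavior}), and the moment $\N(s,a)$ reaches a value in $\calN=\{2^t\}_{t\ge0}$ the round is immediately terminated as a skip round (\pref{line:update}); hence within round $r$ the counter runs through $N_r+1,N_r+2,\dots$ and the round ends no later than when it reaches the smallest power of two strictly larger than $N_r$, which is at most $2N_r$ whenever $N_r\ge1$. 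Thus $j\le N_r$, so $i=N_r+j\le 2N_r$ and $\N^{(i)}(s,a)=N_r\ge i/2$. It remains to treat $N_r=0$: in that case $(s,a)$ has never been visited before round $r$, so its very first visit in round $r$ raises $\N(s,a)$ to $1\in\calN$ and ends the round at once, forcing $j=i=1$ and $\N^{(i)}(s,a)=\max\{1,0\}=1\ge i/2$.

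Combining the two pieces yields $\summp\sumhm\tfrac{1}{\N^m_h}\le\sum_{(s,a)\in\SA}\sum_{i=1}^{n(s,a)}\tfrac{2}{i}=\tilO{SA}$. The only point I expect to need care is the structural claim that a pair's running count cannot more than double inside a single round, together with the $N_r=0$ boundary case; this is precisely where the choice $\calN=\{2^t\}_{t\ge0}$ is used, and the rest is routine harmonic-series bookkeeping. Note also that, unlike the other lemmas in this section, the bound is deterministic: it invokes neither optimism nor any high-probability event.
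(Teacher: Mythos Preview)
Your argument is correct. The paper states this lemma without proof, treating it as a standard consequence of the doubling trick; your proposal spells out precisely that standard argument---grouping by state--action pair, using the skip rule (\pref{line:update}) to show the count $\N^m_h$ at the $i$-th visit is at least $i/2$, and summing the resulting harmonic series---so there is nothing to contrast.
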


\section{Horizon-free Regret is Impossible in SSP under general costs}
\label{app:hf}


Recently \cite{zhang2022horizon} show that in finite-horizon MDPs it is possible to achieve a horizon-free regret bound with no horizon dependency even in logarithmic terms.
For SSPs, \cite{tarbouriech2021stochastic} achieves a nearly horizon-free regret bound $R_K\lesssim\B\sqrt{SAK}\ln\frac{1}{\lambda} + \B S^2A + \lambda\T K$ for any given $\lambda>0$ in $K$ episodes without knowledge of $\T$, where regret $R_K=\sumk(\sum_{i=1}^{I_k}c(s^k_i,a^k_i)-\optV(\sinit))$, and $R_K=\infty$ if $I_k=\infty$ for some $k$.
If a prior knowledge $\uT=\T$ is available, their result is nearly horizon-free with logarithmic dependency on $\T$.
A natural question to ask is whether (completely) horizon-free regret is possible in SSPs without prior knowledge.
We show that this is actually impossible. 

\begin{definition}
    \label{def:hf}
    We say an algorithm is $(c_1, c_2)$-horizon-free if when it takes number of episodes $K\geq 1$, failure probability $\delta\in(0,1)$, and an SSP instance $\calM$ with parameters $\B$, $S$, $A$ as input, it achieves $R_K\leq c_1(\B, S, A, K, \delta)\sqrt{K}+c_2(\B, S, A, K, \delta)$ on $\calM$ with probability at least $1-\delta$,
    where $c_1$, $c_2$ are functions of $\B$, $S$, $A$, $K$, $\delta$ that have poly-logarithmic dependency on $K$ (no dependency on $\T$ and $\frac{1}{\cmin}$).
\end{definition}

\begin{theorem}
    \label{thm:hf}
    For any $c_1,c_2$ that are functions of $\B$, $S$, $A$, $K$, $\delta$, and have poly-logarithmic dependency on $K$, there is no $(c_1, c_2)$-horizon-free algorithm.
\end{theorem}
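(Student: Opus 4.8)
The plan is to transplant the $\cmin=0$ construction behind \pref{thm:lower.bound.infinity} (the two‑state instance of \pref{fig:fig}(b)) into the regret model, exploiting that a horizon‑free bound is forbidden to depend on $\T$ or $1/\cmin$. Suppose for contradiction that some algorithm $\frA$ is $(c_1,c_2)$‑horizon‑free. Fix a small absolute constant $\delta$ (say $\delta=\tfrac1{10}$), set $S=A=2$ and $\B=1$, and choose the number of episodes $K$ large enough that $c_1(\B,S,A,K,\delta)\sqrt K + c_2(\B,S,A,K,\delta) < \tfrac12 K$; this is possible precisely because $c_1,c_2$ depend on $K$ only poly‑logarithmically, so their combination is $\tilo{\sqrt K}=o(K)$. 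Let $\calM_0$ be the SSP with states $\{s_0,s_1\}$ and $\sinit=s_0$, where at $s_0$ action $a_0$ self‑loops at cost $0$ and $a_g$ reaches $g$ at cost $\tfrac12$, while at $s_1$ both actions reach $g$ at cost $1$; then $\B=1$, $\T=1$, $\cmin=0$, and $\optV(s_0)=\tfrac12$ (the state $s_1$ is never visited and only serves to pin $\B=1$).

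First I would use the guarantee on $\calM_0$ to bound how much $\frA$ can explore $a_0$. On the event $E=\{R_K\le c_1\sqrt K+c_2\}$, which has $\mathbb P_{\calM_0}(E)\ge 1-\delta$, every episode must terminate, so $\frA$ plays $a_g$ at $s_0$ at least once per episode, and hence the total number $N$ of plays of $a_0$ during the $K$‑episode run is finite on $E$. By continuity of measure there is a finite $N_0$ (depending on $\frA$, $K$, $\delta$) with $\mathbb P_{\calM_0}(E,\,N\le N_0)\ge 1-2\delta$. Let $\calG$ be the set of complete $K$‑episode trajectories that lie in $E$, use at most $N_0$ plays of $a_0$, and (automatically, under $\calM_0$) visit only $s_0$ and $g$ with every $a_0$‑play a self‑loop and every episode ending with an $a_g$‑step; thus $\mathbb P_{\calM_0}(\calG)\ge 1-2\delta$.

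Next I would introduce the confusing instance $\calM_-$, identical to $\calM_0$ except that at $s_0$ action $a_0$ now reaches $g$ with probability $\rho:=\delta/(N_0+1)$ and self‑loops otherwise. Here still $\B=1$, $S=A=2$, but $\optV(s_0)=0$ (playing $a_0$ forever is proper, costs $0$, has hitting time $1/\rho$), $\cmin=0$, $\T=1/\rho$ — exactly the ``large $\T$, zero $\cmin$'' regime. Since $\calM_-$ and $\calM_0$ differ only in the kernel of $(s_0,a_0)$ and no $\omega\in\calG$ ever exercises that difference, a one‑line change of measure gives $\mathbb P_{\calM_-}(\omega)/\mathbb P_{\calM_0}(\omega)=(1-\rho)^{\#\{a_0\text{-plays in }\omega\}}\ge(1-\rho)^{N_0}\ge 1-N_0\rho\ge 1-\delta$ for every $\omega\in\calG$, whence $\mathbb P_{\calM_-}(\calG)\ge(1-\delta)(1-2\delta)\ge 1-3\delta$. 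On $\calG$ the algorithm plays $a_g$ in every episode (its action rule is identical on $\calM_0$ and $\calM_-$ along these trajectories), so each of the $K$ episodes costs exactly $\tfrac12$ while $\optV(\sinit)=0$, giving $R_K=\tfrac12 K$. Hence $\mathbb P_{\calM_-}(R_K\ge\tfrac12 K)\ge 1-3\delta$, which contradicts $\mathbb P_{\calM_-}(R_K\le c_1\sqrt K+c_2)\ge 1-\delta$ because $c_1\sqrt K+c_2<\tfrac12 K$ and $\delta<\tfrac14$.

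The delicate step — the one I would write out most carefully — is extracting the finite bound $N_0$: the guarantee on $\calM_0$ only gives that $N<\infty$ on an event of probability $\ge 1-\delta$, so one invokes continuity of measure and must order the quantifiers as: fix $\frA$; fix $\delta$; fix $K$; read off $N_0$; only then define $\rho$ and $\calM_-$. Everything downstream is the same change‑of‑measure bookkeeping already used in \pref{lem:prob of wrong} and in \pfref{thm:lower.bound.infinity}, involving only absolute constants, so the argument uses nothing about $c_1,c_2$ beyond $c_1\sqrt K+c_2=o(K)$. Conceptually this is the regret analogue of the $\cmin=0$ non‑learnability in \pref{thm:lower.bound.eps}: in $\calM_-$ the one‑step ``safe'' action at $s_0$ is linearly suboptimal, yet no finite interaction can distinguish $\calM_-$ from $\calM_0$, so any algorithm that is acceptable on all instances with $\B=1,S=A=2$ is trapped into $\Theta(K)$ regret on $\calM_-$, which no $\T$‑ and $\cmin$‑independent bound $c_1\sqrt K+c_2$ can dominate.
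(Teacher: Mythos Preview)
Your proof is correct and follows essentially the same strategy as the paper: take the two-state instance with a zero-cost self-looping action at $s_0$, use the horizon-free guarantee on $\calM_0$ to extract a finite bound on the number of $a_0$ plays, define an alternative instance whose escape probability is calibrated to that bound, and transfer the high-probability event by a likelihood-ratio argument to force $\Theta(K)$ regret there. The only cosmetic difference is that the paper sets $c(s_0,a_g)=1$ and lets $a_0$ leak to $s_1$ in the alternative (so $\B$ drops from $1$ to $\tfrac12$ and $c_1,c_2$ must be evaluated at two $\B$ values), whereas you send $a_0$ directly to $g$ and use $s_1$ only to pin $\B=1$ in both instances---a mild simplification.
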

Note that in regret minimization the regret bound can scale with $\T$ even without knowledge of $\T$, while in sample complexity we cannot (\pref{thm:lower.bound.eps.T}). Therefore, PAC learning in SSP is in some sense more difficult than regret minimization.

\begin{proof}[of \pref{thm:hf}]
    Consider an SSP $\calM_0$ with $\calS=\{s_0, s_1\}$, $\calA=\{a_0, a_g\}$ and $\sinit=s_0$.
    The cost function satisfies $c(s_0, a_0)=0$, $c(s_0, a_g)=1$, and $c(s_1, a)=\frac{1}{2}$ for $a\in\calA$.
    The transition function satisfies $P(g|s_0, a_g)=1$, $P(s_0|s_0,a_0)=1$, and $P(g|s_1, a)=1$ for $a\in\calA$.
    Clearly, $\cmin=0$ and $\B=1$ in $\calM_0$.
    Suppose the learner is a $(c_1, c_2)$-horizon-free algorithm for some functions $c_1$, $c_2$ as described in \pref{def:hf}.
    Pick $\delta\in(0,\frac{1}{8})$ and $K$ large enough as input to the learner, such that $c^0_1\sqrt{K}+c^0_2<\frac{K}{2}$ and $c^1_1\sqrt{K}+c^1_2<\frac{K}{2}$, where $c^0_i=c_i(1, 2, 2, K, \delta)$ and $c^1_i=c_i(\frac{1}{2}, 2, 2, K, \delta)$.
    Let $\calE_1$ be the event that the learner reaches the goal state through $(s_0, a_g)$ in all $K$ episodes.
    Since the learner ensures finite regret with high probability, we have $P(\calE_1)\geq 1-\delta$ in $\calM_0$.
    Denote by $t$ the number of times the learner visits $(s_0, a_0)$.
    By $P(t\Ind_{\calE_1}<\infty)=1$ in $\calM_0$, there exists an integer $n\geq 2$ such that $P(t\Ind_{\calE_1}\leq n)\geq\frac{7}{8}$ in $\calM_0$.
    Define $\calE_2=\{t\leq n\}$ and $\calE=\calE_1\cap\calE_2$.
    We have $P(\calE)=P(\calE_1\cap\{t\Ind_{\calE_1}\leq n\})\geq\frac{3}{4}$ in $\calM_0$ by $\delta\in(0,\frac{1}{8})$.

    Now consider another MDP $\calM_1$ that is the same as $\calM_0$ except that $P(s_1|s_0, a_0)=\frac{1}{n}$ and $P(s_0|s_0,a_0)=1-P(s_1|s_0, a_0)$.
    Clearly, $\B=\optV(\sinit)=\frac{1}{2}$ in $\calM_1$.
    Let $W$ be the interaction history between the learner and the environment, and $L_j(w)=P_j(W=w)$, where $P_j$ is the distribution w.r.t $\calM_j$.
    Also define $\gamma(w)=\Ind\{L_0(w)>0\}$.
    Note that $\frac{L_1(W)}{L_0(W)}\Ind_{\calE}(W)\gamma(W)=(1-\frac{1}{n})^t\Ind_{\calE}(W)\gamma(W)\geq(1-\frac{1}{n})^n\Ind_{\calE}(W)\gamma(W)\geq\frac{\Ind_{\calE}(W)\gamma(W)}{4}$.
    Therefore,
    \begin{align*}
        P_1(\calE_1)\geq P_1(\calE) \geq \E_1[\Ind_{\calE}(W)\gamma(W)] = E_0\sbr{\frac{L_1(W)}{L_0(W)}\Ind_{\calE}(W)\gamma(W)} \geq \frac{P_0(\calE)}{4} \geq \frac{3}{16} > \frac{1}{8}.
    \end{align*}
    Note that the learner ensures $R_K\leq c^i_1\sqrt{K}+c^i_2$ with probability at least $1-\delta$ in $\calM_i$ for $i\in\{0, 1\}$.
    Moreover, when $\calE_1$ is true, $R_K=\frac{K}{2}>c^1_1\sqrt{K}+c^1_2$ in $\calM_1$.
    Therefore, the learner must ensure $P_1(\calE_1)<\delta<\frac{1}{8}$, a contradiction.
    This completes the proof.
\end{proof}

\section{Auxiliary Lemmas}

\begin{lemma}{\citep[Lemma 6]{rosenberg2020adversarial}}
	\label{lem:hitting}
	Let $\pi$ be a policy whose expected hitting time starting from any state is at most $\tau$.
        Then the probability that $\pi$ takes more than $n$ steps to reach the goal state is at most $2e^{-\frac{n}{4\tau}}$.
\end{lemma}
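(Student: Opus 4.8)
The plan is to prove the geometric‑decay‑over‑windows version of the bound and then convert it to the stated sub‑exponential tail. Set $\ell := \lceil 2\tau\rceil$, and let $H$ be the hitting time of $g$ under $\pi$ from the (arbitrary, fixed) starting state.

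The three key steps, in order, are as follows. \textbf{(i) One‑window estimate.} For every non‑goal state $s$, since the expected time to reach $g$ from $s$ under $\pi$ is $T^{\pi}(s)\le\tau\le\ell/2$, Markov's inequality gives $\mathbb{P}\big(\text{$g$ not reached within $\ell$ steps, started from $s$}\big)\le T^{\pi}(s)/\ell\le 1/2$. \textbf{(ii) Iteration via the Markov property.} Because $\pi$ is stationary, the law of the trajectory after time $j\ell$, conditioned on the entire past, depends only on the state $s_{j\ell}$ and coincides with the law of a fresh trajectory started there; hence, conditioned on any history in which $g$ has not been reached during the first $j\ell$ steps (so that $s_{j\ell}\neq g$), the conditional probability that $g$ is still not reached during steps $j\ell+1,\dots,(j+1)\ell$ is at most $1/2$ by step (i), uniformly over such histories. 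Multiplying these bounds over $j=0,1,\dots,k-1$ yields $\mathbb{P}(H>k\ell)\le 2^{-k}$ for every integer $k\ge 0$. \textbf{(iii) General $n$.} Take $k=\lfloor n/\ell\rfloor$, so $k\ell\le n$ and $k>n/\ell-1$; then
\[
\mathbb{P}(H>n)\;\le\;\mathbb{P}(H>k\ell)\;\le\;2^{-k}\;<\;2\cdot 2^{-n/\ell}\;=\;2\exp\!\Big(-\tfrac{(\ln 2)\,n}{\ell}\Big)\;\le\;2\exp\!\Big(-\tfrac{n}{4\tau}\Big),
\]
where the last inequality uses $\ell=\lceil 2\tau\rceil\le 4\tau\ln 2$.

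This lemma is essentially routine, so I do not expect a genuine obstacle; the only points that need care are bookkeeping. First, in step (ii) one must bound the survival probability of window $j+1$ \emph{conditionally on the full past}, which is legitimate precisely because a stationary policy makes the continuation law a function of the current state alone (time‑homogeneity is what lets us reuse the single estimate (i) in every window). Second, the numeric inequality $\lceil 2\tau\rceil\le 4\tau\ln 2$ holds for all $\tau\ge 1$ except a small interval just above $1$; in that degenerate regime the claim is immediate with a one‑step window instead, since then $\mathbb{P}(H\ge 2)\le\tau-1$ is already tiny, so $\mathbb{P}(H>n)\le(\tau-1)^n\le 2e^{-n/(4\tau)}$ directly. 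Hence the constant $1/4$ in the exponent is exactly the slack absorbing both the rounding of the window length to an integer and the passage from $2^{-\lfloor n/\ell\rfloor}$ to $2\cdot 2^{-n/\ell}$.
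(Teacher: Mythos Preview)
Your proof is correct and follows the standard route: a one‑window Markov bound giving survival probability at most $1/2$ over $\lceil 2\tau\rceil$ steps, iteration via the (strong) Markov property afforded by a stationary policy, and bookkeeping to pass from $2^{-\lfloor n/\ell\rfloor}$ to $2e^{-n/(4\tau)}$. The separate handling of the small interval $\tau\in(1,3/(4\ln 2))$ via single‑step windows is a nice touch and is also correct.

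There is nothing to compare against here: the paper does not prove this lemma at all but simply cites it from \citet[Lemma~6]{rosenberg2020adversarial}. Your argument is exactly the standard proof of that cited result (and indeed of the closely related \citep[Lemma~1]{chen2021implicit} that the paper also invokes), so you have filled in precisely what the paper defers to the literature.
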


\begin{lemma}\citep[Lemma 14]{zhang2020reinforcement}
    \label{lem:mvp}
    Define $\Upsilon=\{ v\in[0, B]^{\calS_+}: v(g)=0 \}$.
    Let $f: \Delta_{\calS_+}\times\Upsilon\times\fR^+\times\fR^+\times\fR^+\rightarrow\fR^+$ with $f(p, v, n, B, \iota)=pv-\max\Big\{c_1\sqrt{\frac{\fV(p, v)\iota}{n}}, c_2\frac{B\iota}{n}\Big\}$ with $c_1^2\leq c_2$.
    Then, $f$ is non-increasing in $v$, that is, for all $p\in\Delta_{\calS_+}, v,v'\in\Upsilon$ and $n, \iota>0$,
    $$v(s)\leq v'(s), \forall s\in\calS^+ \implies f(p, v, n, B, \iota)\leq f(p, v', n, B, \iota).$$
\end{lemma}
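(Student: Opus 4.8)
The plan is to reduce the statement to a one–dimensional monotonicity claim along the segment joining $v$ to $v'$, and to show that on the part of this segment where the variance term of the bonus is active the relevant function is increasing coordinate by coordinate. Throughout, fix $p\in\Delta_{\calS_+}$ and $n,B,\iota>0$, write $g(v):=pv-c_1\sqrt{\fV(p,v)\iota/n}$ and $h(v):=pv-c_2B\iota/n$, so that $f(p,v,n,B,\iota)=\min\{g(v),h(v)\}$, and set $\tau:=c_2^2B^2\iota/(c_1^2 n)>0$; then $g(v)\le h(v)$ iff $\fV(p,v)\ge\tau$, and the two arguments of the maximum coincide exactly when $\fV(p,v)=\tau$. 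Given $v\le v'$ in $\Upsilon$, put $u:=v'-v\ge0$ and $v_t:=(1-t)v+tv'$ for $t\in[0,1]$; since $\Upsilon$ is convex (it is a box intersected with the hyperplane $\{v(g)=0\}$), $v_t\in\Upsilon$ for all $t$. Because $\min\{g(v'),h(v')\}=f(p,v')$, it suffices to prove $f(p,v)\le h(v')$ and $f(p,v)\le g(v')$. The first is immediate: $f(p,v)\le h(v)=pv-c_2B\iota/n\le pv'-c_2B\iota/n=h(v')$ since $pu\ge0$.

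The core step is the claim that $t\mapsto g(v_t)$ is non-decreasing on any subinterval of $[0,1]$ on which $\fV(p,v_t)\ge\tau$. Using $\partial_{v(s_0)}\fV(p,v)=2p(s_0)(v(s_0)-pv)$, one computes, wherever $\fV(p,v)>0$,
\begin{align*}
\frac{\partial g}{\partial v(s_0)}(v)=p(s_0)\left(1-c_1\sqrt{\tfrac{\iota}{n}}\cdot\frac{v(s_0)-pv}{\sqrt{\fV(p,v)}}\right),
\end{align*}
hence $\tfrac{d}{dt}g(v_t)=\sum_{s_0\in\calS_+}u(s_0)\,p(s_0)\big(1-c_1\sqrt{\iota/n}\,(v_t(s_0)-pv_t)/\sqrt{\fV(p,v_t)}\big)$. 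If $v_t(s_0)\le pv_t$ the bracket is at least $1$; otherwise $0<v_t(s_0)-pv_t\le v_t(s_0)\le B$ (here is where $v_t\in[0,B]^{\calS_+}$ is used) and $\sqrt{\fV(p,v_t)}\ge\sqrt{\tau}=(c_2B/c_1)\sqrt{\iota/n}$, so the subtracted quantity is at most $c_1\sqrt{\iota/n}\cdot B/\big((c_2B/c_1)\sqrt{\iota/n}\big)=c_1^2/c_2\le 1$ by the hypothesis $c_1^2\le c_2$, and the bracket is still $\ge0$. Since $u(s_0),p(s_0)\ge0$, we get $\tfrac{d}{dt}g(v_t)\ge0$ on the claimed region, where moreover $\fV(p,v_t)\ge\tau>0$ so the computation is legitimate.

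To finish: if $\fV(p,v')\le\tau$ then $g(v')\ge h(v')\ge h(v)\ge f(p,v)$ and we are done. Otherwise $\fV(p,v')>\tau$; the map $t\mapsto\fV(p,v_t)$ is a quadratic polynomial in $t$ with leading coefficient $\fV(p,u)\ge0$, hence convex, so $\{t\in[0,1]:\fV(p,v_t)\le\tau\}$ is a closed subinterval of $[0,1]$ not containing $1$. Let $b$ be its right endpoint when it is non-empty and $b:=0$ otherwise; then $\fV(p,v_t)\ge\tau$ on all of $[b,1]$, so the core step gives $g(v')=g(v_1)\ge g(v_b)$. If $b>0$ then $\fV(p,v_b)=\tau$ by continuity, whence $g(v_b)=h(v_b)\ge h(v)\ge f(p,v)$; if $b=0$ then $g(v_b)=g(v)\ge\min\{g(v),h(v)\}=f(p,v)$. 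In either case $f(p,v)\le g(v')$, which completes the argument. The main obstacle — and the only place the hypothesis $c_1^2\le c_2$ and the boundedness $v\in[0,B]^{\calS_+}$ are genuinely needed — is the sign computation in the core step; the rest is bookkeeping about which term of the maximum is active along the segment $v_t$.
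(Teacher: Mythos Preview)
Your proof is correct. The paper does not supply its own proof of this lemma; it simply cites \citep[Lemma~14]{zhang2020reinforcement}, so there is nothing in the paper to compare against. Your argument---rewriting $f=\min\{g,h\}$, handling the $h$-branch trivially, and for the $g$-branch differentiating along the segment $v_t=(1-t)v+tv'$ while using convexity of $t\mapsto\fV(p,v_t)$ to control where the variance term is active---is a clean, self-contained derivation. Two minor remarks: (i) you implicitly assume $c_1>0$ so that $\tau$ is well-defined; the case $c_1=0$ is trivial since then $f=h$. (ii) The lemma as stated in the paper says ``non-increasing'' but the displayed implication is monotonicity in the non-decreasing sense, which is what you prove and what is actually used in \pref{lem:opt}.
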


\begin{lemma}\citep[Lemma 16]{chen2022near}
    \label{lem:X2}
    For any random variable $X\in[-C, C]$ for some $C>0$, we have $\var[X^2]\leq 4C^2\var[X]$.
\end{lemma}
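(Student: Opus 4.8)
The plan is to reduce the claim to a one-line inequality by combining the elementary factorization $X^2-a^2=(X-a)(X+a)$ with the variational form of the variance. Write $\mu=\E[X]$. First I would use the fact that for \emph{any} constant $c$ one has $\var[X^2]\leq \E[(X^2-c)^2]$ — this holds because $\E[(X^2-c)^2]=\var[X^2]+(\E[X^2]-c)^2$ — and apply it with the specific choice $c=\mu^2$ rather than the ``natural'' centering $\E[X^2]$. This yields
\[
\var[X^2]\leq \E\bigl[(X^2-\mu^2)^2\bigr]=\E\bigl[(X-\mu)^2(X+\mu)^2\bigr].
\]

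The second step is to bound the factor $(X+\mu)^2$ pointwise. Since $X\in[-C,C]$ we have $|X|\leq C$, and by Jensen's inequality $|\mu|=|\E[X]|\leq \E|X|\leq C$, so $|X+\mu|\leq |X|+|\mu|\leq 2C$ and hence $(X+\mu)^2\leq 4C^2$ almost surely. Substituting this into the previous display gives
\[
\var[X^2]\leq \E\bigl[(X-\mu)^2(X+\mu)^2\bigr]\leq 4C^2\,\E\bigl[(X-\mu)^2\bigr]=4C^2\var[X],
\]
which is exactly the claimed bound.

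I do not expect a genuine obstacle here. The only two nonroutine choices are (i) centering the variance bound at $c=\mu^2$ so that the factorization aligns with $\var[X]=\E[(X-\mu)^2]$, and (ii) noting that the mean of a $[-C,C]$-valued random variable again lies in $[-C,C]$, so that the cross factor $(X+\mu)^2$ is uniformly bounded by $4C^2$ and can be pulled out of the expectation. Everything else is a direct computation, so the ``hard part'' is essentially just making these two observations.
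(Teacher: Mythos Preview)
Your proof is correct. The paper does not actually prove this lemma; it simply cites it from \citep{chen2022near}, so there is no in-paper argument to compare against, and your short variational-characterization-plus-factorization proof is exactly the standard way to establish the bound.
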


\begin{lemma}\citep[Lemma 34]{chen2021implicit}
	\label{lem:anytime bernstein}
	Let $\{X_t\}_t$ be a sequence of i.i.d random variables with mean $\mu$, variance $\sigma^2$, and $0\leq X_t \leq B$.
	Then with probability at least $1-\delta$, the following holds for all $n\geq 1$ simultaneously:
	\begin{align*}
		\abr{\sum_{t=1}^n(X_t-\mu)} &\leq 2\sqrt{2\sigma^2 n\ln\frac{2n}{\delta}} + 2B\ln\frac{2n}{\delta}.\\
		\abr{\sum_{t=1}^n(X_t-\mu)} &\leq 2\sqrt{2\hat{\sigma}^2_nn\ln\frac{2n}{\delta}} + 19B\ln\frac{2n}{\delta}.
	\end{align*}
	where $\hat{\sigma}_n^2=\frac{1}{n}\sum_{t=1}^nX_t^2 - (\frac{1}{n}\sum_{t=1}^nX_t)^2$.
\end{lemma}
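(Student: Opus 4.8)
The plan is to prove the two inequalities in turn, deriving the second (empirical-variance) bound from the first (true-variance) bound together with a self-bounding argument. Throughout I write $S_n=\sum_{t=1}^n(X_t-\mu)$, a zero-mean random walk whose increments $X_t-\mu$ lie in an interval of width at most $B$ and have variance $\sigma^2$. For the \textbf{first inequality} the goal is a deviation bound holding simultaneously for all $n\ge1$, which I would obtain from a standard fixed-$n$ Bernstein inequality combined with a union bound over $n$. For each fixed $n$, Bernstein gives, with probability at least $1-\delta'$, $\abr{S_n}\le \sqrt{2\sigma^2 n\ln\frac{2}{\delta'}}+\frac{B}{3}\ln\frac{2}{\delta'}$. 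Allocating a failure budget $\delta_n$ to each $n$ with $\sum_{n\ge1}\delta_n\le\delta$ — e.g. $\delta_n=\frac{6\delta}{\pi^2 n^2}$ — and taking a union bound yields the uniform statement, where $\ln\frac{1}{\delta_n}=\order\rbr{\ln\frac{n}{\delta}}$ is absorbed into the $\ln\frac{2n}{\delta}$ factor after adjusting the leading constants to $2\sqrt2$ and $2$. A tighter route is to peel $n$ into dyadic blocks and apply Ville's maximal inequality to the Bernstein exponential supermartingale $\exp(\lambda S_n-\frac{e^{\lambda B}-1-\lambda B}{B^2}\sigma^2 n)$ on each block, giving an iterated-logarithm dependence smaller than $\ln\frac{2n}{\delta}$, but the plain union bound already suffices for the stated form.

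For the \textbf{second inequality} I would replace $\sigma^2$ by the empirical variance $\hat\sigma_n^2$ at the cost of enlarging the additive $B\ln\frac{2n}{\delta}$ term, by controlling the gap $\abr{\sigma^2-\hat\sigma_n^2}$ on the same high-probability event. Writing $\hat\sigma_n^2=\frac1n\sum_t X_t^2-\rbr{\frac1n\sum_t X_t}^2$ and $\sigma^2=\E[X^2]-\mu^2$, I apply the first inequality once to $\{X_t\}$ (range $B$, variance $\sigma^2$) and once to $\{X_t^2\}$ (range $B^2$, variance at most $4B^2\sigma^2$ by \pref{lem:X2}). Combining these two concentration statements gives, with the same probability, $\sigma^2\le \hat\sigma_n^2+\order\rbr{B\sqrt{\frac{\sigma^2\ln\frac{2n}{\delta}}{n}}+B^2\frac{\ln\frac{2n}{\delta}}{n}}$. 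Since $\sigma^2$ appears on both sides, solving this square-root inequality in $\sigma$ yields $\sigma^2\le 2\hat\sigma_n^2+\order\rbr{B^2\frac{\ln\frac{2n}{\delta}}{n}}$. Substituting this bound on $\sigma^2$ into the first inequality and collecting terms turns the variance term into one scaling with $\hat\sigma_n^2$, while the leftover error of order $B^2\frac{\ln\frac{2n}{\delta}}{n}$ is absorbed into the additive term, producing the constant $19$.

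The \textbf{main obstacle} is the bookkeeping that simultaneously (i) keeps the time-uniform logarithmic factor in the clean form $\ln\frac{2n}{\delta}$ rather than $\ln\frac{n^2}{\delta}$ or an iterated logarithm, and (ii) tracks constants through the self-referential step that trades true variance for empirical variance. In particular the substitution $\sigma^2\mapsto\hat\sigma_n^2$ is self-bounding — $\sigma$ sits on both sides of a square-root inequality — so care is needed to ensure the resulting additive error is genuinely lower order rather than circular, and to verify that the whole derivation lives on a single event of probability at least $1-\delta$, obtained by a further union bound over the two applications of the first inequality. These are routine but constant-sensitive manipulations; the conceptual content is entirely in the fixed-$n$ Bernstein bound, the union bound over $n$, and the variance-comparison lemma \pref{lem:X2}.
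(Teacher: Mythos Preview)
The paper does not prove this lemma: it is stated in the auxiliary-lemmas appendix and attributed to \citep[Lemma 34]{chen2021implicit} with no argument given, so there is no in-paper proof to compare against. Your sketch is the standard route for such results (fixed-$n$ Bernstein plus a union bound over $n$ for time-uniformity, then a self-bounding swap of $\sigma^2$ for $\hat\sigma_n^2$ using concentration of $X_t$ and $X_t^2$), and it is adequate at the level of a proof outline; the only real work, as you note, is constant-tracking to land on the specific coefficients $2\sqrt{2}$, $2$, and $19$, and making sure the several union bounds are folded into a single $1-\delta$ event with the clean $\ln\frac{2n}{\delta}$ form.
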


\begin{lemma}\citep[Lemma 38]{chen2021improved}
	\label{lem:anytime freedman}
	Let $\{X_i\}_{i=1}^{\infty}$ be a martingale difference sequence adapted to the filtration $\{\calF_i\}_{i=0}^{\infty}$ and $|X_i|\leq B$ for some $B>0$.
	Then with probability at least $1-\delta$, for all $n\geq 1$ simultaneously,
	\begin{align*}
		\abr{\sum_{i=1}^nX_i}\leq 3\sqrt{\sum_{i=1}^n\E[X_i^2|\calF_{i-1}]\ln\frac{4B^2n^3}{\delta} } + 2B\ln\frac{4B^2n^3}{\delta}.
	\end{align*}
\end{lemma}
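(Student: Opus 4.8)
The plan is to establish this time-uniform, variance-adaptive deviation bound via an exponential supermartingale together with a peeling (stitching) argument over the dyadic scales of the predictable quadratic variation $W_n:=\sum_{i=1}^n\E[X_i^2\mid\calF_{i-1}]$, writing $S_n:=\sum_{i=1}^nX_i$; the two-sided statement then follows by applying the one-sided argument to both $\{X_i\}_i$ and $\{-X_i\}_i$ and taking a union bound.

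First I would build the basic supermartingale. For any $\lambda\in(0,1/B]$, since $u\mapsto(e^u-1-u)/u^2$ is nondecreasing, $e^{\lambda x}\le 1+\lambda x+\tfrac{e^{\lambda B}-1-\lambda B}{B^2}\,x^2$ for every $x\le B$; plugging in $x=X_i$, conditioning on $\calF_{i-1}$ and using $\E[X_i\mid\calF_{i-1}]=0$ gives $\E[e^{\lambda X_i}\mid\calF_{i-1}]\le\exp\rbr{\tfrac{e^{\lambda B}-1-\lambda B}{B^2}\E[X_i^2\mid\calF_{i-1}]}$. Hence
\[
M_n^{(\lambda)}:=\exp\!\rbr{\lambda S_n-\tfrac{e^{\lambda B}-1-\lambda B}{B^2}\,W_n}
\]
is a nonnegative supermartingale with $\E[M_0^{(\lambda)}]=1$, so by Ville's maximal inequality, for any $\delta'\in(0,1)$, with probability at least $1-\delta'$ it holds simultaneously for all $n\ge1$ that $\lambda S_n\le\tfrac{e^{\lambda B}-1-\lambda B}{B^2}W_n+\ln\tfrac1{\delta'}$. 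Using $e^{\lambda B}-1-\lambda B\le\tfrac{(\lambda B)^2}{2(1-\lambda B/3)}$ for $\lambda B\in[0,3)$ and optimizing $\lambda$ against a \emph{fixed} variance budget $w$ recovers the classical Freedman form: on the event $\{W_n\le w\}$, $S_n\le\sqrt{2w\ln(1/\delta')}+\tfrac{2B}{3}\ln(1/\delta')$, simultaneously over $n$.

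The issue is that $W_n$ is data-dependent, so $\lambda$ cannot be tuned to it; I would resolve this by peeling. Since $W_n\le nB^2$, for each $n$ split the probability space into scales $\calA_{n,0}=\{W_n<B^2/n^2\}$ and $\calA_{n,j}=\{2^{j-1}B^2/n^2\le W_n<2^jB^2/n^2\}$ for $1\le j\le J_n$ with $J_n=\lceil\log_2(n^3)\rceil+1$. On $\calA_{n,j}$ ($j\ge1$) instantiate the fixed-budget bound with $w=2^jB^2/n^2$ and confidence $\delta_{n,j}$, and use $2^jB^2/n^2\le 2W_n$ to get $S_n\le 2\sqrt{W_n\ln(1/\delta_{n,j})}+\tfrac{2B}{3}\ln(1/\delta_{n,j})$; on the catch-all scale $\calA_{n,0}$ the $\sqrt{w\ln}$ contribution is at most $\tfrac{B}{n}\sqrt{\ln(1/\delta_{n,0})}$, absorbed into the $B\ln(1/\delta_{n,0})$ term. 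Choosing the $\delta_{n,j}$ so that $\sum_{n\ge1}\sum_{j=0}^{J_n}\delta_{n,j}\le\delta$ — e.g. $\delta_{n,j}\asymp\delta/\big((n+2)^3(J_n+1)\big)$, which keeps $\ln(1/\delta_{n,j})$ within a constant factor of $\ln\tfrac{4B^2n^3}{\delta}$ — and taking a union bound over all $(n,j)$ and over the sign, the deviation of $S_n$ on each scale is at most $3\sqrt{W_n\ln\tfrac{4B^2n^3}{\delta}}+2B\ln\tfrac{4B^2n^3}{\delta}$; since the scales cover the whole space, this proves the claim.

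The main difficulty I anticipate is purely the constant-chasing needed to hit the advertised coefficients $3$ and $2$ and the exact logarithmic argument $\tfrac{4B^2n^3}{\delta}$: this requires care in (i) the small-variance regime $W_n<B^2/n^2$, (ii) controlling the number of scales $J_n=\Theta(\log(n^3))$ inside the double sum, and (iii) verifying $\sum_{n,j}\delta_{n,j}\le\delta$ while keeping every $\ln(1/\delta_{n,j})=\Theta(\ln\tfrac{4B^2n^3}{\delta})$. A method-of-mixtures alternative — integrating $M_n^{(\lambda)}$ against a prior density on $\lambda$ to obtain a single mixture supermartingale and invoking Ville once — would sidestep the peeling entirely, but it typically costs an extra $\log\log$ factor and changes the constants, so I would retain the peeling route in order to reproduce the stated bound exactly.
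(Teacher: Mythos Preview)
The paper does not prove this lemma at all: it is stated as an auxiliary result and simply cited from \citep[Lemma 38]{chen2021improved}, so there is no ``paper's own proof'' to compare against. Your plan---an exponential supermartingale plus Ville's inequality, combined with dyadic peeling over the predictable quadratic variation and a union bound over $n$ and over the sign---is exactly the standard route by which such anytime, variance-adaptive Freedman-type inequalities are established, and is consistent with how the cited reference proves it. The only caveat is the one you already flagged: matching the precise constants $3$, $2$ and the logarithmic argument $\tfrac{4B^2n^3}{\delta}$ requires careful bookkeeping in the choice of $\delta_{n,j}$ and the number of scales, but the structure of your argument is correct.
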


\begin{lemma}\citep[Lemma 51]{chen2022policy}
    \label{lem:e2r}
    Let $\{X_i\}_{i=1}^{\infty}$ with $X_i\in[0, B]$ be a martingale sequence w.r.t the filtration $\{\calF_i\}_{i=0}^{\infty}$.
    Then with probability at least $1-\delta$, for all $n\geq 1$,
    \begin{align*}
        \sum_{i=1}^n\E[X_i|\calF_{i-1}] &\leq 2\sum_{i=1}^nX_i + 4B\ln\frac{4n}{\delta}.
    \end{align*}
\end{lemma}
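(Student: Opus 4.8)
The plan is to prove this lower-tail comparison via the exponential supermartingale (Chernoff/Ville) method. Write $\mu_i \defeq \E[X_i \mid \calF_{i-1}]$, so the target inequality says the predictable sum $\sum_i \mu_i$ cannot exceed twice the realized sum $\sum_i X_i$ by more than a logarithmic slack. The engine of the whole argument is a single one-sided conditional exponential moment bound: with the choice $\lambda = 1/B$, I would show
\[
\E\!\left[e^{-\lambda X_i} \,\middle|\, \calF_{i-1}\right] \le e^{-\lambda \mu_i / 2}.
\]
To obtain this, apply $e^{-y} \le 1 - y + y^2/2$ (valid for $y \ge 0$) with $y = \lambda X_i \in [0,1]$, together with $X_i^2 \le B X_i$ (valid since $X_i \in [0,B]$), which yields $\E[e^{-\lambda X_i}\mid\calF_{i-1}] \le 1 - \lambda\mu_i(1-\lambda B/2)$. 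Plugging $\lambda = 1/B$ makes the bracket equal to $1/2$, and $1-z \le e^{-z}$ finishes the claim.

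Next I would assemble the supermartingale. The displayed bound is exactly the statement that $Z_i \defeq \exp(\tfrac{\lambda}{2}\mu_i - \lambda X_i)$ satisfies $\E[Z_i\mid\calF_{i-1}]\le 1$, so the product $M_n \defeq \prod_{i=1}^n Z_i = \exp(\tfrac{\lambda}{2}\sum_{i\le n}\mu_i - \lambda\sum_{i\le n}X_i)$ is a nonnegative supermartingale with $\E[M_n]\le 1$ and $M_0=1$. For a single fixed $n$, Markov's inequality applied to $M_n$ gives, for any $\delta'\in(0,1)$, that with probability at least $1-\delta'$,
\[
\tfrac{\lambda}{2}\sum_{i=1}^n\mu_i - \lambda\sum_{i=1}^n X_i \le \ln\tfrac{1}{\delta'},
\]
which after multiplying by $2/\lambda = 2B$ rearranges into $\sum_{i\le n}\mu_i \le 2\sum_{i\le n}X_i + 2B\ln(1/\delta')$.

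To upgrade this to the ``for all $n\ge1$ simultaneously'' guarantee with the stated constants, I would take a union bound over $n$ with failure allocation $\delta' = \delta/(2n^2)$, so that the total failure probability is $\sum_{n\ge1}\delta/(2n^2) = \delta\pi^2/12 \le \delta$. On the surviving event, every $n$ satisfies $\sum_{i\le n}\mu_i \le 2\sum_{i\le n}X_i + 2B\ln(2n^2/\delta)$, and since $2B\ln(2n^2/\delta)\le 4B\ln(4n/\delta)$ for all $n\ge1$ and $\delta\in(0,1)$ (because $\ln(2n^2/\delta)\le 2\ln(4n/\delta)$), this is dominated by $2\sum_{i\le n}X_i + 4B\ln(4n/\delta)$, which is precisely the claimed inequality. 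Alternatively, Ville's maximal inequality applied directly to the supermartingale $M_n$ delivers the uniform-in-$n$ statement in one shot with the even cleaner slack $2B\ln(1/\delta)$; either route suffices, and the union-bound version matches the stated form verbatim.

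The only genuinely delicate step is the conditional moment bound, and in particular the calibration $\lambda = 1/B$ that converts the quadratic correction into the multiplicative constant $2$ in front of $\sum_i X_i$. The inequality $X_i^2 \le B X_i$ is what keeps the second-order term proportional to $\mu_i$ rather than to an uncontrolled second moment, which is exactly what makes the final bound \emph{variance-free} (no $\fV$ term appears). Everything downstream — the supermartingale property, the Chernoff bound for fixed $n$, and the union bound over $n$ — is routine bookkeeping.
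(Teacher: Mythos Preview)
The paper does not prove this lemma; it simply cites \citep[Lemma 51]{chen2022policy} and states the result as an auxiliary tool, so there is no proof in the paper to compare against. Your argument is correct: the exponential-supermartingale approach with the calibration $\lambda=1/B$, the bound $e^{-y}\le 1-y+y^2/2$ combined with $X_i^2\le BX_i$, and the union bound over $n$ (or Ville's inequality) is exactly the standard route to multiplicative Freedman-type inequalities of this form, and every step you outline checks out, including the final constant comparison $2B\ln(2n^2/\delta)\le 4B\ln(4n/\delta)$.
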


\end{document}